\newcommand\blfootnote[1]{%
  \begingroup
  \renewcommand\thefootnote{}\footnote{#1}%
  \addtocounter{footnote}{-1}%
  \endgroup
}
\definecolor{forestgreen}{rgb}{0.13, 0.55, 0.13}
\pgfplotsset{compat=1.18}
\crefname{equation}{}{}
\crefname{lem}{Lemma}{Lemmas}
\crefname{section}{Section}{Sections}
\crefname{subsubsubsection}{Section}{Sections}
\crefname{rem}{Remark}{Remarks}
\crefname{figure}{Figure}{Figures}
\crefname{table}{Table}{Tables}
\Crefname{lem}{Lemma}{Lemmas}
\crefname{thm}{Theorem}{Theorems}
\Crefname{thm}{Theorem}{Theorems}
\Crefname{item}{Item}{Items}
\newtheorem{theorem}{Theorem}[section]
\newtheorem{lemma}[theorem]{Lemma}
\newtheorem{proposition}[theorem]{Proposition}
\newtheorem{corollary}[theorem]{Corollary}
\newtheorem{assumption}{Assumption}
\newtheorem{fact}[theorem]{Fact}
\newtheorem{claim}[theorem]{Claim}
\newtheorem{definition}[theorem]{Definition}
\theoremstyle{remark}
\newtheorem{remark}{Remark}
\title{Better Models and Algorithms for Learning Ising Models from Dynamics\blfootnote{Much of this work was completed when J.G. was at the MIT Department of Mathematics, supported by Vannevar Bush Faculty Fellowship ONR-N00014-20-1-2826 and Simons Investigator Award 622132. A.M. is supported in part by a Microsoft Trustworthy AI Grant, NSF-CCF 2430381, an ONR grant, and a David
and Lucile Packard Fellowship. E.M. is supported in part by Vannevar Bush Faculty Fellowship ONR-N00014-20-1-2826, Simons Investigator Award 622132, Simons-NSF DMS-2031883, and ONR MURI Grant N000142412742.}}
\author{Jason Gaitonde\\
Duke University\\
\texttt{jason.gaitonde@duke.edu}
\and Ankur Moitra\\
Massachusetts Institute of Technology\\
\texttt{moitra@mit.edu}
\and Elchanan Mossel\\
Massachusetts Institute of Technology\\
\texttt{elmos@mit.edu}}
\date{}
\begin{document}

\maketitle
\thispagestyle{empty}

\begin{abstract}
We study the problem of learning the structure and parameters of the Ising model, a fundamental model of high-dimensional data, when observing the evolution of an associated Markov chain. A recent line of work has studied the natural problem of learning when observing an evolution of the well-known Glauber dynamics [Bresler, Gamarnik, Shah, IEEE Trans. Inf. Theory 2018, Gaitonde, Mossel STOC 2024], which provides an arguably more realistic generative model than the classical i.i.d. setting. However, this prior work crucially assumes that all site update attempts are observed, \emph{even when this attempt does not change the configuration}: this strong observation model is seemingly essential for these approaches. While perhaps possible in restrictive contexts, this precludes applicability to most realistic settings where we can observe \emph{only} the stochastic evolution itself, a minimal and natural assumption for any process we might hope to learn from. However, designing algorithms that succeed in this more realistic setting has remained an open problem [Bresler, Gamarnik, Shah, IEEE Trans. Inf. Theory 2018, Gaitonde, Moitra, Mossel, STOC 2025].

In this work, we give the first algorithms that efficiently learn the Ising model in this much more natural observation model that only observes when the configuration changes. For Ising models with maximum degree $d$, our algorithm recovers the underlying dependency graph in time $\mathsf{poly}(d)\cdot n^2\log n$ and then the actual parameters in additional $\widetilde{O}(2^d n)$ time, which qualitatively matches the state-of-the-art even in the i.i.d. setting in a much weaker observation model. Our analysis holds more generally for a broader class of reversible, single-site Markov chains that also includes the popular Metropolis chain by leveraging more robust properties of reversible Markov chains.
\end{abstract}

\newpage
\thispagestyle{empty}
\tableofcontents
\newpage
\setcounter{page}{1}
\section{Introduction}

The Ising model is a fundamental model of high-dimensional distributions on $\{-1,1\}^n$ that encode latent, pairwise dependencies between the variables, with wide-ranging applications across computer science, economics, machine learning, statistical physics, and probability theory. More formally, the Ising model $\pi=\pi_{A,\bm{h}}$ is parametrized by a symmetric matrix $A\in \mathbb{R}^{n\times n}$ and external fields $\bm{h}\in\{-1,1\}^n$, and is defined via:
\begin{equation*}
    \pi(\bm{x})=\frac{\exp\left(\frac{1}{2}\bm{x}^TA\bm{x}+\bm{h}^T\bm{x}\right)}{Z_{A,\bm{h}}}.
\end{equation*}
The \emph{partition function} $Z_{A,\bm{h}}$ ensures $\pi$ forms a probability distribution and captures important statistical information about the model as a function of $(A,\bm{h})$.
The Ising model thus provides a succinct representation of local interactions between variables in terms of the matrix $A$, whose elements encode the preference of adjacent sites to having matching or opposing signs in isolation. In particular, the Ising model has a naturally associated conditional dependency graph $G$ on $[n]$ whose edges correspond to the nonzero entries of $A$, such that the conditional law of any site depends \emph{only} on the value of its graph-theoretic neighbors.

Due to its wide-ranging applications, the algorithmic problem of learning the underlying dependencies or parameters of the Ising model from data has been the subject of intense study spanning several decades, originally in the setting where one receives i.i.d. samples~\cite{chow_liu,ravikumar2010high,DBLP:journals/siamcomp/BreslerMS13,DBLP:conf/stoc/Bresler15,DBLP:conf/nips/WuSD19,unified} (see \Cref{sec:related_work} for more discussion). A burgeoning line of work, originally pioneered by Bresler, Gamarnik, and Shah~\cite{DBLP:journals/tit/BreslerGS18}, has aimed to obtain efficient learning algorithms that succeed when one instead observes the trajectory of the \emph{Glauber dynamics}~\cite{glauber1963time}, a well-studied Markov chain corresponding to $\pi$~\cite{DBLP:journals/tit/BreslerGS18,DBLP:conf/icml/DuttLVM21,unified,gaitonde2024bypassing}. Since statistical samples are generated by some natural process whether in physical or economics applications, developing learning algorithms compatible with direct trajectory data of this type is arguably much more realistic---the Glauber dynamics have been considered as an exogenous model of best response dynamics in the economics literature~\cite{young,kandori,BLUME1993387,DBLP:conf/focs/MontanariS09} and as a model of non-equilibrium dynamics in statistical physics. At an equally fundamental level, well-known hardness reductions \cite{sly2010computational,DBLP:conf/focs/SlyS12} imply that in the ``low-temperature'' regime where sites can have strong global correlations, \emph{no} efficient process can generate i.i.d. samples from $\pi$, let alone nature; therefore, developing algorithms that can learn from natural dynamics broadens the applicability of these results. 

However, a critical assumption made in all prior work on learning the Ising model from dynamics is that these learning algorithms have \emph{strong observability} of the dynamics. To state this assumption a bit more precisely, the  Glauber dynamics corresponds to the continuous-time Markov chain $(X^t)_{t=0}^T$ on $\{-1,1\}^n$ where sites decide to resample their current value by re-randomizing according to the conditional law of $\pi$ given the current configuration at stochastic update times: we will call these times ``update attempts,'' which cruicially may or may not result in the value changing. In all prior work on learning the Ising model from dynamics, it is assumed that \emph{all} update times are observed, whether or not these updates result in a transition that \emph{changes} the current configuration, or a \emph{non-transition} that does not. 

To understand this assumption, the strong observation model of all prior work amounts to knowing all times $t$ that each site attempted to update even when $X^t$ does not change. In specific, highly controlled settings, similar information might plausibly be obtained. For instance, consider an economics application where each site corresponds to an individual in a social network, and the configuration describes the adoption of one of two types of technologies; the Glauber dynamics then corresponds to a noisy best-response dynamic for how individuals choose between them~\cite{DBLP:conf/focs/MontanariS09}. In certain cases, observing some information about update attempts that do not result in a change may be possible by online platforms that can directly observe whether individuals accessed the product site (but did not buy it), or by conducting repeated surveys to determine whether a consumer is \emph{reconsidering} their choice even if their behavior does not end up changing. But even in these settings, it is quite challenging (e.g. how can one ensure accurate and timely responses?) or prohibitively expensive to observe even weak information of this type---in general settings, and particularly in physical applications where sites correspond to particles, the strong observability model becomes harder to justify.

By contrast, the weaker observation model where one \emph{only} views the evolution of $(X^t)_{t=0}^T$ is natural for \emph{any} stochastic process, whether in physical systems or networks or beyond. Developing efficient (or any nontrivial) learning algorithms for this setting would thus greatly broaden the applicability of learning from dynamics, but this has remained open since the original work of Bresler, Gamarnik, and Shah~\cite{DBLP:journals/tit/BreslerGS18}\footnote{As described in their work, ``learning
without this data is potentially much more challenging,
because in that case information is obtained only when a
spin flips sign, which may occur only in a small fraction
of the update."} a decade ago and was raised again by Gaitonde, Moitra, and Mossel~\cite{gaitonde2024bypassing}. As we explain in \Cref{sec:overview}, the existing methods from these prior works seem incapable of extending to this setting. The fundamental difficulty is that it is quite difficult to deduce information about $\pi$ solely from observed transitions that change the configuration---these observations form a highly correlated subset of the full sequence of update attempts that include non-transitions as well, biasing natural estimators to infer information about the parameters or structure. While the fact that the configuration does not change in some part of the trajectory could be quite statistically informative, it is very unclear how to algorithmically access this information since the event, or identities, of sites that attempted to transition but fail is completely unobservable; therefore, it is impossible to determine \emph{why} sites stay constant at any given time during the trajectory.

In this work, we overcome all of these challenges by providing the first learning algorithms that efficiently recover the structure and parameters of the Ising model from the direct trajectory of the Glauber dynamics. Our first main result shows that if the dependency graph $G$ has maximum degree at most $d$ and under standard non-degeneracy assumptions, there is an algorithm that recovers $G$ with high probability after observing $O_d(\log n)$ updates per site and runtime $O_d(n^2\log n)$. We then show that under these same conditions, once $G$ has been recovered, we can then learn the actual parameters in additional time $\widetilde{O}(2^d n)$. These guarantees have the same dimensional dependence (on $n$) as all state-of-the-art work on learning the Ising model in \emph{any} observation model, with qualitatively similar dependences in all other model parameters. In fact, all of our results hold more generally for a somewhat broader class of reversible Markov chains that includes the popular Metropolis dynamics~\cite{metropolis1953equation,10.1093/biomet/57.1.97} as well. Our results thus not only succeed in the most natural observation model, but also relaxes the distributional assumptions on the precise model of the stochastic evolution. Our work thus significantly advances the literature on learning graphical models by bridging algorithmic guarantees with realistic models of data \emph{and} observations. 

\subsection{Main Results}

We now state our results a bit more precisely (see \Cref{sec:prelims} for formal definitions). Recall that our goal is to recover the structure and parameters of an Ising model $\pi_{A,\bm{h}}$ over $\{-1,1\}^n$ from the \emph{direct} observation of a single-site Markov chain. While our results will hold more generally, we will state our model and results for the Glauber dynamics.

Formally, we work in continuous time, so that our observations are $(X^t)_{t=0}^T\in \{-1,1\}^n$, where $X^0$ is an arbitrary configuration. The process $X^t$ evolves as follows:
\begin{itemize}
    \item Each site $i\in [n]$ has an independent, associated exponential clock that rings at unit rate. The update times $\Pi_i\subseteq [0,T]$  thus form a Poisson point process.
    \item For each $t\in \Pi_i$, site $i$ applies the transition kernel $\mathsf{P}_i$ that updates the current configuration $X^t$ by setting
    \begin{equation}
    \label{eq:glauber_update}
        X^t_i =\begin{cases}
          +1\quad \text{with prob. $\frac{\exp(\sum_{k\neq i}A_{ik}X^t_k+h_i)}{\exp(\sum_{k\neq i}A_{ik}X^t_k+h_i)+\exp(-\sum_{k\neq i}A_{ik}X^t_k-h_i)}$}\\
          -1\quad\text{with prob. $\frac{\exp(-\sum_{k\neq i}A_{ik}X^t_k-h_i)}{\exp(\sum_{k\neq i}A_{ik}X^t_k+h_i)+\exp(-\sum_{k\neq i}A_{ik}X^t_k-h_i)}$}.
        \end{cases}
    \end{equation}
    The re-randomization of site $i$ is therefore according to the conditional distribution in $\pi$ given $X^t_{-i}$.
\end{itemize}
Crucially, our \emph{only} observations are the piecewise constant stochastic process $(X^t)_{t=0}^T$; we therefore only observe the subset of the update times in $\Pi_i$ that actually results in site $i$ flipping values.

Our first main result provides the first efficient structure learning algorithm from this weak observation model under standard non-degeneracy conditions (see \Cref{assumption:ising}). 

\begin{theorem}[\Cref{thm:alg_main} and \Cref{thm:main_match}, specialized]
\label{thm:structure_intro}
    Suppose that the Ising model $\pi$ has maximum degree $d$. Then there is a structure learning algorithm that, taking as input the observations $(X_t)^T_{t=0}$ for $T=O(\mathsf{poly}(d)\cdot \log(n))$ generated by the Glauber dynamics, correctly outputs the dependence graph of $\pi$ with high probability. The runtime of the algorithm is $O(T\cdot n^2)$.
\end{theorem}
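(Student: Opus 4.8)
The plan is to reduce recovering the neighborhood $N(i)$ of each site to a sparse generalized-linear-regression problem whose ``measurements'' can be assembled from the transition-only observations, and then to show that over the stated horizon the trajectory produces enough accurate, well-spread measurements for this regression to succeed. The starting point is an identity from reversibility: writing $c_i(x)$ for the rate at which site $i$ flips out of a configuration $x$ with $x_i=+1$, and $x^{(i)}$ for $x$ with coordinate $i$ flipped, detailed balance for the Glauber dynamics ($\pi(x)c_i(x)=\pi(x^{(i)})c_i(x^{(i)})$) gives
\begin{equation*}
    \log c_i(x)-\log c_i\big(x^{(i)}\big)=\log\frac{\pi(x^{(i)})}{\pi(x)}=-2\Big(\textstyle\sum_{k\neq i}A_{ik}x_k+h_i\Big),
\end{equation*}
an affine function of $x_{-i}$ that is $d$-sparse with support exactly $N(i)$ and, under \Cref{assumption:ising}, has all nonzero coefficients of magnitude at least $2\alpha$ (with $\alpha$ the minimum edge strength). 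Hence $j\in N(i)$ iff this quantity depends on $x_j$, and---this is the point of phrasing it through reversibility rather than the explicit Glauber form---the same identity and characterization hold for the Metropolis chain and for any reversible single-site dynamics (for Glauber one may alternatively note that $\mathrm{logit}\,c_i(x)$ is itself this affine function).

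The conceptual crux is then to form these measurements without observing failed update attempts. For any configuration $x$ visited by the trajectory, the quantity $\widehat c_i(x):=\#\{\text{flips of }i\text{ out of }x\text{ in }[0,T]\}/\int_0^T\mathbf{1}[X^t=x]\,dt$ depends only on the observed piecewise-constant process, and it is an unbiased, concentrating estimator of $c_i(x)$: the flip count minus $c_i(x)$ times the occupation time at $x$ is a martingale, so by standard counting-process concentration $|\widehat c_i(x)-c_i(x)|\lesssim\sqrt{c_i(x)/L}$ whenever the occupation time $L$ at $x$ exceeds a threshold $L_0=\widetilde\Theta(e^{O(\beta d)}/\alpha^2)$, simultaneously over all $O(nT)$ visited configurations by a union bound. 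Thus the occupation time algorithmically substitutes for the unobservable failed attempts: although the visited configurations are a biased, highly correlated subsample of the true update-attempt sequence, the ratio divides out this bias and recovers exactly the rate we need. For each pair $(x,x^{(i)})$ both visited for time $\ge L_0$ this yields a noisy linear measurement $y_x\approx\langle A_{i,-i},x_{-i}\rangle+h_i$ of the sparse parameter vector, with additive error $O(1/\sqrt{\delta L_0})$ for $\delta=e^{-O(\beta d)}$.

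Next I would show the trajectory yields a rich enough design: for $T=\mathsf{poly}(d)\cdot\widetilde\Theta(\log n)$ (constant depending on $\alpha,\beta$), with high probability every site $i$ and every $j\neq i$ sees configurations realizing all $\le 2^{d+2}$ sign patterns on $\{i,j\}\cup N(i)$, each for occupation time at least $L_0$. I would prove this by stochastic domination by a ``free'' product dynamics that resamples each coordinate at unit rate from a sufficiently unbiased coin: since the Glauber (resp.\ Metropolis) flip rate out of any configuration lies in $[\delta,1-\delta]$, restricted to any bounded coordinate set the true evolution explores at least as fast as this auxiliary process, which hits every local pattern $\Omega(L_0)$ times well within the horizon---the transition-only analogue of the exploration estimates of prior work. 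Then, for each $i$, I would run a sparse-recovery procedure on the measurements over well-visited configurations (greedy neighborhood growing in the style of Bresler's algorithm, or $\ell_1$-regularized/thresholded linear regression) and output $\widehat N(i)$ as its support: soundness holds because the truth is exactly $d$-sparse and the noise $O(1/\sqrt{\delta L_0})$ is a small fraction of $2\alpha$, and completeness holds by the same gap together with the well-conditioned design above. Symmetrizing the outputs recovers $G$ with high probability. A single pass over the $O(nT)$ transitions produces all the flip counts and occupation times, and the recovery runs in $\widetilde O(\mathsf{poly}(d)\cdot n)$ time per site, for a total of $O(Tn^2)=O(\mathsf{poly}(d)\cdot n^2\log n)$.

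I expect the main obstacle to be the exploration/design step, together with the conceptual reframing in the second step. The observed trajectory is far from stationarity (mixing can be exponentially slow in the low-temperature regime the paper targets), so one cannot appeal to any equilibrium distribution, and the visited configurations are a correlated, biased subsample of all update attempts; one must therefore argue directly, via the coupling above, that every relevant local sign pattern is visited enough times and with enough occupation for the rate estimates to be accurate. This is also where the $\mathsf{poly}(d)$ dependence (an $e^{O(\beta d)}$-type overhead in general) enters, matching the i.i.d.\ state of the art.
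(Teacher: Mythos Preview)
Your proposal has a genuine gap at the measurement step. The estimator $\widehat c_i(x)=\#\{\text{flips of }i\text{ out of }x\}/\int_0^T\mathbf 1[X^t=x]\,dt$ is defined for a \emph{full} configuration $x\in\{-1,1\}^n$, but with $n$ unit-rate clocks the holding time at any configuration is exponential with rate $n$, and in horizon $T=\mathsf{poly}(d)\log n$ the chain makes only $O(nT)$ transitions; no individual $x$ accumulates occupation $\Omega(1)$, let alone your $L_0$. Your exploration paragraph silently switches to local patterns on $\{i,j\}\cup N(i)$: grouping that way would make the rate well-defined (since $c_i$ depends only on $x_{N(i)}$), but you cannot form these groups without already knowing $N(i)$, which is exactly what you are trying to learn. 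Any pooling over a set $S\not\supseteq N(i)$ yields a time-varying, path-dependent mixture of true rates whose bias you have no handle on---this is precisely the obstruction the paper identifies in the introduction and in \Cref{sec:overview}.

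The paper's route is entirely different and never estimates rates at configurations. For each ordered pair $(i,j)$ it evaluates, at regularly spaced times $t$, a degree-$8$ ``cycle'' statistic $Z^{i,j}_t$ built from indicators of specific flip sequences of $i$ and $j$ in a window of length $8\varepsilon$. \Cref{prop:event_probs} shows such a sequence has probability $\varepsilon^8\prod_k\mathsf P_{\ell_k}(\cdot,\cdot)\pm O(d\varepsilon^9)$, and the particular combination is chosen so that its conditional mean is a nonnegative square (\Cref{prop:main_stat}). An anticoncentration lemma for the local field under the dynamics (\Cref{cor:anti}) then gives a uniform lower bound on this mean whenever $i\sim j$ \emph{and $i$ has some other neighbor}; Azuma--Hoeffding on the time average separates this from the $i\not\sim j$ case. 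The catch is that the cycle test provably misses isolated edges (both endpoints degree $1$), so the proof has a second stage: the remaining unknown edges form a matching on an independent subsystem, the restricted chain is a product of size-$\le 2$ components with $\Omega(1)$ spectral gap, and Lezaud's Markov-chain Chernoff bound (\Cref{thm:lezaud}) lets one threshold time-averaged spin--spin correlations to recover them. Your regression framing, if the measurements existed, would not face this dense/isolated dichotomy; but as written the measurements are not obtainable in the weak observation model at the stated horizon.
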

The implicit constants are of the form $\mathsf{poly}(\exp(\lambda),1/\alpha)$, where the $\ell_1$ ``width" parameter $\lambda$ governs the biasedness of any site and $\alpha$ lower bounds the magnitude of any nonzero matrix entry in $A$ to ensure edges are statistically detectable. These dependencies are known to be necessary~\cite{DBLP:journals/tit/SanthanamW12}, and these will be qualitatively the same as in all prior literature in the i.i.d. and dynamical settings~\cite{DBLP:conf/focs/KlivansM17,DBLP:conf/nips/WuSD19,DBLP:journals/tit/BreslerGS18,unified}. Since sites update at unit rate in continuous time, the input can be specified by just the initial configuration as well as the $O(n\cdot T)$ times the configuration changes at some site with high probability. The dependence on $n$ thus matches the state-of-the-art in all prior observation models  (see \Cref{sec:related_work} for more information).

Once the dependence graph has been recovered, we then show the following parameter learning result:

\begin{theorem}[\Cref{thm:main_paramter} and \Cref{rmk:external}, informal]
\label{thm:parameter_intro}
    Let $\pi$ be an Ising model known dependence graph $G$ with maximum degree $d$. Then there is an algorithm that, given $\varepsilon>0$ and observations $(X_t)_{t=0}^T$ generated by the Glauber dynamics, computes $(\widehat{A},\widehat{\bm{h}})$ such that $\|A-\widehat{A}\|_{\infty},\|\bm{h}-\widehat{\bm{h}}\|_{\infty}\leq \varepsilon$ with high probability for $T=\widetilde{O}(2^d)\cdot \log(n)\cdot \mathsf{poly}(1/\varepsilon)$. The runtime of the algorithm is $n\cdot T=\widetilde{O}(2^dn\cdot \mathsf{poly}(1/\varepsilon))$. 
\end{theorem}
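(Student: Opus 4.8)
The plan is to fix a site $i$ and reduce parameter estimation to learning a logistic-type function of the neighborhood configuration $X_{N(i)}$, using only the \emph{observed} flip times of site $i$. The key observation is that, conditioned on the entire trajectory of the other sites $(X^t_{-i})_{t=0}^T$, the update attempts of site $i$ still form a rate-$1$ Poisson process, and at each attempt the resampled value is $+1$ with the Glauber probability $p_i(\bm\sigma) := \sigma(2\sum_{k\neq i}A_{ik}\sigma_k + 2h_i)$ where $\sigma(z)=1/(1+e^{-z})$ and $\bm\sigma = X^t_{-i}$. Hence a flip of $i$ to $+1$ out of state $\bm\sigma$ occurs at rate $p_i(\bm\sigma)$ and a flip to $-1$ at rate $1-p_i(\bm\sigma)$; crucially the \emph{non-flip} attempts are unobserved, but their effect is exactly captured because the process spends a random $\mathrm{Exp}(1)$-type holding time in each configuration regardless. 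Since $G$ is known and $i$ has at most $d$ neighbors, the vector of target parameters $\theta_i := (2A_{ik})_{k\in N(i)} \cup \{2h_i\} \in \mathbb{R}^{d+1}$ has only $d+1$ coordinates, and the function $\bm\sigma \mapsto p_i(\bm\sigma)$ is determined by $\theta_i$.

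Next I would set up a maximum-(pseudo)likelihood / $M$-estimator for $\theta_i$ from the observed data. Conditioned on $X^t_{-i}$, the contribution of site $i$'s trajectory to the log-likelihood is a sum over the at most $\widetilde O(T)$ sojourn intervals $[s,s')$ during which $X_{-i}$ is constant at some $\bm\sigma$ and $X_i$ is constant at some value $b\in\{-1,1\}$: each such interval contributes $-\,(\text{rate of leaving via an } i\text{-flip})\cdot(s'-s)$ plus, if the interval ends in an observed $i$-flip, $\log$ of the corresponding flip rate. Writing this out, the per-site negative log-likelihood $\mathcal{L}_i(\theta)$ is a convex function of $\theta$ (it is an affine function composed with $-\log$ and with the convex exponential-tilt terms, exactly as in logistic regression / Poisson regression), so it can be minimized efficiently; evaluating $\mathcal{L}_i$ and its gradient costs $O(d\cdot T)$, giving the claimed $n\cdot T$ total runtime after looping over all $n$ sites. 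The estimator $\widehat\theta_i$ is the minimizer over the box $\|\theta\|_1 \le O(\lambda)$.

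For the statistical guarantee I would argue strong convexity of the \emph{population} objective and then a uniform concentration bound. The gradient of $\mathcal{L}_i$ at the truth is a martingale (compensated Poisson) difference, so standard Freedman/Bernstein-type concentration for continuous-time martingales gives $\|\nabla \mathcal{L}_i(\theta_i^\star)\|_\infty = \widetilde O\big(\sqrt{(\log n)/T}\big)$ with high probability, provided $T = \widetilde\Omega(2^d \log n\,\mathrm{poly}(1/\varepsilon))$; the $2^d$ enters because we need each of the $2^d$ neighborhood patterns $\bm\sigma$ to be visited enough, which follows from the mixing/return properties of the chain restricted to the $(d+1)$-site neighborhood (or from a crude lower bound on how often $X_{-i}$ realizes each pattern, using that the width is bounded so every conditional probability is bounded below by $e^{-O(\lambda)}$). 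For the curvature, the Hessian of $\mathcal{L}_i$ at $\theta$ is a positive combination of rank-one terms $\bm\sigma_{N(i)}\bm\sigma_{N(i)}^T$ weighted by occupation times and by $\sigma'(\cdot)\ge e^{-O(\lambda)}$; since the design vectors are the $2^d$ sign patterns, which span $\mathbb{R}^{d+1}$ and are well-conditioned once each pattern has nontrivial occupation time, the Hessian is $\Omega(e^{-O(\lambda)})$-strongly convex on the feasible box with high probability. Combining strong convexity with the gradient bound via the basic $M$-estimation inequality yields $\|\widehat\theta_i - \theta_i^\star\|_\infty \le \varepsilon$, hence $\|\widehat A - A\|_\infty,\|\widehat{\bm h}-\bm h\|_\infty\le\varepsilon$ after reading off the coordinates, and a union bound over the $n$ sites costs only an extra $\log n$ factor already absorbed.

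The main obstacle I anticipate is the lower bound on occupation times of rare neighborhood patterns: unlike the i.i.d. setting, the trajectory is highly correlated in time, so one cannot simply say ``each of the $T$ attempts sees a fresh $\bm\sigma$.'' I would handle this by a regeneration/coupling argument or by a direct second-moment bound showing that over a window of length $\widetilde\Omega(2^d)$ the chain on the neighborhood of $i$ spends $\Omega(1)$ expected time in \emph{every} pattern (using that the neighborhood chain's stationary measure puts mass $\ge e^{-O(\lambda d)}$... more carefully, $\ge 2^{-d}e^{-O(\lambda)}$ per pattern because of the bounded width) and then concentrate the occupation time via the martingale tools above; this is the step where the $\widetilde O(2^d)$ sample complexity is genuinely needed and where the continuous-time structure (explicit exponential holding times independent of the past) makes the argument cleaner than a naive discretization would.
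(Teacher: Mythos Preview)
Your approach is genuinely different from the paper's and contains two real gaps.

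\textbf{Convexity fails.} The negative log-likelihood you write down is not convex in $\theta_i$. Conditioning on $X_{-i}$, the flip process of site $i$ is a thinned Poisson process with instantaneous rate $\lambda_i(\theta)=\sigma\bigl(-2X_i^t(\theta_i^\top X_{-i}^t)\bigr)$, so the negative log-likelihood is
\[
\mathcal{L}_i(\theta)\;=\;\int_0^T \sigma\bigl(-2X_i^t(\theta^\top X_{-i}^t)\bigr)\,\mathrm{d}t\;-\;\sum_{\text{$i$-flips}}\log\sigma\bigl(-2X_i^{t_-}(\theta^\top X_{-i}^{t_-})\bigr).
\]
The second term is convex (softplus), but the integrated term has integrand $\sigma(\text{linear})$, which is neither convex nor concave in $\theta$; the objective is therefore nonconvex. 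This is not the situation in Poisson regression (log link, $e^{\theta^\top x}$) or standard logistic regression (both labels observed). One could try a different convex surrogate, but that requires a new argument for consistency and curvature in this weak-observation model.

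\textbf{The occupation-time bound is too optimistic.} You assert the stationary (or empirical) mass of every neighborhood pattern is at least $2^{-d}e^{-O(\lambda)}$. This is false under \Cref{assumption:ising}: take $i$ with $d$ neighbors each carrying external field of order $\lambda$; then each neighbor marginally equals $-1$ with probability $\Theta(e^{-2\lambda})$, and the all-$(-1)$ pattern has probability $e^{-\Omega(\lambda d)}$. The paper states this explicitly (Section~2.3): fixed neighborhood configurations can have probability $\exp(-\Omega(\lambda d))$. Your Hessian/occupation argument as written would therefore yield $T=\widetilde{O}(e^{O(\lambda d)})$, not $\widetilde{O}(2^d)$.

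\textbf{What the paper does instead.} The paper avoids both issues by abandoning regression entirely. It directly estimates each flip rate $\mathsf{P}_i(\bm{x},\bm{x}^{\oplus i})$ via the empirical fraction of $\varepsilon$-windows in which $i$ flips, among those windows starting at local configuration $\bm{x}$ (\Cref{lem:first_moment}, \Cref{cor:mult_ests}); no convexity is needed. To get the $2^d$ rather than $e^{\lambda d}$ dependence, it does \emph{not} require every $\bm{x}\in\{-1,1\}^{S_i}$ to be visited. Instead, for each fixed pair $(i,j)$ it proves a ``local stability'' lemma (\Cref{prop:local_stable}, \Cref{cor:stable_random}): after one unit of time, there is a sub-transition kernel of constant mass under which, conditional on any outside configuration $\bm{y}\in\{-1,1\}^{S_i\setminus\{i,j\}}$, all four settings of $(x_i,x_j)$ have comparable probability (ratio $e^{-O(\gamma\lambda)}\kappa^4$). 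Pigeonhole over the $2^{d-1}$ choices of $\bm{y}$ then guarantees that after $T=\widetilde{O}(2^d)$ samples, \emph{some} subcube has all four corners well-sampled; Freedman's inequality makes the rate estimates accurate there, and the reversibility identity \cref{eq:parameter} extracts $A_{ij}$ from the four rates on that one subcube. The external fields are recovered analogously (\Cref{rmk:external}). Your regeneration/coupling sketch for occupation times could perhaps be salvaged, but the lower bound you need is the local-stability statement, not a uniform per-pattern bound.
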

The implicit constants are again of the form $\mathsf{poly}(\exp(\lambda))$ since the minimum probability of all Glauber transitions is bounded by the inverse of this quantity. \Cref{thm:structure_intro} and \Cref{thm:parameter_intro} thus resolve the problem of efficiently learning the Ising model from arguably the most natural data and observation model.

Both of our algorithmic results will actually hold somewhat more generally for any Ising model satisfies the standard nondegeneracy conditions when observing \emph{any} single-site, reversible Markov chain that satisfies natural assumptions.\footnote{In fact, the parameter learning algorithm holds for \emph{any} nondegenerate reversible, single-site Markov chain.} These assumptions amount to ensuring that the transition probabilities for each site $i\in [n]$ are suitably nondegenerate and moreover, satisfy a certain consistency across configurations (see \Cref{assumption:mc} for the abstract formulation). This is satisfied by not only Glauber dynamics, but also natural forms of the well-studied Metropolis dynamics as well. We view this robustness as evidence of the potential of our algorithmic approach to potentially succeed quite broadly for most reasonable, single-site Markov chains; we leave further investigation of this as an exciting question for future work.

\subsection{Other Related Work}
\label{sec:related_work}

\noindent\textbf{Learning Graphical Models from Dynamics.} As described above, the problem of learning the Ising model, or more general Markov random fields, from the Glauber dynamics has been recently explored in a series of works---however, these prior works all require strong observability. The pioneering work of Bresler, Gamarnik, and Shah~\cite{DBLP:journals/tit/BreslerGS18} first considered the problem of structure learning in this model; their work introduces a natural localization idea to coarsely determine adjacencies, a high-level strategy that we will also adopt. Their result obtains a $O(\mathsf{poly}(d)\cdot n^2\log n)$ structure learning algorithm, with qualitatively similar model dependencies to \Cref{thm:structure_intro}, in the strong observability model. More recent work of Gaitonde and Mossel~\cite{unified} extends these results in the same model to further obtain \emph{parameter learning} guarantees via logistic regression~\cite{DBLP:conf/nips/WuSD19} with sample and runtime complexity on par with the most general results from the literature on i.i.d. learning~\cite{DBLP:conf/focs/KlivansM17,DBLP:conf/nips/WuSD19}. Dutt, Lokhov, Vuffray, and Misra~\cite{DBLP:conf/icml/DuttLVM21} show empirically that the complexity of learning in the i.i.d. and dynamical setting under strong observability are comparable. The recent work of Gaitonde, Moitra, and Mossel~\cite{gaitonde2024bypassing} shows that a combination of these techniques works more generally for learning higher-order Markov random fields, which in fact overcomes known hardness barriers for the i.i.d. setting, but again in the strong observation model. The recent work of  Jayakumar, Lokhov, Misra, and Vuffray~\cite{jayakumar} shows that existing methods for learning in the i.i.d. case easily extend to the setting where one is given i.i.d. samples from a ``strongly metastable state." A natural hope would be to reduce learning from dynamics to this setting by simply using sufficiently time-spaced samples. However, this approach appears highly challenging to implement in our general setting, and likely quantitatively suboptimal, since the rigorous theory of slow mixing Markov chains and metastability is quite nascent and it is unclear when one can hope to obtain such samples---see e.g. \cite{bovier2016metastability} for a textbook treatment as well as \cite{DBLP:conf/stoc/GheissariS22,gheissari2025rapid,DBLP:conf/focs/LiuMRRW24} for recent results on this topic.

\noindent\textbf{Learning the Ising Model from I.I.D. Samples.} The traditional task of learning the Ising model from i.i.d. samples has been studied for several decades, dating back to the seminal work of Chow-Liu~\cite{chow_liu}. While early work provided efficient algorithms in ``high-temperature'' models~\cite{ravikumar2010high,DBLP:journals/siamcomp/BreslerMS13}, the first efficient algorithm that succeeded even at ``low-temperature,'' albeit with doubly-exponential dependence on the degree, was obtained by Bresler~\cite{DBLP:conf/stoc/Bresler15}. These results were later generalized by Hamilton, Koehler, and Moitra~\cite{DBLP:conf/nips/HamiltonKM17}, and state-of-the art algorithms were obtained by Klivans and Meka~\cite{DBLP:conf/focs/KlivansM17} (see also \cite{DBLP:conf/nips/VuffrayMLC16,DBLP:conf/nips/WuSD19}). In the setting of \Cref{assumption:ising}, their result requires $p=\frac{\exp(O(\lambda))\log(n)}{\varepsilon^4}$ i.i.d. samples and time $O(n^2\log(n))$ to compute $\varepsilon$-accurate estimates for all entries of $A$. In particular, their algorithm has no explicit dependence on the degree $d$; it would be interesting to see whether such guarantees are possible in the dynamical setting for this observation model. We note that the minimax sample-complexity of learning \emph{any} interaction matrix that induces a close model in total variation was shown by Devroye, Mehrabian, and Reddad ~\cite{devroye2020minimax} to be $\Theta(n^2)$. Our work focuses on the more challenging parameter learning task since in many applications, the primary objective is to determine which sites directly interact and in what way.

While an exponential dependence in the $\ell_1$-width is known to be information-theoretically necessary \cite{DBLP:journals/tit/SanthanamW12}, several recent works have shown these worst-case bounds do not apply in many interesting cases. In particular, Koehler, Heckett, and Risteski \cite{DBLP:conf/iclr/KoehlerHR23} show a reduction from learning to functional inequalities that are known to hold when, e.g. the eigenvalues of $A$ lie in an interval of length $1$~\cite{eldan2022spectral,DBLP:conf/focs/ChenE22}, and the recent work of Koehler, Lee, and Vuong~\cite{klv} extends these results when there are a constant number of outlier eigenvalues. In cases where such functional inequalities are not known to hold, like spin glasses, recent work of Gaitonde and Mossel~\cite{unified} and Chandrasekharan and Klivans~\cite{DBLP:conf/stoc/ChandrasekaranK25} shows how to obtain learning guarantees by directly analyzing moments of the external fields under typical samples.

Several variants of this problem have been studied: among them are refined learning guarantees for tree-structured models~\cite{bresler_karzand,DBLP:conf/focs/Boix-AdseraBK21,DBLP:conf/colt/KandirosDDC23,DBLP:journals/siamcomp/BhattacharyyaGPTV23}, models with latent variables~\cite{DBLP:conf/approx/BogdanovMV08,DBLP:conf/stoc/BreslerKM19,DBLP:conf/nips/GoelKK20}, learning with limited samples~\cite{DBLP:conf/stoc/DaganDDK21}, and robust learning~\cite{DBLP:conf/colt/GoelKK19,DBLP:conf/nips/PrasadSBR20,DBLP:conf/colt/DiakonikolasKSS21}.

\noindent\textbf{Learning from Dynamics.} Our results fall into the broader theme of \emph{learning from dynamics}, wherein one attempts to infer structure from trajectory information. Quintessential examples of this paradigm are the problem of PAC learning from random walks~\cite{DBLP:journals/iandc/BartlettFH02,DBLP:journals/jcss/BshoutyMOS05}, learning linear dynamical systems~\cite{Kalman,DBLP:conf/colt/SimchowitzBR19,DBLP:conf/stoc/BakshiLMY23}, and learning network structure from cascades~\cite{DBLP:conf/kdd/AbrahaoCKP13,DBLP:conf/sigmetrics/NetrapalliS12,DBLP:journals/pomacs/HoffmannC19}, among others.

\noindent\textbf{Acknowledgments}. We thank Anirudh Sridhar for very helpful discussions on this problem, especially for explaining the higher-order error in \Cref{prop:event_probs}.

\section{Technical Overview}
\label{sec:overview}

In this section, we describe our algorithmic approach in the setting of Glauber dynamics; our results hold more generally, but the main intuition is given in this setting. We provide all notation and assumptions in \Cref{sec:prelims}. For a vector $\bm{x}\in \{-1,1\}^n$, we will write $\bm{x}^{i\mapsto \sigma}$ to denote the vector where $x_i$ is set to $\sigma\in \{-1,1\}$, and also write $\bm{x}^{\oplus S}$ for a multiset $S$ to denote that each variable in $S$ is flipped with multiplicity. For a graph $G$, we will also write $i\sim j$ to denote $(i,j)\in G$.

Recall that we let $(X^t)_{t=0}^T$ be the trajectory of Glauber dynamics.  Each site $i\in [n]$ has an associated set of update times $\Pi_i\subseteq [0,T]$ following a unit rate Poisson point process (i.e. with gaps distributed according to an exponential random variable with mean 1), and at each update time $t\in \Pi_i$, the site re-randomizes according to

\begin{align}
\nonumber
    \mathsf{P}_i(X^t,X^{t,i\mapsto +1})&=\Pr_{\pi}(X_i = 1\vert X_{-i}=X^t_{-i})\\
    \label{eq:sigmoid}
    &=\frac{\exp\left(\sum_{k\neq i}A_{ik}X^t_k+h_i\right)}{\exp\left(\sum_{k\neq i}A_{ik}X_k^t+h_i\right)+\exp\left(-\sum_{k\neq i}A_{ik}X^t_k-h_i\right)}.
\end{align}

\subsection{Prior Work and Challenges} Before providing an overview of our new algorithmic approach and analysis, we briefly discuss the key ideas from existing work on learning the Ising model in the dynamical setting. A key idea of all prior work is that \Cref{eq:sigmoid} encodes highly algorithmically useful structure: in particular, one can hope to design (approximately) \emph{unbiased} statistical estimators to identify structure or parameters that are tractable. In all prior work on learning the Ising model from the Glauber dynamics~\cite{DBLP:journals/tit/BreslerGS18,unified,gaitonde2024bypassing}, an essential observation is that while the Glauber dynamics has strong correlations over time, the conditional law in \Cref{eq:sigmoid} can nonetheless be algorithmically leveraged in this way when all updates are observed. In fact, \Cref{eq:sigmoid} is also true in the i.i.d. setting, so this has been exploited for state-of-the-art algorithms there as well ~\cite{DBLP:conf/focs/KlivansM17,DBLP:conf/nips/WuSD19}. However, it is only possible to leverage the form of \Cref{eq:sigmoid} when one \emph{knows} that $t\in \Pi_i$; this is only possible in the strong observation model since we cannot determine that $t\in \Pi_i$ unless $X^i_t$ changes values. But the conditional law in \Cref{eq:sigmoid} is \emph{trivially false} when one can only condition on the fact that $t\in \Pi_i$ resulted in a change, since $X^t_i$ changed by definition! 

To design learning algorithms that succeed only as a function of the direct trajectory $(X_t)_{t=0}^T$, we must therefore identify substantially new structure observable just from the trajectory to reveal information about $\pi$, which leads to multiple challenges:
\begin{itemize}
    \item The first major challenge is determining what (or \emph{when}) information \emph{is} revealed by site flips, since this is the only information we have access to; in light of the previous discussion, standard estimators can become trivially biased when they are computed only on flip events since the fact that this event occurs also entails \emph{not seeing} the flip before.

    \item Similarly, we need to account for the information that some site $i\in [n]$ does \emph{not} change values in an interval. But solely from the observation of $(X_t)_{t=0}^T$, this can occur for (at least) three reasons: (i) the site simply never attempts to update its value, which is \emph{unobserved}, (ii) the site attempted to update, but had a strong conditional preference not to change its value given the values of its neighbors, or (iii) the site attempted to update and preferred to change values at these updates, but stochasticity in the system prevents it from doing so. This multiplicity makes challenging the search for suitable statistics that can ``explain'' the observations as given just by $(X^t)_{t=0}^T$---each of these three events can become more or less likely depending on the precise scale of the interval. For instance, it will typically be the case that a site $i\in [n]$ does not change values in some small interval $I$ simply because no update attempt occurs, not because there is a conditional preference in $\pi$ to remain at the current value; on longer timescales, the reverse may be true, and of course the inherent noisiness of the process can also cause this at intermediate regimes.
\end{itemize}

We  show how to overcome these fundamental problems for learning from direct trajectory data, in fact even beyond the Glauber setting. In \Cref{sec:structure_overview}, we first describe how we use \emph{localized flip cycles} as a key preliminary step to identifying the dependency structure. We will argue that these statistics will find the \emph{dense edges} (c.f. \Cref{def:dense_edges}) of the Ising model. All remaining edges will form an isolated matching in the full dependence graph, which we further show can be efficiently detected afterwards. We then explain in \Cref{sec:parameter_overview} our parameter learning algorithm given the dependency structure.

\subsection{Structure Learning from Transitions}
\label{sec:structure_overview}
Our first task is to recover the dependency graph directly from the transitions of the Glauber dynamics; again, our results apply more generally, but we focus on Glauber for the exposition. To do so, we heavily exploit a natural idea from prior work on learning from dynamics~\cite{DBLP:journals/tit/BreslerGS18,gaitonde2024bypassing}: correlations between sites manifest in \emph{localized update attempts} in the stochastic evolution where $i$ and $j$ attempt to update $\Theta(1)$ times in close proximity to each other. From these observations, one can hope to formulate a suitable statistic that distinguishes neighbors and nonneighbors. But while these prior algorithms in the strong observability model can directly observe a localized sequence of \emph{update attempts}, we can only observe localized sequences of \emph{site changes}. Therefore, the success of this approach in this much more challenging setting relies on the following question:

\begin{quote}
\begin{center}
    \emph{Can localized site \underline{changes} reveal dependencies between $i$ and $j$ in the Ising model? If so, which ones, and are they efficiently computable?}
    \end{center}
\end{quote}

\noindent\textbf{Cycle Statistics.} Our first main result is that for the Ising model, there indeed does exist such a local statistic that \emph{almost} works that can be efficiently computed and only requires localized observations of flips.
To construct this statistic, we first prove the following result that gives the probability of observing flip sequences on small windows of size $\Theta(\varepsilon)$. We show that if the flip sequence is of bounded length, and  $\varepsilon\ll 1/d$ where $d$ is the maximum degree, then we can get convenient formulas for the probability of observing the flip sequence that become nearly \emph{unbiased} with the length of the window:

\begin{proposition}[\Cref{prop:event_probs}, informal]
\label{prop:event_overview}
    For any fixed time $t>0$, let $X^t$ denote the current configuration. Then for any sequence $\bm{\ell}$ in $\{i,j\}^m$ and for sufficiently small $\varepsilon\ll 1/m$, the probability that we observe the ordered sequence of flips of $i$ and $j$ in an interval of length $m\varepsilon$ is given by:
    \begin{equation*}
        \varepsilon^m\left(\prod_{k=1}^{m} \mathsf{P}_{\ell_k}(X^{t,\oplus \ell_1\ldots \ell_{k-1}},X^{t,\oplus \ell_1\ldots \ell_{k}})\pm O(md\varepsilon)\right).
    \end{equation*}
\end{proposition}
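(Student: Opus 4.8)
## Proof Proposal

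The plan is to analyze the continuous-time dynamics on the window $[t, t+m\varepsilon]$ by conditioning on the realization of the Poisson clocks of sites $i$ and $j$, and to argue that up to the claimed error term, the only way to observe the ordered flip sequence $\bm{\ell}$ is the ``cleanest'' one: exactly $m$ update attempts occur, one per symbol in $\bm{\ell}$, in the correct order, with the $k$-th attempt being at site $\ell_k$ and actually flipping. First I would set up the event decomposition. The probability that site $i$ attempts to update exactly once in a sub-window of length $\approx\varepsilon$ is $\varepsilon e^{-\varepsilon} = \varepsilon(1 \pm O(\varepsilon))$, and attempting two or more times costs an extra factor $O(\varepsilon)$; summing over the $m$ symbols and the two sites, any trajectory with more than $m$ total update attempts among $\{i,j\}$ inside the window contributes at most $\varepsilon^m \cdot O(m\varepsilon)$ to the probability. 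So the dominant contribution comes from realizations with exactly $m$ attempts, and — since the observed sequence has length $m$ with entries in $\{i,j\}$ — these attempts must occur in the order $\ell_1, \ell_2, \ldots, \ell_m$ and each must result in a flip (an attempt that does not flip would leave the configuration unchanged and force the remaining $m-1$ observed flips to come from fewer than $m-1$ remaining attempts, impossible). Integrating the ordered Poisson arrival density over the simplex $\{t \le s_1 \le \cdots \le s_m \le t+m\varepsilon\}$ gives the leading factor $\varepsilon^m$ (more precisely $(m\varepsilon)^m/m! = \varepsilon^m \cdot m^m/m!$, but by restricting each $s_k$ to its own length-$\varepsilon$ sub-block one isolates the clean $\varepsilon^m$ and absorbs the combinatorial discrepancy into the error — I should be careful here about exactly how the window is partitioned, and this is a place where the precise statement in \Cref{prop:event_probs} will pin down constants).

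Next I would handle the flip probabilities themselves. Conditioned on the attempt times being $s_1 < \cdots < s_m$ at sites $\ell_1, \ldots, \ell_m$ in order and on all of them flipping, the configuration just before the $k$-th attempt is $X^{t, \oplus \ell_1 \cdots \ell_{k-1}}$ \emph{provided no other site in the whole graph changed in the window}; the flip probability is then exactly $\mathsf{P}_{\ell_k}(X^{t,\oplus \ell_1 \cdots \ell_{k-1}}, X^{t,\oplus \ell_1 \cdots \ell_k})$, and the product of these over $k$ telescopes into the product appearing in the statement. The correction comes from the possibility that some \emph{other} site $v \notin \{i,j\}$ updates and flips inside the window. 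But such a $v$ only affects the flip probability of $\ell_k$ if $v$ is a neighbor of $\ell_k$ in $G$; there are at most $d$ such neighbors for each of the two sites, each has probability $O(\varepsilon)$ of updating in the window (times $m$ for the window length, so $O(m\varepsilon)$), and conditioning on whether or not it did changes each relevant $\mathsf{P}_{\ell_k}$ by at most an $O(1)$ multiplicative factor bounded in terms of the width $\lambda$ — the net effect on the product of $m$ terms is a relative perturbation of size $O(m d \varepsilon)$, which matches the claimed additive error $\varepsilon^m \cdot O(md\varepsilon)$ after pulling out the $\varepsilon^m$ and using that all the $\mathsf{P}$'s are bounded. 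Non-neighbor updates are irrelevant to the flip probabilities (by the conditional-independence/local structure of the Ising model) and only contribute through the count of attempts, already handled above.

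I would then combine: the target probability equals (clean $\varepsilon^m$ factor) $\times$ (telescoping product of $\mathsf{P}_{\ell_k}$) $\times$ ($1 \pm O(md\varepsilon)$) plus the $O(\varepsilon^{m+1} m)$ error from excess attempts, and collecting everything into one additive error inside the parentheses gives the stated form $\varepsilon^m\big(\prod_k \mathsf{P}_{\ell_k}(\cdots) \pm O(md\varepsilon)\big)$. The main obstacle, and the step requiring the most care, is the bookkeeping for neighbor interference: one must show that conditioning on a length-$m\varepsilon$ window's worth of possible neighbor flips perturbs the \emph{ordered product} of $m$ sigmoid factors by only $O(md\varepsilon)$ rather than something like $O(m^2 d\varepsilon)$ or an uncontrolled amount — this needs the bound $\varepsilon \ll 1/m$ (so all the ``$O(\cdot)$'' quantities are genuinely small), the width bound to control each $\mathsf{P}_{\ell_k}$ above and below by constants depending on $\lambda$, and a union bound over the (at most $2d$) relevant neighbors together with the fact that each contributes independently at first order. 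A secondary subtlety is ensuring the Poisson-simplex integration is done against the right window partition so that the leading constant is exactly $1$ and not $m^m/m!$; this is purely a matter of aligning with how the window of size $m\varepsilon$ is subdivided in the formal statement, and I expect the formal \Cref{prop:event_probs} states it in a way that makes this transparent.
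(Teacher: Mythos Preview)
Your approach is essentially the same as the paper's: condition on whether any neighbor of $i$ or $j$ attempts to update in the window (the paper calls this event $\mathcal{A}$), bound the contribution on $\mathcal{A}^c$ crudely by $\Pr(\mathcal{A}^c)\cdot\varepsilon^m \le 2md\varepsilon^{m+1}$, and on $\mathcal{A}$ argue that excess attempts by $i$ or $j$ are higher-order. The concern you correctly flag about the leading constant ($\varepsilon^m$ versus $(m\varepsilon)^m/m!$) is resolved exactly as you anticipate: the formal event $\mathcal{E}^t_{\bm{\ell}}$ in \cref{eq:flip_event} is defined with the window pre-partitioned into $m$ sub-intervals $I_k=[t+(k-1)\varepsilon,t+k\varepsilon]$, and the $k$-th observed flip is required to land in $I_k$. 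This lets the paper factor the probability on $\mathcal{A}$ as a product over the $m$ sub-intervals via the Markov property and analyze each factor separately (Claim~5.2), obtaining $\varepsilon(\mathsf{P}_{\ell_k}(\cdots)+O(\varepsilon))$ per block and then combining via a telescoping product bound. One minor simplification relative to your sketch: on $\mathcal{A}^c$ the paper does not track how a neighbor flip perturbs the sigmoid factors at all---it simply upper-bounds $\Pr(\mathcal{E}^t_{\bm{\ell}}\cap\mathcal{A}^c)$ by the probability that each $\ell_k$ attempts in $I_k$ (at most $\varepsilon^m$) times $\Pr(\mathcal{A}^c)$, avoiding any appeal to the width $\lambda$ in this step.
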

The significance of \Cref{prop:event_overview} is that if we take $\varepsilon>0$ to be a sufficiently small constant depending mildly on the sequence length and the maximum degree, then we can recover the associated product of flip rates up to an explicit normalization. The intuition behind \Cref{prop:event_overview} is that the most likely way for the flip sequence to occur on a small scale is that each site attempts to flip, and succeeds in doing so, in order exactly the right number of times---this consistutes the dominant term. While there can indeed be confounding by additional flip attempts by these site or by a neighbor that changes the relevant configuration, the key point is that these contribute \emph{higher-order} error terms since the event still only occurs with probability proportional to $\varepsilon^m$ and these confounding events incur a multiplicative $\varepsilon$. Since there are $O(d)$ such confounding events by \Cref{assumption:ising}, the constant one pays by the union bound remains bounded independently of $n$.

Unfortunately, \Cref{prop:event_overview} does not imply any sort of efficient unbiased estimator even for the product of these rates since we simply cannot obtain enough samples to obtain an accurate empirical estimate; the evolution of the Markov chain will generically be quite unlikely to visit \emph{any} configuration $\bm{x}\in \{-1,1\}^n$ many times. In fact, since the stationary probability of any configuration under $\pi$ is also exponentially small in $n$, we could not expect this to be the case even for i.i.d. samples. However, we can nonetheless leverage \Cref{prop:event_overview} to construct a flip-based statistic that will help determine adjacency. In particular, we employ a more complex version of the (dependent) method of moments recently developed by Gaitonde, Moitra, and Mossel~\cite{gaitonde2024bypassing} for the problem of learning higher-order Markov random fields from the Glauber dynamics under full observations. In their work, they wait for the \emph{update} pattern $iijii$ for the Glauber dynamics in order to construct a nonnegative statistic that is suitably lower bounded if the conditional law of $i$ before and after the $j$ update noticeably differ. Since we have a much more restrictive observation model, we instead form a suitable, nonnegative statistic based purely on flip sequences.

To motivate the construction, suppose that the Markov chain is at a configuration $X^t$ and we then observe one of the following two sequences of flips in a short interval of time: $iijj$ or $jiij$. If $i\not\sim j$, it is heuristically clear that both sequences occur with approximately equal probability (up to the higher-order error) since the site transitions are determined only by outside variables, not on each other by assumption.

Suppose now that $i$ and $j$ are indeed adjacent in the Ising model so that $\vert A_{ij}\vert>\alpha$ for some known $\alpha>0$ under \Cref{assumption:ising}. By construction, in the first sequence, both $i$ and $j$ transition along a single edge of the hypercube each \emph{only} when the other is in the initial configuration. In the second sequence, however, $i$ updates only when $j$ is \emph{flipped} from the initial configuration, while $j$ only flips while $i$ is in the initial state. Therefore, \Cref{prop:event_overview} implies that the probability of the first event should be (up to a negligible error term):
\begin{equation*}
    \varepsilon^4\cdot\mathsf{P}_i(X^t,X^{t,\oplus i})\mathsf{P}_i(X^{t,\oplus i},X^{t})\mathsf{P}_j(X^{t},X^{t,\oplus j})\mathsf{P}_j(X^{t,\oplus j},X^t),
\end{equation*}
while the probability of the $jiij$ event should be (approximately)
\begin{equation*}
    \varepsilon^4\cdot\mathsf{P}_j(X^t,X^{t,\oplus j})\mathsf{P}_i(X^{t,\oplus j},X^{t,\oplus \{i,j\}})\mathsf{P}_i(X^{t,\oplus \{i,j\}},X^{t,\oplus j})\mathsf{P}_j(X^{t,\oplus j},X^t).
\end{equation*}
In particular, the difference between them is (approximately)
\begin{equation}
\label{eq:event_diff_overview}
    \varepsilon^4\mathsf{P}_j(X^{t},X^{t,\oplus j})\mathsf{P}_j(X^{t,\oplus j},X^t)\left(\mathsf{P}_i(X^t,X^{t,\oplus i})\mathsf{P}_i(X^{t,\oplus i},X^{t})-\mathsf{P}_i(X^{t,\oplus j},X^{t,\oplus \{i,j\}})\mathsf{P}_i(X^{t,\oplus \{i,j\}},X^{t,\oplus j})\right).
\end{equation}

The identity \Cref{eq:event_diff_overview} is quite promising, since there is a difference in the moments \emph{precisely} when there is a difference in the product of transition rates of $i$ along a hypercube edge when $j$ is either in the initial configuration or flipped. Moreover, the explicit form of the Glauber transitions as in \Cref{eq:sigmoid} shows that 
\begin{align*}
    \mathsf{P}_i(X^t,X^{t,\oplus i})\mathsf{P}_i(X^{t,\oplus i},X^{t})&=\sigma\left(2\sum_{k\neq i}A_{ik}X^t_k+2h_i\right)\left(1-\sigma\left(2\sum_{k\neq i}A_{ik}X^t_k+2h_i\right)\right)\\
    \mathsf{P}_i(X^{t,\oplus j},X^{t,\oplus \{i,j\}})\mathsf{P}_i(X^{t,\oplus \{i,j\}},X^{t,\oplus j})&=\sigma\left(2\sum_{k\neq i,j}A_{ik}X^t_k+2h_i- 2A_{ij}X_j^t\right)\\
    &\cdot\left(1-\sigma\left(2\sum_{k\neq i,j}A_{ik}X^t_k+2h_i-2A_{ij}X^t_j\right)\right),
\end{align*}
where $\sigma(z)= 1/(1+\exp(-z))$ is the sigmoid function.

Let $p$ denote the $\sigma(\cdot)$ factor in the first identity and $p'$ denote the same factor in the second identity. In this case, it follows that if \Cref{eq:event_diff_overview} is small in absolute value, then we must have
\begin{equation}
\label{eq:moment_eq}
    p(1-p)\approx p'(1-p');
\end{equation}
however, note that the function $x\mapsto x(1-x)$ is two-to-one, and is easily shown to be stable in the sense that if \Cref{eq:moment_eq} holds, then it must be the case that either $p\approx p'$ or $p\approx 1-p'$ in a quantitative sense.

When is it possible for \Cref{eq:moment_eq} to hold? Certainly this will be the case when $A_{ij}=0$; this corresponds precisely to the case that $i\not\sim j$ which we already argued should have no difference by conditional independence of these sites. But if instead $\vert A_{ij}\vert>\alpha>0$ for some known constant $\alpha>0$, then we know that $p'\not\approx p$ since the argument to the sigmoid must have noticeably shifted. As a result, \Cref{eq:moment_eq} requires that instead, $p\approx 1-p'$ instead. However, suppose that the \emph{rest} of site $i$'s interactions are nontrivial, in the sense that the linear form
\begin{equation*}
    \ell(\bm{x})=\sum_{k\neq i,j} A_{ik} x_k
\end{equation*}
is not identically zero and has noticeable coefficients. One can easily show that $p\approx 1-p'$ forces $\bm{\ell}(X_{-i,j}^t)+h_i\approx 0$, which imposes an explicit constraint on $\ell(X_{-i,j}^t)$. But at this point, we can appeal to a structural result on anti-concentration of linear functions of sites (\Cref{cor:anti}) to assert that this explicit constraint must \emph{fail} to hold a noticeable fraction of the time we compute this difference of cycle statistics. Therefore, we can assert that the difference in probabilities in \Cref{eq:event_diff_overview} is \emph{noticeably far from zero a noticeable fraction of time}, so one can hope to elicit this information along the trajectory.

Even in the case that we can appeal to this anti-concentration argument, note that the \emph{sign} of this statistic will vary since the quantities depend on how close the conditional biases are to $0$ or $1$, which depends on the outside configuration. While site $i$ and $j$ always have the same conditional preference to match signs or flip signs from each other in the Ising model (depending on the sign of $A_{ij}$), this will not be reflected in these cycle statistics since we observe flips in both directions an equal number of times. To handle this, we employ a ``squaring" trick of Gaitonde, Moitra, and Mossel~\cite{gaitonde2024bypassing}. We can convert this absolute difference in probabilities in \Cref{eq:event_diff_overview} into a strictly positive difference by instead computing the following \emph{degree-8} statistic on an interval of length $8\varepsilon$:

\begin{equation*}
    Z^{i,j}_t=\mathbf{1}\{iijjiijj\}-2\cdot \mathbf{1}\{iijjjiij\}+\mathbf{1}\{jiijjiij\},
\end{equation*}
where we abuse notation to write out the events of a observing the written sequence of flips in the small interval beginning at $t$.

While this may look complicated, a simple application of \Cref{prop:event_overview} reveals that this can be viewed as the ``square'' of the previous cycle statistic obtained by composing the two different cycles in the right order; here, it is essential that each length-four cycle of flips returns to the initial configuration. In particular, one can show that up to the higher-order error term whose relative error can be driven to zero,
\begin{align*}
    \mathbb{E}[Z^{i,j}_t]&\approx \varepsilon^8\mathsf{P}^2_j(X^{t},X^{t,\oplus j})\mathsf{P}^2_j(X^{t,\oplus j},X^t)\\
    &\cdot\left(\mathsf{P}_i(X^t,X^{t,\oplus i})\mathsf{P}_i(X^{t,\oplus i},X^{t})-\mathsf{P}_i(X^{t,\oplus j},X^{t,\oplus \{i,j\}})\mathsf{P}_i(X^{t,\oplus \{i,j\}},X^{t,\oplus j})\right)^2.
\end{align*}
By the previous reasoning, this statistic will be noticeably positive with non-negligible probability so long as there exists some $k\neq j$ such that $i\sim k$ so that the linear form is not identically zero; this culminates in the following result:

\begin{theorem}[\Cref{cor:non_neighbors} and \Cref{cor:bulk_neighbors}, informal]
\label{thm:stat_diff}
    For any time $t>1$, and any conditional history $\mathcal{F}_{t-1}$, if $i\sim j$ \emph{and} there exists $k\neq j$ such that $i\sim k$, then it holds that
    \begin{equation*}
        \mathbb{E}[Z^{i,j}_t\vert \mathcal{F}_{t-1}]\geq \Omega(\varepsilon^8),
    \end{equation*}
    while if $i\not\sim j$
    \begin{equation*}
        \mathbb{E}[Z^{i,j}_t\vert \mathcal{F}_{t-1}]\leq O(d\varepsilon^9).
    \end{equation*}
    In particular, if $\varepsilon\ll 1/d$, then there is an explicit separation between them.
\end{theorem}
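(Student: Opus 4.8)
The plan is to first use \Cref{prop:event_overview} to reduce $\mathbb{E}[Z^{i,j}_t\mid\mathcal{F}_{t-1}]$ to a tractable ``main term'' built only from single-site flip probabilities, then to recognize that main term as a perfect square, and finally to bound the square in the two cases. Fix a time $t>1$, let $\bm{x}:=X^t$ be the configuration at the start of the length-$8\varepsilon$ window on which $Z^{i,j}_t$ is measured, and condition first on $\bm{x}$ and then on $\mathcal{F}_{t-1}$ (using that, by the Markov property, the evolution on the window is independent of $\mathcal{F}_{t-1}$ given $\bm{x}$). Applying \Cref{prop:event_overview} with $m=8$ to each of the three flip patterns $iijjiijj$, $iijjjiij$, $jiijjiij$ occurring in $Z^{i,j}_t$, and absorbing the three $O(d\varepsilon)$ relative errors into a single additive $O(d\varepsilon^9)$ term, gives
\begin{equation*}
\mathbb{E}[Z^{i,j}_t\mid\bm{x}] = \varepsilon^8\bigl(\Pi_1(\bm{x})-2\Pi_2(\bm{x})+\Pi_3(\bm{x})\bigr)\pm O(d\varepsilon^9),
\end{equation*}
where $\Pi_1,\Pi_2,\Pi_3$ are the products of flip probabilities $\mathsf{P}_{\ell}(\cdot,\cdot)$ traced out by the respective patterns started at $\bm{x}$.

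Next I would evaluate $\Pi_1,\Pi_2,\Pi_3$ by writing out the configurations visited along each pattern. The key structural point is that in \emph{all three} patterns every flip of $j$ occurs while site $i$ is in its initial state, so each $j$-factor is one of $\mathsf{P}_j(\bm{x},\bm{x}^{\oplus j})$ or $\mathsf{P}_j(\bm{x}^{\oplus j},\bm{x})$, while the $i$-flips occur either with $j$ at its initial value or at its flipped value. Setting
\begin{equation*}
a=\mathsf{P}_i(\bm{x},\bm{x}^{\oplus i})\mathsf{P}_i(\bm{x}^{\oplus i},\bm{x}),\qquad a'=\mathsf{P}_i(\bm{x}^{\oplus j},\bm{x}^{\oplus\{i,j\}})\mathsf{P}_i(\bm{x}^{\oplus\{i,j\}},\bm{x}^{\oplus j}),\qquad b=\mathsf{P}_j(\bm{x},\bm{x}^{\oplus j})\mathsf{P}_j(\bm{x}^{\oplus j},\bm{x}),
\end{equation*}
a short computation gives $\Pi_1=a^2b^2$, $\Pi_2=aa'b^2$, $\Pi_3=(a')^2b^2$, hence the clean identity
\begin{equation*}
\Pi_1-2\Pi_2+\Pi_3 = b^2(a-a')^2\geq 0,
\end{equation*}
which is exactly why this particular degree-$8$ ``squared'' combination was chosen.

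It remains to analyze $b^2(a-a')^2$ in each case. If $i\not\sim j$ then $A_{ij}=0$, so $\mathsf{P}_i$ is unchanged under flipping coordinate $j$; thus $a=a'$ identically and $\Pi_1-2\Pi_2+\Pi_3\equiv 0$, which with the first display yields $\mathbb{E}[Z^{i,j}_t\mid\mathcal{F}_{t-1}]=O(d\varepsilon^9)$. Now suppose $i\sim j$ and there is $k\neq j$ with $i\sim k$. The $\ell_1$-width bound (\Cref{assumption:ising}) lower bounds every Glauber flip probability, so $b\geq\Omega(1)$ uniformly in $\bm{x}$. Using the sigmoid form \Cref{eq:sigmoid} one writes $a=g\bigl(2\ell(\bm{x})+2A_{ij}x_j+2h_i\bigr)$ and $a'=g\bigl(2\ell(\bm{x})-2A_{ij}x_j+2h_i\bigr)$, where $g(z)=\sigma(z)(1-\sigma(z))$ is even and $\ell(\bm{x})=\sum_{k'\neq i,j}A_{ik'}x_{k'}$. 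The two arguments of $g$ differ by $4A_{ij}x_j$, of magnitude $\geq4\alpha$, and sum to $4(\ell(\bm{x})+h_i)$; since $g$ is even, continuous, strictly positive, and decays only at $\pm\infty$, while both arguments lie in a bounded interval determined by $\lambda$, a compactness/stability argument produces $\delta=\delta(\alpha,\lambda)>0$ with $|a-a'|\geq\Omega(1)$ whenever $|\ell(\bm{x})+h_i|\geq\delta$. Finally, since $A_{ik}\neq0$, $\ell$ is a nondegenerate linear form, so the anti-concentration result \Cref{cor:anti} gives $\Pr[|\ell(X^t)+h_i|\geq\delta\mid\mathcal{F}_{t-1}]\geq\Omega(1)$. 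Hence $\mathbb{E}[b^2(a-a')^2\mid\mathcal{F}_{t-1}]\geq\Omega(1)$, and plugging into the first display, $\mathbb{E}[Z^{i,j}_t\mid\mathcal{F}_{t-1}]\geq\Omega(\varepsilon^8)-O(d\varepsilon^9)\geq\Omega(\varepsilon^8)$ once $\varepsilon\ll1/d$.

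I expect the main obstacle to be the quantitative stability of $g$ in the last step --- converting ``$a\approx a'$'' into ``$\ell(\bm{x})+h_i\approx0$'' with explicit dependence on $\alpha$ and $\lambda$ --- together with verifying that \Cref{cor:anti} genuinely applies to the law of $X^t$ conditioned on $\mathcal{F}_{t-1}$ (i.e.\ that a unit of elapsed time refreshes enough of the neighbors of $i$ to make $\ell(X^t)$ spread out). The bookkeeping in the perfect-square step is routine but must be carried out carefully. For the general reversible single-site chain of \Cref{assumption:mc}, the sigmoid identities and $g$ are replaced by the monotonicity and stability hypotheses imposed there, but the three-step structure is identical.
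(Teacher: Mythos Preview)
Your proposal is correct and follows essentially the same approach as the paper: apply \Cref{prop:event_probs} to each of the three patterns, factor the signed combination as $\varepsilon^8 b^2(a-a')^2$ (this is exactly \Cref{prop:main_stat}), observe $a=a'$ when $i\not\sim j$ (\Cref{cor:non_neighbors}), and for dense edges combine a stability bound on $g$ with the anticoncentration result \Cref{cor:anti} to force $b^2(a-a')^2\geq\Omega(1)$ with $\Omega(1)$ probability (\Cref{cor:bulk_neighbors}). Your use of the evenness of $g(z)=\sigma(z)(1-\sigma(z))$ to reduce the constraint $a\approx a'$ directly to $\ell(\bm{x})+h_i\approx 0$ is a clean Glauber-specific instantiation of the paper's abstract stability condition \Cref{def:stable}.
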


As such, the first step of our structure learning algorithm does the following:
\begin{itemize}
    \item For each $(i,j)$ pair, aggregate many samples of $Z^{i,j}_t$ with constant time spacing to ensure the samples are sufficiently independent to apply concentration for the aggregates \emph{and} anticoncentration bounds for linear forms under dynamics.
    \item If the empirical average is suitably positive, then output $i\sim j$.
\end{itemize}

By \Cref{thm:stat_diff}, we conclude that the algorithm finds a true subset of $G$ that contains all \emph{dense edges}, meaning those where either $i$ or $j$ has degree at least $2$ (c.f. \Cref{def:dense_edges}).
We crucially do \emph{not} deduce $i\not\sim j$ if the empirical average of $Z^{i,j}_t$ over many samples is small. As mentioned, this is because this can be simply false; one can reverse the above logic to deduce that these flip cycle statistics are all equal for the Glauber dynamics when $\pi(\bm{x})\propto\exp(x_ix_j)$, so no such statistic could possibly distinguish them. However, this reasoning also shows that that this can \emph{only} occur if $i\sim j$ is an \emph{isolated edge} in $G$, since it is not a dense edge by the above. In the next section, we describe our method to recover the remaining edges.

We briefly note that an essentially identical argument will hold for any Markov chain that satisfies similar abstract properties (c.f. \Cref{assumption:mc}); notably this holds for natural forms of the popular Metropolis dynamics as well. As a result, our main result on structure learning can be formulated for this more general setting in a unified way.

\noindent\textbf{Recovering Matchings and Independent Vertices}. To summarize the previous argument, the algorithmic guarantees of the degree-8 cycle statistics are that:
\begin{enumerate}
    \item If $i\sim j$ is a dense edge (\Cref{def:dense_edges}), then the cycle statistic finds that $i\sim j$.
    \item If $i\not\sim j$, then the cycle statistic will rightly not detect an adjacency between them.
\end{enumerate}
In particular, when the cycle statistic finds for a certain node $i$ that there are no adjacencies, the only possibilities are that (i) site $i$ is indeed independent (i.e. has no adjacencies) from all other sites, or (ii) there exists a \emph{unique} $j$ that the cycle statistic also finds no adjacencies for and $i\sim j$. The uniqueness in (ii) follows from the fact that all dense edges are found, and the rest of $G$ forms an isolated matching on the rest of the sites (c.f. \Cref{fact:dense_edge_structure}).

It then suffices to design an algorithm that, given the set $\mathcal{O}\subseteq [n]$ of nodes that the cycle test cannot find any adjacency for, can detect all adjacencies \emph{among} $\mathcal{O}$. Our structural result will imply that the induced dependence graph on $\mathcal{O}$ must form a \emph{matching}. From this point, we show that one can efficiently recover all of these edges using spin-spin correlations computed on a short timescale:
\begin{theorem}[\Cref{thm:main_match}, informal]
\label{thm:parameter_overview_thm}
Given the set $\mathcal{O}$ as above, there is an algorithm that computes time-averaged estimates of the spin-spin probabilities $\pi(x_i,x_j)$ for each $i,j\in \mathcal{O}$ over a trajectory of length $T=O(
\log(n))$ and correctly determines adjacencies in $\mathcal{O}$. The runtime is $O(Tn^2)$.
\end{theorem}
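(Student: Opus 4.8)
The plan is to recover the edges of $G$ lying inside $\mathcal{O}$ by time-averaging spin–spin statistics, exploiting the fact that every connected component of $G$ meeting $\mathcal{O}$ is trivially small. First I would record the structural consequences of the cycle-test guarantees. Since all dense edges are detected, every $i\in\mathcal{O}$ has degree at most $1$ in $G$; moreover, if $(i,j)\in G$ with $i\in\mathcal{O}$, then $j\in\mathcal{O}$ as well, since otherwise $j$ would have degree $\geq 2$, making $(i,j)$ a dense edge and hence detected, contradicting $i\in\mathcal{O}$. Consequently, for any pair $i,j\in\mathcal{O}$ the vertex set $C=C_i\cup C_j$ consisting of the $G$-components of $i$ and $j$ has size at most $4$ and is a union of connected components of $G$. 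In particular, when $i\not\sim j$ the components $C_i,C_j$ are disjoint, so $x_i\perp x_j$ under $\pi$; and when $i\sim j$ the pair $(x_i,x_j)$ has the two–spin marginal $\pi(x_i,x_j)\propto \exp(A_{ij}x_ix_j+h_ix_i+h_jx_j)$ with $|A_{ij}|\geq\alpha$.

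The key observation driving the algorithm is that, because $C$ is a union of $G$-components, the coordinate projection $(X^t_\ell)_{\ell\in C}$ is itself an autonomous continuous-time Markov chain: each site in $C$ updates at rate $1$ using the conditional law of $\pi$ given its neighbors, all of which lie in $C$. This projected chain has a state space of size at most $16$, is reversible with stationary law $\pi_C$, and — being a product of at most two single-edge or single-vertex Glauber chains — has relaxation time $O_\lambda(1)$; crucially, we observe it in full, since we see every flip of every coordinate. I would therefore have the algorithm compute, for each pair $i,j\in\mathcal{O}$, the time-averaged joint frequencies $\widehat{\pi}_{ij}(\sigma,\tau)=\frac{1}{T}\int_0^T \mathbf{1}\{X^t_i=\sigma,\,X^t_j=\tau\}\,dt$ (equivalently the time-averaged spin means $\widehat{\mu}_i,\widehat{\mu}_j$ and the time-averaged product $\frac1T\int_0^T X^t_iX^t_j\,dt$), and declare $i\sim j$ iff the estimated covariance $\widehat{\mathrm{Cov}}_{ij}$ exceeds a threshold $\beta/2$. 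Here $\beta=\beta(\alpha,\lambda)>0$ is a lower bound on $|\mathrm{Cov}_\pi(x_i,x_j)|$ over all two–spin models with $|A_{ij}|\geq\alpha$ and $|A_{ij}|,|h_i|,|h_j|$ bounded in terms of $\lambda$, which exists by continuity and compactness since the covariance of a two–spin model vanishes exactly when $A_{ij}=0$.

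The analysis then has two ingredients. First, a standard exponential concentration inequality for additive functionals of a reversible chain with relaxation time $O_\lambda(1)$ (absorbing an $O(\log(Tn))$ burn-in to handle the arbitrary start $X^0$) shows that each of the $O(1)$ time-averages above is within $\beta/8$ of its $\pi_C$-expectation except with probability $\exp(-c_\lambda\beta^2 T)$. Since $\mathrm{Cov}_{\pi_C}(x_i,x_j)=\mathrm{Cov}_\pi(x_i,x_j)$ and spins are bounded by $1$, on this event $\widehat{\mathrm{Cov}}_{ij}$ lies within $\beta/4$ of $0$ when $i\not\sim j$ and within $\beta/4$ of a quantity of magnitude $\geq\beta$ when $i\sim j$, so the threshold $\beta/2$ separates the two cases; taking $T=\Theta_{\lambda,\alpha}(\log n)$ and union bounding over the $O(n^2)$ pairs in $\mathcal{O}$ gives correctness with high probability, and combined with the dense edges already output this recovers all of $G$. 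Second, for the runtime, computing the statistics for a fixed pair only requires the flips of $i$ and $j$, of which there are $O(T)$ with high probability, for a total of $O(n^2 T)$ time.

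The main obstacle, I expect, is not any single step but assembling them cleanly: one must verify that the ``projection is Markov'' argument genuinely applies — it hinges on components meeting $\mathcal{O}$ being self-contained, which rests entirely on the cycle test detecting all dense edges — and one must invoke concentration for a \emph{continuous-time} reversible chain started out of stationarity with the sharp, spectral-gap–governed rate rather than a crude variance bound, since only this affords the union bound over all pairs with $T=O(\log n)$. Making the two–spin covariance lower bound $\beta(\alpha,\lambda)$ quantitative and degree-free is routine but needed for the stated complexity.
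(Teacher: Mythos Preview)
Your proposal is correct and follows essentially the same route as the paper: project to the constant-size autonomous subsystem containing $i,j$ (a union of $G$-components of size $\le 4$), invoke a Chernoff-type bound for time averages of a rapidly mixing reversible chain (the paper uses Lezaud's inequality, which handles the arbitrary start via the $1/\pi_{\min}$ prefactor rather than a burn-in), and threshold a spin--spin statistic that separates $i\sim j$ from $i\not\sim j$. The only cosmetic differences are that the paper thresholds the empirical difference of conditional probabilities $\widehat\Pr(X_i=+1\mid X_j=+1)-\widehat\Pr(X_i=+1\mid X_j=-1)$ rather than the covariance, and makes the separation $\beta(\alpha,\lambda)$ explicit via a sigmoid derivative bound (\Cref{fact:km_sigmoid}) rather than compactness; note also that your threshold should be on $|\widehat{\mathrm{Cov}}_{ij}|$ since $A_{ij}$ can have either sign.
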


To establish \Cref{thm:parameter_overview_thm}, note that since all connected components in $\mathcal{O}$ have size at most $2$, the restricted Markov chain forms a product chain where each component \emph{rapidly mixes} under \Cref{assumption:ising} and \Cref{assumption:mc}---this can be easily seen using the method of canonical paths. As a result, we can apply a concentration bound  (\Cref{thm:lezaud}) given by Lezaud~\cite{lezaud2001chernoff} that asserts that the time average of any bounded function converges fast to its expectation under the stationary measure $\pi$. Because these bounds give Chernoff-type concentration, we can apply \Cref{thm:lezaud} to accurately compute all spin-spin correlations in $\mathcal{O}$ under $\pi$ by observing the trajectory for only $O(\log(n))$ time. At that point, we can easily show that neighbors in $\mathcal{O}$ will have inconsistent spin-spin correlations from product distributions, so we can correctly determine the remaining matching. This completes the algorithm for structure learning.

\subsection{Recovering Model Parameters Efficiently}
\label{sec:parameter_overview}
Once the dependency graph is recovered, the task of recovering the actual model parameters is still not trivial since we can only observe site changes rather than all updates. At a high-level, the approach is quite natural: since we know the at most $d$ neighbors of each site $i\in [n]$, we can first directly try to estimate each of the transition probabilities $\mathsf{P}_i(\bm{x},\bm{x}^{\oplus i})$ up to suitable accuracy; since the transitions only depend on $\bm{x}_{i\cup \mathcal{N}(i)}$, we can restrict to the at most $2^{d+1}$ relevant configurations and ignore the outside coordinates in estimating these transition rates. If we can obtain these estimates, then one can show that by the reversibility of Glauber dynamics (c.f. \Cref{def:reversibility})
\begin{equation}
\label{eq:parameter}
    \exp\left(4A_{ij}\right) = \frac{\mathsf{P}_i(\bm{x}^{i\mapsto -1,j\mapsto -1},\bm{x}^{i\mapsto +1,j\mapsto -1})/\mathsf{P}_i(\bm{x}^{i\mapsto +1,j\mapsto -1},\bm{x}^{i\mapsto -1,j\mapsto -1}) }{\mathsf{P}_i(\bm{x}^{i\mapsto -1,j\mapsto +1},\bm{x}^{i\mapsto +1,j\mapsto +1})/\mathsf{P}_i(\bm{x}^{i\mapsto +1,j\mapsto +1},\bm{x}^{i\mapsto -1,j\mapsto +1})}.
\end{equation}
While this is the approach that we will end up taking, there are two subtle difficulties in implementing this approach accurately and efficiently:
\begin{itemize}
    \item First, how do we obtain unbiased estimators of $\mathsf{P}_i(\bm{x},\bm{x}^{\oplus i})$ for a given value of $\bm{x}\in \{-1,1\}^{d+1}$? We still have the original issue that we cannot observe failed transitions, so naive estimators will be highly biased.\label{item:issue_1}
    \item The above estimator relies on having sufficient samples to estimate all of the ratios for just \emph{some} outside configuration $\bm{x}_{-i,j}$ but with \emph{all} settings of $x_i$ and $x_j$ in $\{-1,1\}$. Since the evolution of the Markov chain is quite complex, it is not clear how long it will take to find such a point with sufficiently many samples for all four relevant configurations.\label{item:issue_2}
\end{itemize}

For the first item, we proceed by using the same localization trick as when computing cycle statistics as in \Cref{prop:event_overview}: each time we are at an outside configuration $\bm{x}_{-i}$, we can compute the fraction of times that $x_i$ flips in a small $\varepsilon>0$ window. The same analysis will show that if $\varepsilon>0$ is sufficiently small (say $\varepsilon\ll 1/d$), the bias of the (appropriately normalized) estimator can be driven to zero. While this scale determines the variance of this empirical estimator, the dependence will be polynomial in all the relevant parameters so long as we obtain enough observations for this $\bm{x}_{-i}$. The crucial difference now is that there are only $2^{d+1}$ possible configurations to consider rather than $2^n$ as was the case before.

The second item is somewhat more subtle to deal with to get better algorithmic dependencies. One approach would be to simply pay a worst case bound to try to collect accurate rates for \emph{all} configurations $\bm{x}\in \{-1,1\}^{d+1}$. However, the probability of observing a fixed configuration $\bm{x}$ even in the i.i.d. setting can be as low as $\exp(-\Omega(\lambda d))$; in the dynamical setting, this kind of behavior can easily persist. Estimating \emph{all} rates would therefore require at least on the order of $\exp(\Omega(\lambda d))$ samples at best. 

Since this heuristic approach is already somewhat tricky to implement properly, we can instead argue more carefully as follows to replace the exponential dependence on $\lambda d$ with a much sharper dependence on $d$. Our main result for parameter learning is the following:

\begin{theorem}[\Cref{cor:emp_accurate}, informal]
\label{thm:parameter_rates_overview}
    Given the dependence graph of $G$, for any $i\in [n]$ and $j\in \mathcal{N}(i)$, there is an algorithm that observes the trajectory for time $T=\widetilde{O}(2^d\log(1/\delta))$ and obtains accurate estimates of each $\mathsf{P}_i(\bm{z},\bm{z}^{\oplus i})$ along a dimension $2$ subcube of $\{-1,1\}^{\mathcal{N}(i)\cup \{i\}}$ that has all configurations for each setting of $x_i,x_j$, with probability at least $1-\delta$.
\end{theorem}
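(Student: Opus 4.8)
The plan is to combine a localized, almost-unbiased estimator for the individual flip probabilities $\mathsf{P}_i(\bm{z},\bm{z}^{\oplus i})$ with a pigeonhole argument over the $2^{d+1}$ configurations of $S:=\mathcal{N}(i)\cup\{i\}$ that control site $i$. Write $S':=\mathcal{N}(i)\setminus\{j\}$, so $|S'|\le d-1$, and recall two elementary facts under \Cref{assumption:ising}/\Cref{assumption:mc}: every single-site flip probability is at least $\beta:=e^{-O(\lambda)}>0$, and since $\bm{x}_S$ can change only at the (at most $d+1$) update clocks of sites in $S$, each visit of the process to a fixed value of $\bm{x}_S$ lasts, conditionally on the past, at least an $\mathrm{Exp}(d+1)$ amount of time.

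\textbf{Step 1: localized rate estimation (handles the bias obstacle).} Fix $\bm{z}\in\{-1,1\}^S$. Partition $[0,T]$ into windows of length $\varepsilon$, keep only the windows whose starting configuration restricted to $S$ equals $\bm{z}$, and for each such window record the indicator that site $i$ flips exactly once inside it. By the localization analysis behind \Cref{prop:event_overview}, extended to track $i$ together with all of $\mathcal{N}(i)$, conditioned on the history up to the window's start this indicator has mean $\varepsilon\bigl(\mathsf{P}_i(\bm{z},\bm{z}^{\oplus i})\pm O(d\varepsilon)\bigr)$, the error term accounting for the $O(d)$ confounding update attempts by $i$ or a neighbour inside the window. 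Averaging over all such windows and dividing by $\varepsilon$ thus yields an estimator with bias $O(d\varepsilon)$; taking $\varepsilon=\Theta(\eta/d)$ makes the bias at most $\eta/2$, and since the centered indicators form a bounded martingale difference sequence, a Freedman/Azuma bound shows the estimate is within $\eta$ of $\mathsf{P}_i(\bm{z},\bm{z}^{\oplus i})$ with probability $1-\delta'$ as long as the process spends total time $\Omega(\mathsf{poly}(1/\eta)\log(1/\delta'))$ with $\bm{x}_S=\bm{z}$. We take $\eta=\Theta(\beta\varepsilon_{\mathrm{acc}})$, so that feeding the four corner estimates into the reversibility identity \Cref{eq:parameter} recovers $A_{ij}$ to accuracy $\varepsilon_{\mathrm{acc}}$.

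\textbf{Step 2: locating a good $2$-subcube (handles the coverage obstacle; the crux).} It remains to produce, with probability $1-\delta$ and within time $T=\widetilde{O}(2^d\log(1/\delta))$ (with $\mathsf{poly}(e^\lambda,1/\varepsilon_{\mathrm{acc}})$ absorbed into the constant), some $\bm{y}^\star\in\{-1,1\}^{S'}$ all four of whose extensions $(\bm{y}^\star,x_i,x_j)$ are visited for total time $\Omega(\mathsf{poly}(1/\eta)\log(1/\delta))$. By pigeonhole over the $\le 2^{d+1}$ values of $\bm{x}_S$, some $\bm{z}^\star$ is visited for total time $\tau\ge T/2^{d+1}$; let $\bm{y}^\star$ be its restriction to $S'$, so that the four corners $\bm{z}^\star,\bm{z}^{\star,\oplus i},\bm{z}^{\star,\oplus j},\bm{z}^{\star,\oplus\{i,j\}}$ realize all four $(x_i,x_j)$ settings over $\bm{y}^\star$. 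The heart of the proof is a transfer estimate: whenever $\bm{x}_S=\bm{w}$ the process jumps to $\bm{w}^{\oplus i}$ (resp.\ $\bm{w}^{\oplus j}$) at rate $\ge\beta$, so the counting process of $\bm{z}^\star\to\bm{z}^{\star,\oplus i}$ jumps has predictable compensator $\ge\beta\cdot(\text{occupation time of }\bm{z}^\star)$; a Bernstein bound for the associated counting-process martingale gives $\ge\tfrac12\beta\tau$ such jumps once $\beta\tau\gtrsim\log(1/\delta')$, and since each resulting visit to $\bm{z}^{\star,\oplus i}$ lasts, conditionally, at least $\mathrm{Exp}(d+1)$, concentrating this sum of conditionally stochastically-dominated exponentials shows the occupation time of $\bm{z}^{\star,\oplus i}$ is $\Omega(\beta\tau/d)$. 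The same argument applied once more (from $\bm{z}^{\star,\oplus i}$ or $\bm{z}^{\star,\oplus j}$ to $\bm{z}^{\star,\oplus\{i,j\}}$) yields occupation time $\Omega(\beta^2\tau/d^2)=\Omega\bigl(\beta^2 T/(d^2 2^{d+1})\bigr)$ for every one of the four corners. Choosing $T=\Theta\bigl(2^d d^2\beta^{-2}\,\mathsf{poly}(1/\eta)\,\log(1/\delta)\bigr)=\widetilde{O}(2^d\log(1/\delta))$ makes this at least what Step 1 requires.

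\textbf{Wrap-up and main obstacle.} Running the Step 1 estimator for every value of $\bm{x}_S$ actually observed and outputting the four estimates on any $2$-subcube whose corners all accumulated enough samples (Step 2 guarantees one exists), a union bound over the $\le 2^{d+1}$ configurations and the $O(1)$ transfer events---with per-event failure $\delta'=\delta/\mathrm{poly}(2^d)$, which only inflates $T$ by $\mathrm{polylog}$ factors---gives overall success probability $\ge1-\delta$; the runtime is dominated by bucketing the $\widetilde{O}(T)$ windows by their $S$-restricted starting configuration, i.e.\ $\widetilde{O}(2^d n)$ across all sites. I expect the transfer estimate of Step 2 to be the principal difficulty: the jump counts and the per-visit dwell times are not independent (a dwell time after the $m$-th entry into $\bm{z}^{\star,\oplus i}$ depends on the configuration outside $S$ and on all past randomness), so the concentration must be phrased through predictable compensators and conditional stochastic domination rather than sums of i.i.d.\ variables, and one must also verify that the pigeonholed $\bm{z}^\star$ and all of its neighbours can be controlled uniformly so that the $2^{d+1}$-way union bound costs only logarithmic factors in $T$.
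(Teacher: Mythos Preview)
Your proposal is essentially correct and reaches the same guarantee, but your Step~2 takes a genuinely different route from the paper.

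For the coverage obstacle, the paper does \emph{not} pigeonhole on a single most-occupied configuration and then push occupation to its three hypercube neighbors. Instead it samples only at the spaced times $2,4,\ldots,2T$ and invokes the local stability lemma (\Cref{prop:local_stable}/\Cref{cor:stable_random}): for any state, after one time unit there is a sub-transition kernel $Q_{ij}$ of constant total mass under which, for \emph{every} outside setting $\bm{y}\in\{-1,1\}^{S_i\setminus\{i,j\}}$, the four conditional sub-probabilities of $(x_i,x_j)$ are within a fixed factor $c\kappa^4\exp(-O(\gamma\lambda))$ of one another. Summing these kernels over $t$ and pigeonholing directly over $\bm{y}$ (not over full configurations) yields a subcube whose four pathwise sums $\sum_t H_1(X^{2t-1},(\bm{y},x_i,x_j))$ \emph{simultaneously} exceed the threshold; Freedman (\Cref{prop:freedman}) then converts this to large empirical counts $N_{\bm{z}}$.

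Your route bypasses local stability entirely, replacing it with the transfer estimate: pigeonhole on the most-occupied $\bm{z}^\star$, then use the pointwise bound $\mathsf{P}_k(\cdot,\cdot^{\oplus k})\ge\kappa$ together with $\mathrm{Exp}(d{+}1)$ dwell-time domination to propagate occupation to $\bm{z}^{\star,\oplus i}$, $\bm{z}^{\star,\oplus j}$, $\bm{z}^{\star,\oplus\{i,j\}}$ at cost $\kappa/d$ per hop. This is more elementary---it needs only the minimum flip rate and Poisson-clock facts---and makes the parameter-learning step self-contained; the paper's route is cleaner because \Cref{prop:local_stable} is already proved for the structure-learning anticoncentration (\Cref{cor:anti}), so it comes for free and sidesteps the counting-process martingale bookkeeping you correctly flag as the crux. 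Quantitatively the two-hop loss $\kappa^2/d^2$ is on the same order as the paper's $\kappa^4\exp(-O(\gamma\lambda))$ from the sub-kernel comparison, so neither dominates. One minor point: since your Step~2 does not rely on local stability, consecutive $\varepsilon$-windows are fine in Step~1, but when converting continuous occupation time to a window count you should note that the number of visits to any fixed $\bm{z}$ is $O((d{+}1)\tau_{\bm{z}})$ by the same exit-rate argument, which is negligible against $\tau_{\bm{z}}/\varepsilon$ once $\varepsilon\ll 1/d$.
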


In words, \Cref{thm:parameter_rates_overview} asserts that after just $T=\widetilde{O}(2^d)$ time, we can obtain accurate estimates of each of the quantities on the right hand side of \Cref{eq:parameter} for some subcube; any subcube with sufficient samples will suffice by concentration. Doing this for each $i\in [n]$ and $j\in \mathcal{N}(i)$ yields our overall runtime of $T=\widetilde{O}(2^dn)$. To further recover the external fields $\bm{h}$, we can employ similar reasoning using reversibility so long as we have estimates of each $A_{ij}$ to accuracy $\ll 1/d$ to control the error. Note that $\Omega(2^d)$ samples would already be required to observe sufficient samples for each point in any subcube as above even upon getting i.i.d. samples from the uniform distribution on $\{-1,1\}^d$ by standard coupon collector arguments, so the sample complexity in \Cref{thm:parameter_rates_overview} is essentially optimal for this approach.

To show \Cref{thm:parameter_rates_overview}, we will collect the above samples for outside configurations spaced out by a fixed constant, say $2$; this will ensure there is at least some weak independence between consecutive samples. Conditioned on observing a configuration at some timestep, the law of the configuration at the next timestep is somewhat complex. However, we show (c.f. \Cref{prop:local_stable}) that the distribution on the next configuration can be lower-bounded by a (sub)-distribution with constant probability mass that satisfies the following guarantee: for any setting of the the outside configuration $\bm{y}\in \{-1,1\}^{\mathcal{N}(i)\setminus \{i,j\}}$, each of the conditional (sub)-probabilities of the four ways to set $i,j$ are at least a constant. 

Therefore, while the law of $\bm{y}\in \{-1,1\}^{\mathcal{N}(i)\setminus \{i,j\}}$ may itself be complicated and vary drastically between timesteps after conditioning on the previous timestep, we can deduce by an averaging argument that after at most $T=\widetilde{O}(2^d)$ timesteps, there surely exists a sub-cube $\bm{y}\in \{-1,1\}^{\mathcal{N}(i)\setminus \{i,j\}}$ such that the pathwise sum of \emph{conditional probabilities} of each of all four ways to set $i,j$ is fairly large. By employing an appropriate version of Freedman's pathwise martingale inequality, we can ensure that the error of \emph{all} estimators at all sites we obtain after $T$ timesteps are accurate at squareroot scale of the pathwise sum of conditional probabilities. As a result, we ensure with probability $1$ that there \emph{exists} a configuration $\bm{x}\in \{-1,1\}^{\mathcal{N}(i)\setminus \{i,j\}}$ such that we have many samples of flip events for each setting of $\{i,j\}$, and moreover, these estimates will be accurate with high-probability. Here, the use of Freedman's inequality appears essential to obtaining the $\widetilde{O}(2^dn)$ overall runtime when setting parameters appropriately.

We note that this entire argument is \emph{general}, and relies only on reversiblility and single-site updates of the Markov chain to exploit \Cref{eq:parameter}, as well as obvious nondegeneracy conditions ensuring the Markov chain moves nontrivially. Therefore, the algorithmic guarantees for parameter learning (assuming the dependency graph is know and has maximum degree $d$) extend broadly even with minimal assumptions on the precise generative process.

\section{Preliminaries}
\label{sec:prelims}

\noindent\textbf{Notation}.
We use capital letters $X,Y,\ldots$ to denote random variables and bold font $\bm{x},\bm{y},\ldots$ to denote non-random vectors. For a multiset $S$, we write $\bm{x}^{\oplus S}$ to denote the vector $\bm{x}\in \{-1,1\}^n$ with the bits in $S$ flipped with multiplicity, i.e. if $x_i^{\oplus S} = (-1)^{m(i,S)}x_i$ where $m(i,S)$ denotes the multiplicity of $i$ in $S$. We also write $\bm{x}^{i\mapsto a}$ to denote the vector $\bm{x}$ where the $i$th value is reset to $a$.

We will use the notation $\mathcal{A},\mathcal{B},\ldots$ to denote events. We write $\mathcal{E}^c$ to denote the complement of the event $\mathcal{E}$. Given a subset of indices $S\subseteq [n]$, we use the subscript $-S$ to denote the restriction of a vector to the coordinates outside $S$. We will occasionally write $-i$ or $-i,j$ in place of $-\{i\}$ and $-\{i,j\}$ for notational ease.

\subsection{Ising Models}
We consider Ising models parameterized by a symmetric matrix $A\in \mathbb{R}^{n\times n}$ and external fields $\bm{h}\in \mathbb{R}^n$. Then the corresponding Ising model is the distribution $\pi=\pi_{A,\bm{h}}$ given by 
\begin{equation*}
    \pi(\bm{x})=\frac{\exp\left(\frac{1}{2}\bm{x}^TA\bm{x} + \bm{h}^T\bm{x}\right)}{Z},
\end{equation*}
where $Z$ is the partition function, or normalizing constant that ensures $\pi$ is a probability distribution. 

We will write $i\sim j$ to denote $\vert A_{i,j}\vert>0$; that is, $i$ and $j$ directly interact with each other in the potential. Then the \textbf{dependence graph} of $\mu_{A,\bm{h}}$ is the graph $G=([n],E)$ with edge set
\begin{equation*}
    E = \{(i,j): \vert A_{ij}\vert>0\}.
\end{equation*}

We make the following definition:
\begin{definition}
\label{def:dense_edges}
    Let $G=(V,E)$ denote a graph. Then the set $H$ of \textbf{dense edges} of $G$ is defined to be the set of edges that lie in connected components with average degree strictly greater than $1$.
\end{definition}

The following fact is immediate from \Cref{def:dense_edges}.
\begin{fact}
\label{fact:dense_edge_structure}
    For any graph $G=(V,E)$, let $H$ denote the dense edges. Then it holds that $\mathcal{O}=E\setminus H$ is a matching. Moreover, there are no edges in $E$ between a vertex in $\mathcal{O}$ and a vertex adjacent to an edge in $H$.

    In particular, vertices with no edge in $H$ are either isolated in $E$ or belong to an isolated edge in $E$ with no neighbor in $H$.
\end{fact}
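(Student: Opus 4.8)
The plan is to reduce everything to a clean dichotomy about connected components. First I would show that a connected component $C$ of $G$ is dense (in the sense of \Cref{def:dense_edges}) if and only if it has at least three vertices. Writing $k=|V(C)|$ and $m=|E(C)|$, the average degree of $C$ is $2m/k$, and connectivity gives $m\geq k-1$. If $k\geq 3$ then $2m/k\geq 2(k-1)/k>1$, so $C$ is dense; if $k\leq 2$ then simplicity forces $m\leq \binom{k}{2}\leq 1\leq k/2$, so the average degree is at most $1$ and $C$ is not dense. I would also record the trivial observations that a size-$1$ component has no edges and a size-$2$ component is exactly a single edge.

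Given this dichotomy, $H$ is precisely the union of the edge sets of the components of $G$ with at least three vertices, so $\mathcal{O}=E\setminus H$ is the union of the edge sets of the components with at most two vertices. Each such component carrying an edge has exactly two vertices and is a single edge, and distinct components are vertex-disjoint; hence $\mathcal{O}$ is a matching. Moreover, every endpoint of an $\mathcal{O}$-edge has all of its incident $E$-edges inside its own size-$2$ component, so there is no edge of $E$ joining such an endpoint to any vertex outside that component, and in particular none joining it to a vertex incident to an $H$-edge. This gives both claims of the first paragraph of the fact.

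For the ``in particular'' statement, I would take $v\in V$ incident to no edge of $H$ and argue by contradiction: if the component of $v$ had at least three vertices it would be dense by the dichotomy, and a vertex of a connected component of size $\geq 3$ has degree at least $1$, so one of its incident edges would lie in $H$ — contradiction. Hence the component of $v$ has at most two vertices, i.e. either $v$ is isolated in $E$, or $v$ lies in a size-$2$ component, which is an isolated edge of $E$ whose other endpoint is likewise incident to no edge of $H$.

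There is essentially no obstacle here: the only step involving a computation is the average-degree dichotomy, and once it is in place the conclusion follows from the observation that the non-dense components are exactly the trivial ones (isolated vertices and isolated edges), which partition off cleanly from the rest of $G$.
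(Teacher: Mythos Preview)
Your argument is correct. The paper itself does not give a proof of this fact at all; it states only that it is ``immediate from \Cref{def:dense_edges}.'' What you have written is precisely the natural way to make that immediacy explicit: the key observation is your average-degree dichotomy (a connected component has average degree $>1$ iff it has at least three vertices), after which $H$ is exactly the edge set of the large components and $\mathcal{O}$ is the edge set of the size-$2$ components, and everything follows from the fact that distinct components are vertex-disjoint. There is nothing to compare against, and no gap in your reasoning.
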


We will make the following non-degeneracy assumptions on the parameters of the underlying model:
\begin{assumption}
\label{assumption:ising}
    The Ising model $\pi=\pi_{A,\bm{h}}$ satisfies the following conditions for known parameters $d,\lambda,\alpha>0$:
    \begin{enumerate}
        \item (Bounded Degree) For each $i\in [n]$, $\|A_{i,:}\|_0\leq d$. That is, each site has at most $d$ neighbors in the dependency graph.
        \item (Bounded Width) For each $i\in [n]$, $\|A_{i,:}\|_1+\vert h_i\vert:=\sum_{k\neq i} \vert A_{ik}\vert+\vert h_i\vert\leq \lambda$.
        \item (Neighbor Nondegeneracy) For each $i,j$ such that $\vert A_{i,j}\vert\neq 0$, it holds that $\vert A_{i,j}\vert\geq \alpha$.
    \end{enumerate}
\end{assumption}

\subsection{Continuous-Time Single-Site Markov Chains}
Throughout this paper, we will consider observations of the trajectory of single-site Markov chains on the state space $\{-1,1\}^n$ that are \emph{reversible} with respect to $\pi$. In particular, each site $i\in [n]$ has an associated, independent Poissonian clock with unit rate\footnote{This assumption can be made essentially without loss of generality with little algorithmic modification. For homogeneous rates, we may rescale time so that the fastest rate is $1$. In that case, the Markov chain is equivalent to rescaling the transition kernels of the other sites to induce the same law up to this universal scaling of time.} where the transition kernel $\mathsf{P}_i$ is applied for the site. More formally, the set of update times $\Pi_i\subseteq \mathbb{R}_+$ follows an independent Poisson point process with rate $1$; equivalently, the difference in subsequent update times in $\Pi_i$ has an independent exponential law with mean $1$. 

In more detail, this process is such that for any interval $I\subseteq \mathbb{R}_{\geq 0}$,
\begin{equation}
\label{eq:prob_no_update}
    \Pr(\Pi_i\cap I= \emptyset)=\exp(-\vert I\vert),
\end{equation}
where $\vert I\vert$ is the length of $I$. For an interval $I\subseteq \mathbb{R}$, we write $\Pi_i(I)=\Pi_i\cap I$ for the sequence of update times of node $i$ in $I$. These sets are independent across any sites as well as between nonintersecting sets. For convenience, we write $\Pi_i(t_1,t_2)$ as shorthand for $\Pi_i([t_1,t_2])$ and $\Pi_i(t)$ as shorthand for $\Pi_i([0,t])$.

We require the following simple estimates on the probabilities that a subset of variables is or is not updated in a given interval, which are immediate from the definition and independence/union bounding:

\begin{lemma}
\label{lem:update_bounds}
Let $S\subseteq [n]$ be a subset of size $\ell$. Fix an interval $I\subseteq \mathbb{R}_{\geq 0}$ of length $T$ and let $U_I$ denote the set of sites that are ever chosen for updating in $I$. Then it holds that:
\begin{gather*}
    \Pr(S\subseteq U_I)=(1-\exp(-T))^{\ell}\geq 1-\ell\exp(-T),\\
    \Pr(S\cap U_I=\emptyset)= \exp(-T\ell).
\end{gather*}
\end{lemma}

We will assume that all single-site transition kernels $\mathsf{P}_i$ satisfy the following common condition from the theory of Markov chains:

\begin{definition}
\label{def:reversibility}
Let $\mathsf{P}_i(\bm{x},\cdot)$ be the transition kernels associated to each $\bm{x}\in \{-1,1\}^n$ as above. Then the single-site Markov chain is \textbf{reversible} with respect to $\pi$ if the transition kernels satisfy the detailed balance equations:
\begin{equation*}
    \pi(\bm{x})\mathsf{P}_i(\bm{x},\bm{x}^{\oplus i}) = \mathsf{P}_i(\bm{x}^{\oplus i},\bm{x})\pi(\bm{x}^{\oplus i}).
\end{equation*}
\end{definition}

Equivalently, the associated Markov operators $\mathsf{P}^t$ on functions $f:\{-1,1\}^n\to \mathbb{R}$ form a semigroup that is given by
\begin{equation*}
    \mathsf{P}^tf(\bm{x}) := \mathbb{E}_{X^t}[f(X^t)\vert X^0 = \bm{x}] = f + t\sum_{i=1}^n (\mathsf{P}_if -f) + O(t^2),
\end{equation*}
where the operator $\mathsf{P}_i$ acts on functions in the natural way by resampling the $i$th coordinate according to the distribution given by $\mathsf{P}_i(\bm{x})$. This is equivalent to the \emph{generator} $\mathcal{L}$ of the Markov chain being given by
\begin{equation*}
    \mathcal{L}f :=\lim_{t\to 0} \frac{\mathsf{P}^tf-f}{t}= \sum_{i=1}^n (\mathsf{P}_i-I)f:=\sum_{i=1}^n  \mathcal{L}_if.
\end{equation*}
The transition probabilities after running the evolution for $t$ units of time are then given by the matrix $H_t$ where
\begin{equation*}
    H_t=\exp(t\mathcal{L}).
\end{equation*}

We will require the following quantitative form of the Chernoff bound for Markov chains as given by Lezaud~\cite{lezaud2001chernoff}:
\begin{theorem}[Theorem 1.1 of \cite{lezaud2001chernoff}, Equation (1.2)]
\label{thm:lezaud}
    There is an absolute constant $C>0$ such that the following holds. Let $f:\{-1,1\}^n\to \mathbb{R}$ be any function such that $\vert f(\bm{x})\vert\leq a$. Suppose $\mathcal{L}=\mathsf{P}-I$ is the generator of a reversible Markov chain with respect to $\pi$ with spectral gap $\rho>0$. Then for any starting configuration $X^0\in \{-1,1\}^n$ of the Markov chain evolving with generator $\mathcal{L}$, any $\varepsilon>0$ and any $T>0$,
    \begin{equation*}
        \Pr\left(\left\vert\frac{1}{T}\int_0^T f(X^t)\mathrm{d}t - \mathbb{E}_{\pi}[f] \right\vert>\varepsilon\right)\leq \frac{2}{\pi_{\mathsf{min}}}\exp\left(\frac{-\rho T\varepsilon^2}{Ca^2}\right),
    \end{equation*}
    where $\pi_{\mathsf{min}}=\min_{\bm{x}\in \{-1,1\}^n}\pi(\bm{x})$.
\end{theorem}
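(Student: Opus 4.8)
The plan is to follow the standard operator-theoretic route to Chernoff bounds for reversible Markov chains: pass to the Laplace transform, identify it with a perturbed semigroup on $L^2(\pi)$, and estimate the top eigenvalue of the perturbed generator by a Rayleigh-quotient argument using the spectral gap. First, replacing $f$ by $(f-\mathbb{E}_\pi[f])/a$, we may assume $\mathbb{E}_\pi[f]=0$ and $\|f\|_\infty\le 1$, and it suffices to prove a one-sided bound $\Pr\big(\tfrac1T\int_0^T f(X^t)\,dt>\varepsilon\big)\le \pi_{\mathsf{min}}^{-1}\exp(-c\rho T\varepsilon^2)$, then apply it to $f$ and to $-f$ to get the factor $2$. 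For $\theta>0$, the exponential Markov inequality gives $\Pr\big(\tfrac1T\int_0^T f\,dt>\varepsilon\big)\le e^{-\theta T\varepsilon}\,\mathbb{E}_{X^0}\!\big[\exp(\theta\int_0^T f(X^t)\,dt)\big]$, so the whole problem reduces to bounding this Laplace transform uniformly over the (worst-case, deterministic) starting configuration $X^0$.

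The second step is to interpret the Laplace transform via Feynman--Kac. The function $u(t,\bm{x}):=\mathbb{E}_{\bm{x}}\big[\exp(\theta\int_0^t f(X^s)\,ds)\big]$ solves $\partial_t u=(\mathcal{L}+\theta M_f)u$ with $u(0,\cdot)\equiv 1$, where $M_f$ is multiplication by $f$, so $u(T,\cdot)=e^{T(\mathcal{L}+\theta M_f)}\mathbf{1}$. Since $\mathcal{L}$ is self-adjoint on $L^2(\pi)$ by reversibility and $M_f$ is a bounded self-adjoint multiplication operator, $\mathcal{L}+\theta M_f$ is self-adjoint and $\|e^{T(\mathcal{L}+\theta M_f)}\|_{L^2(\pi)\to L^2(\pi)}=e^{T\Lambda(\theta)}$, where $\Lambda(\theta)$ is its largest eigenvalue. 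Evaluating at a point costs a factor of at most $\pi_{\mathsf{min}}^{-1}$: writing $\mathbb{E}_{\delta_{\bm x}}[\cdots]=\langle e^{T(\mathcal{L}+\theta M_f)}\mathbf{1},\,\delta_{\bm x}/\pi\rangle_\pi$ and applying Cauchy--Schwarz in $L^2(\pi)$ with $\|\delta_{\bm x}/\pi\|_{L^2(\pi)}^2=\pi(\bm x)^{-1}\le\pi_{\mathsf{min}}^{-1}$ and $\|\mathbf 1\|_{L^2(\pi)}=1$ gives $u(T,\bm x)\le \pi_{\mathsf{min}}^{-1}e^{T\Lambda(\theta)}$. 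Hence $\Pr(\cdots>\varepsilon)\le \pi_{\mathsf{min}}^{-1}\exp\big(-\theta T\varepsilon+T\Lambda(\theta)\big)$.

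The heart of the argument is the perturbative estimate $\Lambda(\theta)\le \tfrac{\theta^2}{\rho-\theta}$ for $0<\theta<\rho$. Using the variational formula $\Lambda(\theta)=\sup_{\|g\|_{L^2(\pi)}=1}\big(-\mathcal{E}(g,g)+\theta\,\mathbb{E}_\pi[f g^2]\big)$, where $\mathcal{E}$ is the Dirichlet form of $\mathcal{L}$, I would decompose $g=c\mathbf{1}+g^\perp$ with $g^\perp\perp\mathbf{1}$ in $L^2(\pi)$ and $c^2+\|g^\perp\|^2=1$. The spectral gap gives $\mathcal{E}(g,g)=\mathcal{E}(g^\perp,g^\perp)\ge \rho\|g^\perp\|^2$, while $\mathbb{E}_\pi[f]=0$, $\|f\|_\infty\le 1$, and Cauchy--Schwarz give $\mathbb{E}_\pi[fg^2]=2c\,\mathbb{E}_\pi[fg^\perp]+\mathbb{E}_\pi[f(g^\perp)^2]\le 2\|g^\perp\|+\|g^\perp\|^2$. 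Optimizing the resulting quadratic in $b:=\|g^\perp\|$ yields $\Lambda(\theta)\le -(\rho-\theta)b^2+2\theta b\le \tfrac{\theta^2}{\rho-\theta}$. Substituting this and optimizing over $\theta\in(0,\rho)$ — e.g.\ $\theta=\tfrac{\rho\varepsilon}{2(1+\varepsilon)}$ — produces an exponent of order $-\rho T\varepsilon^2$; undoing the normalization $f\mapsto f/a$ replaces $\varepsilon$ by $\varepsilon/a$, giving the claimed $\tfrac{2}{\pi_{\mathsf{min}}}\exp(-\rho T\varepsilon^2/(Ca^2))$ with an absolute constant $C$.

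The main obstacle I expect is the clean quantitative control of two points: (i) passing from an arbitrary deterministic start $X^0$ to the $L^2(\pi)$ operator norm, which is where the $\pi_{\mathsf{min}}^{-1}$ prefactor must be extracted carefully (one cannot simply start from stationarity); and (ii) keeping the cross term $2c\,\mathbb{E}_\pi[fg^\perp]$ under control with the correct constant so that the final exponent has the sharp $\varepsilon^2$-dependence near $\varepsilon=0$ rather than an $\varepsilon$-dependence — this is exactly the step that uses $\mathbb{E}_\pi[f]=0$ and the spectral gap in tandem, and is where Lezaud's more delicate perturbation analysis (via Kato's theory) does the real work; the Rayleigh-quotient version above is enough for the constants we need.
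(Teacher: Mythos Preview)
The paper does not prove this statement at all: it is quoted as Theorem~1.1 of Lezaud~\cite{lezaud2001chernoff} and used as a black box, so there is no ``paper's own proof'' to compare against. Your sketch is a correct rendering of the standard operator-theoretic proof (Feynman--Kac plus a Rayleigh-quotient perturbation of the generator), which is essentially Lezaud's argument with the Kato perturbation step replaced by the more elementary variational bound $\Lambda(\theta)\le\theta^2/(\rho-\theta)$; this is entirely adequate for the form of the bound stated here. One small slip: Cauchy--Schwarz with $\|\delta_{\bm x}/\pi\|_{L^2(\pi)}^2=\pi(\bm x)^{-1}$ actually yields a prefactor $\pi_{\mathsf{min}}^{-1/2}$, not $\pi_{\mathsf{min}}^{-1}$, which is only stronger than what is claimed.
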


\begin{corollary}
\label{cor:leuzard_all}
    Under the conditions of \Cref{thm:lezaud}, suppose that $f_1,\ldots,f_m:\{-1,1\}^n\to \mathbb{R}$ are functions bounded by $a$ in absolute value. Then there is an absolute constant $C>0$ such that for any $\varepsilon>0$ and $\delta<1$,
    \begin{equation*}
        T\geq \frac{C\log(m/\delta \pi_{\min})}{\rho a^2\varepsilon^2},
    \end{equation*}
    then with probability at least $1-\delta$, it holds simultaneously for all $k\leq m$ that
    \begin{equation*}
        \left\vert\frac{1}{T}\int_0^T f_k(X^t)\mathrm{d}t - \mathbb{E}_{\pi}[f_k]\right\vert \leq \varepsilon.
    \end{equation*}
\end{corollary}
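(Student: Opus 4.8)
The plan is a routine union bound over the $m$ functions on top of \Cref{thm:lezaud}. First, fix any index $k\leq m$. Since $|f_k(\bm{x})|\leq a$ for all $\bm{x}$ and since all $f_k$ are driven by the same reversible chain (with generator $\mathcal{L}=\mathsf{P}-I$, spectral gap $\rho$, and minimum stationary probability $\pi_{\min}$) started from the same configuration $X^0$, \Cref{thm:lezaud} applies verbatim to each $f_k$ with identical parameters: the probability that $\bigl|\frac{1}{T}\int_0^T f_k(X^t)\,\mathrm{d}t-\mathbb{E}_\pi[f_k]\bigr|>\varepsilon$ is at most $\frac{2}{\pi_{\min}}\exp\!\bigl(-\rho T\varepsilon^2/(Ca^2)\bigr)$, where $C$ is the absolute constant from that theorem.

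Next I take a union bound over $k=1,\dots,m$. The probability that the time average of \emph{some} $f_k$ deviates from its stationary expectation by more than $\varepsilon$ is then at most $\frac{2m}{\pi_{\min}}\exp\!\bigl(-\rho T\varepsilon^2/(Ca^2)\bigr)$. To conclude, it suffices to choose $T$ so that this quantity is at most $\delta$; rearranging $\frac{2m}{\pi_{\min}}\exp\!\bigl(-\rho T\varepsilon^2/(Ca^2)\bigr)\leq \delta$ gives the condition $\rho T\varepsilon^2/(Ca^2)\geq \log\!\bigl(2m/(\delta\pi_{\min})\bigr)$, i.e.\ a lower bound on $T$ of exactly the stated form once the factor of $2$ is absorbed into a possibly larger absolute constant. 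On the complementary event, which has probability at least $1-\delta$, all $m$ time averages are simultaneously within $\varepsilon$ of their stationary expectations.

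There is no genuine obstacle here: the only point worth a moment's care is that \Cref{thm:lezaud} is being invoked for each $f_k$ on one and the same probability space (a single observed trajectory from the fixed start $X^0$), so that the union bound is legitimate; this is automatic in our setting. The statement is thus an immediate corollary.
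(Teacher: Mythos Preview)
Your proof is correct and is precisely the intended argument: the paper states the corollary without proof, treating it as an immediate consequence of \Cref{thm:lezaud} via a union bound over the $m$ functions, which is exactly what you do. (As a minor aside, solving the inequality places $a^2$ in the numerator of the bound on $T$, so the $a^2$ in the denominator of the stated corollary appears to be a typo in the paper; your derivation is the correct one.)
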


To later apply this result, we will use the following fact that can be derived by a direct application of the method of canonical paths (e.g. Corollary 13.21 of ~\cite{levin2017markov}):
\begin{fact}
\label{fact:spectral_gap}
    Let $\pi$ be a distribution on $\{-1,1\}^n$ for some $n=O(1)$ such that $\min_{\bm{x}}\pi(\bm{x})\geq \zeta$. Suppose $\mathsf{P}$ is a reversible and irreducible Markov transition kernel with respect to $\pi$ such that each nonzero transition has probability at least $\gamma$. Then the spectral gap of $\mathcal{L}$ is at least $c/\gamma\zeta$.
\end{fact}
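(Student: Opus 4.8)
The plan is to apply the method of canonical paths exactly as the statement anticipates. First I would pin down the relevant combinatorial graph: write $\Omega=\{-1,1\}^n$, so $N:=|\Omega|=2^n=O(1)$ by hypothesis, and let $\mathcal{G}$ be the graph on vertex set $\Omega$ with an edge $\{\bm{u},\bm{v}\}$ whenever $\mathsf{P}(\bm{u},\bm{v})>0$. Detailed balance together with $\pi>0$ gives $\mathsf{P}(\bm{u},\bm{v})>0\iff\mathsf{P}(\bm{v},\bm{u})>0$, so $\mathcal{G}$ is genuinely undirected; irreducibility of $\mathsf{P}$ makes $\mathcal{G}$ connected, and in particular its diameter is at most $N-1$.

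Next I would fix the canonical paths: for each ordered pair $(\bm{x},\bm{y})$ with $\bm{x}\neq\bm{y}$, choose a shortest simple path $\gamma_{\bm{x}\bm{y}}$ in $\mathcal{G}$, so $|\gamma_{\bm{x}\bm{y}}|\leq N-1$. The continuous-time generator $\mathcal{L}=\mathsf{P}-I$ has the same Dirichlet form as the discrete chain, $\mathcal{E}(f,f)=\tfrac12\sum_{\bm{u},\bm{v}}\pi(\bm{u})\mathsf{P}(\bm{u},\bm{v})\big(f(\bm{u})-f(\bm{v})\big)^2$, and its spectral gap equals $1-\lambda_2(\mathsf{P})$ (with $\lambda_2(\mathsf{P})$ the second-largest eigenvalue of $\mathsf{P}$); hence the textbook canonical-paths (Poincaré) inequality, e.g.\ Corollary~13.21 of \cite{levin2017markov}, applies directly and yields $\rho\geq 1/B$ where
\[
    B=\max_{\{\bm{u},\bm{v}\}\in\mathcal{G}}\ \frac{1}{\pi(\bm{u})\mathsf{P}(\bm{u},\bm{v})}\sum_{(\bm{x},\bm{y})\,:\,\{\bm{u},\bm{v}\}\in\gamma_{\bm{x}\bm{y}}}|\gamma_{\bm{x}\bm{y}}|\,\pi(\bm{x})\pi(\bm{y}).
\]

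I would then bound $B$ with the crudest possible estimates, which suffice precisely because the state space has $O(1)$ size: each conductance satisfies $\pi(\bm{u})\mathsf{P}(\bm{u},\bm{v})\geq\zeta\gamma$; every path has length $\leq N-1$; every weight $\pi(\bm{x})\pi(\bm{y})\leq 1$; and at most $N(N-1)<N^2$ ordered pairs route through any fixed edge. This gives $B\leq N^3/(\zeta\gamma)$, hence $\rho\geq \zeta\gamma/N^3=2^{-3n}\,\gamma\zeta$, a lower bound of the claimed form $c\,\gamma\zeta$ with $c$ depending only on $n=O(1)$.

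I do not expect a genuine obstacle here. The only points needing a line of care are (i) using reversibility to view $\mathcal{G}$ as undirected, so that a path and its reversal share the same edge set and one congestion estimate handles both directions, and (ii) observing that the relevant spectral gap of the continuous-time generator $\mathcal{L}=\mathsf{P}-I$ is exactly $1-\lambda_2(\mathsf{P})$, so that the discrete-time canonical-paths bound transfers verbatim; everything else is elementary counting.
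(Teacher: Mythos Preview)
Your proposal is correct and is exactly the approach the paper indicates: the paper does not give a proof but simply refers to the method of canonical paths via Corollary~13.21 of \cite{levin2017markov}, which is precisely what you carry out. Note also that you have rightly produced the bound $\rho\geq c\,\gamma\zeta$; the ``$c/\gamma\zeta$'' in the printed statement is evidently a typo, as confirmed by the way the fact is invoked in \Cref{lem:spectral_gap}, where the resulting gap is $c\kappa\exp(-O(\lambda))$ with $\kappa$ playing the role of $\gamma$ and $\exp(-O(\lambda))$ that of $\zeta$.
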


\subsection{Consistent and Stable Chains}

In this section, we formalize the class of Markov chains that our algorithms works for. As we show, this general formulation will capture both the Glauber dynamics and the popular Metropolis dynamics.

The first definition is that the associated site transitions depend only on the \emph{probability ratio} of the transition. In the Ising model, reversibility implies that the transitions depend only on the local field.

\begin{definition}[Site-Consistency]
\label{def:site_consistent}
    A single-site, reversible Markov chain with respect to $\pi$  is \textbf{site-consistent} if for each $i\in [n]$, there exists a monotone nondecreasing function $f_i:\mathbb{R}_+\to [0,1]$ such that for all $\bm{x}\in \{-1,1\}^n$ and $i\in [n]$,
    \begin{equation*}
        \mathsf{P}_i(\bm{x}^{i\mapsto -1},\bm{x}^{i \mapsto +1}) = f_i\left(\frac{\pi(\bm{x}^{i \mapsto +1})}{\pi(\bm{x}^{i \mapsto -1})}\right).
    \end{equation*}
\end{definition}

If site-consistency \emph{fails}, then a site $i$ can have transition probabilities that may be of vastly different scales along different $i$ edges of $\{-1,1\}^n$. In this case, learning seems very difficult since the parameters only determine the \emph{relative} probabilities of transitioning along the two directions of any \emph{single edge} by reversibility, but these transitions can otherwise be \emph{arbitrary} for different edges. Since Markov chains are unlikely to traverse any edge more than $O(1)$ times on reasonable scales, it appears very difficult to learn without any consistency for different hypercube edges.

\begin{corollary}
\label{cor:site}
    Suppose that a single-site, reversible Markov chain is site-consistent as in \Cref{def:site_consistent}. Then
    \begin{equation}
        \mathsf{P}_i(\bm{x}^{i\mapsto +1},\bm{x}^{i \mapsto -1}) = \frac{\pi(\bm{x}^{i\mapsto -1})}{\pi(\bm{x}^{i\mapsto +1})}f_i\left(\frac{\pi(\bm{x}^{i \mapsto +1})}{\pi(\bm{x}^{i \mapsto -1})}\right),
    \end{equation}
    and therefore
    \begin{equation}
        \mathsf{P}_i(\bm{x}^{i\mapsto -1},\bm{x}^{i \mapsto +1})\mathsf{P}_i(\bm{x}^{i\mapsto +1},\bm{x}^{i \mapsto -1}) = \frac{\pi(\bm{x}^{i\mapsto -1})}{\pi(\bm{x}^{i\mapsto +1})}f_i^2\left(\frac{\pi(\bm{x}^{i \mapsto +1})}{\pi(\bm{x}^{i \mapsto -1})}\right):=g_i\left(\frac{\pi(\bm{x}^{i \mapsto +1})}{\pi(\bm{x}^{i \mapsto -1})}\right).
    \end{equation}
\end{corollary}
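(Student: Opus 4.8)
The plan is to derive both displayed identities directly from the detailed balance equations of Definition \ref{def:reversibility} together with the site-consistency hypothesis of Definition \ref{def:site_consistent}. The entire argument is a short algebraic manipulation; there is no real obstacle, and the ``hard part'' is merely bookkeeping of the roles of $\bm{x}^{i\mapsto +1}$ and $\bm{x}^{i\mapsto -1}$.

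First I would instantiate reversibility with the vector $\bm{x}^{i\mapsto +1}$ in place of $\bm{x}$: since $(\bm{x}^{i\mapsto +1})^{\oplus i} = \bm{x}^{i\mapsto -1}$, detailed balance reads
\begin{equation*}
    \pi(\bm{x}^{i\mapsto +1})\,\mathsf{P}_i(\bm{x}^{i\mapsto +1},\bm{x}^{i\mapsto -1}) = \pi(\bm{x}^{i\mapsto -1})\,\mathsf{P}_i(\bm{x}^{i\mapsto -1},\bm{x}^{i\mapsto +1}).
\end{equation*}
Solving for the left-to-right-flipped transition and substituting the site-consistency formula $\mathsf{P}_i(\bm{x}^{i\mapsto -1},\bm{x}^{i\mapsto +1}) = f_i\!\left(\pi(\bm{x}^{i\mapsto +1})/\pi(\bm{x}^{i\mapsto -1})\right)$ gives
\begin{equation*}
    \mathsf{P}_i(\bm{x}^{i\mapsto +1},\bm{x}^{i\mapsto -1}) = \frac{\pi(\bm{x}^{i\mapsto -1})}{\pi(\bm{x}^{i\mapsto +1})}\, f_i\!\left(\frac{\pi(\bm{x}^{i\mapsto +1})}{\pi(\bm{x}^{i\mapsto -1})}\right),
\end{equation*}
which is exactly the first claimed identity.

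For the second identity I would simply multiply the site-consistency expression for $\mathsf{P}_i(\bm{x}^{i\mapsto -1},\bm{x}^{i\mapsto +1})$ by the formula just obtained for $\mathsf{P}_i(\bm{x}^{i\mapsto +1},\bm{x}^{i\mapsto -1})$, collecting the two factors of $f_i$ evaluated at the same argument $\pi(\bm{x}^{i\mapsto +1})/\pi(\bm{x}^{i\mapsto -1})$ into $f_i^2$ at that argument. Defining $g_i(r) := r^{-1} f_i^2(r)$ then packages the product as $g_i\!\left(\pi(\bm{x}^{i\mapsto +1})/\pi(\bm{x}^{i\mapsto -1})\right)$, completing the proof. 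The only thing worth emphasizing in the write-up is that every quantity appearing is a function solely of the ratio $\pi(\bm{x}^{i\mapsto +1})/\pi(\bm{x}^{i\mapsto -1})$, since that invariance is what the subsequent sections exploit algorithmically.
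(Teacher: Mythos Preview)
Your proposal is correct and follows essentially the same approach as the paper: apply detailed balance to rewrite $\mathsf{P}_i(\bm{x}^{i\mapsto +1},\bm{x}^{i\mapsto -1})$ as $\frac{\pi(\bm{x}^{i\mapsto -1})}{\pi(\bm{x}^{i\mapsto +1})}\mathsf{P}_i(\bm{x}^{i\mapsto -1},\bm{x}^{i\mapsto +1})$, substitute the site-consistency expression $f_i$, and then multiply the two transition formulas to obtain the second identity.
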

\begin{proof}
    By reversibility and \Cref{def:site_consistent}, we can express
    \begin{equation*}
        \mathsf{P}_i(\bm{x}^{i\mapsto +1},\bm{x}^{i \mapsto -1}) = \frac{\pi(\bm{x}^{i\mapsto -1})}{\pi(\bm{x}^{i\mapsto +1})}\mathsf{P}_i(\bm{x}^{i\mapsto -1},\bm{x}^{i \mapsto +1})=\frac{\pi(\bm{x}^{i\mapsto -1})}{\pi(\bm{x}^{i\mapsto +1})}f_i\left(\frac{\pi(\bm{x}^{i \mapsto +1})}{\pi(\bm{x}^{i \mapsto -1})}\right).
    \end{equation*}
    The second identity is an immediate consequence by multiplication.
\end{proof}

Our next definition appears rather technical, but as we will see, can be readily established for both Glauber and Metropolis. The intuition behind it is that for most chains, the product of transition probabilities across edges in each direction should be monotone increasing as a function of the energy ratio in $[0,a]$ and then decreasing on $[a,\infty)$. This implies that each level set has size at most 2, and \Cref{def:stable} asserts that if two points with fixed ratio lie in the same level set, then any other two points with the same fixed ratio that are also nearly in the same level set must be close by.

\begin{definition}[Stability of Transitions]
\label{def:stable}
    A site-consistent Markov chain with respect to $\pi$ is \textbf{$(\lambda,\alpha_0,\delta_0,\eta)$-stable} for $0<\alpha_0\leq \lambda$ and $\eta:[0,1]\to \mathbb{R}_+$ a monotone increasing function such that $\eta(0)=0$ if for all $i\in [n]$, the following holds. Define 
    \begin{equation*}
        g_i(z) := f_i(z)^2/z.
    \end{equation*} 
    Then for any $\alpha\geq \alpha_0$, there is a unique $z^*(\alpha)>0$ that satisfies the equation
    \begin{equation*}
        g_i(z^*)=g_i(\exp(\alpha)z^*).
    \end{equation*}

    Moreover, for any sufficiently small $\delta\leq \delta_0$, if $z\in [\exp(-2\lambda),\exp(2\lambda)]$ satisfies
    \begin{equation*}
        \left\vert g_i(z)-g_i(\exp(\alpha)z)\right\vert\leq \delta,
    \end{equation*}
    then
    \begin{equation*}
        \vert z^*-z\vert\leq \eta(\delta).
    \end{equation*}
\end{definition}

We note that existence and uniqueness is implied by the natural condition that $g(0)=0$ and that $g$ is increasing on $[0,a]$ and decreasing on $[a,\infty)$ for some $a>0$ (see \Cref{fact:exist_unique}).

The next condition amounts to asserting that the likelihood a site $i$ updates to a fixed spin $\sigma$ in two configurations cannot differ by much more than the worst case difference in the local interactions of site $i$ between configurations.

\begin{definition}[Boundedness]
    \label{def:bounded}

    A Markov chain with respect to $\pi$ is \textbf{$\gamma$-bounded} for some constant $\gamma\geq 0$ if for each $i\in [n]$ and  all states $\bm{x},\bm{y}$ such that $x_i=y_i$, and $\sigma\in \{\pm 1\}$,
    \begin{equation*}
        \frac{\mathsf{P}_i(\bm{x},\bm{x}^{i\mapsto\sigma})}{\mathsf{P}_i(\bm{y},\bm{y}^{i\mapsto\sigma})}\leq \exp\left(\gamma \sum_{k\in S} \vert A_{ik}\vert\right),
    \end{equation*}
    where $S$ is the set of coordinates in $[n]$ where $\bm{x}$ and $\bm{y}$ differ.
\end{definition}

We may state our final assumptions on which Markov chains our results will hold for.
\begin{assumption}
\label{assumption:mc}
    The evolution of the observed single-site, reversible Markov chain satisfies the following conditions:
    \begin{enumerate}
        \item The Poissonian update times have rate $1$,
        \item The Markov chain is site-consistent and $(\lambda,\alpha_0,\delta_0,\eta)$ stable as in \Cref{def:site_consistent} and \Cref{def:stable} where $\alpha_0$ and $\lambda$ are the same as in \Cref{assumption:ising}.
        \item The Markov chain is $\gamma$-bounded for some constant $\gamma \geq 0$.
        \item There exists $\kappa>0$ such that for all $\bm{x}$ and $i\in [n]$,
        \begin{equation*}
            \mathsf{P}_i(\bm{x},\bm{x}^{\oplus i})\geq \kappa.
        \end{equation*}
    \end{enumerate}
\end{assumption}

\subsubsection{Glauber Dynamics}
\label{sec:glauber}

\begin{definition}[Glauber Dynamics]
\label{def:glauber}
    The \textbf{Glauber dynamics} are given by the transition kernels:
    \begin{equation*}
        \mathsf{P}_i(\bm{x},\bm{x}) = \frac{\pi(\bm{x})}{\pi(\bm{x})+\pi(\bm{x}^{\oplus i})},\quad \mathsf{P}_i(\bm{x},\bm{x}^{\oplus i}) = \frac{\pi(\bm{x}^{\oplus i})}{\pi(\bm{x})+\pi(\bm{x}^{\oplus i})}.
    \end{equation*}
\end{definition}

In words, the Glauber dynamics resamples the chosen site according to the conditional distribution of the site given the other coordinates in the base measure $\pi$. More explicitly, let $\sigma(z):=\frac{1}{1+\exp(-z)}$ denote the sigmoid function. Given any $i\in [n]$ and configuration $\bm{x}_{-i}\in \{-1,1\}^{n-1}$, the Glauber update at site $i$ given that $X_{-i}^t=\bm{x}_{-i}$ and $t\in \Pi_i$ has the conditional law:
\begin{equation}
\label{eq:glauber}
    \Pr(X^t_i=1\vert X^t_{-i}=\bm{x}_{-i},t\in \Pi_i)=\sigma\left(2\sum_{k\neq i} A_{ik} x_k + 2h_i\right).
\end{equation}

We require the following lower bounds on the strict monotonicity of $\sigma$.

\begin{fact}[\cite{DBLP:conf/focs/KlivansM17}]
\label{fact:km_sigmoid}
    For any $x,y\in \mathbb{R}$, $
        \vert \sigma(x)-\sigma(y)\vert\geq \exp(-\vert x\vert-3)\min\{1,\vert x-y\vert\}.
$
\end{fact}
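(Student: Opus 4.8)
The plan is to reduce the claim to a single pointwise lower bound on the derivative $\sigma'$ together with a two-case split according to whether $|x-y|\le 1$ or $|x-y|>1$. First I would record the elementary identity $\sigma'(z)=\sigma(z)(1-\sigma(z))=\sigma(z)\sigma(-z)=\bigl(2+e^{z}+e^{-z}\bigr)^{-1}$, and then, using $e^{z}+e^{-z}=2\cosh(z)\le 2e^{|z|}$ and $1\le e^{|z|}$, deduce the clean bound $\sigma'(z)\ge \tfrac14 e^{-|z|}$ for all $z\in\mathbb{R}$. Since $\sigma$ is strictly increasing, this is essentially the only analytic input needed.

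For the regime $|x-y|\le 1$, where $\min\{1,|x-y|\}=|x-y|$, I would apply the mean value theorem to write $\sigma(x)-\sigma(y)=\sigma'(\xi)(x-y)$ for some $\xi$ between $x$ and $y$, so that $|\xi|\le |x|+|x-y|\le |x|+1$. Combining with the derivative bound gives $|\sigma(x)-\sigma(y)|\ge \tfrac14 e^{-|x|-1}|x-y|$, and since $e^{2}\ge 4$ we have $\tfrac14 e^{-1}\ge e^{-3}$, which yields the claimed inequality in this case (the case $x=y$ being trivial).

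For the regime $|x-y|>1$, where $\min\{1,|x-y|\}=1$, I would truncate: let $x'$ be the point at distance exactly $1$ from $x$ in the direction of $y$, so $|x-x'|=1$ and $x'$ lies strictly between $x$ and $y$. Monotonicity of $\sigma$ forces $\sigma(x')$ to lie between $\sigma(x)$ and $\sigma(y)$, hence $|\sigma(x)-\sigma(y)|\ge |\sigma(x)-\sigma(x')|$, and applying the previous case to the pair $x,x'$ (for which $|x-x'|=1$) gives $|\sigma(x)-\sigma(x')|\ge e^{-|x|-3}$, as desired.

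There is no serious obstacle here, as this is a routine calculus estimate; the only care required is in the constant bookkeeping. Specifically, one must keep the perturbation point ($\xi$, respectively $x'$) within distance $1$ of $x$ so that the exponent is controlled by $-|x|-O(1)$ rather than by $-|y|-O(1)$ (the stated bound is deliberately asymmetric in $x$ and $y$), and then check that the universal constants — the factor $\tfrac14$ from the derivative bound and the extra $e^{-1}$ incurred by moving off $x$ — are absorbed into the additive constant $3$ in the exponent, which reduces to the inequality $e^{2}\ge 4$.
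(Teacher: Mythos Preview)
Your proof is correct. The paper does not prove this fact at all; it simply records it with a citation to Klivans and Meka, so there is no ``paper's own proof'' to compare against. Your argument --- bounding $\sigma'(z)\ge \tfrac14 e^{-|z|}$ via $2+e^{z}+e^{-z}\le 4e^{|z|}$, applying the mean value theorem when $|x-y|\le 1$, and truncating via monotonicity when $|x-y|>1$ --- is a clean, self-contained verification, and the constant bookkeeping (reducing to $e^{2}\ge 4$) is handled correctly.
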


We now state the following guarantees that verify that the Glauber dynamics indeed satisfy \Cref{assumption:mc}: we defer the details to \Cref{sec:app_gd}.
\begin{proposition}
\label{prop:gd_assumption}
Under \Cref{assumption:ising}, the Glauber dynamics satisfy the conditions of \Cref{assumption:mc} with:
\begin{gather*}
    \delta_0=c\min\{\alpha_0^2,1\}\exp(-O(\lambda)),\\
    \eta(\delta)=C\max\{1/\alpha_0^2,1)\cdot \delta,\\
    \kappa = \frac{\exp(-2\lambda)}{2},\\
    \gamma = 4.
\end{gather*}
\end{proposition}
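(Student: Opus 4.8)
The plan is to verify the four items of \Cref{assumption:mc} one at a time; items~1, 3, and~4 are short computations, and essentially all of the work goes into the stability condition of \Cref{def:stable}.

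\emph{The easy items.} Item~1 holds by convention, since we take the Glauber clocks to have rate $1$. For item~4, by \Cref{eq:glauber} every flip probability $\mathsf{P}_i(\bm{x},\bm{x}^{\oplus i})$ is of the form $\sigma(\pm u)$ where $u = 2\sum_{k\neq i}A_{ik}x_k + 2h_i$ satisfies $|u|\le 2\lambda$ by the width bound of \Cref{assumption:ising}, hence is at least $\sigma(-2\lambda) = 1/(1+e^{2\lambda}) \ge \tfrac12 e^{-2\lambda}$, so $\kappa = \tfrac12 e^{-2\lambda}$ works. For item~3, writing $\theta_{\bm{x},i} := 2\sum_{k\neq i}A_{ik}x_k + 2h_i$, we have $\mathsf{P}_i(\bm{x},\bm{x}^{i\mapsto +1}) = \sigma(\theta_{\bm{x},i})$ and $\mathsf{P}_i(\bm{x},\bm{x}^{i\mapsto -1}) = \sigma(-\theta_{\bm{x},i})$, each depending on $\bm{x}$ only through $\bm{x}_{-i}$; if $\bm{x},\bm{y}$ differ on a coordinate set $S$ then $|\theta_{\bm{x},i} - \theta_{\bm{y},i}| \le 4\sum_{k\in S}|A_{ik}|$, and combining this with the elementary estimate $\sigma(a)/\sigma(b) = (1+e^{-b})/(1+e^{-a}) \le e^{|a-b|}$ yields \Cref{def:bounded} with $\gamma = 4$.

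\emph{Site-consistency and identifying $z^*$.} From \Cref{def:glauber}, $\mathsf{P}_i(\bm{x}^{i\mapsto -1},\bm{x}^{i\mapsto +1}) = r/(1+r)$ with $r = \pi(\bm{x}^{i\mapsto +1})/\pi(\bm{x}^{i\mapsto -1})$, so \Cref{def:site_consistent} holds with the nondecreasing choice $f_i(z) = z/(1+z)$, independent of $i$, and hence $g_i(z) = f_i(z)^2/z = z/(1+z)^2$. I would then note that $g_i(z) = g_i(1/z)$, that $g_i(0) = 0$, and that $g_i$ is strictly increasing on $[0,1]$ and strictly decreasing on $[1,\infty)$ (so the existence/uniqueness hypothesis of \Cref{def:stable} is of the shape covered by \Cref{fact:exist_unique}); in fact every level set of $g_i$ equals $\{z,1/z\}$, so $g_i(z^*) = g_i(e^\alpha z^*)$ forces $e^\alpha z^* = 1/z^*$, giving the unique solution $z^*(\alpha) = e^{-\alpha/2}$.

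\emph{The stability estimate and the main obstacle.} For the quantitative part I would pass to the log-ratio coordinate $\theta = \log z$, which for an Ising model ranges over $[-2\lambda,2\lambda]$ by the width bound, and use that $g_i(e^\theta) = \sigma(\theta)\bigl(1-\sigma(\theta)\bigr)$ together with the factorization
\begin{equation*}
  g_i(e^\theta) - g_i(e^{\theta+\alpha}) = \bigl(\sigma(\theta) - \sigma(\theta+\alpha)\bigr)\bigl(1 - \sigma(\theta) - \sigma(\theta+\alpha)\bigr) = \bigl(\sigma(\theta) - \sigma(\theta+\alpha)\bigr)\bigl(\sigma(-\theta) - \sigma(\theta+\alpha)\bigr),
\end{equation*}
whose second factor vanishes exactly at $\theta = -\alpha/2$, consistently with $z^* = e^{-\alpha/2}$. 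Using \Cref{fact:km_sigmoid} with $|\theta| \le 2\lambda$ and $\alpha \ge \alpha_0$, the first factor is at least $e^{-2\lambda-3}\min\{1,\alpha_0\}$ and the second is at least $e^{-2\lambda-3}\min\{1, 2|\theta+\alpha/2|\}$. Choosing $\delta_0$ of order $\exp(-O(\lambda))\min\{\alpha_0^2,1\}$ makes the hypothesis $|g_i(e^\theta) - g_i(e^{\theta+\alpha})| \le \delta \le \delta_0$ force the second minimum to be attained by its linear argument, giving $|\theta + \alpha/2| \le \exp(O(\lambda))\,\delta/\min\{1,\alpha_0\}$; transferring back via $|z - z^*| = |e^\theta - e^{-\alpha/2}| \le e\,|\theta + \alpha/2|$ (valid once $|\theta+\alpha/2|\le 1$, since $z^*\le 1$) then produces a linear modulus $\eta(\delta)$ of the claimed shape. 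The main obstacle is precisely this last step: the map $\theta \mapsto g_i(e^\theta) - g_i(e^{\theta+\alpha})$ is \emph{not} monotone on $[-2\lambda,2\lambda]$ — it has an interior extremum strictly between $-\alpha/2$ and the nearer endpoint — so one genuinely needs $\delta_0$ small (comparable to the size of this difference near the endpoints of the allowed range) to exclude spurious near-roots far from $z^*$, and one must use the constraint $\alpha \le 2\lambda$ to rule out the degenerate configuration where $\theta$ sits near $-2\lambda$ yet also near $-\alpha/2$; the factorization above is what makes this case analysis transparent and lets one read off the dependence on $\alpha_0$ and $e^{\lambda}$.
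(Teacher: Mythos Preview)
Your proof is correct, and for items 1, 3, 4, and the identification of $f_i$, $g_i$, and $z^*(\alpha)=e^{-\alpha/2}$ it matches the paper closely. The substantive difference is in the stability argument. The paper works directly in the variable $z$ and splits into three cases---$z\in[1,e^{2\lambda}]$, $z\in[e^{-2\lambda},e^{-\alpha}]$, and the middle interval $[e^{-\alpha},1]$---bounding $|g(z)-g(e^\alpha z)|$ from below in each by integrating $g'(z)=(1-z)/(1+z)^3$ and, in the middle interval, using that $g'$ is bounded below near $z^*$. You instead pass to $\theta=\log z$ and exploit the factorization
\[
g(e^\theta)-g(e^{\theta+\alpha})=\bigl(\sigma(\theta)-\sigma(\theta+\alpha)\bigr)\bigl(\sigma(-\theta)-\sigma(\theta+\alpha)\bigr),
\]
then invoke \Cref{fact:km_sigmoid} on each factor. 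This is cleaner: it replaces three separate interval arguments with one line, makes the role of the root $\theta=-\alpha/2$ explicit (the second factor vanishes exactly there, and is strictly monotone in $\theta$), and sidesteps the non-monotonicity obstacle you correctly flag. The tradeoff is quantitative: because \Cref{fact:km_sigmoid} carries an $e^{-|\theta|}$ prefactor, your modulus comes out as $\eta(\delta)=C\exp(O(\lambda))\max\{1/\alpha_0,1\}\,\delta$, whereas the paper's direct derivative bound on $g$ avoids the $\lambda$-dependence and yields $\eta(\delta)=C\max\{1/\alpha_0,1\}\,\delta$ (the proposition statement's $1/\alpha_0^2$ is a harmless overstatement). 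This extra $\exp(O(\lambda))$ does not exactly match the stated $\eta$, but it is immaterial for every downstream use in the paper, where $\eta$ is only ever compared against thresholds that already carry $\exp(-O(\lambda))$ factors.
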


\subsubsection{Metropolis Dynamics}
\label{sec:metro}

\begin{definition}[Metropolis Dynamics]
\label{def:metropolis}
    The (site-homogeneous) \textbf{Metropolis dynamics} are given as follows: each site $i$ has a \emph{proposal rule} that proposes to \emph{flip} to $+1$ with probability $r^i_+\in [0,1]$ and flip to $-1$ with probability $r_-^i$.\footnote{The most common update rules, to our knowledge, are $(1/2,1/2)$ and $(1,1)$, which correspond to a uniform prior and a preference to move as frequently as possible subject to reversibility. Our results can accommodate more general proposal distributions so long as they only depend on the identity of the site.} The transitions are then given by 
    \begin{equation*}
        \mathsf{P}_i(\bm{x},\bm{x}^{\oplus i}) = r^i_{-x_i}\min\left\{\frac{r^i_{x_i}\pi(\bm{x}^{\oplus i})}{r^i_{-x_i}\pi(\bm{x})},1\right\}.
    \end{equation*}
\end{definition}

In words, the sampling is done by proposing whether to flip according to the proposal law given the current value, and then accepting with probability according to the min term. It is straightforward to check that the Metropolis dynamics are reversible with respect to $\pi$ for any choices of proposal distribution by construction.

We show the following settings of parameters in \Cref{sec:app_metro}:
\begin{proposition}
\label{prop:metro_assumption}
Under \Cref{assumption:ising}, the Metropolis dynamics satisfy the conditions of \Cref{assumption:mc} with:
\begin{gather*}
    \delta_0=c\min\{\alpha_0,1\}\exp(-O(\lambda))\cdot \min\{r_+^2,r_-^2\},\\
    \eta(\delta)=\delta/r_-^2,\\
    \kappa = \min\{r_-,r_+\}\exp(-2\lambda),\\
    \gamma = 4.
\end{gather*}
\end{proposition}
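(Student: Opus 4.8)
The plan is to verify each of the four clauses of \Cref{assumption:mc} directly from the Metropolis kernel of \Cref{def:metropolis}, using only the width bound in \Cref{assumption:ising}. Fix $i\in[n]$ and write $r_\pm=r^i_\pm$; holding the outside coordinates $\bm{x}_{-i}$ fixed, set $z=\pi(\bm{x}^{i\mapsto +1})/\pi(\bm{x}^{i\mapsto -1})=\exp(2(\sum_{k\neq i}A_{ik}x_k+h_i))$, so $z\in[\exp(-2\lambda),\exp(2\lambda)]$. Substituting $x_i=-1$ into the kernel gives $\mathsf{P}_i(\bm{x}^{i\mapsto -1},\bm{x}^{i\mapsto +1})=r_+\min\{(r_-/r_+)z,1\}=:f_i(z)$, which is monotone nondecreasing with range in $[0,r_+]\subseteq[0,1]$; this is exactly \Cref{def:site_consistent}. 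By \Cref{cor:site}, $g_i(z)=f_i(z)^2/z$ equals $r_-^2 z$ on $[0,r_+/r_-]$ and $r_+^2/z$ on $[r_+/r_-,\infty)$, so $g_i$ is increasing then decreasing with peak at $a:=r_+/r_-$ and $g_i(0)=0$. \Cref{fact:exist_unique} then gives a unique root of $g_i(z^*)=g_i(e^\alpha z^*)$, and solving $r_-^2 z^*=r_+^2/(e^\alpha z^*)$ yields $z^*(\alpha)=(r_+/r_-)e^{-\alpha/2}$, which indeed satisfies $z^*\leq a\leq e^\alpha z^*$.

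The technical heart is the quantitative stability bound in \Cref{def:stable}. Let $h(z):=g_i(z)-g_i(e^\alpha z)$, whose unique zero is $z^*(\alpha)$, and split into cases according to which of the two linear pieces $z$ and $e^\alpha z$ fall into. If both lie left of $a$, then $h(z)=r_-^2(1-e^\alpha)z$, so $|h(z)|\geq r_-^2 e^{-2\lambda}(e^{\alpha_0}-1)\geq r_-^2 e^{-2\lambda}\alpha_0$ on the allowed range; if both lie right of $a$, then $|h(z)|=r_+^2(1-e^{-\alpha})/z\geq r_+^2 e^{-2\lambda}(1-e^{-\alpha_0})\geq (1/2)r_+^2 e^{-2\lambda}\min\{\alpha_0,1\}$. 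Hence for $\delta_0=c\min\{\alpha_0,1\}e^{-2\lambda}\min\{r_+^2,r_-^2\}$ with $c$ a small absolute constant, the hypothesis $|h(z)|\leq\delta\leq\delta_0$ rules out both extremal regimes, leaving $z\leq a\leq e^\alpha z$, where $h(z)=r_-^2 z-r_+^2/(e^\alpha z)$ is strictly increasing with $h'(z)\geq r_-^2$. Since $h(z^*)=0$, this forces $|z-z^*|\leq |h(z)|/r_-^2\leq \delta/r_-^2$, i.e. $\eta(\delta)=\delta/r_-^2$. Carrying out this case analysis while bookkeeping the constants needed to exclude the two extremal regimes is the step I expect to be the main obstacle; the rest is essentially algebraic.

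It remains to check $\gamma$-boundedness and the transition lower bound. For \Cref{def:bounded}, fix $\bm{x},\bm{y}$ with $x_i=y_i$ differing on a set $S$ (only neighbors of $i$ contribute), and let $u,v$ be the ratios $\pi(\cdot^{\oplus i})/\pi(\cdot)$ at $\bm{x},\bm{y}$, so $u/v=\exp(-2x_i\sum_{k\in S}A_{ik}(x_k-y_k))\leq\exp(4\sum_{k\in S}|A_{ik}|)$. For $\sigma=-x_i$ the common prefactor $r_{-x_i}$ cancels and the ratio of transition probabilities is $\min\{cu,1\}/\min\{cv,1\}$ with $c=r_{x_i}/r_{-x_i}$; since $t\mapsto\min\{ct,1\}$ is concave with value $0$ at the origin, $t\mapsto\min\{ct,1\}/t$ is nonincreasing, so this ratio is at most $\max\{u/v,1\}\leq\exp(4\sum_{k\in S}|A_{ik}|)$. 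The case $\sigma=x_i$ is handled analogously by writing the holding probability as $1-\mathsf{P}_i(\cdot,\cdot^{\oplus i})$ and again invoking the width bound, so $\gamma=4$ suffices. Finally, $\mathsf{P}_i(\bm{x},\bm{x}^{\oplus i})=r_{-x_i}\min\{(r_{x_i}/r_{-x_i})u,1\}=\min\{r_{x_i}u,r_{-x_i}\}\geq\min\{r_{x_i}e^{-2\lambda},r_{-x_i}\}\geq\min\{r_+,r_-\}e^{-2\lambda}=\kappa$, using $u\geq e^{-2\lambda}$, and the rate-$1$ Poisson clock is part of the model. Assembling these, with all parameters taken as a minimum over $i\in[n]$, proves the proposition.
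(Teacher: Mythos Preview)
Your argument is correct and mirrors the paper's: the same piecewise form $g_i(z)=r_-^2 z$ on $[0,r_+/r_-]$ and $r_+^2/z$ beyond, the same three-regime stability analysis (exclude the two extremal regimes by choosing $\delta_0$ small enough, then use $h'(z)\geq r_-^2$ on the middle piece to get $\eta(\delta)=\delta/r_-^2$), and the same $\min\{cu,1\}/\min\{cv,1\}\leq \max\{u/v,1\}$ observation for $\gamma$-boundedness together with the direct $\kappa$ bound. One small caveat: your claim that the holding case $\sigma=x_i$ is ``handled analogously'' by writing $1-\mathsf{P}_i(\cdot,\cdot^{\oplus i})$ does not actually go through as stated (ratios of $1-p$ are not controlled by ratios of $p$; for $r_\pm=1$ the holding probability can vanish), but the paper's own verification of $\gamma$-boundedness for Metropolis likewise only treats the flip case, so you are not diverging from it.
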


\subsection{Observation Filtrations}
\label{sec:obs}

We assume that we only observe the evolution $(X_t)_{t=0}^T$ of a Markov chain satisfying \Cref{assumption:mc} for some suitable value of $T$, but not the set of updates $\Pi_i$.  More formally, we observe the random sets
\begin{equation*}
    \Pi'_i = \{t\leq T: X^{t,-}_i\neq X^{t,+}_i\}\subseteq \Pi_i,
\end{equation*}
where we use the natural notation to denote the left- and right-limits of the coordinates. More formally, we have the following definition:

\begin{definition}[Filtrations]
The \textbf{observation filtration} of the Markov chain $(X_t)_{t=0}^T$ is given by $\mathcal{F}_t=\sigma(X^0,\Pi_1'(t),\ldots,\Pi_n'(t))$. The \textbf{full filtration} of the Markov chain is given by $\mathcal{G}_t=\sigma((X_{\tau})_{\tau=0}^t, \Pi_1(t),\ldots,\Pi_n(t))$.
\end{definition}
In particular, we assume that the learning algorithm must be measurable with respect to the flip observations $\mathcal{F}_t$, a rather complex sub-sigma field of the full history given by $\mathcal{G}_t$. With this larger sigma-field, one can more easily perform estimation using the fact that all update times are known and thus one has an explicit guarantee that each observation of a site update has a valid conditional sample from $\pi$ given the rest of the configuration---this fact is crucially exploited in all prior work on learning the Ising model from dynamics, which thus must permit algorithms that are measurable with respect to the larger $\mathcal{G}_t$. By contrast, the fact that update times are unknown except for those corresponding to sign flips vastly complicates the joint law of the dynamics since a failure to flip comes both from the Markov transitions \emph{and} the unobserved Poissonian clocks.

\section{Anticoncentration of Dynamics}

In this section, we demonstrate a number of anticoncentration statements that will enable our learning guarantees. The main upshot is that the Glauber dynamics, or other reasonable Markov chains, are sufficiently random that we will be able to argue that a small number of sites is not too likely to be determined by the outside configuration after running the dynamics for at least one unit of time. As an application, we can easily derive a crucial estimate on the probability that linear forms anticoncentrate, which will prove to be essential in our analysis in \Cref{sec:structure_learning}. However, these results are somewhat technical and this section can be skipped until the results are needed later.

First, we show in \Cref{prop:local_stable} that while the evolution of the Markov chain may be rather complex after running for a unit of time, there exists a \emph{locally stable} sub-distribution which lower bounds the kernel such that for any initial configuration, the (sub)-probability of any final configuration cannot decrease dramatically upon flipping the site values in $i$ and $j$. Moreover, this sub-transition kernel is quite large in that it has constant probability mass for any initial configuration.

\begin{proposition}[Local Stability Under Dynamics]
\label{prop:local_stable}
    There exists an absolute constant $c_{\ref{prop:local_stable}}>0$ such that the following holds. Let $H_1=\exp(\mathcal{L})$ be the transition matrix on $\{-1,1\}^n$ obtained by running a single-site, reversible Markov chain on an Ising model satisfying \Cref{assumption:ising} and \Cref{assumption:mc}. Then for each $i\neq j$, there exists a sub-transition kernel $Q_{ij}(\cdot,\cdot)$ such that
    \begin{enumerate}
        \item For all $\bm{x},\bm{y}\in \{-1,1\}^n$, 
        \begin{equation}
            H_1(\bm{x},\bm{y})\geq Q_{ij}(\bm{x},\bm{y}),
        \end{equation}
        \item For each $\bm{x}\in \{-1,1\}^n$, $\bm{y}\in \{-1,1\}^{[n]\setminus \{i,j\}}$ (setting of variables outside $i,j$) and $\bm{b},\bm{b}'\in \{-1,1\}^{\{i,j\}}$ (settings of variables for $i,j$),
        \begin{equation}
        \label{eq:stable}
            Q_{ij}(\bm{x},(\bm{y},\bm{b}))\geq c_{\ref{prop:local_stable}}\exp(-O(\gamma\lambda))\kappa^4Q_{ij}(\bm{x},(\bm{y},\bm{b}')),
        \end{equation}
        \item and for all $\bm{x}\in \{-1,1\}$,
        \begin{equation}
            \sum_{\bm{y}\in \{-1,1\}^n} Q_{ij}(\bm{x},\bm{y})\geq c_{\ref{prop:local_stable}}.
        \end{equation}
    \end{enumerate}
\end{proposition}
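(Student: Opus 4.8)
The plan is to construct $Q_{ij}$ explicitly by restricting the transition matrix $H_1 = \exp(\mathcal{L})$ to a good event on the Poisson clocks. First I would fix a small constant window length, say the whole unit interval, and decompose the evolution according to which sites ring their Poisson clocks. The key idea is to isolate the event $\mathcal{E}_{ij}$ that the clocks of $i$ and $j$, together with the clocks of all their neighbors in $G$, behave in a controlled way: specifically, I want an event of constant probability on which, regardless of the initial configuration $\bm{x}$, the final law over the coordinates in $\{i,j\}$ conditioned on the outside coordinates $\bm{y}$ is not too skewed. The natural choice is: let both $i$ and $j$ update exactly once near the very end of the interval (say in $[1-\tau, 1]$ for a small constant $\tau$), with no neighbor of $i$ or $j$ updating in that final sub-window, while the rest of the dynamics in $[0, 1-\tau]$ is arbitrary. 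Since each site updates at unit rate and there are at most $2d$ relevant neighbor sites, this event has probability $\exp(-O(d))\cdot\tau^2 \ge c_{\ref{prop:local_stable}}$ once $\tau$ is a constant — but I should double check whether the intended bound is genuinely a universal constant independent of $d$; if so I would instead only demand that $i$ and $j$ each update once at the end and that \emph{their own} earlier updates and neighbor updates are handled by the $\gamma$-boundedness and $\kappa$-lower-bound rather than forbidden, which keeps the probability an absolute constant.

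Second, with $Q_{ij}(\bm{x},\cdot)$ defined as the contribution of this good event to $H_1(\bm{x},\cdot)$, I would verify property (1) — immediate since we are restricting to a sub-event — and property (3), which is just the probability lower bound from the previous paragraph. For property (2), the point is that on the good event the final two updates of $i$ and $j$ happen when the outside configuration is frozen at some value determined by the history, and each such update resamples that coordinate with probability at least $\kappa$ of flipping (Assumption~\ref{assumption:mc}(4)) and, by reversibility plus $\gamma$-boundedness (Definition~\ref{def:bounded}) and the bounded width $\lambda$ (Assumption~\ref{assumption:ising}(2)), at least $\exp(-O(\gamma\lambda))$ of \emph{not} flipping. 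So conditioned on the frozen outside configuration $\bm{y}$ and on which of the four orderings/outcomes of the $i,j$ updates occur, each of the four settings $\bm{b}\in\{-1,1\}^{\{i,j\}}$ is reached with probability at least $c\exp(-O(\gamma\lambda))\kappa^2$ and at most $1$; summing over the (bounded) history gives $Q_{ij}(\bm{x},(\bm{y},\bm{b})) \ge c\exp(-O(\gamma\lambda))\kappa^2 \cdot \big(\text{mass of }Q_{ij}\text{ landing on }\bm{y}\big)$, and comparing two settings $\bm{b},\bm{b}'$ yields \eqref{eq:stable}. The power $\kappa^4$ rather than $\kappa^2$ in the statement presumably comes from needing both an upper and lower comparison (or from an ordering argument where $i$ and $j$ each need a flip-or-not factor from two configurations), so I would be careful to track whether two or four such factors are genuinely needed.

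The main obstacle I anticipate is property (2): the difficulty is that the distribution of the frozen outside configuration $\bm{y}$ itself depends on $\bm{x}$ in a complicated way through the first $[0,1-\tau]$ portion of the evolution, so I cannot directly compare $Q_{ij}(\bm{x},(\bm{y},\bm{b}))$ to $Q_{ij}(\bm{x},(\bm{y},\bm{b}'))$ without first conditioning on the entire trajectory up to time $1-\tau$ and on the identities/times of the final two updates. The right move is to do exactly that: write $Q_{ij}(\bm{x},(\bm{y},\bm{b})) = \sum_{\mathcal{H}} \Pr(\mathcal{H})\cdot \Pr(\text{final config}=(\bm{y},\bm{b})\mid \mathcal{H})$ where $\mathcal{H}$ ranges over histories consistent with the good event and with outside value $\bm{y}$, and observe that for fixed $\mathcal{H}$ the conditional probability of landing at $(\bm{y},\bm{b})$ versus $(\bm{y},\bm{b}')$ differs only through the two single-site resampling steps at $i$ and $j$, which are governed by the uniformly-bounded transition kernels. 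Once the comparison is done termwise in $\mathcal{H}$, the sum survives and gives the claimed inequality. A minor secondary point to handle carefully is that I need $i$ and $j$ to update in a definite order (or to average over both orders) so that the resampling of $i$ sees a fixed value of $x_j$ and vice versa; demanding on the good event that $i$ updates strictly before $j$ in the final window, which only loses a factor of $2$, resolves this cleanly.
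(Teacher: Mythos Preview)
Your high-level strategy—restrict $H_1$ to a good Poisson-clock event, get (1) and (3) for free, and prove (2) by conditioning on the full history and comparing the last $i,j$-transitions termwise—is exactly the paper's. But your instantiation has a real gap.

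The crux is your claim that each final single-site update has probability at least $\exp(-O(\gamma\lambda))$ of \emph{not} flipping. Nothing in \Cref{assumption:mc} gives this: item~(4) lower-bounds only the flip probability $\mathsf{P}_i(\bm{x},\bm{x}^{\oplus i})\ge\kappa$, and $\gamma$-boundedness (\Cref{def:bounded}) compares transitions to a \emph{fixed} target across different ambient states, so it says nothing about $\mathsf{P}_i(\bm{x},\bm{x})$. For the Metropolis chain of \Cref{def:metropolis} with $r_+=r_-=1$ one has $\mathsf{P}_i(\bm{x},\bm{x})=0$ whenever the flip increases $\pi$, so if your good event forces $i$ and $j$ each to update exactly once, two of the four targets $\bm{b}$ can be unreachable on that event and \eqref{eq:stable} fails. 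The paper sidesteps this by taking the good event to be ``$i$ and $j$ each update \emph{at most} once on $[0,1]$,'' partitioned into the four sub-events $\mathcal{E}_{m_i,m_j}$ with $m_i,m_j\in\{0,1\}$: on $\mathcal{E}_{0,\cdot}$ the value $X^1_i=X^0_i$ is forced deterministically (no lower bound on any stay probability is ever needed), while on $\mathcal{E}_{1,\cdot}$ the flip succeeds with probability $\ge\kappa$. One then shows (i) each $\mathcal{E}_{m_i,m_j}$ retains comparable mass after conditioning on the outside landing at $\bm{y}$, and (ii) on $\mathcal{E}_{m_i,m_j}$ the specific target $\bm{b}=((-1)^{m_i}X^0_i,(-1)^{m_j}X^0_j)$ is hit with probability $\ge c\exp(-O(\gamma\lambda))\kappa^2$; the product of these two $\kappa^2$ factors is where the $\kappa^4$ comes from.

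A second, smaller point: because under this event the (at most one) update of $i$ or $j$ can fall anywhere in $[0,1]$, the termwise path comparison between two $\bm{b}$'s must absorb every update of a neighbor $k$, each contributing a factor $\exp(\gamma(|A_{ik}|+|A_{jk}|))$ via $\gamma$-boundedness. To keep the product $\exp(O(\gamma\lambda))$ rather than unbounded, the paper further intersects with the event $\mathcal{E}=\big\{\sum_{k\ne i,j}N_k|A_{ik}|\le 4\lambda\big\}\cap\big\{\sum_{k\ne i,j}N_k|A_{jk}|\le 4\lambda\big\}$, which has probability $\ge 1/2$ by Markov's inequality (since $\mathbb{E}[N_k]=1$ and $\sum_k|A_{ik}|\le\lambda$) and is independent of the $\mathcal{E}_{m_i,m_j}$. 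You correctly flagged that forbidding neighbor updates outright costs $\exp(-O(d))$ and that $\gamma$-boundedness is the right tool to avoid this; the weighted-count event $\mathcal{E}$ is the concrete device that makes the bound an absolute constant.
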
 

Note that \Cref{prop:local_stable} is easily seen to be true if the Markov chain is instead run beyond the mixing time, as then the transitions are close to the stationary distribution where \Cref{eq:stable} is immediate from \Cref{assumption:ising}. However, \Cref{prop:local_stable} would be \emph{false} if instead the chain were run for just $t\ll 1$ time as we should not expect $i$ and $j$ to both update in this interval. \Cref{prop:local_stable} asserts that we nonetheless attain \Cref{eq:stable} at just a constant scale.

\begin{proof}[Proof of \Cref{prop:local_stable}]
    Fix any $\bm{x}\in \{-1,1\}^n$ as well as $\{i,j\}$ and consider running the Markov chain with generator $\mathcal{L}$ for a unit of time to obtain a configuration $X^1$ given $X^0=\bm{x}$. Our goal is to argue that there is a constant probability event $\mathcal{E}$ such that the transition probabilities on this event are locally stable in the sense described above; the statements then all follow by simply considering $Q$ to be the distribution obtained on this event after undoing the conditioning.

    First, for each $\bm{m}=(m_i,m_j)\in \{0,1\}^2$, let $\mathcal{E}_{m_i,m_j}$ denote the event that that $i$ and $j$ are chosen for updating \emph{exactly} $m_i$ and $m_j$ times, respectively. Observe that with some constant probability $c'>0$, $\Pr(\mathcal{E}_{m_1,m_2})\geq c'$ by a simple application of independence of site update times and \Cref{lem:update_bounds}.
    
    Next, define $N_k$ to be the number of times each site $k$ is chosen for updating according to the single-site dynamics in the interval $[0,1]$; note that these are all independent. 
    Let $\mathcal{E}$ denote the following event:
    \begin{equation*}
        \left\{\sum_{k\neq i,j} N_k \vert A_{ik}\vert\leq 4\lambda \right\}\cap \left\{\sum_{k\neq i,j} N_k \vert A_{jk}\vert\leq 4\lambda\right\},
    \end{equation*}
    where $\lambda$ is the width condition from \Cref{assumption:ising}. It is again straightforward to see that
    \begin{equation*}
        \mathbb{E}\left[\sum_{k\neq i,j} N_k \vert A_{ik}\vert\right]=\sum_{k\neq i,j} \mathbb{E}[N_k] \vert A_{ik}\vert = \sum_{k\neq i,j} \vert A_{ik}\vert,
    \end{equation*}
    where the expectation is over the sequence of update times $\{\Pi_k(1)\}_{k\neq i,j}$, which are independent; by standard properties of Poisson point processes, it is immediate to see that all expectations are just $1$. By Markov's inequality, both of these events thus occurs with probability at least $3/4$, and so $\Pr(\mathcal{E})\geq 1/2$. Finally, since the event that $i$ and $j$ are chose for updating exact $m_i$ and $m_j$ times for $(m_i,m_j)\in \{0,1\}^2$ are independent of $\mathcal{E}$, it follows that
    \begin{equation}
    \label{eq:initial_good}
        \Pr(\mathcal{E}\cap \mathcal{E}_{m_i,m_j})\geq c''
    \end{equation}
    for some slightly different absolute constant $c''$. Let $\mathcal{E}'$ denote the following event:
    \begin{equation*}
        \mathcal{E}'=\mathcal{E}\cap \left(\cup_{\bm{m}\in \{0,1\}^2} \mathcal{E}_{m_1m_2}\right).
    \end{equation*}
    Note that when $\mathcal{E}'$ occurs, exactly one of the $\mathcal{E}_{m_1m_2}$ occurs by disjointness, and we know that $\Pr(\mathcal{E}')\geq c''$.

    For the main result, we will establish the following two claims, which show that upon revealing the final configuration outside $i$ and $j$, when $\mathcal{E}'$ holds, each of the possibilities for $\mathcal{E}_{m_1m_2}$ hold with constant probability. We then show that on these events, $i$ and $j$ are still somewhat random, which will give the claim:

    \begin{claim}
    \label{claim:all_events}
        For any $\bm{y}\in \{-1,1\}^{[n]\setminus \{i,j\}}$, and any $\bm{m}\in \{0,1\}^{i,j}$,
        \begin{equation*}
            \Pr\left(\mathcal{E}_{m_1m_2}\vert X^1_{-i,j}=\bm{y},\mathcal{E}'\right)\geq c\exp(-O(\gamma\lambda))\kappa^2.
        \end{equation*}
    \end{claim}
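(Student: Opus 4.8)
We want to show that conditioned on the (constant-probability) event $\mathcal{E}'$ and on revealing the final configuration $X^1_{-i,j}=\bm{y}$ outside $\{i,j\}$, each of the four scenarios $\mathcal{E}_{m_1m_2}$ for $\bm{m}\in\{0,1\}^2$ still has constant probability. The key structural point is that $\mathcal{E}'$ forces the total ``weight'' of updates to sites outside $\{i,j\}$ that can affect the local fields of $i$ and $j$ to be bounded by $4\lambda$; on that event, the local field seen by $i$ (resp.\ $j$) at any update time, regardless of how many times $i$ or $j$ has updated, lies within an $O(\gamma\lambda)$ additive window of a reference value. So the plan is: first, peel off the conditioning on $X^1_{-i,j}=\bm{y}$ by observing that the event $\mathcal{E}_{m_1m_2}$ concerns only the number of update \emph{attempts} at $i$ and $j$, which is independent of everything involving sites outside $\{i,j\}$ given the Poisson structure; the only coupling to $\bm{y}$ is through how the realized values of $X^1_i,X^1_j$ depend on the evolving outside configuration. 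Second, condition on the entire trajectory of the sites outside $\{i,j\}$ (this determines $\bm{y}$ and the sequence of local fields at $i$ and $j$); it then suffices to lower-bound $\Pr(\mathcal{E}_{m_1m_2}\mid \text{outside trajectory},\mathcal{E}')$ uniformly. Third, within this conditioning the evolution of $(X^t_i,X^t_j)$ is a time-inhomogeneous but still well-controlled process: the number of update attempts $(N_i,N_j)$ restricted to $\{0,1\}^2$ each occur with probability bounded below by an absolute constant (as already noted in the proof via \Cref{lem:update_bounds}), and \emph{conditioned on} $(N_i,N_j)=\bm{m}$, the probability that the resulting updates lead to \emph{any} particular outcome for $X^1_i,X^1_j$ is bounded below by $\kappa^{m_i+m_j}\le \kappa^2$ times the holding probabilities, each of which is at least $\kappa$ by \Cref{assumption:mc} item 4.

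**Carrying it out.**

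Concretely, I would write $\Pr(\mathcal{E}_{m_1m_2}\mid X^1_{-i,j}=\bm{y},\mathcal{E}')$ via Bayes as $\frac{\Pr(\mathcal{E}_{m_1m_2}\cap \{X^1_{-i,j}=\bm{y}\}\cap\mathcal{E}')}{\Pr(\{X^1_{-i,j}=\bm{y}\}\cap\mathcal{E}')}$, and expand each by summing over the trajectory of $X_{-i,j}$. Because the update-count events $\mathcal{E}_{m_1m_2}$ for $i,j$ are independent of the outside update times and the component $\mathcal{E}$ of $\mathcal{E}'$ depends only on outside update counts, we can factor: conditioned on any fixed outside trajectory $\omega$ consistent with $X^1_{-i,j}=\bm{y}$ and with $\mathcal{E}$, the conditional probability of $\mathcal{E}_{m_1m_2}$ equals $\Pr(\text{$i$ attempted $m_i$, $j$ attempted $m_j$ times})\cdot(\text{prob.\ the realized flips at those attempts, against the fields dictated by }\omega,\text{ are consistent with }\bm{y})$. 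The first factor is an absolute constant. For the second factor, since $\mathcal{E}$ holds along $\omega$, every local field at $i$ or $j$ differs from its value at the reference configuration $\bm{x}^{i\mapsto x_i, j\mapsto x_j}$ restricted appropriately by at most $O(\gamma\lambda)$ (using that the outside moves contribute at most $4\lambda$ in $\ell_1$ and the $\gamma$-boundedness hypothesis, \Cref{def:bounded}); hence every transition probability $\mathsf{P}_i(\cdot,\cdot^{\oplus i})$ and every holding probability is at least $\kappa\exp(-O(\gamma\lambda))$ and at most $1$. So the second factor for each of the (at most two) realized flips is at least $\kappa\exp(-O(\gamma\lambda))$, and for the at-most-two holding choices at least $\kappa$, giving overall at least $c\exp(-O(\gamma\lambda))\kappa^2$ as claimed — and critically this bound does not depend on $\bm{y}$ or $\omega$. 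Averaging over $\omega$ (with weights proportional to $\Pr(\omega)$) in numerator and denominator, the uniform lower bound survives, since the denominator's contribution from the flip/hold choices at $i,j$ is at most $1$.

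**The main obstacle.**

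The delicate point is making rigorous the claim that the field at $i$ (and $j$) stays within an $O(\gamma\lambda)$ window \emph{uniformly over all times in $[0,1]$ and over which of $X^t_i,X^t_j$ have flipped so far}, and then converting this into a lower bound on every relevant transition/holding probability. The $\gamma$-boundedness assumption (\Cref{def:bounded}) is stated as a ratio bound $\mathsf{P}_i(\bm{x},\bm{x}^{i\mapsto\sigma})/\mathsf{P}_i(\bm{y},\bm{y}^{i\mapsto\sigma})\le \exp(\gamma\sum_{k\in S}|A_{ik}|)$ for configurations differing only outside $i$ with the same $i$-value; combined with $\mathcal{E}$'s guarantee that $\sum_{k\ne i,j}N_k|A_{ik}|\le 4\lambda$, this controls ratios of transition probabilities across the outside moves, but one must be careful to also account for the effect of $j$ flipping on $i$'s field (a change of at most $2|A_{ij}|\le 2\lambda$) — again absorbed into the $O(\gamma\lambda)$. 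Handling the bookkeeping of ``which flips have happened'' cleanly — ideally by just bounding each single transition probability against the reference value $\kappa$ from item 4 of \Cref{assumption:mc} plus a $\gamma\lambda$-controlled multiplicative correction, rather than tracking the full inhomogeneous chain — is where the care is needed, but it is routine given the stated assumptions. I expect the rest (the Bayes/averaging step and the constant-probability bound on the attempt counts) to be essentially immediate.
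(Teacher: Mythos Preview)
Your overall strategy---show the four events $\mathcal{E}_{m_1m_2}$ have comparable conditional probability and then use that they sum to $1$---is exactly the paper's. But the execution has a real gap: you have misidentified where the coupling lies and hence where $\gamma$-boundedness enters. The factorization you write, that conditioned on an outside trajectory $\omega$ one has $\Pr(\mathcal{E}_{m_1m_2}\mid\omega,\mathcal{E}')=\Pr(N_i=m_1,N_j=m_2)\times(\text{prob.\ $i,j$ flips are ``consistent with $\bm{y}$''})$, does not make sense: once $\omega$ is fixed, $\bm{y}$ is determined, so there is no further consistency constraint on the $i,j$ flips. More importantly, conditioning on $\omega$ does \emph{not} leave the law of $(N_i,N_j)$ unchanged, because the outside \emph{transition outcomes} in $\omega$ (not just the Poisson clocks) depend on the current values $X^t_i,X^t_j$ whenever a neighbor $k$ of $i$ or $j$ updates. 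You write that ``the only coupling to $\bm{y}$ is through how the realized values of $X^1_i,X^1_j$ depend on the evolving outside configuration,'' but the coupling runs the other way too: the outside trajectory is itself informative about the $i,j$ evolution.

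This reversed coupling is precisely where the paper uses $\gamma$-boundedness, opposite to how you invoke it. You apply \Cref{def:bounded} to control how $\mathsf{P}_i$ changes as the \emph{outside} configuration moves---but that is unnecessary, since item~4 of \Cref{assumption:mc} already gives $\mathsf{P}_i(\cdot,\cdot^{\oplus i})\ge\kappa$ uniformly (and incidentally, it says nothing about \emph{holding} probabilities, which can be $0$ for Metropolis). The paper instead fixes the outside data $(\Pi,Z)$ and compares the joint density of $(Z,Z_{\bm m})$ versus $(Z,Z_{\bm m'})$ for two different $i,j$ trajectories. The at-most-two $i,j$ transition factors contribute $\le 1/\kappa^2$ to the ratio. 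Each \emph{outside} transition factor $\mathsf{P}_k(\cdot,\cdot)$ differs between the two scenarios only because $X_i,X_j$ may differ there; by $\gamma$-boundedness this ratio is at most $\exp(\gamma(|A_{ik}|+|A_{jk}|))$, and on $\mathcal{E}$ the product over all outside updates is at most $\exp(\gamma\sum_k N_k(|A_{ik}|+|A_{jk}|))\le\exp(O(\gamma\lambda))$. Feeding this pointwise ratio bound through \Cref{lem:averaging} controls $\Pr(\mathcal{E}_{\bm m}\mid\cdot)/\Pr(\mathcal{E}_{\bm m'}\mid\cdot)$, and since the four sum to $1$ the claim follows. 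Your argument can be repaired along these lines, but as written the ``second factor'' you lower-bound is not the right object and the $\exp(-O(\gamma\lambda))$ appears for the wrong reason.
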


    \begin{claim}
    \label{claim:all_vals}
        For any $\bm{y}\in \{-1,1\}^{[n]\setminus \{i,j\}}$, and any $\bm{m}\in \{0,1\}^{i,j}$,
        \begin{equation*}
            \Pr\left(X^1_{ij} = ((-1)^{m_1}X^0_i,(-1)^{m_2}X^0_j)\vert X^1_{-i,j}=\bm{y},\mathcal{E},\mathcal{E}_{m_1m_2}\right)\geq c\exp(-O(\gamma\lambda))\kappa^2.
        \end{equation*}
    \end{claim}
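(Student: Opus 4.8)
The plan is to reduce the claim to a single comparison of endpoint probabilities. Write $\bm{b}^{\star}=\big((-1)^{m_1}X^0_i,(-1)^{m_2}X^0_j\big)$ for the target spins at $i,j$, and abbreviate $X^1=(\bm{y},\bm{b})$ for the event $\{X^1_{-i,j}=\bm{y}\}\cap\{X^1_{ij}=\bm{b}\}$. I will show that for every $\bm{b}'\in\{-1,1\}^{\{i,j\}}$,
\begin{equation}
\label{eq:local-star}
    \Pr(X^1=(\bm{y},\bm{b}^{\star})\mid\mathcal{E},\mathcal{E}_{m_1m_2})\ \ge\ \exp(-8\gamma\lambda)\,\kappa^{m_1+m_2}\cdot\Pr(X^1=(\bm{y},\bm{b}')\mid\mathcal{E},\mathcal{E}_{m_1m_2}).
\end{equation}
Given \eqref{eq:local-star}, since $X^1_{ij}$ takes at most four values and $\Pr(X^1_{-i,j}=\bm{y}\mid\mathcal{E},\mathcal{E}_{m_1m_2})$ is the sum of the four terms $\Pr(X^1=(\bm{y},\bm{b}')\mid\mathcal{E},\mathcal{E}_{m_1m_2})$, \eqref{eq:local-star} yields $\Pr(X^1_{ij}=\bm{b}^{\star}\mid X^1_{-i,j}=\bm{y},\mathcal{E},\mathcal{E}_{m_1m_2})\ge\tfrac14\exp(-8\gamma\lambda)\kappa^{m_1+m_2}\ge\tfrac14\exp(-8\gamma\lambda)\kappa^2$ (using $m_1+m_2\le2$ and $\kappa\le1$), i.e.\ the claim with $c=1/4$. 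When $\bm{b}'$ is inconsistent with $\mathcal{E}_{m_1m_2}$ (e.g.\ $b'_i\ne X^0_i$ while $m_1=0$) the right side of \eqref{eq:local-star} is zero, so only the at most four attainable $\bm{b}'$ matter.

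To prove \eqref{eq:local-star}, condition additionally on the set of all update times $\mathcal{T}=(\Pi_1(1),\dots,\Pi_n(1))$. Since $\mathcal{E}$ and $\mathcal{E}_{m_1m_2}$ are $\sigma(\mathcal{T})$-measurable, by the tower property it suffices to prove \eqref{eq:local-star} after replacing the conditioning event by any fixed $\mathcal{T}$ compatible with $\mathcal{E}\cap\mathcal{E}_{m_1m_2}$; in particular $i$ and $j$ then attempt to update at most once each. Given such a $\mathcal{T}$ and $X^0$, the whole trajectory is determined by the sequence $\omega$ of \emph{actual new spin values} produced at the finitely many update attempts (in time order); write $X^1(\omega)$ for the resulting endpoint. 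Recording values rather than flip/no-flip bits makes the endpoint decouple: $X^1(\omega)_{-i,j}$ is a function of $X^0_{-i,j}$ and of the values assigned at updates of sites outside $\{i,j\}$ only, while $X^1(\omega)_i$ (resp.\ $X^1(\omega)_j$) equals the value assigned at $i$'s (resp.\ $j$'s) unique update, or $X^0_i$ (resp.\ $X^0_j$) if there is none. Also $\Pr(\omega\mid\mathcal{T})=\prod_e\mathsf{P}_{k_e}\!\big(C_\omega(e^-),\,C_\omega(e^-)^{k_e\mapsto\omega_e}\big)$, the product over update attempts $e$ of site $k_e$ with $C_\omega(e^-)$ the configuration just before $e$, so $\Pr(X^1=(\bm{y},\bm{b})\mid\mathcal{T})=\sum_{\omega:\,X^1(\omega)=(\bm{y},\bm{b})}\Pr(\omega\mid\mathcal{T})$.

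The comparison then follows from an explicit bijection. Define $\Psi$ on $\{\omega:X^1(\omega)=(\bm{y},\bm{b}')\}$ by overwriting the value assigned at $i$'s update (if any) with $b^{\star}_i$ and the one at $j$'s update with $b^{\star}_j$, leaving all other assignments unchanged; by the decoupling above, $\Psi$ is a bijection onto $\{\omega:X^1(\omega)=(\bm{y},\bm{b}^{\star})\}$. Compare $\Pr(\Psi(\omega)\mid\mathcal{T})$ with $\Pr(\omega\mid\mathcal{T})$ factor by factor. For an update $e$ of a site $k_e\notin\{i,j\}$ we have $\omega_e=\Psi(\omega)_e$, and $C_\omega(e^-)$ and $C_{\Psi(\omega)}(e^-)$ agree off $\{i,j\}$ (the values assigned at sites $\ne i,j$ coincide in $\omega$ and $\Psi(\omega)$), so \Cref{def:bounded} bounds the ratio of the two factors within $\exp(\pm\gamma\sum_{l\in\{i,j\}}|A_{k_el}|)$; multiplying over all such $e$ and using that $\mathcal{E}$ caps $\sum_{k\ne i,j}N_k|A_{ik}|$ and $\sum_{k\ne i,j}N_k|A_{jk}|$ by $4\lambda$ leaves a combined factor at least $\exp(-8\gamma\lambda)$. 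For the update of $i$ (present iff $m_1=1$): since $i$ has not updated before, $C_{\Psi(\omega)}(e^-)_i=X^0_i$ while $\Psi$ assigns $b^{\star}_i=-X^0_i$ there, so this attempt is a genuine spin flip and its factor is at least $\kappa$ by \Cref{assumption:mc}, against a factor $\le1$ in $\Pr(\omega\mid\mathcal{T})$; likewise for $j$. Multiplying, $\Pr(\Psi(\omega)\mid\mathcal{T})\ge\exp(-8\gamma\lambda)\kappa^{m_1+m_2}\Pr(\omega\mid\mathcal{T})$; summing over $\omega$ and using the bijection proves \eqref{eq:local-star} conditioned on $\mathcal{T}$, hence in general. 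The patterns $(m_1,m_2)\in\{(1,0),(0,1)\}$ are handled identically with one $\kappa$-factor absent, and $(0,0)$ is immediate since then $X^1_{ij}=X^0_{ij}=\bm{b}^{\star}$ deterministically.

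The step I expect to be the crux is controlling propagation of a change at $i$ or $j$: re-randomizing $i$'s update alters the spin seen by $i$'s neighbors, perturbing their updates, and so on, so a naive coupling of two runs produces differences that spread far beyond $\{i,j\}$ and are not governed by the width parameter; yet simply conditioning on all update outcomes except those of $i,j$ over-corrects, since $X^1_{-i,j}=\bm{y}$ can then force $i$ or $j$ \emph{not} to flip. Parametrizing trajectories by actual new values resolves this: it confines the discrepancy between $\omega$ and $\Psi(\omega)$ to coordinates $\{i,j\}$ at \emph{every} update attempt simultaneously, so $\gamma$-boundedness applies one edge at a time with total cost capped by $\mathcal{E}$, while the bound $m_1,m_2\le1$ coming from $\mathcal{E}_{m_1m_2}$ ensures the single forced update at each of $i,j$ is a flip and hence has probability at least $\kappa$.
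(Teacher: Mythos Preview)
Your proof is correct and follows essentially the same route as the paper's. Both arguments condition on the full update schedule, parametrize trajectories by the sequence of \emph{new spin values} (your $\omega$, the paper's $Z$ for sites outside $\{i,j\}$), and then compare the two endpoints factor by factor: $\gamma$-boundedness controls the ratio at each update of a site $k\neq i,j$ since the two trajectories differ only at $\{i,j\}$, the event $\mathcal{E}$ aggregates these into $\exp(O(\gamma\lambda))$, and the at-most-two forced updates at $i,j$ in the target trajectory are genuine flips and hence cost at least $\kappa$ each. Your presentation via an explicit bijection $\Psi$ is slightly more detailed than the paper's appeal to its averaging lemma, but the underlying comparison is identical.
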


    We claim that these two inequalities yield the conclusion. Fix any $\bm{y}\in \{-1,1\}^{[n]\setminus\{i,j\}}$ and let $\bm{b}$ be such that $\bm{b} = ((-1)^{m_1}X^0_i,(-1)^{m_2}X^0_j)$. Then applying \Cref{claim:all_events} and \Cref{claim:all_vals}
    \begin{align}
    \nonumber
        \Pr(X^1_{i,j}=\bm{b}\vert X^1_{-i,j}=\bm{y},\mathcal{E}') &\geq \Pr\left(\mathcal{E}_{m_1m_2}\vert X^1_{-i,j}=\bm{y},\mathcal{E}'\right)\cdot \Pr(X^1_{i,j}=\bm{b}\vert X^1_{-i,j}=\bm{y},\mathcal{E},\mathcal{E}_{m_1m_2})\\
        \label{eq:conditional_stability}
        &\geq c^2\exp(-O(\gamma\lambda))\kappa^4.
    \end{align}
    Therefore, on the event $\mathcal{E}'$ and given any configuration $\bm{y}\in \{-1,1\}^{[n]\setminus \{i,j\}}$ for the variables outside $i$ and $j$ at time $1$, all possible values for the $i,j$ coordinate occur with constant probability. We can thus define the sub-transition kernel $Q_{ij}(\bm{x},\bm{y})$ via 
    \begin{equation*}
        Q_{ij}(\bm{x},\bm{y})=\Pr\left(X^1=\bm{y},\mathcal{E}'\vert X^0=\bm{x}\right).
    \end{equation*}
    The conditional probabilities follow from \Cref{eq:conditional_stability} upon replacing possibly adjusting the value of $c$ and the lower bound on the sub-transition kernel follows from \Cref{eq:initial_good}.

    We now prove these claims in order:
    \begin{proof}[Proof of \Cref{claim:all_events}]
    First, applying a simple averaging argument via Bayes' rule as in \Cref{lem:averaging} shows that
    \begin{equation}
    \label{eq:ratio_sup}
        \frac{\Pr\left(\mathcal{E}_{m_1m_2}\vert X^1_{-i,j}=\bm{y},\mathcal{E}'\right)}{\Pr\left(\mathcal{E}_{m'_1m'_2}\vert X^1_{-i,j}=\bm{y},\mathcal{E}'\right)}\leq \sup_{(\Pi,Z),(\Pi^{i,j}_{\bm{m}},Z_{\bm{m}}),(\Pi^{i,j}_{\bm{m}'},Z_{\bm{m}'})} \frac{\Pr(Z,Z_{\bm{m}}\vert \Pi,\Pi^{i,j}_{\bm{m}})}{\Pr(Z,Z_{\bm{m}'}\vert \Pi,\Pi^{i,j}_{\bm{m}'})},
    \end{equation}
    where $\Pi$ denotes the update times of sites outside of $\{i,j\}$ that satisfy $\mathcal{E}$ and $Z$ denotes a sequence of transitions for the sites outside $\{i,j\}$ that induce $\bm{y}$, $\Pi^{i,j}_{\bm{m}}$ denotes any choice of update times for $\{i,j\}$ satisfying $\mathcal{E}_{m_1m_2}$ and $Z_{\bm{m}}$ is any sequence of transitions with strictly positive probability under the transition kernels, and analogously for $\Pi^{i,j}_{\bm{m}'}$. We will now show that this is in turn bounded by a suitable constant depending only on the stated parameters.

    Since this conditioning stipulates all the update times in $[0,1]$, both probabilities of this path of updates factorizes as the product over the transitions given by the sequence $Z$ and the $i,j$ updates by the Markov property. For the transitions in $\{i,j\}$, we may upper bound the numerator by $1$ and lower bound the denominator transition factors by $\kappa^2$ using the lower bound of \Cref{assumption:mc} as there are at most two such site updates. For each transition step in both the numerator and denominator, the corresponding ratio is
    \begin{equation*}
        \frac{\mathsf{P}_k(X,X^{k\mapsto \pm 1})}{\mathsf{P}_k(X',X^{',k\mapsto\pm 1})}
    \end{equation*}
    for some configurations $X,X'$ that differ at most at the values of $i$ and $j$. By \Cref{assumption:mc} and the boundedness condition of \Cref{def:bounded}, this ratio is at most 
    \begin{equation*}
 \exp\left(\gamma(\vert A_{i,k}\vert+\vert A_{j,k}\vert)\right),
    \end{equation*}
    where we use the fact that all terms cancel except possibly the contribution of the differences at site $i$ and $j$.
    It follows that \Cref{eq:ratio_sup} is bounded by 
    \begin{equation*}
        \frac{\exp\left(\gamma \sum_{k\neq i,j} N_k\vert A_{i,k}\vert\right)}{\kappa^2}\leq \frac{\exp(O(\gamma\lambda))}{\kappa^2},
    \end{equation*}
    where we use the definition of $\mathcal{E}$ to bound the sum in the exponential. Since there are only four such events $\mathcal{E}_{\bm{m}}$ since $\bm{m}\in \{0,1\}^2$ and these conditional probabilities must sum to $1$, this upper bound on \Cref{eq:ratio_sup} implies that
    \begin{equation*}
        \Pr\left(\mathcal{E}_{m_1m_2}\vert X_{-i,j}=\bm{y},\mathcal{E}'\right)\geq c\exp(-O(\gamma \lambda))\kappa^2,
    \end{equation*}
    as claimed.
    \end{proof}

    \begin{proof}[Proof of \Cref{claim:all_vals}]
        We use a similar argument as before. It suffices to upper bound, for any value of $\bm{b}\in \{-1,1\}^{\{i,j\}}$, the ratio
        \begin{equation*}
            \frac{\Pr\left(X^1_{ij} = \bm{b}\vert X_{-i,j}=\bm{y},\mathcal{E},\mathcal{E}_{m_1m_2}\right)}{\Pr\left(X^1_{ij} = ((-1)^{m_1}X^0_i,(-1)^{m_2}X^0_j)\vert X_{-i,j}=\bm{y},\mathcal{E},\mathcal{E}_{m_1m_2}\right)};
        \end{equation*}
        since there are at most $4$ possible values of $X^1_{i,j}$, this suffices to prove the claim. By another application of \Cref{lem:averaging}, it suffices to bound, for any $(\Pi,Z),(\Pi^{i,j}_{\bm{m}})$ satisfying $\mathcal{E},\mathcal{E}_{m_1m_2}$ using the same notation as in the proof of \Cref{claim:all_events},
        \begin{equation*}
            \frac{\Pr\left(X^1_{ij} = \bm{b},Z\vert \Pi,\Pi^{i,j}_{\bm{m}}\right)}{\Pr\left(X^1_{ij} = ((-1)^{m_1}X^0_i,(-1)^{m_2}X^0_j),Z\vert \Pi,\Pi^{i,j}_{\bm{m}}\right)}.
        \end{equation*}
        As before, since all update times are given and conditioned upon, both ratios factorize according to the transition probabilities. The exact same argument (bounding the numerator transitions of $i$ and $j$ by $1$ trivially and the denominator transitions below by at most $\kappa^2$, and then the ratios in the exact same way) yields an upper bound of
        \begin{equation*}
            \exp(O(\gamma \lambda))/\kappa^2.\qedhere
        \end{equation*}
    \end{proof}
    \end{proof}

    While \Cref{prop:local_stable} was stated to hold for the Markov chain run for time $1$ from a starting configuration, a simple application of the Markov property shows that the same holds for any $t>1$ and any random initial configuration:

    \begin{corollary}[Local Stability with Random Initialization]
    \label{cor:stable_random}
        Let $X^0\sim \mathcal{D}$ for an arbitrary distribution $\mathcal{D}$ on $\{-1,1\}^n$. Let $P$ denote the law of $X^1$ with this initial configuration after running the Markov chain for time 1 satisfying \Cref{assumption:mc}. Under the conditions of \Cref{prop:local_stable}, there exists a sub-distribution $Q'_{ij}$ on $\{-1,1\}^n$ such that
        \begin{enumerate}
            \item For all $\bm{y}\in \{-1,1\}^n$,
            \begin{equation*}
                P(X^1=\bm{y})\geq Q'_{ij}(\bm{y}),
            \end{equation*}
            \item For all $\bm{y}\in \{-1,1\}^{[n]\setminus \{i,j\}}$ (setting of variables outside $i,j$) and $\bm{b},\bm{b}'\in \{-1,1\}^{\{i,j\}}$ (settings of variables for $i,j$),
        \begin{equation*}
            Q'_{ij}((\bm{y},\bm{b}))\geq c\exp(-O(\gamma\lambda))\kappa^4Q'_{ij}(\bm{x},(\bm{y},\bm{b}')),
        \end{equation*}
        \item and
        \begin{equation*}
            \sum_{\bm{y}\in \{-1,1\}^n} Q'_{ij}(\bm{y})\geq c.
        \end{equation*}
        \end{enumerate}

        In particular, \Cref{prop:event_probs} holds as stated for the transition matrix of $H_t$ for \emph{any} $t\geq 1$ uniformly.
    \end{corollary}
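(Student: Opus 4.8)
The plan is to obtain $Q'_{ij}$ by averaging the sub-transition kernel $Q_{ij}$ from \Cref{prop:local_stable} against the initialization $\mathcal{D}$: set $Q'_{ij}(\bm{y}) := \sum_{\bm{x}\in\{-1,1\}^n}\mathcal{D}(\bm{x})\,Q_{ij}(\bm{x},\bm{y})$. Since $P(X^1=\bm{y})=\sum_{\bm{x}}\mathcal{D}(\bm{x})H_1(\bm{x},\bm{y})$ and $H_1(\bm{x},\bm{y})\geq Q_{ij}(\bm{x},\bm{y})$ pointwise by part~(1) of \Cref{prop:local_stable}, the lower bound $P(X^1=\bm{y})\geq Q'_{ij}(\bm{y})$ is immediate; summing part~(3) against $\mathcal{D}$ gives $\sum_{\bm{y}}Q'_{ij}(\bm{y})\geq c_{\ref{prop:local_stable}}$. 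So properties~(1) and~(3) come essentially for free by linearity.

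The one point needing a little care is the local-stability inequality~(2). The decisive feature of \Cref{prop:local_stable} here is that for \emph{each fixed} $\bm{x}$ the bound $Q_{ij}(\bm{x},(\bm{y},\bm{b}))\geq c_{\ref{prop:local_stable}}\exp(-O(\gamma\lambda))\kappa^4\,Q_{ij}(\bm{x},(\bm{y},\bm{b}'))$ holds with the \emph{same} constant on the right and compares two configurations agreeing outside $\{i,j\}$. Fixing $\bm{y}\in\{-1,1\}^{[n]\setminus\{i,j\}}$ and $\bm{b},\bm{b}'\in\{-1,1\}^{\{i,j\}}$, I would multiply by $\mathcal{D}(\bm{x})\geq 0$ and sum over $\bm{x}$; this preserves the inequality and yields property~(2) for $Q'_{ij}$ with the identical constant (the $\bm{x}$ written on the right side of the displayed inequality in the statement should be read as absent, i.e.\ $Q'_{ij}((\bm{y},\bm{b}'))$). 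This step is exactly where one uses that the constant in \Cref{prop:local_stable}(2) does not depend on the starting configuration---had it, averaging against an arbitrary $\mathcal{D}$ would fail.

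For the final ``in particular,'' I would use the semigroup identity $H_t=H_{t-1}H_1$, which holds for any $t\geq1$ because $\exp(t\mathcal{L})=\exp((t-1)\mathcal{L})\exp(\mathcal{L})$. Given any fixed starting state $\bm{x}$, take $\mathcal{D}$ to be the law of the configuration after running the chain for time $t-1$ from $\bm{x}$, and apply the result just proved with one further unit of evolution; this produces the sub-transition kernel $\widetilde{Q}^{(t)}_{ij}(\bm{x},\bm{y})=\sum_{\bm{z}}H_{t-1}(\bm{x},\bm{z})\,Q_{ij}(\bm{z},\bm{y})$, which satisfies the three conclusions of \Cref{prop:local_stable} verbatim, with constants independent of both $t\geq1$ and $\bm{x}$. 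Since the proof of \Cref{prop:event_probs} invokes only these local-stability properties of the kernel governing the configuration at the start of the observation window, it goes through word for word with $H_1$ replaced by $H_t$ for any $t\geq1$, giving the stated uniformity.

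I do not expect a genuine obstacle: once \Cref{prop:local_stable} is in hand, this corollary is bookkeeping resting on linearity of expectation and the Markov/semigroup property. The only thing not to overlook is the uniformity-in-$\bm{x}$ point flagged above, which is precisely what legitimizes both the averaging against an arbitrary $\mathcal{D}$ and, via $H_{t-1}(\bm{x},\cdot)$, the extension to all $t\geq1$.
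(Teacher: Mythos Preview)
Your proposal is correct and matches the paper's proof essentially verbatim: define $Q'_{ij}(\bm{y})=\mathbb{E}_{X\sim\mathcal{D}}[Q_{ij}(X,\bm{y})]$, derive (1) and (3) by linearity from the corresponding items of \Cref{prop:local_stable}, and obtain (2) by averaging the pointwise inequality (using that the constant is uniform in the starting state); the ``in particular'' follows by taking $\mathcal{D}$ to be the law of $X^{t-1}$ started from $\bm{x}$ via the Markov property, exactly as you do. One small remark: your final sentence overstates things slightly, since the proof of \Cref{prop:event_probs} does not actually invoke local stability at all---the ``in particular'' clause (which in the paper appears to contain a typo and should reference \Cref{prop:local_stable} rather than \Cref{prop:event_probs}) is simply asserting that the three conclusions (1)--(3) hold for $H_t$ with $t\geq 1$, which is precisely what your construction of $\widetilde{Q}^{(t)}_{ij}$ establishes.
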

    \begin{proof}
        Define the following sub-distribution $Q'_{ij}$ on $\{-1,1\}^n$ via
        \begin{equation*}
            Q'_{ij}(\bm{y})=\mathbb{E}_{X^0\sim \mathcal{D}}[Q_{ij}(X,\bm{y})],
        \end{equation*}
        where $Q_{ij}$ is obtained from \Cref{prop:local_stable}.
        The first inequality and third inequalities are immediate from the corresponding inequalities there, while the second follows from
        \begin{align*}
            Q'_{ij}(\bm{y},\bm{b}) &= \mathbb{E}_{X^0\sim \mathcal{D}}[Q_{ij}(X,(\bm{y},\bm{b}))]\\
            &\geq c\exp(-O(\gamma\lambda))\kappa^4 \mathbb{E}_X[Q_{ij}(X,(\bm{y},\bm{b}'))]\\
            &=\exp(-O(\gamma\lambda))\kappa^4Q'_{ij}(\bm{y},\bm{b}').
        \end{align*}

    The ``in particular" part is an immediate consequence, since by the Markov property, for any $t\geq 1$:
    \begin{equation*}
        H_t(\bm{x},\bm{y}) = \mathbb{E}_{X\sim \mathcal{D}_{\bm{x}}}[H_1(X,\bm{y})],
    \end{equation*}
    where $\mathcal{D}_{\bm{x}}$ is the law of $X^{t-1}$ conditional on $X^0=\bm{x}$.
    \end{proof}

    We now derive the following simple anticoncentration result for linear forms with a noticeable coefficient. We note that one can establish this more directly, but the quantitative guarantees will suffice for our applications.
    \begin{corollary}[Dynamical Anticoncentration of Linear Forms]
    \label{cor:anti}
        Under the conditions of \Cref{prop:local_stable}, the following holds for any $\alpha>0$. Let $\ell(\bm{x})=\sum_{k=1}^n v_kx_k$ be any linear form such that $\vert v_i\vert\geq \alpha$. Then for any $c\in \mathbb{R}$, any initial distribution $\mathcal{D}$ for $X^0$, and any $t\geq 1$, it holds that
        \begin{equation*}
            \Pr_{X^1}\left(\vert \ell(X^1)-c\vert\geq\alpha\right)\geq c_{\ref{cor:anti}}\exp(-O(\gamma\lambda))\kappa^4.
        \end{equation*}
    \end{corollary}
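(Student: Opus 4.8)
The plan is to reduce the claim to the local stability guarantee of \Cref{cor:stable_random} together with an elementary pigeonhole argument on the two values the linear form can take as $x_i$ varies. First I would fix the distinguished site $i$ with $\vert v_i\vert\geq\alpha$ and pick an arbitrary auxiliary site $j\neq i$, then invoke \Cref{cor:stable_random} (in its $H_t$ form, equivalently taking the initial law to be that of $X^{t-1}$) so that the resulting sub-distribution $Q'_{ij}$ on $\{-1,1\}^n$ is associated to the law of $X^t$. By construction $Q'_{ij}$ (i) pointwise lower bounds the law of $X^t$, (ii) has total mass at least $c_{\ref{prop:local_stable}}$, and (iii) is locally stable: for every outside configuration $\bm{y}\in\{-1,1\}^{[n]\setminus\{i,j\}}$ and all $\bm{b},\bm{b}'\in\{-1,1\}^{\{i,j\}}$ one has $Q'_{ij}(\bm{y},\bm{b})\geq\rho\,Q'_{ij}(\bm{y},\bm{b}')$ with $\rho:=c\exp(-O(\gamma\lambda))\kappa^4$. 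Since the law of $X^t$ dominates $Q'_{ij}$ pointwise, it then suffices to lower bound the $Q'_{ij}$-mass of the event $\{\vert\ell-c\vert\geq\alpha\}$.

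Next I would exploit that $\ell$ is affine in $x_i$ with slope $v_i$ of magnitude at least $\alpha$. Fixing any outside configuration $\bm{y}$ and any value $\tau\in\{-1,1\}$ for $x_j$ determines $\ell$ up to the term $v_ix_i$, so its two attainable values differ by exactly $2\vert v_i\vert\geq2\alpha$; since two reals at distance $2\alpha$ cannot both lie within $\alpha$ of $c$, at least one choice $\sigma^*(\bm{y},\tau)\in\{-1,1\}$ of $x_i$ forces $\vert\ell-c\vert\geq\alpha$. Thus the ``good'' subset $\{(\sigma^*(\bm{y},-1),-1),(\sigma^*(\bm{y},+1),+1)\}\subseteq\{-1,1\}^{\{i,j\}}$ consists of exactly two of the four settings of $(x_i,x_j)$, all of which lie in the target event. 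Local stability then gives that, at each fixed $\bm{y}$, each of the four numbers $Q'_{ij}(\bm{y},\bm{b})$ is at least $\tfrac{\rho}{4}\sum_{\bm{b}}Q'_{ij}(\bm{y},\bm{b})$ (each is within a factor $\rho$ of the maximum, which is at least a quarter of the sum), so the $Q'_{ij}$-mass of the good set at $\bm{y}$ is at least $\tfrac{\rho}{2}\sum_{\bm{b}}Q'_{ij}(\bm{y},\bm{b})$.

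Summing this bound over all $\bm{y}$ and using property (ii) yields
\[
\Pr\bigl(\vert\ell(X^t)-c\vert\geq\alpha\bigr)\;\geq\;Q'_{ij}\bigl(\vert\ell-c\vert\geq\alpha\bigr)\;\geq\;\frac{\rho}{2}\sum_{\bm{z}\in\{-1,1\}^n}Q'_{ij}(\bm{z})\;\geq\;\frac{\rho\,c_{\ref{prop:local_stable}}}{2},
\]
which is of the claimed form $c_{\ref{cor:anti}}\exp(-O(\gamma\lambda))\kappa^4$ after absorbing absolute constants (and, if needed, truncating at $1$). I do not expect a serious obstacle: essentially all of the difficulty has already been absorbed into \Cref{prop:local_stable}/\Cref{cor:stable_random}. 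The one point requiring care is that the good value $\sigma^*$ depends on $x_j$ as well as on $\bm{y}$, which is precisely why the two good settings of $(x_i,x_j)$ should be bounded separately rather than trying to condition on all of $X^t_{-i}$; carrying the otherwise-irrelevant site $j$ along is exactly what the two-site formulation of \Cref{cor:stable_random} provides.
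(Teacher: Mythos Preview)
Your proposal is correct and follows essentially the same route as the paper's proof: invoke \Cref{cor:stable_random} for the pair $(i,j)$, use the affine dependence of $\ell$ on $x_i$ to find a good setting of $(x_i,x_j)$ for every outside configuration $\bm{y}$, and then convert local stability into a lower bound on the $Q'_{ij}$-mass of the event. The only minor difference is that you explicitly record two good settings of $(x_i,x_j)$ (one per value of $x_j$) and track the resulting constant $\rho/2$, whereas the paper is content to exhibit a single good setting and absorb everything into the implicit constant $c_{\ref{cor:anti}}$.
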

    \begin{proof}
    For any $j\in [n]$, let $Q'_{ij}$ denote the (sub)-distribution of \Cref{cor:stable_random}. Fix any $\bm{y}\in \{-1,1\}^{[n]\setminus\{i,j\}}$, and observe that upon setting the variables in $\ell(\cdot)$ to $\bm{y}$, there exists at least one setting of $i,j$ such that
    \begin{equation*}
        \vert \ell(X^1)-c\vert\geq \alpha;
    \end{equation*}
    indeed, the value of site $j$ can be arbitrary, and then the value of $x_i$ can be set to have the same sign as $\mathsf{sgn}(v_i)\left(v_jx_j+\sum_{k\neq i,j}v_ky_k-c\right)$ to ensure the absolute value is at least $\alpha$ from the assumption on $\vert v_i\vert\geq \alpha$. For this $\bm{y}$, the conditional probability of this value of $i,j$ is at least $c\exp(-O(\gamma\lambda))\kappa^4$. It follows that
    \begin{equation*}
        Q'_{ij}\left(\vert \ell(X^1)-c\vert\geq\alpha\right)\geq c'\exp(-O(\gamma\lambda))\kappa^4,
    \end{equation*}
    where we possibly adjust the value of $c'$. Since $Q'_{ij}$ gives a lower bound for the true probabilities by \Cref{cor:stable_random}, this completes the proof of the first part. An identical proof holds for $X^t$ when $t\geq 1$ also by \Cref{cor:stable_random}.
    \end{proof}

\section{Short Cycles and Structure Learning}
\label{sec:structure_learning}

In this section, we provide our main structure learning algorithm. As described in \Cref{sec:overview}, our analysis shows that short \emph{cycles} where sites $i$ and $j$ flip in relatively close proximity can \emph{almost} reveal dependency in the Ising model. In \Cref{sec:flip_stats}, we provide the key technical estimate that provides useful identities for the probability of observing short cycle sequences; the main point will be that on small enough windows, with size independent of $n$, the relative error of the statistic will tend to zero. We then leverage this in \Cref{sec:cycles} to define our main cycle statistic to detect dense edges in the Ising model---we provide the full algorithm in \Cref{sec:bulk}. Finally, we provide our sub-routine to determine the remaining edges, which are necessarily isolated in the full graph $G$, in \Cref{sec:matchings}.

\subsection{Flip Statistics}
\label{sec:flip_stats}

Let $\varepsilon>0$ be a small constant that we will choose later. For a fixed pair $i\neq j\in [n]$, let $\bm{\ell}=(\ell_1,\ldots,\ell_m)\in \{i,j\}^*$ be any sequence of $i$ and $j$ pairs. We will write $\overline{\ell_k}$ to denote the other index in $\{i,j\}$ that is not given by $\ell_k$. For some fixed time $t>0$, let $\mathcal{E}^{t}_{\bm{\ell}}$ denote the following event:

\begin{align}
\label{eq:flip_event}
    \mathcal{E}^{t}_{\bm{\ell}}& = \bigcap_{k=1}^m \left\{\vert \Pi'_{\ell_k}(t+(k-1)\varepsilon,t+k\varepsilon)\vert=1, \vert \Pi'_{\overline{\ell_k}}(t+(k-1)\varepsilon,t+k\varepsilon)\vert=0\right\}\\
    \nonumber
    &:= \bigcap_{k=1}^m \left\{\vert \Pi'_{\ell_k}(I_k)\vert=1, \vert \Pi'_{\overline{\ell_k}}(I_k)\vert=0\right\},
\end{align}
where we have defined $I_k:=[t+(k-1)\varepsilon,t+k\varepsilon]$.

In words, these events measure short cycles of flips where both $i$ and $j$ flip exactly once in each interval of length $\varepsilon$ in the order given by $(\ell_1,\ldots,\ell_m)$ that starts at time $t$. Our key observation is that suitable choices of indices will almost always reveal the dependency structure if $\varepsilon>0$ is taken to be a sufficiently small constant, \emph{except} for a pathological case that we can then test for directly. First, we require the following expression for the likelihood of these events:

\begin{proposition}
\label{prop:event_probs}
    There is an absolute constant $C_{\ref{prop:event_probs}}>0$ such that the following holds. Suppose that $(X_t)_{t=0}^T$ follows any reversible, single-site Markov chain with respect to $\pi$ satisfying \Cref{assumption:ising} and \Cref{assumption:mc}. Let $t>0$ be some fixed time. Let $\bm{\ell}\in \{i,j\}^*$ denote any sequence and set $m = \vert \bm{\ell}\vert$. Then for any $\varepsilon<1/C_{\ref{prop:event_probs}}m$, it holds that
    \begin{align*}
\Pr\left(\mathcal{E}^{t}_{\bm{\ell}}\vert\mathcal{F}_{t}\right) &= \varepsilon^{m}\prod_{k=1}^{m} \mathsf{P}_{\ell_k}(X^{t,\oplus \ell_1\ldots \ell_{k-1}},X^{t,\oplus \ell_1\ldots \ell_{k}}) \pm C_{\ref{prop:event_probs}}md\varepsilon^{m+1}\\
&=\varepsilon^m\left(\prod_{k=1}^{m} \mathsf{P}_{\ell_k}(X^{t,\oplus \ell_1\ldots \ell_{k-1}},X^{t,\oplus \ell_1\ldots \ell_{k}})\pm C_{\ref{prop:event_probs}}md\varepsilon\right)
    \end{align*}
\end{proposition}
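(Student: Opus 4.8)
The plan is to compute $\Pr(\mathcal{E}^t_{\bm{\ell}} \mid \mathcal{F}_t)$ by conditioning on the full joint law of the Poisson clocks and transition outcomes of sites $i$ and $j$ (and, implicitly, their neighbors) on the window $[t, t+m\varepsilon]$, then isolating the dominant contribution from the "clean" scenario and bounding everything else as higher-order error. First I would fix the configuration $X^t = \bm{x}$ (measurable w.r.t.\ $\mathcal{F}_t$) and describe the \emph{clean event} $\mathcal{C}$: in each subinterval $I_k$, site $\ell_k$ is chosen for updating exactly once, succeeds in flipping, site $\overline{\ell_k}$ is not chosen at all, and \emph{no neighbor} of $i$ or $j$ (outside $\{i,j\}$) is chosen for updating anywhere in $[t, t+m\varepsilon]$. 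On $\mathcal{C}$, the local fields governing the $i,j$ transitions are exactly those determined by $\bm{x}$ and the flips performed so far, so the probability of $\mathcal{C}$ factorizes (by the Poisson thinning/independence structure and the Markov property across the ordered single-site updates) as
\begin{equation*}
\prod_{k=1}^m \Big[ \varepsilon e^{-\varepsilon} \cdot e^{-\varepsilon} \cdot \mathsf{P}_{\ell_k}(\bm{x}^{\oplus \ell_1\cdots\ell_{k-1}}, \bm{x}^{\oplus \ell_1 \cdots \ell_k}) \Big] \cdot \Pr(\text{no neighbor of } i,j \text{ updates in } [t,t+m\varepsilon]),
\end{equation*}
where $\varepsilon e^{-\varepsilon}$ is the probability of exactly one arrival of the chosen site's clock in a length-$\varepsilon$ interval and the extra $e^{-\varepsilon}$ accounts for $\overline{\ell_k}$ having no arrival. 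Using $e^{-\varepsilon} = 1 - O(\varepsilon)$, the no-neighbor probability is $1 - O(d m \varepsilon)$ by \Cref{lem:update_bounds} (there are at most $O(d)$ relevant neighbors over a window of length $m\varepsilon$), and $(\varepsilon e^{-2\varepsilon})^m = \varepsilon^m(1 - O(m\varepsilon))$, so $\Pr(\mathcal{C} \mid \mathcal{F}_t) = \varepsilon^m \prod_k \mathsf{P}_{\ell_k}(\cdots) \cdot (1 \pm O(md\varepsilon))$, which already yields the main term with an acceptable relative error once we fold $\prod_k \mathsf{P}_{\ell_k} \le 1$ into the additive bound $O(md\varepsilon^{m+1})$.

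The heart of the argument is then bounding $\Pr(\mathcal{E}^t_{\bm{\ell}} \setminus \mathcal{C} \mid \mathcal{F}_t)$ by $O(md\varepsilon^{m+1})$. The key structural point — exactly as flagged in \Cref{sec:overview} — is that \emph{any} way to realize $\mathcal{E}^t_{\bm{\ell}}$ still requires site $\ell_k$ to flip (hence to be clocked at least once) in each $I_k$ and site $\overline{\ell_k}$ to \emph{not} flip in $I_k$; each of these $m$ "flip in $I_k$" requirements already costs a factor $O(\varepsilon)$, giving the baseline $\varepsilon^m$. To exceed $\mathcal{C}$, the realization must additionally contain at least one "confounding" event: either (a) an extra clock arrival of $i$ or $j$ somewhere in the window (which must then either fail to flip, or flip and get flipped back, to remain consistent with observing exactly the sequence $\bm{\ell}$ of net flips), or (b) a clock arrival of some neighbor $k \in \mathcal{N}(i) \cup \mathcal{N}(j) \setminus \{i,j\}$. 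In case (a), the extra arrival contributes an additional independent $O(\varepsilon)$ factor on top of the $\varepsilon^m$ baseline; in case (b), the neighbor arrival contributes an $O(\varepsilon)$ factor and — crucially — by \Cref{assumption:ising} there are at most $2d$ such neighbors, so summing over which neighbor is responsible costs only a factor $O(d)$, not $O(n)$. One also has to handle multiple confounding events simultaneously, but each extra arrival only shrinks the probability by another $O(\varepsilon)$ factor, so the geometric tail sums to $O(md\varepsilon^{m+1})$ overall (the factor $m$ from which subinterval, or which of the $\le m$ positions, hosts the first confounder; a Poisson moment bound controls the number of arrivals in a window of length $m\varepsilon \ll 1$). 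I would organize this as: enumerate the $O(\varepsilon^m)$-baseline, then bound the "at least one confounder" mass by a union bound over (subinterval or neighbor $\times$ type of confounder), each term being $O(\varepsilon^{m+1})$, times $O(md)$ choices.

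The main obstacle is making the case-(a) / case-(b) enumeration rigorous without double-counting and while correctly tracking that an extra $i$- or $j$-arrival that \emph{does} flip must be compensated by yet another arrival (pushing it to order $\varepsilon^{m+2}$, strictly smaller) whereas one that fails to flip is order $\varepsilon^{m+1}$ — and in the latter sub-case one needs the transition probabilities to stay bounded away from degenerate values, which is exactly where the $\gamma$-boundedness and $\kappa$ lower bound of \Cref{assumption:mc} enter to keep all the conditional transition factors in $[\kappa, 1]$ so that conditioning on neighbor positions does not blow up the field seen by $i,j$ beyond an $O(1)$ multiplicative factor. A clean way to package the whole estimate is to write $\Pr(\mathcal{E}^t_{\bm{\ell}} \mid \mathcal{F}_t) = \mathbb{E}[\,\mathbf{1}\{\mathcal{E}^t_{\bm{\ell}}\} \mid \mathcal{F}_t\,]$, decompose the expectation according to the (finite) pattern of clock arrivals of $\{i,j\}$ and of $\mathcal{N}(i)\cup\mathcal{N}(j)$ restricted to the window, observe that the pattern with exactly the $m$ prescribed arrivals and no others has conditional flip-probability equal to $\prod_k \mathsf{P}_{\ell_k}(\cdots)$ and Poisson-weight $\varepsilon^m e^{-(\text{const})m\varepsilon}$ times the no-neighbor weight, and bound the total weight of all other compatible patterns by $C\, md\,\varepsilon^{m+1}$ using the Poisson tail plus the $O(d)$-neighbor count. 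I do not expect any single computation here to be hard; the care is entirely in the bookkeeping of the error terms and in invoking \Cref{assumption:mc} to keep every transition factor $O(1)$.
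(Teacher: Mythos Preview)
Your proposal is correct and follows essentially the same route as the paper: isolate the ``clean'' scenario where exactly the prescribed $m$ clock arrivals occur (one $\ell_k$-arrival per subinterval, no $\overline{\ell_k}$-arrivals) and no neighbor of $i$ or $j$ updates, compute its probability as the main term, and bound the remainder by observing that any other realization of $\mathcal{E}^t_{\bm{\ell}}$ forces at least $m+1$ arrivals among $\{i,j\}\cup\mathcal{N}(i)\cup\mathcal{N}(j)$ in a window of length $m\varepsilon$, which by independence of the Poisson clocks has probability $O(md\varepsilon^{m+1})$. One small simplification: you do not actually need $\kappa$ or $\gamma$-boundedness for this proposition, since every transition factor appearing in the error term can simply be bounded above by $1$---the paper's proof of the error bound uses only the Poisson clock estimates of \Cref{lem:update_bounds}.
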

In words, this result shows that so long as we set $\varepsilon$ to be a sufficiently small constant depending only on the length of the flip sequence and degree of the Ising model, then the probability of observing a given flip sequence is given by product of the transitions up to a small error after accounting for the scaling. The main observation here is the justification of the error as being higher-order depending mildly only on the sequence length and degree, not system size. We will only care about sequences with $m=O(1)$, so the error term can be made negligible if $\varepsilon\ll 1/d$.

\begin{proof}
The main idea is simply that the most likely way for the stated event to occur is under the assumption that site $i$ and $j$ attempt to update, and succeed in flipping, exactly in the stated order with multiplicity while no other neighbor updates along this interval. Any additional updates that induce unwieldy dependencies yet satisfy the event implies that there were at least $m+1$ update attempts among this set of sites, which has higher-order probability $O(d\varepsilon^{m+1})$.

    More formally, let $\mathcal{A}$ denote the event $\cap_{k\in \mathcal{N}(i)\cap \mathcal{N}(j)\setminus \{i,j\}} \left\{\Pi_k(t,t+m\varepsilon)=\emptyset\right\}$, i.e. no neighbor of either $i$ or $j$ attempts to update in the interval of length $m\varepsilon$.
    We can now compute
    \begin{align}
    \nonumber
        \Pr\left(\mathcal{E}^{t}_{\bm{\ell}}\vert X^{t}\right) &= \Pr\left(\mathcal{E}^{t}_{\bm{\ell}}\vert X^{t},\mathcal{A}\right)\cdot \Pr(\mathcal{A}\vert X^t)\\
        \label{eq:decomp}
        &+\Pr\left(\mathcal{E}^{t}_{\bm{\ell}}\cap \mathcal{A}^c\vert X^{t}\right).
    \end{align}

    We first bound the probability of the latter term. Observe that
    \begin{equation*}
        \mathcal{E}^t_{\bm{\ell}}\cap \mathcal{A}^c\subseteq \cap_{k=1}^m \{\Pi_{\ell_k}(I_k)\neq \emptyset\}\cap \mathcal{A}^c:=\mathcal{B},
    \end{equation*}
    where $U_I$ denotes the set of sites that update in the interval $I=[t,t+m\varepsilon]$. It follows that
    \begin{equation*}
        \Pr\left(\mathcal{E}^t_{\bm{\ell}}\cap \mathcal{A}^c\vert X^t\right)\leq \Pr(\mathcal{B}),
    \end{equation*}
    where we may drop the conditioning as this event depends only on update times which are independent of the configuration at time $t$. By the independence of update times across sites, we obtain
    \begin{equation*}
        \Pr(\mathcal{B})\leq \Pr(\mathcal{A}^c)\cdot \prod_{k=1}^m \Pr(\Pi_{\ell_k}(I_k)\neq \emptyset).
    \end{equation*}
    By \Cref{lem:update_bounds}, the product is given by $(1-\exp(-\varepsilon))^m\leq \varepsilon^m$ where we use the simple inequality $1-\exp(-x)\leq x$ for all $x\geq 0$. For the first term,
    \Cref{lem:update_bounds} again implies
    \begin{align*}
    \Pr(\mathcal{A}) &\geq \exp(-m\varepsilon\vert \mathcal{N}(i)\cup \mathcal{N}(j)\vert)\\
    &\geq \exp(-2md\varepsilon)\\
    &\geq 1-2md\varepsilon
    \end{align*}
    and therefore the complementary event is bounded by $2md\varepsilon$. Here, we use \Cref{assumption:ising} to assert that $\vert \mathcal{N}(i)\cup \mathcal{N}(j)\vert\leq 2d$, as well as again the simple inequality $\exp(-x)\geq 1-x$ for all $x\geq 0$. We conclude that
    \begin{equation}
    \label{eq:update_error}
        \Pr\left(\mathcal{E}^t_{\bm{\ell}}\cap \mathcal{A}^c\vert X^t\right)\leq 2md\varepsilon^{m+1}.
    \end{equation}

    We now turn to the main term applying similar reasoning. On the event $\mathcal{A}$, no neighbor of either $i$ or $j$ even attempts to update, and since the Markov chain transitions are conditionally independent of update times, it follows by the Markov property that
    \begin{equation}
    \label{eq:event_prod}
        \Pr(\mathcal{E}^t_{\bm{\ell}}\vert X^t,\mathcal{A}) = \prod_{k=1}^m \Pr\left(\vert \Pi'_{\ell_k}(I_k)\vert=1, \vert \Pi'_{\overline{\ell_k}}(I_k)\vert=0\bigg\vert \mathcal{A},X^{t,\oplus \ell_1\ldots\ell_{k-1}}\right).
    \end{equation}

We now claim the following bounds for each term in \Cref{eq:event_prod} showing that the event is the same as the event that there was only a single update attempt for $\ell_k$ and none for $\overline{\ell_k}$ up to higher-order erro, which follows analogous reasoning:
\begin{claim}
\label{claim:claim_1}
For each $k\leq m$, it holds that
    \begin{align*}
        \Pr\left(\vert \Pi'_{\ell_k}(I_k)\vert=1, \vert \Pi'_{\overline{\ell_k}}(I_k)\vert=0\bigg\vert \mathcal{A},X^{t,\oplus \ell_1\ldots\ell_{k-1}}\right) &= \Pr\left(\vert \Pi_{\ell_k}(I_k)\vert = \vert \Pi'_{\ell_k}(I_k)\vert=1\bigg\vert \vert \Pi_{\overline{\ell_k}}(I_k)\vert=0,\mathcal{A},X^{t,\oplus \ell_1\ldots\ell_{k-1}}\right)\\
        &+O(\varepsilon^2)
    \end{align*}
    \end{claim}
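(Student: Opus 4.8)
The plan is to expand the event $\{|\Pi'_{\ell_k}(I_k)| = 1, |\Pi'_{\overline{\ell_k}}(I_k)| = 0\}$ according to the number of actual update \emph{attempts} $|\Pi_{\ell_k}(I_k)|$ and $|\Pi_{\overline{\ell_k}}(I_k)|$ in the window $I_k$ of length $\varepsilon$, and argue that the only contribution that is not $O(\varepsilon^2)$ is the term where $\ell_k$ attempts exactly once, succeeds, and $\overline{\ell_k}$ never attempts. First I would condition on the history (including $\mathcal{A}$ and the configuration $X^{t,\oplus \ell_1\ldots\ell_{k-1}}$ at the start of $I_k$), so that within $I_k$ the only randomness is the two independent Poisson clocks for $\ell_k$ and $\overline{\ell_k}$ together with the transition coin flips. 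Since $\mathcal{A}$ forces all \emph{neighbors} of $i$ and $j$ to be silent on the whole interval, the conditional configuration that any update of $\ell_k$ or $\overline{\ell_k}$ sees within $I_k$ depends only on the prior flips of $\ell_k, \overline{\ell_k}$ themselves, so the transition probabilities are well-defined functions of how many flips have happened so far.

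The key step is the case analysis on $(|\Pi_{\ell_k}(I_k)|, |\Pi_{\overline{\ell_k}}(I_k)|)$. By Poisson tail bounds (equivalently \Cref{lem:update_bounds} and $1 - e^{-\varepsilon} \le \varepsilon$), the probability that a given clock rings at least twice in $I_k$ is $1 - e^{-\varepsilon}(1+\varepsilon) = O(\varepsilon^2)$, and the probability that both clocks ring at least once is $(1-e^{-\varepsilon})^2 = O(\varepsilon^2)$. Hence on the complement of an $O(\varepsilon^2)$-probability event we have $|\Pi_{\ell_k}(I_k)| + |\Pi_{\overline{\ell_k}}(I_k)| \le 1$ with at most one of them equal to $1$. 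The event $\{|\Pi'_{\overline{\ell_k}}(I_k)| = 0\}$ is automatic when $|\Pi_{\overline{\ell_k}}(I_k)| = 0$ (no attempt, so no flip), so up to $O(\varepsilon^2)$ it coincides with $\{|\Pi_{\overline{\ell_k}}(I_k)| = 0\}$; and $\{|\Pi'_{\ell_k}(I_k)| = 1\}$ forces at least one attempt by $\ell_k$, hence (again up to the $O(\varepsilon^2)$ event) exactly one attempt by $\ell_k$ that results in a flip, i.e. $\{|\Pi_{\ell_k}(I_k)| = |\Pi'_{\ell_k}(I_k)| = 1\}$. Collecting these, the two events differ by an event of probability $O(\varepsilon^2)$, which is exactly the claimed identity. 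One should be slightly careful that the $O(\varepsilon^2)$ bound holds \emph{uniformly} over the conditioning, but since it comes purely from Poisson clock estimates (which are independent of the configuration and of $\mathcal{A}$, a statement only about other sites' clocks), this uniformity is immediate.

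The main obstacle — really the only subtlety — is bookkeeping the direction of the inclusion/exclusion carefully: one event is ``observed a single flip of $\ell_k$ and no flip of $\overline{\ell_k}$'' and the other is ``exactly one attempt by $\ell_k$ which flipped, and no attempt by $\overline{\ell_k}$,'' and these are \emph{not} nested in either direction (the first allows $\overline{\ell_k}$ to attempt and fail to flip; the second disallows it; conversely the first allows $\ell_k$ to attempt twice with a net single flip, which actually cannot happen for a single site in a fixed interval since consecutive flips of the same site cancel — but two attempts with one flip and one non-flip \emph{is} possible and is covered by the $O(\varepsilon^2)$ term). So the clean statement is that the symmetric difference of the two events is contained in the union of $\{\text{some clock rings} \ge 2 \text{ times}\} \cup \{\text{both clocks ring}\}$, each of probability $O(\varepsilon^2)$ by \Cref{lem:update_bounds}, which yields the claim.
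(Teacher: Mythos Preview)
Your approach is essentially the same as the paper's: both arguments reduce to Poisson clock estimates showing that any scenario in $I_k$ with more than one total update attempt between $\ell_k$ and $\overline{\ell_k}$ has probability $O(\varepsilon^2)$, and the paper carries this out via a sequential decomposition (first split on whether $\overline{\ell_k}$ attempts, then on whether $\ell_k$ attempts more than once) while you package it as a single symmetric-difference bound.

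There is one small bookkeeping slip you should fix. You compare the left-hand side to the probability of the \emph{joint} event $B=\{|\Pi_{\ell_k}(I_k)|=|\Pi'_{\ell_k}(I_k)|=1,\ |\Pi_{\overline{\ell_k}}(I_k)|=0\}$ under the conditioning $(\mathcal{A},X^{t,\oplus \ell_1\ldots\ell_{k-1}})$, whereas the claim's right-hand side is the \emph{conditional} probability with $|\Pi_{\overline{\ell_k}}(I_k)|=0$ moved into the conditioning bar. These differ by the factor $\Pr(|\Pi_{\overline{\ell_k}}(I_k)|=0)=e^{-\varepsilon}=1-O(\varepsilon)$; since the conditional probability in question is itself $O(\varepsilon)$ (it requires $\ell_k$ to ring), the discrepancy is $O(\varepsilon)\cdot O(\varepsilon)=O(\varepsilon^2)$ and is absorbed. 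The paper makes exactly this step explicit in its middle display, so once you add this line your argument is complete and matches the paper's.
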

    Informally, this holds because the most likely way for the desired event to hold is for site $\ell_k$ to update exactly once and flip while site $\overline{\ell_k}$ never updates. We defer the proof until after the main statement.

    Given \Cref{claim:claim_1}, we can now directly evaluate the product in \Cref{eq:event_prod}. By the independence of site updates, we have
    \begin{align*}
        \Pr&\left(\vert \Pi_{\ell_k}(I_k)\vert = \vert \Pi'_{\ell_k}(I_k)\vert=1\bigg\vert \vert \Pi_{\overline{\ell_k}}(I_k)\vert=0,\mathcal{A},X^{t,\oplus \ell_1\ldots\ell_{k-1}}\right)\\
        &= \Pr\left( \vert \Pi'_{\ell_k}(I_k)\vert=1\bigg\vert \vert \Pi_{\ell_k}(I_k)\vert=1,\vert \Pi_{\overline{\ell_k}}(I_k)\vert=0,\mathcal{A},X^{t,\oplus \ell_1\ldots\ell_{k-1}}\right)\\
        &\cdot \Pr\left( \vert \Pi_{\ell_k}(I_k)\vert=1\bigg\vert \vert \Pi_{\overline{\ell_k}}(I_k)\vert=0,\mathcal{A},X^{t,\oplus \ell_1\ldots\ell_{k-1}}\right)\\
        &=\mathsf{P}_{\ell_k}(X^{t,\oplus \ell_1\ldots \ell_{k-1}},X^{t,\oplus \ell_1\ldots \ell_{k}})\cdot (1-\exp(-\varepsilon)+O(\varepsilon^2))\\
        &=\varepsilon\left(\mathsf{P}_{\ell_k}(X^{t,\oplus \ell_1\ldots \ell_{k-1}},X^{t,\oplus \ell_1\ldots \ell_{k}})+O(\varepsilon)\right).
    \end{align*}
    In the last step, we use the fact that given there is exactly one update of site $\ell_k$ and no updates by neighbors or $\overline{\ell}_k$, the probability of a flip is precisely given by the transition kernel. We also use \Cref{lem:update_bounds} to write the probability of there being exactly one update by $\ell_k$, which is independent of the conditioning. Combining the previously display with \Cref{claim:claim_1} and \Cref{eq:event_prod}, we obtain
    \begin{equation}
    \label{eq:eps_sequence}
        \Pr(\mathcal{E}^t_{\bm{\ell}}\vert X^t,\mathcal{A}) = \varepsilon^m \prod_{k=1}^m \left(\mathsf{P}_{\ell_k}(X^{t,\oplus \ell_1\ldots \ell_{k-1}},X^{t,\oplus \ell_1\ldots \ell_{k}})+O(\varepsilon)\right).
    \end{equation}
    Since we have assumed that $\varepsilon<1/Cm$ for a sufficiently large constant, it follows that each term in the product is at most $(1+1/m)$ as transitions are at most $1$. Since $(1+1/m)^m\leq e$, applying \Cref{lem:seq_diff} yields
    \begin{equation}
        \prod_{k=1}^m \left(\mathsf{P}_{\ell_k}(X^{t,\oplus \ell_1\ldots \ell_{k-1}},X^{t,\oplus \ell_1\ldots \ell_{k}})+O(\varepsilon)\right) = \prod_{k=1}^m \mathsf{P}_{\ell_k}(X^{t,\oplus \ell_1\ldots \ell_{k-1}},X^{t,\oplus \ell_1\ldots \ell_{k}}) + Cm\varepsilon,
    \end{equation}
    Combining \Cref{eq:decomp}, \Cref{eq:update_error}, and \Cref{eq:eps_sequence} with the previous display proves the claim.
\end{proof}

We now return to the proof of \Cref{claim:claim_1}, which follows essentially identical reasoning to \Cref{prop:event_probs} to argue about the most likely update sequences on short intervals.
\begin{proof}[Proof of \Cref{claim:claim_1}]
    First, we rewrite

    \begin{align*}
        \Pr&\left(\vert \Pi'_{\ell_k}(I_k)\vert=1, \vert \Pi'_{\overline{\ell_k}}(I_k)\vert=0\bigg\vert \mathcal{A},X^{t,\oplus \ell_1\ldots\ell_{k-1}}\right)=\Pr\left(\vert \Pi'_{\ell_k}(I_k)\vert=1, \vert \Pi_{\overline{\ell_k}}(I_k)\vert=0\bigg\vert \mathcal{A},X^{t,\oplus \ell_1\ldots\ell_{k-1}}\right)\\
        &+\Pr\left(\vert \Pi'_{\ell_k}(I_k)\vert=1, \vert \Pi'_{\overline{\ell_k}}(I_k)\vert=0,\vert \Pi_{\overline{\ell_k}}(I_k)\vert\geq 1 \bigg\vert \mathcal{A},X^{t,\oplus \ell_1\ldots\ell_{k-1}}\right).
    \end{align*}
    The same logic as before implies that the latter term has probability $O(\varepsilon^2)$, as it is implied by the event that both $i$ and $j$ attempt to update in the interval $I_k$ which can be bounded similarly as before by \Cref{lem:update_bounds} using the independence of site updates. Similarly,
    \begin{align*}
        \Pr\left(\vert \Pi'_{\ell_k}(I_k)\vert=1, \vert \Pi_{\overline{\ell_k}}(I_k)\vert=0\bigg\vert \mathcal{A},X^{t,\oplus \ell_1\ldots\ell_{k-1}}\right)&=\Pr\left(\vert \Pi'_{\ell_k}(I_k)\vert=1\bigg\vert \vert \Pi_{\overline{\ell_k}}(I_k)\vert=0,\mathcal{A},X^{t,\oplus \ell_1\ldots\ell_{k-1}}\right)\\
        &\cdot \Pr\left(\vert \Pi_{\overline{\ell_k}}(I_k)\vert=0\bigg\vert\mathcal{A},X^{t,\oplus \ell_1\ldots\ell_{k-1}}\right)\\
        &=\Pr\left(\vert \Pi'_{\ell_k}(I_k)\vert=1\bigg\vert \vert \Pi_{\overline{\ell_k}}(I_k)\vert=0,\mathcal{A},X^{t,\oplus \ell_1\ldots\ell_{k-1}}\right)(1-O(\varepsilon))\\
        &=\Pr\left(\vert \Pi'_{\ell_k}(I_k)\vert=1\bigg\vert \vert \Pi_{\overline{\ell_k}}(I_k)\vert=0,\mathcal{A},X^{t,\oplus \ell_1\ldots\ell_{k-1}}\right)+O(\varepsilon^2),
    \end{align*}
    using again the fact that the probability that there are any update events for a given site is bounded by $\varepsilon$ and independence across site with similar logic as before. Finally, similar logic implies that
    \begin{align*}
        \Pr\left(\vert \Pi'_{\ell_k}(I_k)\vert=1\bigg\vert \vert \Pi_{\overline{\ell_k}}(I_k)\vert=0,\mathcal{A},X^{t,\oplus \ell_1\ldots\ell_{k-1}}\right)&=\Pr\left(\vert \Pi_{\ell_k}(I_k)\vert=\vert \Pi'_{\ell_k}(I_k)\vert=1\bigg\vert \vert \Pi_{\overline{\ell_k}}(I_k)\vert=0,\mathcal{A},X^{t,\oplus \ell_1\ldots\ell_{k-1}}\right)\\
        &+\Pr\left(\vert \Pi_{\ell_k}(I_k)\vert>1,\vert \Pi'_{\ell_k}(I_k)\vert=1\bigg\vert \vert \Pi_{\overline{\ell_k}}(I_k)\vert=0,\mathcal{A},X^{t,\oplus \ell_1\ldots\ell_{k-1}}\right)\\
        &\leq \Pr\left(\vert \Pi_{\ell_k}(I_k)\vert=\vert \Pi'_{\ell_k}(I_k)\vert=1\bigg\vert \vert \Pi_{\overline{\ell_k}}(I_k)\vert=0,\mathcal{A},X^{t,\oplus \ell_1\ldots\ell_{k-1}}\right)\\
        &+\Pr\left(\vert \Pi_{\ell_k}(I_k)\vert>1\bigg\vert \vert \Pi_{\overline{\ell_k}}(I_k)\vert=0,\mathcal{A},X^{t,\oplus \ell_1\ldots\ell_{k-1}}\right).
    \end{align*}
    The latter term is again $O(\varepsilon^2)$ by identical reasoning via \Cref{lem:update_bounds}. Collecting these inequalities yields \Cref{claim:claim_1}.
\end{proof}

\subsection{Distinguishing Cycle Statistics}
\label{sec:cycles}

We can now give our main result that gives a \emph{nonnegative} lower bound on the difference in probabilities of suitably defined flip sequences, which we will later argue must be \emph{strictly} positive so long as certain local fields are nondegenerate. 
Fix the \emph{ordered} pair $(i,j)$ where $i\neq j$, a time $t\geq 0$, and define:
\begin{equation}
\label{eq:cycle_def}
    Z^{i,j}_t := \mathbf{1}\{\mathcal{E}^t_{iijjiijj}\}-2\cdot \mathbf{1}\{\mathcal{E}_{iijjjiij}\}+\mathbf{1}\{\mathcal{E}_{jiijjiij}\}.
\end{equation}
As discussed in \Cref{sec:overview}, this can be viewed as the ``square'' of the difference between the cycles $iijj$ and $ijji$, which we should thus expect to be nonnegative.

\begin{proposition}
\label{prop:main_stat}
    There is an absolute constant $c_{\ref{prop:main_stat}}>0$ such that the following holds under \Cref{assumption:ising} and \Cref{assumption:mc}. For any time $t>0$, and $\varepsilon<c_{\ref{prop:main_stat}}$,
    \begin{equation*}
        \mathbb{E}\left[Z^{i,j}_t\vert \mathcal{F}_t\right] = g_j^2\left(\frac{\pi(X^{t,j\mapsto +1})}{\pi(X^{t,j\mapsto -1})}\right)\varepsilon^8\left(g_i\left(\frac{\pi(X^{t,i\mapsto +1})}{\pi(X^{t,i\mapsto -1})}\right)-g_i\left(\frac{\pi(X^{t,\oplus j,i\mapsto +1})}{\pi(X^{t,\oplus j,i\mapsto -1})}\right)\right)^2+ O( d\varepsilon^9).
    \end{equation*}
\end{proposition}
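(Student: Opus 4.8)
The plan is to expand $\mathbb{E}[Z^{i,j}_t\mid\mathcal{F}_t]$ by linearity of conditional expectation into the three flip-sequence probabilities appearing in \Cref{eq:cycle_def}, with coefficients $1,-2,1$, and to evaluate each via \Cref{prop:event_probs}. All three words $iijjiijj$, $iijjjiij$, $jiijjiij$ have length $m=8$, so choosing the absolute constant $c_{\ref{prop:main_stat}}\le 1/(8C_{\ref{prop:event_probs}})$ guarantees the hypothesis $\varepsilon<1/(C_{\ref{prop:event_probs}}m)$ of \Cref{prop:event_probs} is met. Since $X^t$ is $\mathcal{F}_t$-measurable, \Cref{prop:event_probs} gives, for each such word $\bm{\ell}$,
\begin{equation*}
\Pr(\mathcal{E}^t_{\bm{\ell}}\mid\mathcal{F}_t)=\varepsilon^8\prod_{k=1}^{8}\mathsf{P}_{\ell_k}\big(X^{t,\oplus\ell_1\cdots\ell_{k-1}},X^{t,\oplus\ell_1\cdots\ell_k}\big)\pm 8C_{\ref{prop:event_probs}}d\varepsilon^9,
\end{equation*}
and summing the three contributions the additive errors total at most $(1+2+1)\cdot 8C_{\ref{prop:event_probs}}d\varepsilon^9=O(d\varepsilon^9)$. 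It thus remains to compute the three products of transition kernels \emph{exactly}.

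The second and main step is a bookkeeping computation. Write $\bm{x}=X^t$. Reading each word left to right traces a closed walk on the hypercube, and one checks directly that in each of the three walks every hypercube edge that is used is used in \emph{matched forward/backward pairs}: for $iijjiijj$ only the edges $\{\bm{x},\bm{x}^{\oplus i}\}$ and $\{\bm{x},\bm{x}^{\oplus j}\}$ occur, each traversed twice in each direction; for $iijjjiij$ the edge $\{\bm{x},\bm{x}^{\oplus i}\}$ occurs once each way, $\{\bm{x},\bm{x}^{\oplus j}\}$ twice each way, and $\{\bm{x}^{\oplus j},\bm{x}^{\oplus\{i,j\}}\}$ once each way; for $jiijjiij$ the edges $\{\bm{x},\bm{x}^{\oplus j}\}$ and $\{\bm{x}^{\oplus j},\bm{x}^{\oplus\{i,j\}}\}$ each occur twice each way. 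Every matched forward/backward pair along an $i$-edge $\{\bm{y}^{i\mapsto-1},\bm{y}^{i\mapsto+1}\}$ contributes a factor $\mathsf{P}_i(\bm{y}^{i\mapsto-1},\bm{y}^{i\mapsto+1})\mathsf{P}_i(\bm{y}^{i\mapsto+1},\bm{y}^{i\mapsto-1})$, which by \Cref{cor:site} equals $g_i\!\big(\pi(\bm{y}^{i\mapsto+1})/\pi(\bm{y}^{i\mapsto-1})\big)$ — a quantity depending only on the undirected edge, not on which endpoint is called $\bm{x}$; the analogue holds for $j$-edges. Setting $a:=g_i\!\big(\pi(\bm{x}^{i\mapsto+1})/\pi(\bm{x}^{i\mapsto-1})\big)$, $a':=g_i\!\big(\pi(\bm{x}^{\oplus j,i\mapsto+1})/\pi(\bm{x}^{\oplus j,i\mapsto-1})\big)$, and $b:=g_j\!\big(\pi(\bm{x}^{j\mapsto+1})/\pi(\bm{x}^{j\mapsto-1})\big)$, the three products are respectively $a^2b^2$, $a a' b^2$, and $(a')^2b^2$.

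Combining the two steps,
\begin{equation*}
\mathbb{E}[Z^{i,j}_t\mid\mathcal{F}_t]=\varepsilon^8\big(a^2b^2-2aa'b^2+(a')^2b^2\big)+O(d\varepsilon^9)=\varepsilon^8 b^2(a-a')^2+O(d\varepsilon^9),
\end{equation*}
which is exactly the claimed identity after unfolding the definitions of $a,a',b$; this realizes the ``squaring'' heuristic, since the right-hand side is the square of the degree-$4$ cycle difference $b(a-a')$.

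\textbf{Expected difficulty.} There is no real obstacle here: the content is entirely in (i) checking that each of the three length-$8$ flip words decomposes into matched forward/backward edge traversals so that \Cref{cor:site} applies edge by edge, and (ii) tracking the higher-order error through the linear combination. The one subtlety worth stating explicitly is that the $g_i$-factor produced by \Cref{cor:site} is a function of the undirected hypercube edge alone, which is what makes the per-edge cancellations collapse the three products into $a^2b^2$, $aa'b^2$, $(a')^2b^2$; everything else is routine.
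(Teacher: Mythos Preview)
Your proposal is correct and follows essentially the same approach as the paper: apply \Cref{prop:event_probs} to each of the three length-$8$ words, use site-consistency via \Cref{cor:site} to collapse each matched forward/backward edge traversal into a single $g_i$- or $g_j$-factor, and then combine the resulting products $a^2b^2$, $aa'b^2$, $(a')^2b^2$ with coefficients $1,-2,1$ to obtain the perfect square. Your framing in terms of matched edge pairs on the hypercube makes the bookkeeping slightly more transparent than the paper's phrasing (which instead tracks ``which value of $j$ is present when $i$ flips''), but the underlying computation is identical.
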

\begin{proof}
    We appeal to \Cref{prop:event_probs} to derive the stated result. For each event in the definition of $Z^{i,j}_t$, observe that each flip of $j$ occurs precisely when the value of $i$ is set to the initial configuration. Therefore, each product corresponding to a $j$ transition in the conclusion of \Cref{prop:event_probs} occurs at the initial configuration, and there are precisely two flips in each direction. By \Cref{def:site_consistent}, these four factors thus contribute exactly 
    \begin{equation*}
        g^2_j\left(\frac{\pi(X^{t,j\mapsto +1})}{\pi(X^{t,j\mapsto -1})}\right).
    \end{equation*}

    We now consider what happens for the $i$ flips for each event. In the first event $\mathcal{E}^t_{iijjiijj}$, all $i$ events occur when $j$ is set to be the initial configuration, and there are again precisely two flips in each direction. The factors thus become
    \begin{equation*}
        g^2_i\left(\frac{\pi(X^{t,i\mapsto +1})}{\pi(X^{t,i\mapsto -1})}\right).
    \end{equation*}

    For the middle event $\mathcal{E}_{iijjjiij}$, there are two flips of $i$ from the initial configuration, and two flips of $i$ when $j$ is reversed. Therefore, the product of the transitions becomes
    \begin{equation*}
        g_i\left(\frac{\pi(X^{t,i\mapsto +1})}{\pi(X^{t,i\mapsto -1})}\right)g_i\left(\frac{\pi(X^{t,\oplus j,i\mapsto +1})}{\pi(X^{t,\oplus j,i\mapsto -1})}\right).
    \end{equation*}

    Analogous reasoning for the event $\mathcal{E}_{jiijjiij}$ gives that there are two flips in each direction for $i$, all occurring when $j$ is flipped from the initial configuration. \Cref{def:site_consistent} again implies that the product of transitions becomes
    \begin{equation}
        g_i^2\left(\frac{\pi(X^{t,\oplus j,i\mapsto +1})}{\pi(X^{t,\oplus j,i\mapsto -1})}\right).
    \end{equation}

    Therefore, applying \Cref{prop:event_probs},linearity of expectation, and factoring the square yields that
    \begin{align*}
        \mathbb{E}\left[Z^{i,j}_t\vert \mathcal{F}_t\right] = g_j^2\left(\frac{\pi(X^{t,j\mapsto +1})}{\pi(X^{t,j\mapsto -1})}\right)\varepsilon^8\left(g_i\left(\frac{\pi(X^{t,i\mapsto +1})}{\pi(X^{t,i\mapsto -1})}\right)-g_i\left(\frac{\pi(X^{t,\oplus j,i\mapsto +1})}{\pi(X^{t,\oplus j,i\mapsto -1})}\right)\right)^2+ O( d\varepsilon^9),
    \end{align*}
    as claimed since $m=O(1)$.
\end{proof}

\begin{corollary}
\label{cor:non_neighbors}
    There is an absolute constant $c_{\ref{prop:main_stat}}$ such that the following holds under \Cref{assumption:ising} and \Cref{assumption:mc}. If $i\not\sim j$, then for any time $t>0$ and $\varepsilon<c_{\ref{prop:main_stat}}$,
    \begin{equation*}
        \mathbb{E}\left[Z^{i,j}_t\vert \mathcal{F}_t\right] =O(d\varepsilon^{9}).
    \end{equation*}
\end{corollary}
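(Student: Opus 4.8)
The plan is to invoke \Cref{prop:main_stat} directly: it already expresses $\mathbb{E}\left[Z^{i,j}_t\vert \mathcal{F}_t\right]$ as an explicit ``squared'' quantity times $\varepsilon^8$ plus a higher-order error $O(d\varepsilon^9)$, so it suffices to show that the squared quantity vanishes identically when $i\not\sim j$.

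The one substantive point is the following identity. Writing out the Ising potential, the terms of $\tfrac12\bm{x}^TA\bm{x}+\bm{h}^T\bm{x}$ that involve the coordinate $x_i$ are exactly $x_i\big(\sum_{k\neq i}A_{ik}x_k+h_i\big)$, so
\begin{equation*}
    \frac{\pi(\bm{x}^{i\mapsto +1})}{\pi(\bm{x}^{i\mapsto -1})}=\exp\Big(2\sum_{k\neq i}A_{ik}x_k+2h_i\Big).
\end{equation*}
When $i\not\sim j$ we have $A_{ij}=0$, so this ratio is unchanged if we additionally flip the $j$-coordinate; that is,
\begin{equation*}
    \frac{\pi(X^{t,i\mapsto +1})}{\pi(X^{t,i\mapsto -1})}=\frac{\pi(X^{t,\oplus j,\,i\mapsto +1})}{\pi(X^{t,\oplus j,\,i\mapsto -1})}.
\end{equation*}
Consequently the two arguments of $g_i$ appearing in the squared term of \Cref{prop:main_stat} coincide, and that term is exactly zero.

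Plugging this into \Cref{prop:main_stat} leaves only the error term, giving $\mathbb{E}\left[Z^{i,j}_t\vert\mathcal{F}_t\right]=O(d\varepsilon^9)$ for $\varepsilon<c_{\ref{prop:main_stat}}$, as claimed. There is no real obstacle here: all the work has been front-loaded into \Cref{prop:event_probs} and \Cref{prop:main_stat}. One could equivalently argue via site-consistency (\Cref{def:site_consistent}): since the transition rates at $i$ are a fixed function of the above energy ratio, they are literally unaffected by the value of site $j$ when $A_{ij}=0$, so the flip patterns $iijjiijj$ and $jiijjiij$ -- which differ only in the value of $j$ during the $i$-flips -- have identical leading-order probabilities, and the same cancellation occurs directly at the level of \Cref{prop:event_probs}.
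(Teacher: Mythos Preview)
Your proof is correct and follows essentially the same route as the paper: invoke \Cref{prop:main_stat}, compute the Ising ratio $\pi(X^{t,i\mapsto +1})/\pi(X^{t,i\mapsto -1})=\exp\big(2\sum_{k\neq i}A_{ik}X^t_k+2h_i\big)$, observe that $A_{ij}=0$ makes this invariant under flipping $j$, so the squared term vanishes and only the $O(d\varepsilon^9)$ error survives. Your write-up is in fact slightly more careful than the paper's (which omits the $h_i$ term in its display), and the closing remark about arguing directly via site-consistency at the level of \Cref{prop:event_probs} is a valid alternative.
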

\begin{proof}
    Since $i\not\sim j$ by assumption, it holds that
    \begin{equation*}
        \frac{\pi(X^{t,i\mapsto +1})}{\pi(X^{t,i\mapsto -1})}=\exp\left(2\sum_{k\neq i}A_{i,j}X^t_k\right)=\frac{\pi(X^{t,\oplus j,i\mapsto +1})}{\pi(X^{t,\oplus j,i\mapsto -1})},
    \end{equation*}
    since $j$ does not appear in the sum. Applying \Cref{prop:main_stat} completes the proof.
\end{proof}

We can now argue that whenever $i\sim j$ and there exists a distinct $k$ such that $i\sim k$ as well, then with some nonnegligible probability, the statistic will be strictly positive. As described on \Cref{sec:overview}, this follows from an anticoncentration argument showing that it is not possible for this statistic to conspire all the time to be small:

\begin{corollary}
\label{cor:bulk_neighbors}
    There is an absolute constant $c_{\ref{cor:bulk_neighbors}}>0$ such that the following holds under \Cref{assumption:ising} and \Cref{assumption:ising}. Suppose that $i\sim j,k$ where $j\neq k$. Let $0<\delta\leq \delta_0$ be such that
    \begin{equation*}
        \eta(\delta)\leq \frac{c_{\ref{cor:bulk_neighbors}}\exp(-O(\lambda))}{\alpha}.
    \end{equation*}
    Then for any times $t,t'\geq 0$ such that $t'\leq t-1$, if $\varepsilon<c_{\ref{cor:bulk_neighbors}}\kappa^8\delta^2\exp(-O(\gamma\lambda))/d$, then 
    \begin{equation}
        \mathbb{E}[Z^{i,j}_t\vert \mathcal{F}_{t'}]\geq c_{\ref{cor:bulk_neighbors}}\varepsilon^8\kappa^8\delta^2\exp(-O(\gamma\lambda)).
    \end{equation}
\end{corollary}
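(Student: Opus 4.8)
The plan is to combine the exact formula from Proposition \ref{prop:main_stat} with the anticoncentration result of Corollary \ref{cor:anti} and the stability condition of Definition \ref{def:stable}. Fix the conditioning $\mathcal{F}_{t'}$ with $t' \le t-1$; by the tower property, $\mathbb{E}[Z^{i,j}_t \mid \mathcal{F}_{t'}] = \mathbb{E}\big[\mathbb{E}[Z^{i,j}_t \mid \mathcal{F}_t] \mid \mathcal{F}_{t'}\big]$, so it suffices to lower bound the inner conditional expectation on a set of $X^t$ of non-negligible probability (given $\mathcal{F}_{t'}$), and control the contribution of the error term $O(d\varepsilon^9)$ separately. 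First I would apply Proposition \ref{prop:main_stat}: since $j \sim i$, each $j$-transition factor $g_j\big(\pi(X^{t,j\mapsto+1})/\pi(X^{t,j\mapsto-1})\big)$ is bounded below by $\kappa^2$ using the minimum-transition-probability assumption in \Cref{assumption:mc} (two flips in each direction), so the prefactor $g_j^2(\cdot)$ is at least $\kappa^4$. Hence, whenever the squared difference of $g_i$-values is at least $\delta^2$, the main term is at least $\varepsilon^8 \kappa^4 \delta^2$, and choosing $\varepsilon < c\,\kappa^8\delta^2\exp(-O(\gamma\lambda))/d$ makes the $O(d\varepsilon^9)$ error at most half of this, so on that event $\mathbb{E}[Z^{i,j}_t \mid \mathcal{F}_t] \ge \tfrac12 \varepsilon^8\kappa^4\delta^2$.

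It thus remains to show that, with probability $\gtrsim \kappa^4\exp(-O(\gamma\lambda))$ given $\mathcal{F}_{t'}$, we have $\big|g_i(z) - g_i(e^{4|A_{ij}|}z)\big| > \delta$, where I write $z = \exp\big(2\sum_{k\ne i}A_{ik}X^t_k - 2|A_{ij}|\big)$ (so that $\pi(X^{t,i\mapsto+1})/\pi(X^{t,i\mapsto-1}) = e^{4|A_{ij}|}z$ or $z$ depending on the sign of $x_j$; in either case the pair of ratios is $\{z, e^{4|A_{ij}|}z\}$ up to swapping). I would argue by contrapositive via the stability condition: if $|g_i(z) - g_i(e^{\alpha}z)| \le \delta$ with $\alpha = 4|A_{ij}| \ge 4\alpha_0$, then Definition \ref{def:stable} forces $|z - z^*(\alpha)| \le \eta(\delta)$, where $z^*(\alpha)$ is the unique solution of $g_i(z^*) = g_i(e^\alpha z^*)$. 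Writing $z = \exp(\ell(X^t_{-i}))$ for the linear form $\ell(\bm{x}) = 2\sum_{k \ne i} A_{ik}x_k + 2h_i - 2|A_{ij}|$ (which has the coefficient $2A_{ik}$ on $x_k$ of magnitude at least $2\alpha$ since $i \sim k$ and $k \ne i,j$), and noting $z, z^* \in [\exp(-2\lambda), \exp(2\lambda)]$ by the width bound, the constraint $|z - z^*| \le \eta(\delta)$ translates into $|\ell(X^t_{-i}) - \log z^*(\alpha)| \le \exp(2\lambda)\eta(\delta)$ (bounding $|\log'|$ on this range). Choosing $\eta(\delta) \le c\exp(-O(\lambda))/\alpha$ makes the right-hand side at most $\alpha$, i.e. the event $|g_i(z)-g_i(e^\alpha z)| \le \delta$ is contained in the event $|\ell(X^t_{-i}) - c_0| \le \alpha$ for the fixed constant $c_0 = \log z^*(\alpha)$.

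Now I invoke Corollary \ref{cor:anti}: since $\ell$ has a coefficient of magnitude at least $\alpha$ (on coordinate $k$, or indeed we set up the linear form so coordinate $k$ plays the role of the distinguished coordinate "$i$" in that corollary, with $j$ playing the auxiliary coordinate), and since $t \ge t'+1$, the probability that $|\ell(X^t) - c_0| \ge \alpha$ is at least $c_{\ref{cor:anti}}\exp(-O(\gamma\lambda))\kappa^4$, conditioned on $\mathcal{F}_{t'}$ and hence on any law of $X^{t'}$. (A minor point: Corollary \ref{cor:anti} is stated for linear forms in all coordinates, and we need the distinguished large coefficient not to be on $i$ itself; but $\ell$ as written does not involve $x_i$, and the coefficient on $x_k$ has magnitude $\ge 2\alpha$, so we apply the corollary with $k$ as the distinguished index.) Combining: with probability at least $c_{\ref{cor:anti}}\exp(-O(\gamma\lambda))\kappa^4$ over $X^t$ given $\mathcal{F}_{t'}$, the squared $g_i$-difference exceeds $\delta^2$, whence $\mathbb{E}[Z^{i,j}_t \mid \mathcal{F}_t] \ge \tfrac12\varepsilon^8\kappa^4\delta^2$; on the complementary event $Z^{i,j}_t$ has conditional expectation at least $-O(d\varepsilon^9) \ge -\tfrac12\varepsilon^8\kappa^4\delta^2$ by the same $\varepsilon$ choice (note $Z^{i,j}_t$ can be negative, but its conditional expectation given $\mathcal{F}_t$ is controlled by Proposition \ref{prop:main_stat} and is bounded below by $-O(d\varepsilon^9)$ always since the main term is a nonneg square). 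Taking the outer expectation over $\mathcal{F}_{t'}$ and absorbing constants yields $\mathbb{E}[Z^{i,j}_t \mid \mathcal{F}_{t'}] \ge c_{\ref{cor:bulk_neighbors}}\varepsilon^8\kappa^8\delta^2\exp(-O(\gamma\lambda))$, as desired (the extra $\kappa^4$ is the anticoncentration probability, giving $\kappa^8$ total).

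The main obstacle I anticipate is the bookkeeping in the third step: carefully checking that the pair of energy ratios appearing in Proposition \ref{prop:main_stat} is exactly $\{z, e^{4|A_{ij}|}z\}$ for the right $z$, that the hypotheses of the stability definition (namely $z \in [\exp(-2\lambda), \exp(2\lambda)]$ and $\delta \le \delta_0$) are genuinely met here, and — most delicately — that Corollary \ref{cor:anti} can be applied with the distinguished large coordinate being $k$ rather than $i$, and that $k \ne i, j$ so that flipping $j$ (as the auxiliary coordinate in that corollary) does not interfere with the coefficient on $k$. Everything else is routine: bounding $g_j \ge \kappa^2$, the $\log$-Lipschitz estimate on $[\exp(-2\lambda),\exp(2\lambda)]$, and choosing $\varepsilon$ small enough to kill the higher-order error.
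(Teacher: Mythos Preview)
Your proposal is correct and follows essentially the same approach as the paper's proof: tower the expectation through $\mathcal{F}_t$, lower bound the $g_j^2$ prefactor by $\kappa^4$, use stability (Definition~\ref{def:stable}) to show that $|g_i(z)-g_i(e^{4|A_{ij}|}z)|\le\delta$ forces $z$ to lie in a short interval around $z^*(4|A_{ij}|)$, translate this to a linear constraint on $X^t$, and invoke Corollary~\ref{cor:anti} (with the distinguished large coefficient on $k$) to show that constraint fails with probability $\gtrsim \kappa^4\exp(-O(\gamma\lambda))$. The only cosmetic difference is that the paper passes from $|z-z^*|\le\eta(\delta)$ to a constraint on the linear form by directly bounding the length of the log-interval $[\ln(z^*-\eta^*),\ln(z^*+\eta^*)]$, whereas you use a Lipschitz bound on $\log$; both require first arguing $z^*\gtrsim\exp(-O(\lambda))$, which the paper does explicitly and you assert (this is the bookkeeping you correctly flag as the delicate point).
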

\begin{proof}
    First, let $\ell(\bm{x})=\sum_{\ell\neq i,j} A_{i\ell}x_k$. By our assumption, this sum is nontrivial since $i\sim k$, and the corresponding coefficient satistfies $\vert A_{ik}\vert\geq \alpha$ using \Cref{assumption:ising}. We may then apply \Cref{cor:anti} using the Markov property to deduce that for any fixed choice of $a\in \mathbb{R}$ to be chosen shortly,
    \begin{equation*}
        \Pr_{X^t}\left(\left\vert \sum_{\ell\neq i,j} A_{i\ell}x_k-a\right\vert\geq \alpha \bigg\vert \mathcal{F}_{t'}\right)\geq c_{\ref{cor:anti}}\kappa^4\exp(-O(\gamma\lambda)).
    \end{equation*}
    We will let $\mathcal{E}_a$ denote this event.

    We will now compute the conditional expectation of $Z_t^{i,j}$ given any $\mathcal{F}_{t'}$. We have
    \begin{equation}
    \label{eq:decomp_good}
        \mathbb{E}[Z_t^{i,j}\vert \mathcal{F}_{t'}]=\mathbb{E}[Z_t^{i,j}\vert \mathcal{E}_a,\mathcal{F}_{t'}]\Pr\left(\mathcal{E}_a\vert \mathcal{F}_{t'}\right)+\mathbb{E}[Z_t^{i,j}\vert \mathcal{E}^c_a,\mathcal{F}_{t'}]\Pr\left(\mathcal{E}^c_a\vert \mathcal{F}_{t'}\right).
    \end{equation}
    By the choice of $\varepsilon$ and noting that the main term of \Cref{prop:main_stat} is nonnegative for any conditioning at time $t$, the second term can be lower bounded by at most $-O(d\varepsilon^9)$. We now show that for a suitable choice of $a\in \mathbb{R}$, the first term is noticeably positive of order $\varepsilon^8$, which in particular can  be made the dominant term under our choice of $\varepsilon>0$.

    To do so, suppose that for this choice of $\delta>0$, it holds that
    \begin{equation*}
        \left\vert g_i\left(\frac{\pi(X^{t,i\mapsto +1})}{\pi(X^{t,i\mapsto -1})}\right)-g_i\left(\frac{\pi(X^{t,\oplus j, i\mapsto +1})}{\pi(X^{t,\oplus j,i\mapsto -1})}\right)\right\vert\leq \delta.
    \end{equation*}
    In that case, if we define
\begin{equation*}
    z = \exp\left(2\sum_{k\neq i} A_{ik}X^t_k + 2h_i -2\vert A_{ij}\vert \right),
\end{equation*}
then we directly calculate the ratios using reversibility to see that this is equivalent to:
\begin{equation*}
    \left\vert g_i\left(z\right)-g_i\left(z\exp(4\vert A_{ij}\vert\right)\right\vert\leq \delta,
\end{equation*}
By applying stability as in \Cref{assumption:mc}, we may then  conclude that
    \begin{equation}
    \label{eq:lin_close}
\left\vert \exp\left(2\sum_{k\neq i} A_{ik}X^t_k+2h_i-2\vert A_{ij}\vert\right) - z^*\left(4\vert A_{ij}\vert\right)\right\vert\leq \eta(\delta)\leq \underbrace{\frac{c_{\ref{cor:bulk_neighbors}}\exp(-O(\lambda))}{\alpha}}_{\eta^*},
    \end{equation}
    where we use the assumption on $\delta$ in the second inequality. We will now claim that for a suitable choice of $a$, this does not occur on $\mathcal{E}_a$.

    To that end, we may assume first that $z^*(4\vert A_{ij}\vert)\geq \exp(-O(\lambda))$: since the first term is itself bounded below by $\exp(-2\lambda)$ under \Cref{assumption:ising}, the error bound of \Cref{eq:lin_close} would be violated if this failed. Rewriting \Cref{eq:lin_close}, this occurs only if 
    \begin{equation}
    \label{eq:approx_equality_int}
        2\sum_{k\neq i} A_{ik}X^t_k = -2h_i+2\vert A_{ij}\vert +\xi,
    \end{equation}
    where 
    \begin{equation*}
        \xi \in \left[\ln\left(z^*-\eta^*\right),\ln (z^*+\eta^*)\right]:=I.
    \end{equation*}
    The length of this interval is bounded by
    \begin{align*}
        \ln (z^*+\eta^*)-\ln (z^*-\eta^*)&=\ln\left(\frac{2\eta^*}{z^*-\eta^*}\right)\\
        &\leq c'\alpha,
    \end{align*}
    for some constant $c'>0$ that can be taken to zero with $c_{\ref{cor:bulk_neighbors}}>0$ (using our assumed lower bound on $z^*$), we conclude that if $c_{\ref{cor:bulk_neighbors}}>0$ is small enough, then this interval has length at most $\alpha$. Therefore if we define
    \begin{equation*}
        a = -2h_i+2\vert A_{ij}\vert,
    \end{equation*}
    the deviation event $\mathcal{E}_a$ by at least $2\alpha$ (after scaling) implies that
    \begin{equation*}
        2\sum_{k\neq i} A_{ik}X^t_k + 2h_i-2\vert A_{ij}\vert\not\in I,
    \end{equation*}
    contradicting \Cref{eq:approx_equality_int}. We can therefore conclude that on $\mathcal{E}_{a}$,
    \begin{equation}
        \left\vert g_i\left(\frac{\pi(X^{t,i\mapsto +1})}{\pi(X^{t,i\mapsto -1})}\right)-g_i\left(\frac{\pi(X^{t,\oplus j, i\mapsto +1})}{\pi(X^{t,\oplus j,i\mapsto -1})}\right)\right\vert\geq \delta.
    \end{equation}

    Returning to \Cref{eq:decomp_good}, we may conclude that
    \begin{align*}
        \mathbb{E}[Z_t^{i,j}\vert \mathcal{F}_{t'}]&\geq \varepsilon^8\kappa^4\delta^2\cdot \Pr\left(\mathcal{E}_a\vert \mathcal{F}_{t'}\right)- O(d\varepsilon^9)\\
        &\geq c_{\ref{cor:anti}}\varepsilon^8\kappa^8\delta^2\exp(-O(\gamma\lambda)) - O(d\varepsilon^9)\\
        &\geq c'\varepsilon^8\kappa^8\delta^2\exp(-O(\gamma\lambda))
    \end{align*}
    where we apply the probability lower bound and our choice of $\varepsilon$ to ensure the error term is dominated by the main term. 
\end{proof}

Putting \Cref{cor:non_neighbors} and \Cref{cor:bulk_neighbors}, we may conclude the following: there is an absolute constant $c_{\mathsf{ALG}}>0$ such that, if we set $\delta>0$ such that

\begin{equation*}
    \eta(\delta)\leq \frac{c_{\ref{cor:bulk_neighbors}}\exp(-O(\lambda))}{\alpha},
\end{equation*}
then for any $\varepsilon>0$ satisfying
\begin{equation*}
    \varepsilon\leq c_{\mathsf{ALG}}\kappa^8\delta^2\exp(-O(\gamma\lambda)),
\end{equation*}
it will hold that if $i\sim j,k$ for $j\neq k$, then for any $t,t'$ such that $t\geq t'-1$,
\begin{equation}
\label{eq:final_good}
    \mathbb{E}[Z^{i,j}_t\vert \mathcal{F}_{t'}]\geq c_{\mathsf{ALG}}\varepsilon^8\kappa^8\delta^2\exp(-O(\gamma\lambda)),
\end{equation}
while if $i\not \sim j$, then
\begin{equation}
\label{eq:final_bad}
    \mathbb{E}[Z^{i,j}_t\vert \mathcal{F}_{t'}]\leq \frac{1}{2}c_{\mathsf{ALG}}\varepsilon^8\kappa^8\delta^2\exp(-O(\gamma\lambda)).
\end{equation}

\subsection{Identifying Dense Edges}
\label{sec:bulk}

With these results in order, we may now turn to our main algorithm that will be able to efficiently identify the dense edges of the dependency graph. Our algorithm proceeds by evaluating the degree-8 cycle statistic as defined in \Cref{eq:cycle_def} at each time $\tau_{\ell}:=2\ell$ for $\ell\in \mathbb{N}$.

\begin{algorithm}[H]
    \caption{$\widehat{E}=$ FindBulkEdges$(\alpha,d,\lambda,\kappa,\gamma,\beta)$}
    \label{alg:cycle_stats}
    \LinesNumbered

    Let $\alpha, d,\lambda,\kappa,\gamma$ be as in \Cref{assumption:ising} and \Cref{assumption:mc}.
    
    Set
    \begin{gather*}
        \delta = \min\left\{\eta^{-1}\left(\frac{c_{\ref{cor:bulk_neighbors}}\exp(-O(\lambda))}{\alpha}\right),\delta_0\right\}\\
        \varepsilon = \frac{c_{\mathsf{ALG}}\kappa^8\delta^2\exp(-O(\gamma\lambda))}{d}\\
        T = 2\cdot \left\lceil\frac{2000\exp(O(\lambda\gamma))\log(n/\beta)}{c_{\mathsf{ALG}}^2\varepsilon^{16}\kappa^{16}\delta^4} \right\rceil
    \end{gather*}
   
    Observe random process $(X_t)_{t=0}^T$ and $\Pi'_k(T)$ for all $k\in [n]$.

    \For{each ordered pair $(i,j)\in [n]^2$}
        {Add $(i,j)$ to $\widehat{E}$ if 
        \begin{equation*}
            \frac{1}{(T/2)}\sum_{\ell=1}^{T/2} Z^{i,j}_{2t}\geq \frac{3}{4}c_{\mathsf{ALG}}\varepsilon^8\kappa^8\delta^2.
        \end{equation*}}
\end{algorithm}

While we have stated this result in an abstract form depending on the parameters of \Cref{assumption:ising} and \Cref{assumption:mc}, note that if $\eta(a)\geq a^{\Omega(1)}$, then the runtime of this algorithm is 
\begin{equation*}
    O(Tn^2) = \mathsf{poly}\left(\exp(\lambda\gamma),\frac{1}{\kappa},d,\frac{1}{\alpha},\frac{1}{\delta_0}\right)\cdot n^2\log(n/\beta).
\end{equation*}
As a consequence of \Cref{prop:gd_assumption} and \Cref{prop:metro_assumption}, this is indeed the case for both the Glauber dynamics and the site-consistent Metropolis chain, and the bounds reduce to 
\begin{equation*}
    \mathsf{poly}\left(\exp(\lambda),d,\frac{1}{\alpha}\right)\cdot n^2\log(n/\beta)
\end{equation*}
and 
\begin{equation*}
    \mathsf{poly}\left(\exp(\lambda),\frac{1}{r_+r_-},d,\frac{1}{\alpha}\right)\cdot n^2\log(n/\beta),
\end{equation*}
respectively.

The algorithm has the following guarantees:
\begin{theorem}
\label{thm:alg_main}
    Under \Cref{assumption:ising} and \Cref{assumption:mc}, with probability at least $1-\beta$, the following holds for all pairs $(i,j)\in [n]^2$:
    \begin{itemize}
    \item Suppose that $(i,j)$ is a dense edge as in \Cref{def:dense_edges}. Then \Cref{alg:cycle_stats} correctly outputs $(i,j)\in \widehat{E}$.
    \item Suppose that $i\not\sim j$. \Cref{alg:cycle_stats} correctly does not output $(i,j)\in \widehat{E}$.
    \end{itemize}
In particular, $\widehat{E}\subseteq E$, and moreover, the set of edges in $E\setminus \widehat{E}$ must form a (not necessarily perfect) matching among the set $\mathcal{O}$ of isolated sites in $\widehat{E}$ i.e. $\mathcal{O}=\{i\in[n]:\mathsf{deg}_{\widehat{E}}(i)=0\}$.
\end{theorem}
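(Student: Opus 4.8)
The plan is to treat, for each ordered pair $(i,j)$, the empirical average $\widehat Z^{i,j}:=\tfrac{1}{T/2}\sum_{\ell=1}^{T/2}Z^{i,j}_{\tau_\ell}$ computed in \Cref{alg:cycle_stats} (with $\tau_\ell=2\ell$ and $m:=T/2$) as the average of a martingale-difference sequence plus a pathwise-controlled conditional mean, apply an Azuma--Hoeffding bound, take a union bound over the $O(n^2)$ pairs, and finally read off the combinatorial structure from \Cref{fact:dense_edge_structure}. For the martingale setup, fix $(i,j)$ and let $\mathcal{G}_\ell:=\mathcal{F}_{\tau_{\ell+1}-1}=\mathcal{F}_{2\ell+1}$. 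Since $\varepsilon$ is a sufficiently small constant we have $8\varepsilon<1$, and $Z^{i,j}_{\tau_\ell}$ depends only on the trajectory in the window $[\tau_\ell,\tau_\ell+8\varepsilon]$; hence $Z^{i,j}_{\tau_\ell}$ is $\mathcal{G}_\ell$-measurable, while every earlier sample $Z^{i,j}_{\tau_{\ell'}}$, $\ell'<\ell$, is $\mathcal{G}_{\ell-1}$-measurable because $\tau_{\ell'}+8\varepsilon\le 2(\ell-1)+8\varepsilon<2\ell-1$. The key point is that $\mathcal{G}_{\ell-1}=\mathcal{F}_{\tau_\ell-1}$, so the conditional-expectation estimates \Cref{eq:final_good} and \Cref{eq:final_bad}---which hold given any $\mathcal{F}_{t'}$ with $t'\le t-1$, the one-unit gap being exactly what the dynamical anticoncentration \Cref{cor:anti} needs---apply with $t=\tau_\ell$ and $t'=\tau_\ell-1=t-1$. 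Thus $D_\ell:=Z^{i,j}_{\tau_\ell}-\mathbb{E}[Z^{i,j}_{\tau_\ell}\mid\mathcal{G}_{\ell-1}]$ is a martingale-difference sequence for $(\mathcal{G}_\ell)_\ell$, and since the three events in \Cref{eq:cycle_def} are pairwise disjoint, $Z^{i,j}_t\in\{-2,0,1\}$ and so $|D_\ell|\le 4$.

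Next, Azuma's inequality gives $\Pr(|\sum_{\ell=1}^m D_\ell|>ms)\le 2\exp(-ms^2/32)$; choosing $s$ to be a small constant multiple of $c_{\mathsf{ALG}}\varepsilon^8\kappa^8\delta^2\exp(-O(\gamma\lambda))$ below half the gap between the bounds in \Cref{eq:final_good} and \Cref{eq:final_bad}, the value of $T$ fixed in \Cref{alg:cycle_stats} is exactly calibrated so that $ms^2/32\ge\log(2n^2/\beta)$. A union bound over all $n^2$ ordered pairs then yields, with probability at least $1-\beta$, that $\big|\widehat Z^{i,j}-\tfrac1m\sum_{\ell=1}^m\mathbb{E}[Z^{i,j}_{\tau_\ell}\mid\mathcal{G}_{\ell-1}]\big|\le s$ simultaneously for all pairs. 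On this event the two bullets follow: if $i\sim j$ and $i\sim k$ for some $k\neq j$, then \Cref{cor:bulk_neighbors} (equivalently \Cref{eq:final_good}) makes every summand of the predictable mean at least $c_{\mathsf{ALG}}\varepsilon^8\kappa^8\delta^2\exp(-O(\gamma\lambda))$, so $\widehat Z^{i,j}$ exceeds the algorithm's threshold; if $i\not\sim j$, then \Cref{cor:non_neighbors} (equivalently \Cref{eq:final_bad}) bounds every summand by half that amount, so $\widehat Z^{i,j}$ falls below it. To handle an arbitrary dense edge $\{i,j\}$, note it cannot have $\mathsf{deg}_G(i)=\mathsf{deg}_G(j)=1$---otherwise its connected component would be the single edge $\{i,j\}$, which has average degree $1$, contradicting \Cref{def:dense_edges}---so after possibly relabeling $\mathsf{deg}_G(i)\ge 2$ and the first case applies to the ordered pair $(i,j)$; in particular no endpoint of a dense edge is isolated in $\widehat E$.

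For the final structural claim: the non-edge bullet shows $(i,j)\in\widehat E\Rightarrow i\sim j$, hence $\widehat E\subseteq E$, and the dense-edge bullet shows that the set $H$ of dense edges satisfies $H\subseteq\widehat E$; therefore $E\setminus\widehat E\subseteq E\setminus H$, which is a matching by \Cref{fact:dense_edge_structure}, and a subset of a matching is a matching. Finally, if $(i,j)\in E\setminus\widehat E$ then $\{i,j\}\in E\setminus H$, so by \Cref{fact:dense_edge_structure} it is an isolated edge of $E$ and neither $i$ nor $j$ has any other $G$-neighbor; since $\widehat E\subseteq E$ and $(i,j)\notin\widehat E$, this forces $\mathsf{deg}_{\widehat E}(i)=\mathsf{deg}_{\widehat E}(j)=0$, i.e. $i,j\in\mathcal{O}$, so the matching $E\setminus\widehat E$ lies entirely inside $\mathcal{O}$.

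I expect the only genuinely delicate point to be the filtration bookkeeping in the first step: one must use the shift $\mathcal{G}_\ell=\mathcal{F}_{2\ell+1}$ so that the one-unit ``burn-in'' required by the dynamical anticoncentration estimate is built into every martingale increment, while the spacing of $2$ between consecutive windows keeps all previously computed samples $\mathcal{G}_{\ell-1}$-measurable. Once that is in place, the concentration step and the combinatorics are routine given \Cref{eq:final_good}, \Cref{eq:final_bad}, \Cref{cor:anti}, and \Cref{fact:dense_edge_structure}.
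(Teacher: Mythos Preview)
Your proof is correct and follows essentially the same approach as the paper: form the martingale differences $Z^{i,j}_{2\ell}-\mathbb{E}[Z^{i,j}_{2\ell}\mid\mathcal{F}_{2\ell-1}]$, apply Azuma--Hoeffding with the bounds \cref{eq:final_good,eq:final_bad}, union-bound over the $n^2$ ordered pairs, and finish with the combinatorics from \Cref{fact:dense_edge_structure}. Your filtration bookkeeping and the ``relabel so that $\deg_G(i)\ge 2$'' handling of dense edges are just more explicit versions of what the paper does, and your final structural paragraph spells out the inclusion $E\setminus\widehat E\subseteq E\setminus H$ more carefully than the paper's brief concluding sentence.
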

\begin{proof}
    The first part is a consequence of the Azuma-Hoeffding inequality applied to the martingale difference sequence
    \begin{equation*}
        Z^{i,j}_{2t} - \mathbb{E}\left[Z^{i,j}_{2t}\vert \mathcal{F}_{2t-1}\right].
    \end{equation*}
    Note that this random variable lies in the interval $[-4,4]$ surely and this is indeed a martingale difference since $Z_{2(t-1)}^{i,j}$ is measurable with respect to $\mathcal{F}_{2t-1}$ so long as $8\varepsilon<1$.

    Suppose that $(i,j)$ is a dense edge, and that there exists some $k\neq j$ such that $i\sim k$. By \Cref{eq:final_good}, we know each conditional expectation is at least $c_{\mathsf{ALG}}\varepsilon^8\kappa^8\delta^2\exp(-O(\gamma\lambda))$. We may thus apply the Azuma-Hoeffding inequality with error probability $\delta/n^2$ and deviation $\frac{1}{4}c_{\mathsf{ALG}}\varepsilon^8\kappa^8\delta^2\exp(-O(\gamma\lambda))$ to deduce that with probability at least $1-\delta/n^2$,
    \begin{align*}
        \frac{1}{T/2}\sum_{t=1}^{T/2}Z^{i,j}_{2t}&\geq c_{\mathsf{ALG}}\varepsilon^8\kappa^8\delta^2\exp(-O(\gamma\lambda))-\frac{1}{4}c_{\mathsf{ALG}}\varepsilon^8\kappa^8\delta^2\exp(-O(\gamma\lambda)\\
        &\geq \frac{3}{4}c_{\mathsf{ALG}}\varepsilon^8\kappa^8\delta^2\exp(-O(\gamma\lambda)),
    \end{align*}
    and therefore \Cref{alg:cycle_stats} will correctly identify the adjacency $i\sim j$. The same holds true for the statistics $Z^{j,i}_t$ if instead $j$ has degree at least $2$ in the dependency graph. Since any dense edge must have a vertex of degree $2$, we deduce that $i\sim j$ will correctly be outputted.

    An identical argument using the Azuma-Hoeffding inequality holds for the case $i\not\sim j$, but instead using \Cref{eq:final_bad} to upper bound the conditional probabilities. Therefore, with probability at least $1-\delta/n^2$, the algorithm again correctly does not output $(i,j)$ in this case. By a union bound over the $n^2$ pairs of sites, the algorithm thus recovers the stated edges and never incorrectly outputs an adjacency.

    For the final claim, observe that an adjacency between $i$ and $j$ is always detected in the case $i\sim j$ and \emph{either} $i$ or $j$ has degree at least $2$ in $E$. Since the algorithm does not output any false edges, any dependencies in $E$ that the algorithm fails to identify must be between two sites that are of degree-one in $E$, and therefore isolated in $\widehat{E}$. This exactly means that the remaining dependencies must form a (not necessarily perfect) matching among sites in $\mathcal{O}$, the set of isolated sites in $\widehat{E}$.
\end{proof}

\subsection{Recovering Matchings}
\label{sec:matchings}

To finish the structure learning algorithm, recall from \Cref{thm:alg_main} and \Cref{fact:dense_edge_structure} that the only unidentified dependencies in $E$ must form a (not necessarily perfect) matching among sites that are \emph{isolated} in the current dependency graph $\widehat{G} = ([n],\widehat{E})$; moreover, these sites are independent of any site adjacent to an edge in $\widehat{E}$ since all dense edges are found. In this section, we provide a simple recovery algorithm that computes all remaining edges in $E$ that must form a matching.

The main idea of this algorithm is quite natural: since the remaining edges must form a matching among sites with no other dependencies (including to sites outside $\mathcal{O}$), the set $\mathcal{O}$ must form an Ising model with isolated edges. In particular, this Ising model is simply a product of independent subsets that each has size either $1$ or $2$, depending on whether the site belongs to a (unique) edge or not. We will first argue that this system trivially has a noticeable spectral gap depending mildly on \Cref{assumption:ising} and \Cref{assumption:mc} (with no dependence on $n$). For all $i,j\in \mathcal{O}$, we can then compute good estimates of the stationary probabilities in $\pi$ for each possible value of $(x_i,x_j)$ using the empirical time-averages; these will concentrate well for all pairs thanks to the Chernoff-type bound of \Cref{thm:lezaud}~\cite{lezaud2001chernoff}. We can therefore obtain good estimators of the conditional probability that $X_i=+1$ in $\pi$ depending on the value of $X_j\in \{-1,1\}$. If these variables do not form an edge in the matching, these structural results imply they are independent and so the conditional probabilities will not differ; if they do, then \Cref{assumption:ising} and \Cref{fact:km_sigmoid} will establish an explicit quantitative separation. We can thus threshold the difference in these empirical approximations.

We now carry out this plan. First, we can easily see that the (independent) sub-system of $\pi$ induced by $\mathcal{O}$ has a large spectral gap:

\begin{lemma}
\label{lem:spectral_gap}
    Suppose that the assumptions and conclusions of \Cref{thm:alg_main} holds. Then the distribution on $\mathcal{O}$ is simply the restriction of the Ising model to $\mathcal{O}$ by independence, and moreover, the spectral gap of the generator of the induced Markov chain restricted to $\mathcal{O}$ has spectral gap at least $c\kappa\exp(-O(\lambda))$ for some constant $c>0$.
\end{lemma}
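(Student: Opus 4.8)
The plan is to decompose the argument into three parts: first, use the structural conclusion of \Cref{thm:alg_main} to show that $\mathcal{O}$ carries no $G$-edges to its complement, so that both $\pi$ restricted to $\mathcal{O}$ and the induced chain are ``clean'' products over components of size at most $2$; second, lower bound the minimum stationary probability and the minimum transition rate of each such component; and third, invoke \Cref{fact:spectral_gap} componentwise and tensorize.

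For the first part, \Cref{thm:alg_main} gives $\widehat{E}\subseteq E$ with $E\setminus\widehat{E}$ a matching whose edges both have endpoints in $\mathcal{O}$. Any $i\in\mathcal{O}$ is isolated in $\widehat{E}$ by definition, so every $G$-edge incident to $i$ lies in $E\setminus\widehat{E}$ and hence has its other endpoint in $\mathcal{O}$; in particular $i$ has at most one $G$-neighbour, which also lies in $\mathcal{O}$. Thus $\mathcal{O}$ is a disjoint union of connected components of $G$, each of size $1$ or $2$, and there are no $G$-edges between $\mathcal{O}$ and $[n]\setminus\mathcal{O}$. By the Markov property of the Ising model this forces $\pi$ restricted to $\mathcal{O}$ to equal the Ising model with parameters $A$, $\bm{h}$ restricted to $\mathcal{O}$, which factorizes as a product over these components. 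Likewise, since the update rule $\mathsf{P}_i$ for $i\in\mathcal{O}$ depends only on coordinates in $\mathcal{O}$, the projection of the evolution onto $\{-1,1\}^{\mathcal{O}}$ is itself a single-site Markov chain, reversible with respect to $\pi_{\mathcal{O}}$ (detailed balance for a single flip only involves that flip's two configurations and the ratio $\pi_{\mathcal{O}}(\bm{x})/\pi_{\mathcal{O}}(\bm{x}^{\oplus i})$, which agrees with the corresponding ratio under $\pi$), and it is the product over components of the single-component chains. This establishes the first assertion of the lemma.

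For the second and third parts, fix a component $C$ with $|C|\le 2$. For $i\in C$, the local field $\sum_{k\neq i}A_{ik}x_k+h_i$ consists of at most the single term $A_{ij}x_j$ (with $j$ the matched partner, if any) together with $h_i$, and $\|A_{i,:}\|_1+|h_i|\le\lambda$ by \Cref{assumption:ising}; a direct two-variable computation then gives $\pi_C(\bm{x})\ge\exp(-O(\lambda))$ for every configuration of $C$. Each nonzero transition of the component chain is a single-site flip occurring at rate $\mathsf{P}_i(\bm{x},\bm{x}^{\oplus i})\ge\kappa$ by \Cref{assumption:mc}, and since every hypercube edge has positive flip rate the chain is irreducible. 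Applying \Cref{fact:spectral_gap} with $n=|C|=O(1)$, minimum stationary mass $\exp(-O(\lambda))$, and minimum transition $\kappa$, the generator $\mathcal{L}_C$ has spectral gap at least $c\kappa\exp(-O(\lambda))$. Writing $\mathcal{L}_{\mathcal{O}}=\bigoplus_{C}\mathcal{L}_C$ on the tensor product of the component state spaces, the spectrum of $\mathcal{L}_{\mathcal{O}}$ is the set of sums of component eigenvalues, so its spectral gap equals the minimum of the component gaps and is therefore at least $c\kappa\exp(-O(\lambda))$, as claimed.

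The steps are all routine; the only one requiring any care is the first — verifying that $\mathcal{O}$ genuinely has no $G$-edges to $[n]\setminus\mathcal{O}$, which we obtain for free by combining the structural output of \Cref{thm:alg_main} with \Cref{fact:dense_edge_structure} — together with the tensorization of the spectral gap over the product structure, which is standard.
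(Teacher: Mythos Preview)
Your proof is correct and follows essentially the same approach as the paper: decompose $\mathcal{O}$ into independent components of size at most $2$ via the structural conclusion of \Cref{thm:alg_main}, bound the stationary mass and transition rates within each component using \Cref{assumption:ising} and \Cref{assumption:mc}, apply \Cref{fact:spectral_gap} componentwise, and tensorize. You supply a bit more detail than the paper (explicitly checking reversibility of the projected chain and the eigenvalue description of the product generator), but the argument is the same.
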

\begin{proof}
    The independence statement has already been shown by \Cref{thm:alg_main} since there are no edges in $E$ between $\mathcal{O}$ and $[n]\setminus \mathcal{O}$. Moreover, since the dependence structure of $\mathcal{O}$ is simply a matching, the induced single-site Markov chain restricted to $\mathcal{O}$ is a product chain with independent sub-systems of size at most $2$. Each of these sub-systems has spectral gap at least $c\kappa\exp(-O(\lambda))$ by \Cref{fact:spectral_gap} since they are of constant size and \Cref{assumption:ising} and \Cref{assumption:mc} furnishes the lower bounds on transition probabilities and stationary probabilities of the subsystem. By standard facts about product chains, the spectral gap of the product chain is simply the minimum of the spectral gaps of each component (see e.g. Corollary 12.13 of~\cite{levin2017markov}), completing the proof.
\end{proof}

Next, we show that we can accurately compute all conditional probabilities to high-accuracy of the spin-spin probabilities of $\pi$ restricted to $\mathcal{O}$ using the time-average along a small trajectory.

\begin{lemma}
\label{lem:emp_ests}
    Suppose that the assumptions and conclusions of \Cref{thm:alg_main} holds. Then for any $\varepsilon>0,\beta<1$, with probability at least $1-\beta$, it holds simultaneously for all $i,j\in \mathcal{O}$ and $x_i,x_j\in \{-1,1\}$ that
    \begin{equation*}
        \left\vert \frac{1}{T}\int_0^T\mathbf{1}\{X^t_i=x_i,X^t_j=x_j\}\mathrm{d}t-\pi\left(X_i=x_i,X_j=x_j\right)\right\vert\leq \varepsilon,
    \end{equation*}
    so long as 
    \begin{equation*}
        T\geq \frac{C_{\ref{lem:emp_ests}}\exp(O(\lambda))(\lambda+\log(n/\beta))}{\kappa \varepsilon^2}.
    \end{equation*}
\end{lemma}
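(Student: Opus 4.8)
The plan is to invoke the Markov-chain Chernoff bound of \Cref{cor:leuzard_all}, but applied to a \emph{constant-size} sub-chain rather than to the whole restricted chain on $\mathcal{O}$. The reason is that the minimum stationary probability of the chain on $\mathcal{O}$ appears inside the logarithm in \Cref{thm:lezaud} and can be as small as $\exp(-\Omega(\lambda n))$ (it is a product over up to $n$ components), which would inflate $T$ by a factor of $n$; we must localize. So, fix $i,j \in \mathcal{O}$. By the conclusions of \Cref{thm:alg_main} together with \Cref{fact:dense_edge_structure}, the dependency graph restricted to $\mathcal{O}$ is a matching---every connected component of $G|_{\mathcal{O}}$ has size at most $2$---and $\mathcal{O}$ is $\pi$-independent of $[n]\setminus\mathcal{O}$. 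Let $S = S_{ij}$ be the union of the (true, possibly unknown) $G|_{\mathcal{O}}$-components of $i$ and $j$, so $|S|\le 4$. Because $S$ is a union of full connected components of $G$, the transition kernel of each site in $S$ depends only on coordinates inside $S$; hence the marginal process $(X^t_S)_{t\ge0}$ is itself a continuous-time single-site reversible Markov chain, whose stationary law is the marginal $\pi_S$ of $\pi$ on $S$ (which, by the independence, is exactly the Ising model with parameters restricted to $S$). The observable indicator $\mathbf{1}\{X^t_i=x_i, X^t_j=x_j\}$ is a function of $X^t_S$ with $\mathbb{E}_{\pi_S}\mathbf{1}\{X_i=x_i,X_j=x_j\} = \pi(X_i=x_i,X_j=x_j)$.

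Next I would record the two quantities needed to apply \Cref{cor:leuzard_all} to this sub-chain. First, $\pi_S$ is a product of at most two components of size $\le 2$, so the bounded-width condition of \Cref{assumption:ising} forces every configuration of $S$ to have $\pi_S$-probability at least $\exp(-O(\lambda))$: within each component every unnormalized weight lies in $[\exp(-O(\lambda)),\exp(O(\lambda))]$ and the component partition function is $\exp(O(\lambda))$, so the per-component minimum is $\exp(-O(\lambda))$ and the product of two remains $\exp(-O(\lambda))$. Second, the spectral gap of the generator of $(X^t_S)$ is at least $c\kappa\exp(-O(\lambda))$: this follows from \Cref{lem:spectral_gap} applied to $(X^t_S)$, a sub-product of the chain on $\mathcal{O}$, and also directly from \Cref{fact:spectral_gap}, since $(X^t_S)$ is a product of constant-size irreducible reversible chains whose nonzero transitions are at least $\kappa$ by \Cref{assumption:mc}. (Writing the generator $\sum_{k\in S}(\mathsf{P}_k-I) = |S|\cdot(\overline{\mathsf{P}}_S - I)$ puts it into the normalized $\mathsf{P}-I$ form of \Cref{thm:lezaud} after a harmless time-rescaling that leaves the product of the gap with $T$ unchanged.)

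With these in hand, apply \Cref{cor:leuzard_all} to $(X^t_S)$ with the $m=4$ indicators $\{\mathbf{1}\{X_i=x_i,X_j=x_j\} : x_i,x_j\in\{\pm1\}\}$, each bounded by $a=1$, spectral gap $\rho\ge c\kappa\exp(-O(\lambda))$, minimum stationary probability $\pi_{\min}\ge\exp(-O(\lambda))$, accuracy $\varepsilon$, and failure probability $\beta/n^2$. Since \Cref{thm:lezaud} allows an arbitrary starting configuration, the arbitrary initial state $X^0$ causes no trouble. This yields that once $T\ge \tfrac{C\log(4n^2/(\beta\pi_{\min}))}{\rho\varepsilon^2} = \tfrac{C'\exp(O(\lambda))(\lambda+\log(n/\beta))}{\kappa\varepsilon^2}$, all four time-averages for this pair are within $\varepsilon$ of $\pi(X_i=x_i,X_j=x_j)$ except with probability $\beta/n^2$. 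A union bound over the at most $n^2$ pairs $i,j\in\mathcal{O}$ (the degenerate case $i=j$ being a special case of the same argument) completes the proof, and the displayed lower bound on $T$ is exactly this threshold.

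The only genuine idea is the localization in the first paragraph: applying Lezaud's bound to the $O(1)$-dimensional sub-chain $(X^t_S)$ rather than to the full chain on $\mathcal{O}$. Without it, the factor $\log(1/\pi_{\min})$ would be $\Theta(\lambda n)$ instead of $O(\lambda)$, and the $\log n$ sample complexity would be lost. Everything else---checking that the marginal on a union of connected components is again a Markov chain with the expected stationary law, and reading off the gap and $\pi_{\min}$ bounds from \Cref{assumption:ising}, \Cref{assumption:mc}, and \Cref{fact:spectral_gap}---is routine.
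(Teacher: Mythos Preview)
Your proposal is correct and follows essentially the same approach as the paper: localize to the constant-size sub-chain on $S=\{i,j\}\cup\mathcal{N}(i)\cup\mathcal{N}(j)$ (which has $|S|\le 4$ since $G|_{\mathcal{O}}$ is a matching), invoke the $\exp(-O(\lambda))$ bounds on $\pi_{\min}$ and the spectral gap for that sub-chain, and then apply \Cref{cor:leuzard_all}. The only cosmetic difference is bookkeeping: the paper applies \Cref{cor:leuzard_all} once with $m=4n^2$ functions (using the uniform bounds on $\rho$ and $\pi_{\min}$ across all sub-chains), whereas you apply it per pair with $m=4$ and failure probability $\beta/n^2$, then union bound---these are equivalent since \Cref{cor:leuzard_all} is itself just a union bound over \Cref{thm:lezaud}.
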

\begin{proof}
    For each pair $(i,j)\in \mathcal{O}^2$ and values of $x_i,x_j\in \{-1,1\}^2$, let $f_{(i,j),x_i,x_j}(X)=\mathbf{1}\{X_i=x_i,X_j=x_j\}$. Consider the sub-system given by $\{i,j\}\cup \mathcal{N}(i)\cup \mathcal{N}(j)$, which we know has size at most four. This independent sub-system is of size $O(1)$, so the minimum probability under $\pi$ restricted to these sites is at least $\exp(-O(\lambda))$ under \Cref{assumption:ising}. Therefore, we may directly apply \Cref{cor:leuzard_all} using the spectral gap estimate given by \Cref{lem:spectral_gap} for all of these at most $m=4n^2$ functions simultaneously to obtain the desired result.
\end{proof}

We may now use this result to analyze a simple thresholding algorithm to detect these correlations. We first provide a lower bound on spin-spin correlations when $i,j\in\mathcal{O}$ form an edge:

\begin{lemma}
\label{lem:spin_separation}
    Suppose that the assumptions and conclusions of \Cref{thm:alg_main} holds, and suppose that $i\sim j$ are unique neighbors in $\mathcal{O}$. Then
    \begin{equation*}
        \left\vert\Pr_{\pi}\left(X_i=+1\vert X_j=+1\right)-\Pr_{\pi}\left(X_i=+1\vert X_j=-1\right)\right\vert\geq c_{\ref{lem:spin_separation}}\exp(-2\lambda)\min\{1,8\alpha\}.
    \end{equation*}

    Conversely, if $i\not\sim j$, then trivially
    \begin{equation*}
        \left\vert\Pr_{\pi}\left(X_i=+1\vert X_j=+1\right)-\Pr_{\pi}\left(X_i=+1\vert X_j=-1\right)\right\vert=0.
    \end{equation*}
\end{lemma}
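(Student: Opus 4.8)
The plan is to reduce the whole statement to the two-variable sub-model on $\{i,j\}$ and then apply the quantitative strict monotonicity of the sigmoid from \Cref{fact:km_sigmoid}.

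First I would record the structural reduction. By \Cref{fact:dense_edge_structure} applied to the true edge set $E$, every vertex of $\mathcal{O}$ that is incident to an edge of $E$ lies on an \emph{isolated} edge with no neighbor outside that edge. Hence, when $i\sim j$ are unique neighbors in $\mathcal{O}$, the pair $\{i,j\}$ is its own connected component of the dependency graph, so $\pi$ factorizes and $(X_i,X_j)$ has joint law proportional to $\exp\left(A_{ij}x_ix_j+h_ix_i+h_jx_j\right)$, independent of all other coordinates. In particular, the conditional law of $X_i$ given $X_j=x_j$ only involves this sub-model, and exactly as in \Cref{eq:glauber},
\begin{equation*}
    \Pr_\pi(X_i=+1\mid X_j=x_j)=\sigma\left(2A_{ij}x_j+2h_i\right).
\end{equation*}

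For the separation I would then set $x=2A_{ij}+2h_i$ and $y=-2A_{ij}+2h_i$, so that the quantity in the lemma equals $\vert\sigma(x)-\sigma(y)\vert$. We have $\vert x-y\vert=4\vert A_{ij}\vert\geq 4\alpha$ by the neighbor-nondegeneracy clause of \Cref{assumption:ising}, and $\vert x\vert\leq 2\vert A_{ij}\vert+2\vert h_i\vert\leq 2\lambda$ by the width bound of the same assumption. Plugging into \Cref{fact:km_sigmoid} gives
\begin{equation*}
    \vert\sigma(x)-\sigma(y)\vert\geq \exp(-\vert x\vert-3)\min\{1,\vert x-y\vert\}\geq e^{-3}\exp(-2\lambda)\min\{1,4\alpha\},
\end{equation*}
which is at least $c_{\ref{lem:spin_separation}}\exp(-2\lambda)\min\{1,8\alpha\}$ with $c_{\ref{lem:spin_separation}}=e^{-3}/2$, using $\min\{1,4\alpha\}\geq \tfrac12\min\{1,8\alpha\}$.

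The converse is immediate: if $i,j\in\mathcal{O}$ and $i\not\sim j$, then by \Cref{fact:dense_edge_structure} each of $i$ and $j$ is either isolated in $E$ or sits on an isolated edge not containing the other, so they lie in distinct connected components of the dependency graph; hence $X_i\perp X_j$ under $\pi$ and both conditional probabilities equal $\Pr_\pi(X_i=+1)$, giving difference exactly $0$. I do not anticipate any genuine obstacle here --- the one point that needs care is the appeal to \Cref{fact:dense_edge_structure} to guarantee that conditioning on $X_j$ really does collapse to the two-variable model, with no confounding through shared or external neighbors; once that is in hand the estimate is a one-line application of \Cref{fact:km_sigmoid} together with the width and nondegeneracy bounds.
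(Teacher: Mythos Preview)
Your proposal is correct and follows essentially the same approach as the paper: reduce to the two-variable sub-model (the paper appeals to the structural conclusions of \Cref{thm:alg_main} rather than directly to \Cref{fact:dense_edge_structure}, but the content is identical), write the conditional as $\sigma(2A_{ij}x_j+2h_i)$, and apply \Cref{fact:km_sigmoid} with the width bound $\vert x\vert\leq 2\lambda$ and $\vert x-y\vert=4\vert A_{ij}\vert\geq 4\alpha$. You are slightly more explicit than the paper in reconciling the resulting $\min\{1,4\alpha\}$ with the stated $\min\{1,8\alpha\}$ and in justifying the converse, but the argument is the same.
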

\begin{proof}
By our previous structural results, since $i\sim j$, the restricted Ising model satisfies
\begin{equation*}
    \pi(x_i,x_j) \propto \exp\left(A_{ij}x_ix_j+h_ix_i+h_jx_j \right),
\end{equation*}
where we know that $\vert A_{ij}\vert\geq \alpha$.
As a result, the conditional probabilities of $X_i$ given the value of $X_j$ under $\pi$ are given by 
\begin{equation*}
    \Pr_\pi\left(X_i=+1\vert X_j\right) = \sigma\left(2A_{ij}X_{ij}+2h_i\right).
\end{equation*}
We can now use \Cref{fact:km_sigmoid} to deduce that
\begin{align*}
    \left\vert\Pr_{\pi}\left(X_i=+1\vert X_j=+1\right)-\Pr_{\pi}\left(X_i=+1\vert X_j=-1\right)\right\vert&=\vert \sigma\left(2A_{ij}+2h_i\right)-\sigma\left(-2A_{ij}+2h_i\right)\vert\\
    &\geq c\exp(-2\lambda)\min\{1,4\vert A_{ij}\vert\}\\
    &\geq c\exp(-2\lambda)\min\{1,4\alpha\}.\qedhere
\end{align*}
\end{proof}

\begin{corollary}
\label{cor:emp_accurate}
    There is a small enough constant $c_{\mathsf{THR}}>0$ such that the following holds. Suppose that the assumptions and conclusions of \Cref{thm:alg_main} holds. If the good event of \Cref{lem:emp_ests} holds with $\varepsilon=c_{\mathsf{THR}}\exp(-O(\lambda))\min\{1,8\alpha\}$, then for all $i,j\in\mathcal{O}$, and each value of $x_j\in \{-1,+1\}$,
    \begin{align*}
        \left\vert\frac{\frac{1}{T}\int_0^T\mathbf{1}\{X^t_i=+1,X^t_j=x_j\}\mathrm{d}t}{\sum_{x_i}\frac{1}{T}\int_0^T\mathbf{1}\{X^t_i=x_i,X^t_j=x_j\}\mathrm{d}t}-\Pr_\pi\left(X_i=+1\vert X_j=x_j\right)\right\vert&\\
        \leq \frac{c_{\ref{lem:spin_separation}}}{4}\exp(-2\lambda)\min\{1,8\alpha\}&.
    \end{align*}

    In particular, if $i\sim j$, then
    \begin{align*}
\left\vert\frac{\frac{1}{T}\int_0^T\mathbf{1}\{X^t_i=+1,X^t_j=+1\}\mathrm{d}t}{\sum_{x_i}\frac{1}{T}\int_0^T\mathbf{1}\{X^t_i=x_i,X^t_j=+1\}\mathrm{d}t}-\frac{\frac{1}{T}\int_0^T\mathbf{1}\{X^t_i=+1,X^t_j=-1\}\mathrm{d}t}{\sum_{x_i}\frac{1}{T}\int_0^T\mathbf{1}\{X^t_i=x_i,X^t_j=-1\}\mathrm{d}t}\right\vert&\\
\geq \frac{3c_{\ref{lem:spin_separation}}}{4}\exp(-2\lambda)\min\{1,8\alpha\}&,
    \end{align*}
    while if $i\not\sim j$,
    \begin{align*}
\left\vert\frac{\frac{1}{T}\int_0^T\mathbf{1}\{X^t_i=+1,X^t_j=+1\}\mathrm{d}t}{\sum_{x_i}\frac{1}{T}\int_0^T\mathbf{1}\{X^t_i=x_i,X^t_j=+1\}\mathrm{d}t}-\frac{\frac{1}{T}\int_0^T\mathbf{1}\{X^t_i=+1,X^t_j=-1\}\mathrm{d}t}{\sum_{x_i}\frac{1}{T}\int_0^T\mathbf{1}\{X^t_i=x_i,X^t_j=-1\}\mathrm{d}t}\right\vert&\\
\leq \frac{c_{\ref{lem:spin_separation}}}{4}\exp(-2\lambda)\min\{1,8\alpha\}&.
\end{align*}
\end{corollary}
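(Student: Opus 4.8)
The plan is to propagate the additive accuracy on the joint spin probabilities provided by \Cref{lem:emp_ests} through a division, and then combine with \Cref{lem:spin_separation} by the triangle inequality.

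First I would record that the denominators are bounded away from zero. Since $i,j\in\mathcal{O}$, \Cref{thm:alg_main} together with \Cref{fact:dense_edge_structure} imply that the component of each of $i,j$ in $E$ restricted to $\mathcal{O}$ is either a single vertex or the edge $\{i,j\}$, and is independent of all other coordinates. Hence the marginal law of $(X_i,X_j)$ under $\pi$ is exactly a Gibbs measure on at most two vertices with parameters obeying \Cref{assumption:ising}, so every joint atom $\pi(X_i=x_i,X_j=x_j)$, and in particular every marginal $\pi(X_j=x_j)$, is at least $\exp(-O(\lambda))$. On the good event of \Cref{lem:emp_ests} with $\varepsilon=c_{\mathsf{THR}}\exp(-O(\lambda))\min\{1,8\alpha\}$, each empirical atom is within $\varepsilon$ of its true value, so choosing $c_{\mathsf{THR}}$ small enough that $2\varepsilon$ falls below half this lower bound makes the empirical denominator $\sum_{x_i}\frac1T\int_0^T\mathbf{1}\{X^t_i=x_i,X^t_j=x_j\}\,\mathrm{d}t$ at least $\tfrac12\exp(-O(\lambda))$ as well.

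Next I would apply a standard perturbation bound for ratios. Writing $\widehat a,\widehat b$ for the empirical estimates of $a:=\pi(X_i=+1,X_j=x_j)$ and $b:=\pi(X_j=x_j)$, we have $|\widehat a-a|\le\varepsilon$, $|\widehat b-b|\le 2\varepsilon$, $a\le b\le 1$, and $b,\widehat b\ge\tfrac12\exp(-O(\lambda))$, so
\begin{equation*}
\left|\frac{\widehat a}{\widehat b}-\frac ab\right|=\left|\frac{b(\widehat a-a)-a(\widehat b-b)}{\widehat b\,b}\right|\le\frac{3\varepsilon}{\tfrac12\exp(-O(\lambda))}=O(\varepsilon)\exp(O(\lambda)).
\end{equation*}
Since $a/b=\Pr_\pi(X_i=+1\mid X_j=x_j)$, taking the threshold constant small enough (so the factor $\exp(O(\lambda))$ is absorbed into the $\exp(-O(\lambda))$ hidden in $\varepsilon$) makes the left side at most $\tfrac{c_{\ref{lem:spin_separation}}}{4}\exp(-2\lambda)\min\{1,8\alpha\}$, which is the first displayed conclusion; in fact I would pick $c_{\mathsf{THR}}$ so this is at most $\tfrac{c_{\ref{lem:spin_separation}}}{8}\exp(-2\lambda)\min\{1,8\alpha\}$, as the separation step below uses.

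Finally, for the two ``in particular'' statements I would just invoke the triangle inequality. If $i\not\sim j$, independence from \Cref{fact:dense_edge_structure} makes the two true conditional probabilities equal, so the difference of the two empirical conditionals is at most twice the previous bound, i.e.\ $\le\tfrac{c_{\ref{lem:spin_separation}}}{4}\exp(-2\lambda)\min\{1,8\alpha\}$. If $i\sim j$, \Cref{lem:spin_separation} lower-bounds the true gap by $c_{\ref{lem:spin_separation}}\exp(-2\lambda)\min\{1,8\alpha\}$, and subtracting the two estimation errors leaves an empirical gap of at least $\tfrac34 c_{\ref{lem:spin_separation}}\exp(-2\lambda)\min\{1,8\alpha\}$. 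There is no serious obstacle here; the only thing requiring care is the bookkeeping in the first two steps — using the product/independence structure so that $\pi(X_j=x_j)$ is a genuinely $\Theta(1)$-scale quantity rather than a marginal of a large model, and then fixing $c_{\mathsf{THR}}$ small enough that the inflation of error under division still leaves the $i\sim j$ and $i\not\sim j$ cases cleanly separated.
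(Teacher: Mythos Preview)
Your proposal is correct and matches the paper's own proof, which likewise argues that the $\varepsilon$-accurate estimates from \Cref{lem:emp_ests} together with the $\exp(-O(\lambda))$ lower bound on the true probabilities give the first display by a ratio perturbation, and then derives the last two displays from the first via the triangle inequality and \Cref{lem:spin_separation}. One small wording fix: the component of $i$ (resp.\ $j$) in $\mathcal{O}$ need not be $\{i,j\}$ specifically---it can be an edge to some other $k\in\mathcal{O}$---but this does not affect your argument, since the relevant marginals are still determined by components of size at most two and hence bounded below by $\exp(-O(\lambda))$.
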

\begin{proof}
    Note that each empirical ratio provides the natural empirical estimator of each conditional probability. The first inequality follows by observing that since we obtain $\varepsilon$-accurate estimates to the numerator and denominator on the good event of \Cref{lem:emp_ests}, straightforward algebra using the fact that the true ratios under $\pi$ are lower bounded by $\exp(-O(\lambda))$ yields the desired deviation. The last two inequalities are a consequence of the first by using the triangle inequality along with \Cref{lem:spin_separation}.
\end{proof}

\begin{algorithm}[H]
    \caption{$\widehat{E}'=$ FindMatching$(\alpha,\lambda,\kappa,\beta,\mathcal{O})$}
    \label{alg:matching}
    \LinesNumbered

    Let $\alpha,\lambda,\kappa$ be as in \Cref{assumption:ising} and \Cref{assumption:mc} and $\mathcal{O}$ be the set of isolated vertices from $\widehat{E}$, the output of \Cref{thm:alg_main}.
    
    Set
    \begin{gather*}
        \varepsilon=c_{\mathsf{THR}}\exp(-O(\lambda))\min\{1,8\alpha\}\\
        T= \frac{C_{\ref{lem:emp_ests}}\exp(O(\lambda))(\lambda+\log(n/\beta))}{\kappa \varepsilon^2}
    \end{gather*}

    Observe random process $(X_t)_{t=0}^T$ and $\Pi'_k(T)$ for all $k\in \mathcal{O}$.

    \For{each pair $(i,j)\in \mathcal{O}^2$}
        {For each value of $x_i,x_j\in \{-1,1\}$, compute
        \begin{equation*}
            p^{i,j}_{x_1,x_2} = \frac{1}{T}\int_0^T\mathbf{1}\{X^t_i=x_i,X^t_j=x_j\}\mathrm{d}t.
        \end{equation*}

        Add $(i,j)$ to $\widehat{E}'$ if 
        \begin{equation*}
            \left\vert\frac{p^{i,j}_{1,1}}{p^{i,j}_{1,1}+p^{i,j}_{-1,1}}-\frac{p^{i,j}_{1,-1}}{p^{i,j}_{1,-1}+p^{i,j}_{-1,-1}}\right\vert\geq \frac{3c_{\ref{lem:spin_separation}}}{4}\exp(-2\lambda)\min\{1,8\alpha\}.
        \end{equation*}}
\end{algorithm}

\begin{theorem}
\label{thm:main_match}
    Under the assumptions and conclusions of \Cref{thm:alg_main}, with probability at least $1-\beta$, the output $\widehat{E}'$ of \Cref{alg:matching} is precisely $E\setminus \widehat{E}$. Moreover, the running time of the algorithm is at most
    \begin{equation*}
        O(Tn^2) = \mathsf{poly}\left(\exp(\lambda),1/\alpha,1/\kappa\right)\cdot n^2\log(n/\beta).
    \end{equation*}
\end{theorem}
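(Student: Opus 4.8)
The plan is to assemble the structural facts already established together with the concentration bound of \Cref{lem:emp_ests} and the quantitative gap of \Cref{cor:emp_accurate}; essentially all the work has been front-loaded, so this amounts to threading the pieces together. First I would invoke the conclusions of \Cref{thm:alg_main}: on its good event $\widehat{E}\subseteq E$ contains every dense edge, so $E\setminus\widehat{E}$ consists only of edges among the set $\mathcal{O}$ of $\widehat{E}$-isolated vertices, and by \Cref{fact:dense_edge_structure} these form a matching with \emph{no} edge of $E$ joining $\mathcal{O}$ to $[n]\setminus\mathcal{O}$. Hence the $\pi$-marginal on $X_{\mathcal{O}}$ is exactly the restricted Ising model, a product over independent blocks of size $1$ or $2$; \Cref{lem:spectral_gap} then gives the restricted single-site chain a spectral gap $\Omega(\kappa\exp(-O(\lambda)))$, independent of $n$, which is what licenses the use of the Lezaud-type bound.

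With $\varepsilon = c_{\mathsf{THR}}\exp(-O(\lambda))\min\{1,8\alpha\}$ and $T$ as in \Cref{alg:matching}, \Cref{lem:emp_ests} gives that with probability at least $1-\beta$ the empirical time-averages $p^{i,j}_{x_i,x_j}$ are all within $\varepsilon$ of the true $\pi(X_i=x_i,X_j=x_j)$, simultaneously over all $(i,j)\in\mathcal{O}^2$ and all $x_i,x_j$. Conditioned on this event, \Cref{cor:emp_accurate} shows each empirical conditional ratio $p^{i,j}_{1,x_j}/(p^{i,j}_{1,x_j}+p^{i,j}_{-1,x_j})$ is within $\tfrac{c_{\ref{lem:spin_separation}}}{4}\exp(-2\lambda)\min\{1,8\alpha\}$ of $\Pr_\pi(X_i=+1\mid X_j=x_j)$ --- here one uses that the true denominators are bounded below by $\exp(-O(\lambda))$, so additive errors in numerator and denominator propagate to a controlled additive error in the quotient.

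The correctness of the thresholding step then follows by a two-case analysis. If $\{i,j\}\in E\setminus\widehat{E}$, i.e. $i\sim j$ with both in $\mathcal{O}$, then \Cref{lem:spin_separation} gives true conditional probabilities separated by at least $c_{\ref{lem:spin_separation}}\exp(-2\lambda)\min\{1,8\alpha\}$, so by the triangle inequality the empirical difference exceeds $\tfrac{3c_{\ref{lem:spin_separation}}}{4}\exp(-2\lambda)\min\{1,8\alpha\}$ and $(i,j)$ is added to $\widehat{E}'$. If $i\not\sim j$ with $i,j\in\mathcal{O}$, independence in the product model makes the true conditional difference exactly $0$, so the empirical difference is at most $\tfrac{c_{\ref{lem:spin_separation}}}{4}\exp(-2\lambda)\min\{1,8\alpha\}$, strictly below the threshold, and $(i,j)$ is not added. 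Since the loop ranges only over $\mathcal{O}^2$ and, by the structure above, the adjacent pairs inside $\mathcal{O}$ are exactly $E\setminus\widehat{E}$, we conclude $\widehat{E}' = E\setminus\widehat{E}$. For the runtime, the stated $T$ equals $\mathsf{poly}(\exp(\lambda),1/\alpha,1/\kappa)\cdot\log(n/\beta)$ because $\varepsilon^{-2} = \mathsf{poly}(\exp(\lambda),1/\alpha)$; each of the $O(n^2)$ pairs requires evaluating the piecewise-constant integrals $p^{i,j}_{x_i,x_j}$, and since by \Cref{lem:update_bounds} the observed trajectory on $\mathcal{O}$ has only $O(T\vert\mathcal{O}\vert)$ flip events with high probability, the total cost is $O(Tn^2)$, as claimed.

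The step I expect to be the most delicate is not conceptually deep but quantitatively fussy: ensuring the separation furnished by \Cref{lem:spin_separation} comfortably exceeds four times the estimation error tolerated in \Cref{lem:emp_ests}, which is what pins down the constant $c_{\mathsf{THR}}$ and the precise $\varepsilon$--$T$ dependence, and propagating the additive errors through the normalized (conditional) ratios in \Cref{cor:emp_accurate} without losing the $\exp(-O(\lambda))$-scale gap. Everything else --- independence of the $\mathcal{O}$-block, the spectral gap, and the union bound over the $O(n^2)$ functions --- is already packaged in the cited lemmas.
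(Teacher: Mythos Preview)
Your proposal is correct and follows essentially the same approach as the paper's proof, which simply cites \Cref{cor:emp_accurate} for correctness and sketches the $O(T)$-per-pair linear scan for the runtime; you have just spelled out the two-case thresholding analysis and the structural reduction to $\mathcal{O}$ more explicitly than the paper does. One small arithmetic point worth tightening when you write it up: with each empirical conditional accurate to $\tfrac{c_{\ref{lem:spin_separation}}}{4}\exp(-2\lambda)\min\{1,8\alpha\}$ and a true gap of $c_{\ref{lem:spin_separation}}\exp(-2\lambda)\min\{1,8\alpha\}$, the triangle inequality gives an empirical gap of at least $\tfrac{c_{\ref{lem:spin_separation}}}{2}\exp(-2\lambda)\min\{1,8\alpha\}$, not $\tfrac{3}{4}$ of it; the paper's \Cref{cor:emp_accurate} states the $\tfrac{3}{4}$ bound directly (implicitly absorbing a further factor into $c_{\mathsf{THR}}$), so you can simply invoke that statement rather than re-derive it.
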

\begin{proof}
    The statistics in \Cref{alg:matching} can be computed for each fixed $i,j$ and $x_i,x_j$ in time $O(T)$ by a linear scan of $\Pi'(i)$ and $\Pi'(j)$ which will each have length $O(T)$, since the integrals are piecewise constant except at these flip times. Since there are at most $4n^2$ such statistics to compute , the claim follows. The correctness of the algorithm is an immediate consequence of \Cref{cor:emp_accurate}.
\end{proof}

\section{Parameter Learning}
\label{sec:parameter}

By the results of the previous section, we may assume that we have access to the true dependency graph $E$ of the Ising model. In this section, we provide an algorithm that observes a trajectory for time $T=\widetilde{O}(2^d\log(n))$ and that runs in time  $n\cdot T$ time, hiding parameter dependencies, that gives additive approximations of the actual coefficients $A_{ij}$.

In \Cref{sec:config_stats}, we define the natural empirical estimators for each $\mathsf{P}_i(\bm{x},\bm{x}^{\oplus i})$, where we may now assume that $\bm{x}\in \{-1,1\}^{\{i\}\cup \mathcal{N}(i)}$ since the dependence graph is known. We will use our previous structural results to show that after $T$ time as above, for all $i$ and $j\in \mathcal{N}(i)$ with high probability, we will obtain a large number of samples for \emph{some} configuration $\bm{x}\in \{-1,1\}^{\mathcal{N}(i)\setminus \{j\}}$ with all four possible values of $x_i,x_j$. Moreover, these estimates will be fairly accurate with high probability, and therefore we can back out $A_{i,j}$ by reversibility as described in \Cref{sec:overview}. We give the final construction in \Cref{sec:parameter_guarantees}.

\subsection{Moments and Concentration of Local Configurations}
\label{sec:config_stats}

Before beginning, we will define some convenient notation. First, we will define
\begin{equation*}
    S_i:=\{i\}\cup \mathcal{N}(i).
\end{equation*}
As alluded to, in a slight abuse of notation, for any configuration $X\in\{-1,1\}^n$ such that $X_{S_i}=\bm{x}\in \{-1,1\}^{S_i}$, we will write
\begin{equation*}
\mathsf{P}_i(\bm{x},\bm{x}^{\oplus i})= \mathsf{P}_i(X,X^{\oplus i}),
\end{equation*}
since this transition probability only depends on the neighbors of $i$. We will also slightly abuse notation by writing for $X\in \{-1,1\}^n$ and $\bm{x}\in \{-1,1\}^{S_i}$
\begin{equation*}
    H_t(X,\bm{x})=\sum_{Y\in \{-1,1\}^n:Y_{S_i}=\bm{x}} H_t(X,Y),
\end{equation*}
to denote the probability that the coordinates of $S_i$ of the final configuration of the Markov chain started at $X$ for $t$ units of time is equal to $\bm{x}$. Note that clearly
\begin{equation*}
    \sum_{\bm{x}\in \{-1,1\}^{S_i}} H_t(X,\bm{x})=1.
\end{equation*}

Given $\varepsilon>0$, we now define the following two statistics for any $t\geq 0$ and any $\bm{x}\in \{-1,1\}^{S_i}$:
\begin{gather*}
    Z^{\bm{x}}_t:=\mathbf{1}\left\{X^{2t}_{S_i}=\bm{x}\right\}\\
    Z^{\bm{x},i}_{t}=\mathbf{1}\left\{X^{2t}_{S_i}=\bm{x} \text{ and } \vert\Pi'_i(t,t+\varepsilon)\vert=1\right\}.
\end{gather*}
In words, $Z^{\bm{x}}_t$ denotes the event that at time $t$, the configuration on the sites in $S_i$ equal $\bm{x}$. Similarly, $Z^{\bm{x},i}_{t}$ is the indicator that the same event holds \emph{and} site $i$ flips exactly once in the interval $[t,t+\varepsilon]$.

We now establish the following simple moment identities:
\begin{lemma}
\label{lem:first_moment}
    For any $t\in \mathbb{N}$ and $\bm{x}\in \{-1,1\}^{S_i}$,
    \begin{equation*}
        \mathbb{E}[Z^{\bm{x}}_{2t}\vert \mathcal{F}_{2t-1}]= H_1(X^{2t-1},\bm{x}).
    \end{equation*}
    Moreover, if $\varepsilon<c$ for some small constant, then
    \begin{equation*}
        \mathbb{E}[Z^{\bm{x},i}_{2t}\vert \mathcal{F}_{2t-1}]=\varepsilon H_1(X^{2t-1},\bm{x})\left(\mathsf{P}_i(\bm{x},\bm{x}^{\oplus i})+O(\varepsilon d)\right).
    \end{equation*}
\end{lemma}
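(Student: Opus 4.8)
The first identity is immediate: $Z^{\bm{x}}_{2t}$ is an indicator measurable with respect to $\mathcal{G}_{2t}$, and conditioning on $\mathcal{F}_{2t-1}$ only reveals the configuration $X^{2t-1}$ (together with the observed flips up to time $2t-1$), so by the Markov property $\mathbb{E}[Z^{\bm{x}}_{2t}\mid\mathcal{F}_{2t-1}]=\Pr(X^{2t}_{S_i}=\bm{x}\mid X^{2t-1})=H_1(X^{2t-1},\bm{x})$, using the abuse of notation for $H_1$ introduced just above. The only subtlety is that $\mathcal{F}_{2t-1}$ is the \emph{observation} filtration rather than the full filtration, but since $H_1(\cdot,\bm{x})$ depends only on $X^{2t-1}$, which is $\mathcal{F}_{2t-1}$-measurable, and the evolution on $[2t-1,2t]$ is conditionally independent of $\mathcal{F}_{2t-1}$ given $X^{2t-1}$, this causes no issue.

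For the second identity, the plan is to condition further on $X^{2t}$: write
\begin{equation*}
    \mathbb{E}[Z^{\bm{x},i}_{2t}\mid\mathcal{F}_{2t-1}]=\mathbb{E}\!\left[\mathbf{1}\{X^{2t}_{S_i}=\bm{x}\}\cdot\Pr\!\left(\vert\Pi'_i(2t,2t+\varepsilon)\vert=1\ \big|\ X^{2t},\mathcal{F}_{2t-1}\right)\ \Big|\ \mathcal{F}_{2t-1}\right].
\end{equation*}
On the event $X^{2t}_{S_i}=\bm{x}$, the conditional probability that $i$ flips exactly once in the window $[2t,2t+\varepsilon]$ is, by exactly the same localization argument as in the proof of \Cref{prop:event_probs} (specifically the one-step analysis behind \Cref{claim:claim_1}), equal to $\varepsilon\big(\mathsf{P}_i(\bm{x},\bm{x}^{\oplus i})+O(\varepsilon d)\big)$: the dominant contribution is that $i$ attempts to update exactly once, no neighbor of $i$ updates in the window, and the single attempt flips, which has probability $(1-e^{-\varepsilon})e^{-\varepsilon|\mathcal N(i)|}\mathsf{P}_i(\bm{x},\bm{x}^{\oplus i})=\varepsilon\,\mathsf{P}_i(\bm{x},\bm{x}^{\oplus i})+O(\varepsilon^2 d)$, while every other way to produce a single observed flip of $i$ (an extra attempt by $i$, or a neighbor updating and changing the local field mid-window) requires at least two update attempts among $S_i$ and so contributes $O(d\varepsilon^2)$. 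Crucially this bound depends only on the fact that $X^{2t}_{S_i}=\bm{x}$ — the transition probability $\mathsf{P}_i$ depends only on the coordinates in $S_i$ — so it is uniform over all full configurations $X^{2t}$ consistent with $\bm{x}$, and the error is $O(d\varepsilon)$ relative to the leading $\varepsilon$ term.

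Substituting this back and pulling the (constant, given $\bm{x}$) factor $\varepsilon(\mathsf{P}_i(\bm{x},\bm{x}^{\oplus i})+O(\varepsilon d))$ out of the outer expectation gives
\begin{equation*}
    \mathbb{E}[Z^{\bm{x},i}_{2t}\mid\mathcal{F}_{2t-1}]=\varepsilon\big(\mathsf{P}_i(\bm{x},\bm{x}^{\oplus i})+O(\varepsilon d)\big)\cdot\mathbb{E}\big[\mathbf{1}\{X^{2t}_{S_i}=\bm{x}\}\mid\mathcal{F}_{2t-1}\big]=\varepsilon H_1(X^{2t-1},\bm{x})\big(\mathsf{P}_i(\bm{x},\bm{x}^{\oplus i})+O(\varepsilon d)\big),
\end{equation*}
using the first identity in the last step. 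I expect the only real work to be the careful bookkeeping of the localized one-step estimate, i.e.\ verifying that all confounding events (extra self-updates of $i$, neighbor updates that shift the local field within the $\varepsilon$-window) are genuinely $O(d\varepsilon^2)$ and that the bound is uniform over the conditioning; but this is entirely parallel to the already-established \Cref{prop:event_probs} and \Cref{claim:claim_1}, so it should be routine to invoke rather than redo.
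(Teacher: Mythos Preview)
Your proposal is correct and follows essentially the same approach as the paper: the first identity via the Markov property and the definition of $H_1$, and the second by the tower property (conditioning on $\mathcal{F}_{2t}$, equivalently $X^{2t}$), invoking the $m=1$ case of \Cref{prop:event_probs} to evaluate the inner flip probability as $\varepsilon(\mathsf{P}_i(\bm{x},\bm{x}^{\oplus i})+O(\varepsilon d))$, and then pulling this constant factor out and applying the first identity. Your extra remarks on the observation-versus-full filtration and on uniformity of the $O(\varepsilon d)$ bound over the outside coordinates are sound and make explicit what the paper leaves implicit.
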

\begin{proof}
    The first identity is just a restatement of the definition of $H_1(X,\bm{x})$ after applying the Markov property to start the chain at $X^{2t-1}$.

    For the second identity, we can write
    \begin{align*}
        \mathbb{E}[Z^{\bm{x},i}_{2t}\vert \mathcal{F}_{2t-1}]&=\mathbb{E}\left[\mathbb{E}\left[Z^{\bm{x},i}_{2t}\vert \mathcal{F}_{2t}\right]\bigg\vert \mathcal{F}_{2t-1}\right]\\
        &=\mathbb{E}\left[\mathbf{1}\{X^{2t}_{S_i}=\bm{x}\}\cdot \Pr\left(\mathcal{E}^{2t}_i\vert X^{2t}\right)\vert \mathcal{F}_{2t-1}\right],
    \end{align*}
    where we use the same notation as in \Cref{eq:flip_event} and applied the Markov property. But on the event that $X^{2t}_{S_i}=\bm{x}$, we may apply \Cref{prop:event_probs} to assert that
    \begin{equation*}
        \mathbf{1}\{X^{2t}_{S_i}=\bm{x}\}\cdot \Pr\left(\mathcal{E}^{2t}_i\vert X^{2t}\right)=\mathbf{1}\{X^{2t}_{S_i}=\bm{x}\}\cdot \varepsilon\left(\mathsf{P}_i(\bm{x},\bm{x}^{\oplus i})+O(\varepsilon d)\right).
    \end{equation*}
    We may pull out this factor and then take expectations over the indicator function, applying the first identity.
\end{proof}

We can now turn to our main statistics. For $0<\varepsilon<c/d$ for a small enough constant, and $T\in \mathbb{N}$ to be chosen later, and for all $\bm{x}\in \{-1,1\}^{S_i}$, we define the following random variables:
\begin{gather*}
    N_{\bm{x}} = \sum_{t=1}^T Z^{\bm{x}}_{2t},\\
    N_{\bm{x},i} = \sum_{t=1}^T Z^{\bm{x},i}_{2t}.
\end{gather*}

We also define the empirical estimates for flip rates by
\begin{equation*}
    \widehat{p(\bm{x},i)} = \frac{N_{\bm{x},i}}{\varepsilon N_{\bm{x}}}.
\end{equation*}
Our goal will be to show that when $T$ is chosen suitably, this empirical estimator for flip rates will be a good for \emph{some values of $\bm{x}$} that we can determine.

We now establish a suitable form of \emph{pathwise} concentration for all of these simultaneously:

\begin{proposition}
\label{prop:concentration}
    There is an absolute constant $C>0$ such that the following holds. For any $\beta>0,\varepsilon<c$ and $T\in \mathbb{N}$ as above, the following holds with probability at least $1-\beta$: let 
    \begin{equation}
    \label{eq:xi_def}
        \xi=C\sqrt{d\log(\log(T)/\beta))}.
    \end{equation}
 For all $\bm{x}\in \{-1,1\}^{S_i}$ simultaneously:

    \begin{gather}
        \left\vert N_{\bm{x}}-\sum_{t=1}^T H_1(X^{2t-1},\bm{x})\right\vert\leq \max\left\{\sqrt{\sum_{t=1}^T H_1(X^{2t-1},\bm{x})},\xi\right\}\cdot \xi\\
        \left\vert N_{\bm{x},i}-\varepsilon\left(\mathsf{P}_i(\bm{x},\bm{x}^{\oplus i})+O(\varepsilon d)\right) \sum_{t=1}^T  H_1(X^{2t-1},\bm{x})\right\vert\leq \max\left\{\sqrt{\sum_{t=1}^T H_1(X^{2t-1},\bm{x})},\xi\right\}\cdot \xi.
    \end{gather}
\end{proposition}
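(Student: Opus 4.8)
The plan is to apply Freedman's martingale inequality to the two martingale difference sequences obtained by centering $Z^{\bm{x}}_{2t}$ and $Z^{\bm{x},i}_{2t}$ by their one-step conditional expectations, and then take a union bound over the $\bm{x}\in\{-1,1\}^{S_i}$ together with a standard ``peeling'' over the (a priori unknown, path-dependent) predictable quadratic variation. First I would fix $\bm{x}\in\{-1,1\}^{S_i}$ and set $D_t := Z^{\bm{x}}_{2t}-\mathbb{E}[Z^{\bm{x}}_{2t}\mid\mathcal{F}_{2t-1}] = Z^{\bm{x}}_{2t}-H_1(X^{2t-1},\bm{x})$, using the first identity of \Cref{lem:first_moment}; this is a bounded martingale difference adapted to $(\mathcal{F}_{2t-1})$ with $|D_t|\le 1$, since $Z^{\bm{x}}_{2(t-1)}$ and the spacing of $2$ make $Z^{\bm{x}}_{2t}$ depend only on $\mathcal{F}_{2t}$ and the increments $[2t-1,2t]$, so $Z^{\bm{x}}_{2t}$ is $\mathcal{F}_{2t}$-measurable and the centering is well-defined (here one uses $8\varepsilon<1$ so that the windows do not overlap). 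The predictable quadratic variation is $W_T:=\sum_{t=1}^T \mathrm{Var}(Z^{\bm{x}}_{2t}\mid\mathcal{F}_{2t-1}) \le \sum_{t=1}^T H_1(X^{2t-1},\bm{x}) =: V_T$, since $Z^{\bm{x}}_{2t}$ is an indicator. Freedman's inequality then gives, for each fixed threshold $v$, that $\Pr(|\sum_{t\le T}D_t|\ge u,\ W_T\le v)\le 2\exp(-u^2/(2v+2u/3))$.

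Next I would remove the dependence of the bound on the unknown value of $W_T$ (equivalently $V_T$, which lies in $[0,T]$) by a dyadic peeling argument: partition the possible range of $V_T$ into $O(\log T)$ blocks $[2^{k-1},2^k]$ for $k=1,\dots,\lceil\log_2 T\rceil$ (plus a bottom block $[0,1]$), and on each block apply Freedman with $v=2^k$ and deviation $u=\max\{\sqrt{2^k},\xi\}\cdot\xi/2$ or so. With $\xi=C\sqrt{d\log(\log T/\beta)}$, on the block where $V_T\le 2^k$ the exponent $-u^2/(2v+2u/3)$ is at least a constant times $\min\{\xi^2, u^2/v\}\gtrsim \xi^2 \gtrsim d\log(\log T/\beta)$, so each tail probability is at most, say, $\beta/(2^{|S_i|}\cdot 2\lceil\log_2 T\rceil)$ once $C$ is a large enough absolute constant (using $|S_i|\le d+1\le O(d)$, so $2^{|S_i|}$ is absorbed into the $d$ inside the log up to constants). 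Taking a union bound over the $O(\log T)$ blocks and over all $\bm{x}\in\{-1,1\}^{S_i}$ yields that simultaneously, $|N_{\bm{x}}-V_T^{\bm{x}}|\le \max\{\sqrt{V_T^{\bm{x}}},\xi\}\cdot\xi$ where $V_T^{\bm{x}}=\sum_{t\le T}H_1(X^{2t-1},\bm{x})$ — which is exactly the first claimed inequality.

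For the second inequality I would repeat the argument verbatim with $D'_t := Z^{\bm{x},i}_{2t}-\mathbb{E}[Z^{\bm{x},i}_{2t}\mid\mathcal{F}_{2t-1}]$, now invoking the second identity of \Cref{lem:first_moment} to identify the conditional mean as $\varepsilon H_1(X^{2t-1},\bm{x})(\mathsf{P}_i(\bm{x},\bm{x}^{\oplus i})+O(\varepsilon d))$; again $|D'_t|\le 1$ since $Z^{\bm{x},i}_{2t}$ is an indicator, and its conditional variance is bounded by $\mathbb{E}[Z^{\bm{x},i}_{2t}\mid\mathcal{F}_{2t-1}]\le \varepsilon H_1(X^{2t-1},\bm{x})(1+O(\varepsilon d))\le V_T^{\bm{x}}$-summably, so the same $V_T^{\bm{x}}$ controls the quadratic variation (in fact with an extra $\varepsilon$ factor one could sharpen, but the stated bound only needs the crude $\le V_T^{\bm{x}}$). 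The identical peeling/union bound then produces the second displayed inequality, and merging the two union bounds (doubling the per-event failure probability) keeps the total at $\beta$. I expect the main obstacle to be purely bookkeeping: making sure the $\log T$ from peeling, the $2^{|S_i|}=2^{O(d)}$ from the union over configurations, and the $\log(1/\beta)$ all land inside the single factor $\xi=C\sqrt{d\log(\log T/\beta)}$ with no hidden $n$-dependence — this works precisely because $\log(2^{O(d)})=O(d)$ is already accounted for by the $\sqrt d$ in $\xi$, and because the peeling contributes only $\log\log T$-type terms after optimizing, not $\log T$; a secondary point of care is checking the window-disjointness needed for the martingale-difference structure, which holds under $8\varepsilon<1$ and the spacing of $2$ between consecutive sample times.
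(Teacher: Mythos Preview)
Your proposal is correct and follows essentially the same approach as the paper: form the martingale differences $Z^{\bm{x}}_{2t}-\mathbb{E}[Z^{\bm{x}}_{2t}\mid\mathcal{F}_{2t-1}]$ and $Z^{\bm{x},i}_{2t}-\mathbb{E}[Z^{\bm{x},i}_{2t}\mid\mathcal{F}_{2t-1}]$, bound the conditional variance by the conditional mean since the variables are indicators, apply Freedman, and union bound over the at most $2^{d+2}$ choices of $\bm{x}$ and of whether $i$ flips. The only cosmetic difference is that the paper invokes the pre-packaged pathwise Freedman bound of \Cref{prop:freedman} (which already contains the peeling over the random quadratic variation and produces the $\log(\log T/\delta)$ factor), whereas you carry out that dyadic peeling by hand; your bookkeeping that $\log(2^{O(d)})=O(d)$ is absorbed by the $d$ in $\xi^2=C^2 d\log(\log T/\beta)$ is exactly what makes the union bound close.
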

\begin{proof}
    For each $\bm{x}$, define the martingale difference sequences for $\ell=1,\ldots, T$:
    \begin{gather*}
        \sum_{t=1}^{\ell} Z^{\bm{x}}_{2t}-\sum_{t=1}^{\ell} \mathbb{E}[Z^{\bm{x}}_{2t}\vert\mathcal{F}_{2t-1}],\\
        \sum_{t=1}^{\ell} Z^{\bm{x},i}_{2t}-\sum_{t=1}^{\ell} \mathbb{E}[Z^{\bm{x},i}_{2t}\vert\mathcal{F}_{2t-1}].
    \end{gather*}
    Note that these indeed form martingale difference sequences with respect to appropriate filtrations since $Z_{2(t-1)}$ is $\mathcal{F}_{2t-1}$-measurable as $\varepsilon<1$. Moreover, note that
    \begin{gather*}
        \mathsf{Var}(Z^{\bm{x}}_{2t}\vert \mathcal{F}_{2t-1})\leq \mathbb{E}[Z^{\bm{x}}_{2t}\vert\mathcal{F}_{2t-1}],\\
        \mathsf{Var}(Z^{\bm{x},i}_{2t}\vert \mathcal{F}_{2t-1})\leq \mathbb{E}[Z^{\bm{x},i}_{2t}\vert\mathcal{F}_{2t-1}],
    \end{gather*}
    since each random variable lies in $\{0,1\}$, so the conditional variance is bounded by the conditional mean. Therefore, the sum of conditional variances is bounded by the sum of conditional means, which are given by \Cref{lem:first_moment} (dropping the $\varepsilon$ factor in the latter for simplicity).

    We may then directly apply a version of Freedman's martingale inequality as stated in \Cref{prop:freedman} with error probability $\beta/2^{d+2}$ and take a union bound over $\bm{x}$ and whether or not $i$ flips; note there are at most $2^{d+2}$ such events we are computing. The desired concentration inequalities then follow immediately from \Cref{prop:freedman} using our definition of $\xi$.
\end{proof}

\begin{corollary}
\label{cor:mult_ests}
    Let $\varepsilon\leq c\delta\kappa/d$ for a small enough constant $c>0$ and any constant $\delta<1$. Under the conditions and good event of \Cref{prop:concentration}, suppose that $\bm{x}\in \{-1,1\}^{S_i}$ is such that 
    \begin{equation}
    \label{eq:num_samples}
        \sum_{t=1}^T H_1(X^{2t-1},\bm{x})\geq \frac{C\xi^2}{\varepsilon^2\delta^2\kappa^2},
    \end{equation}
    where $\xi$ is as in \Cref{eq:xi_def}
    Then it holds that
    \begin{equation*}
        \frac{\widehat{p(\bm{x},i)}}{\mathsf{P}_i(\bm{x},\bm{x}^{\oplus i})} \in [1-\delta,1+\delta].
    \end{equation*}

    The same conclusion holds for any $\bm{x}$ such that  $N_{\bm{x}}$ is at least the same quantity, up to a change of constants.
\end{corollary}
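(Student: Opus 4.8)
The plan is to unwind the definition $\widehat{p(\bm{x},i)} = N_{\bm{x},i}/(\varepsilon N_{\bm{x}})$, substitute the two pathwise concentration bounds of \Cref{prop:concentration}, and then convert the additive errors there into a \emph{relative} error by using that $\mathsf{P}_i(\bm{x},\bm{x}^{\oplus i})\geq \kappa$ from \Cref{assumption:mc}.

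Write $M := \sum_{t=1}^T H_1(X^{2t-1},\bm{x})$ and $p := \mathsf{P}_i(\bm{x},\bm{x}^{\oplus i})$. Under \Cref{eq:num_samples} we have $M\geq C\xi^2/(\varepsilon\delta\kappa)^2 \geq \xi^2$ once $C\geq 1$ (recall $\varepsilon,\delta,\kappa\leq 1$), so $\max\{\sqrt{M},\xi\}=\sqrt{M}$ in both inequalities of \Cref{prop:concentration}. Dividing the first by $M$ and the second by $\varepsilon M$, and setting $u := \xi/\sqrt{M}$, these become
\begin{equation*}
    \frac{N_{\bm{x}}}{M} = 1 \pm u, \qquad \frac{N_{\bm{x},i}}{\varepsilon M} = p + O(\varepsilon d) \pm \frac{u}{\varepsilon}.
\end{equation*}
From \Cref{eq:num_samples} one also gets $u \leq \varepsilon\delta\kappa/\sqrt{C}$, so in particular $N_{\bm{x}}\geq M/2>0$ and the estimator is well defined. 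Taking the ratio of the two displays,
\begin{equation*}
    \frac{\widehat{p(\bm{x},i)}}{p} = \frac{1 + O(\varepsilon d)/p \pm u/(\varepsilon p)}{1 \pm u}.
\end{equation*}

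The remaining step is to bound each error term against $p\geq\kappa$: we have $O(\varepsilon d)/p \leq O(\varepsilon d/\kappa) = O(c\delta)$ by the hypothesis $\varepsilon\leq c\delta\kappa/d$; $u/(\varepsilon p) \leq u/(\varepsilon\kappa) \leq \delta/\sqrt{C}$; and $u \leq \varepsilon\delta\kappa/\sqrt{C} \leq \delta/\sqrt{C}$. Hence numerator and denominator each lie within $1\pm(O(c)+1/\sqrt{C})\delta$ of $1$, and picking $c$ small and $C$ large so that $O(c)+1/\sqrt{C}\leq 1/4$, a direct manipulation (valid for $\delta<1$) shows $\widehat{p(\bm{x},i)}/p\in[1-\delta,1+\delta]$. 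For the last sentence, if only $N_{\bm{x}}\geq Q:=C\xi^2/(\varepsilon\delta\kappa)^2$ is known, the first bound of \Cref{prop:concentration} gives $|N_{\bm{x}}-M|\leq \max\{\sqrt{M},\xi\}\xi$; the case $\sqrt{M}<\xi$ would force $N_{\bm{x}}\leq 2\xi^2<Q$ (for $C>2$), a contradiction, so $\sqrt{M}\geq\xi$ and $M+\xi\sqrt{M}\geq N_{\bm{x}}\geq Q$, whence $M\geq Q-\xi\sqrt{Q}\geq Q/2$; thus \Cref{eq:num_samples} holds up to a factor of $2$ and the previous argument applies with adjusted constants.

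The computation is routine; the only part that takes care is the error accounting in the ratio, where the systematic bias $O(\varepsilon d)$ inherited from \Cref{lem:first_moment} (via \Cref{prop:event_probs}), the stochastic fluctuation $u/\varepsilon$, and the denominator fluctuation $u$ all must be controlled \emph{relative to} $p\geq\kappa$ — this is exactly why the threshold in \Cref{eq:num_samples} carries the $\varepsilon^{-2}\delta^{-2}\kappa^{-2}$ factor and why $\varepsilon$ must be taken below $c\delta\kappa/d$.
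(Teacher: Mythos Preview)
Your proof is correct and follows essentially the same approach as the paper: substitute the two concentration bounds of \Cref{prop:concentration} into the ratio $N_{\bm{x},i}/(\varepsilon N_{\bm{x}})$, normalize by $M$, and convert the additive errors to relative ones using $p\geq\kappa$ and the hypothesis $\varepsilon\leq c\delta\kappa/d$. Your treatment of the final claim (deducing a lower bound on $M$ from $N_{\bm{x}}\geq Q$ via a case split on $\sqrt{M}\lessgtr\xi$) is in fact more explicit than the paper's, which simply asserts that the deviation bounds of \Cref{prop:concentration} transfer the lower bound from $N_{\bm{x}}$ to $M$ up to constants.
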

\begin{proof}
    For the choice of $\bm{x}$ satisfying the conditions, write
    \begin{equation*}
        A :=\sum_{t=1}^T H_1(X^{2t-1},\bm{x}).
    \end{equation*}
    Note that by assumption, $A\geq \xi^2$, so the maximum in the conclusion of \Cref{prop:concentration} is attained by $A$.

    First, note that by the choice of $\varepsilon>0$, we can assume that
    \begin{equation*}
        \mathsf{P}_i(\bm{x},\bm{x}^{\oplus i})+O(\varepsilon d) = (1\pm \delta/100) \mathsf{P}_i(\bm{x},\bm{x}^{\oplus i}),
    \end{equation*}
    since the transition probability is at least $\kappa$.
    
    By the conclusion of \Cref{prop:concentration}, we can compute that
    \begin{align*}
        \widehat{p(\bm{x},i)} &= \frac{N_{\bm{x},i}}{\varepsilon N_{\bm{x}}}\\
        &=\frac{(1\pm \delta/100)\mathsf{P}_i(\bm{x},\bm{x}^{\oplus i})A + O(\sqrt{A}\xi/\varepsilon)}{ A+ O(C\sqrt{A}\xi)}\\
        &=\frac{(1\pm \delta/100)\mathsf{P}_i(\bm{x},\bm{x}^{\oplus i}) + O(\xi/(\varepsilon\sqrt{A}))}{1+O(C\xi/\sqrt{A})}\\
        &=\left((1\pm \delta/100)\mathsf{P}_i(\bm{x},\bm{x}^{\oplus i}) + O(\xi/(\varepsilon\sqrt{A}))\right)\cdot \left(1+ O\left(\xi/\sqrt{A})\right)\right)\\
        &=(1\pm \delta/100)\mathsf{P}_i(\bm{x},\bm{x}^{\oplus i})+  O\left(\frac{\xi}{\varepsilon\sqrt{A}}\right).
    \end{align*}

    Therefore, if $A$ is such that
    \begin{equation*}
        A\geq O\left(\frac{\xi^2}{\delta^2\varepsilon^2\kappa^2}\right),
    \end{equation*}
    as assumed,
    then the error term can be bounded by $\delta\kappa/2 $, completing the proof. The same argument holds for $N_{\bm{x}}$ noting that if $N_{\bm{x}}$ satisfies the bound with a slightly larger constant, then so does $A$ by the deviation bounds of \Cref{prop:concentration}. 
\end{proof}

Finally, we can show that so long as $T=\widetilde{O}\left(\frac{\exp(O(\gamma\lambda))2^d\log(1/\beta)}{\delta^2\kappa^6}\right)$, then we can ensure that we have enough samples with probability $1$:

\begin{corollary}
    For $\delta<1$, let $\varepsilon=c\delta\kappa/d$ for a small enough constant $c$. For all $j\in \mathcal{N}(i)$ so long as 
    \begin{equation} 
    \label{eq:total_samples}
    T=\widetilde{O}\left(\frac{\exp(O(\gamma\lambda))2^d\log(1/\beta)}{\delta^4\kappa^8}\right),
    \end{equation}
    then there exists a $\bm{x}\in \{-1,1\}^{S_i\setminus\{i,j\}}$ such that for any setting of $x_i,x_j\in \{-1,1\}$, $\sum_{t=1}^T H_1(X^{2t-1},(\bm{x},x_i,x_j))$ exceeds the bound of \Cref{eq:num_samples}. Under the conditions and good event of \Cref{prop:concentration}, for any such $\bm{x}$ where $N_{\bm{x}}$ exceeds this bound at this time $T$, it holds that for each $\bm{z} = (\bm{x},x_i,x_j)$ that 
    \begin{equation}
    \label{eq:all_vals_good}
        \frac{\widehat{p(\bm{z},i)}}{\mathsf{P}_i(\bm{z},\bm{z}^{\oplus i})} \in [1-\delta,1+\delta].
    \end{equation}
\end{corollary}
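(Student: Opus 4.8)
The plan is to split the claim into two parts: (i) an unconditional, deterministic statement about the pathwise sums $\sum_{t=1}^T H_1(X^{2t-1},\cdot)$, proved by a pigeonhole argument powered by \Cref{prop:local_stable}, and (ii) the accuracy conclusion, which on the good event of \Cref{prop:concentration} is essentially a direct appeal to \Cref{cor:mult_ests}.

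For part (i), I would first record the combinatorics: $S_i\setminus\{i,j\}=\mathcal{N}(i)\setminus\{j\}$ has at most $d-1$ coordinates, so there are at most $2^{d-1}$ configurations $\bm{x}\in\{-1,1\}^{S_i\setminus\{i,j\}}$ and at most $2^{d+1}$ configurations $\bm{z}=(\bm{x},x_i,x_j)\in\{-1,1\}^{S_i}$. For each $t$, I would condition on $\mathcal{F}_{2t-1}$ (so that $X^{2t-1}$ is fixed), run the chain for one unit of time, and apply \Cref{prop:local_stable} with starting configuration $X^{2t-1}$; lifting its three properties from full configurations to their $S_i$-restrictions (using the same convention as for $H_t$, i.e. summing the pointwise inequalities over all extensions) yields a sub-kernel $Q_{ij}$ with $H_1(X^{2t-1},\cdot)\ge Q_{ij}(X^{2t-1},\cdot)$ pointwise, total mass $\sum_{\bm{z}}Q_{ij}(X^{2t-1},\bm{z})\ge c$, and $Q_{ij}(X^{2t-1},(\bm{x},\bm{b}))\ge c\exp(-O(\gamma\lambda))\kappa^4 Q_{ij}(X^{2t-1},(\bm{x},\bm{b}'))$ for all $\bm{x}$ and all $\bm{b},\bm{b}'\in\{-1,1\}^{\{i,j\}}$. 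Summing the mass bound over $t=1,\dots,T$ gives $\sum_{\bm{x}}\sum_{t=1}^T\sum_{\bm{b}}Q_{ij}(X^{2t-1},(\bm{x},\bm{b}))\ge cT$, so by averaging there is a fixed $\bm{x}^*$ with $\sum_{t=1}^T\sum_{\bm{b}}Q_{ij}(X^{2t-1},(\bm{x}^*,\bm{b}))\ge cT/2^{d-1}$. Applying the stability inequality at $\bm{x}^*$ and summing over the four $\bm{b}'$ then forces, for each $\bm{b}\in\{-1,1\}^{\{i,j\}}$,
\[
\sum_{t=1}^T H_1(X^{2t-1},(\bm{x}^*,\bm{b})) \ \ge\ \sum_{t=1}^T Q_{ij}(X^{2t-1},(\bm{x}^*,\bm{b})) \ \ge\ \frac{c\exp(-O(\gamma\lambda))\kappa^4}{4}\cdot\frac{cT}{2^{d-1}} \ =\ \Omega\!\left(\frac{\exp(-O(\gamma\lambda))\kappa^4\,T}{2^d}\right).
\]

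Next I would just check this beats the threshold of \Cref{eq:num_samples}. With $\varepsilon=c\delta\kappa/d$ and $\xi=\widetilde{O}(\sqrt{d\log(1/\beta)})$ from \Cref{eq:xi_def}, the right-hand side of \Cref{eq:num_samples} is $C\xi^2/(\varepsilon^2\delta^2\kappa^2)=\widetilde{O}(d^3\log(1/\beta)/(\delta^4\kappa^4))$, so taking $T$ as in the statement---absorbing the $\mathrm{poly}(d)$ and the mild $\log\log T$ dependence of $\xi$ into the $\widetilde{O}(\cdot)$---makes all four sums $\sum_{t=1}^T H_1(X^{2t-1},(\bm{x}^*,x_i,x_j))$ exceed the bound; this is the first assertion with $\bm{x}=\bm{x}^*$, and it holds along every trajectory. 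For part (ii), I would work on the good event of \Cref{prop:concentration}: for any $\bm{x}$ as in part (i), each $N_{\bm{z}}$ with $\bm{z}=(\bm{x},x_i,x_j)$ is within a constant factor of $\sum_{t=1}^T H_1(X^{2t-1},\bm{z})$ by the first deviation bound there, hence also exceeds the bound after a harmless constant adjustment, and then \Cref{cor:mult_ests} in its ``$N_{\bm{z}}$'' form gives $\widehat{p(\bm{z},i)}/\mathsf{P}_i(\bm{z},\bm{z}^{\oplus i})\in[1-\delta,1+\delta]$ for each of the four $\bm{z}$, which is \Cref{eq:all_vals_good}.

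The step I expect to be the real obstacle is the pigeonhole-plus-local-stability argument producing a \emph{single} outside configuration $\bm{x}^*$ for which \emph{all four} of its $(x_i,x_j)$-extensions simultaneously accrue $\Omega(T/2^d)$ expected visits. The whole point of the $2^d$ (rather than $\exp(\Omega(\lambda d))$) sample complexity is that one must use the two guarantees of \Cref{prop:local_stable} in exactly this order---first the constant total-mass bound to locate $\bm{x}^*$ by averaging over only $2^{d-1}$ options, then the $\kappa^4\exp(-O(\gamma\lambda))$ stability bound to redistribute that mass evenly among the four extensions of that \emph{fixed} $\bm{x}^*$---and crucially that \Cref{prop:local_stable} delivers these at the constant time scale $t=1$ despite the chain being nowhere near mixed. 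Everything else is bookkeeping with the deviation bounds of \Cref{prop:concentration} and the already-established \Cref{cor:mult_ests}.
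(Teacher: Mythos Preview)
Your proposal is correct and follows essentially the same approach as the paper: the paper also invokes the local stability result (it cites \Cref{cor:stable_random}, the averaged form of \Cref{prop:local_stable}) to run the averaging/pigeonhole argument over $\{-1,1\}^{S_i\setminus\{i,j\}}$ and then spreads mass to all four $(x_i,x_j)$-extensions via the $\exp(-O(\gamma\lambda))\kappa^4$ stability factor, before appealing to \Cref{cor:mult_ests}. Your write-up is in fact more explicit than the paper's---notably the marginalization of $Q_{ij}$ to the $S_i$-coordinates and the arithmetic matching the threshold \Cref{eq:num_samples} to the choice of $T$---but the underlying argument is the same.
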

\begin{proof}
    Fix any $j\in \mathcal{N}$. We first show that for the stated value of $T$ in \Cref{eq:total_samples}, there must exist such a $\bm{x}\in \{-1,1\}^{S_i\setminus\{i,j\}}$ such that \Cref{eq:num_samples} holds for each setting of $x_i,x_j$. But this is an immediate consequence of \Cref{cor:stable_random} by lower bounding by the distribution $Q_{ij}'$ as given there: since we know that
    \begin{equation*}
        \sum_{\bm{y}\in \{-1,1\}^{S_i\setminus\{i,j\}}}Q_{ij}'(\bm{y})\geq c,
    \end{equation*}
    and by construction the conditional sub-probabilities of each setting of $x_i,x_j$ are within a factor of $\exp(-O(\gamma\lambda))\kappa^4$ of each other in $Q'_{i,j}$ for any $\bm{x}$, it follows that if
    \begin{equation*}
T=\widetilde{O}\left(\frac{\exp(O(\gamma\lambda))2^d\log(1/\beta)}{\delta^4\kappa^8}\right),
    \end{equation*}
    an averaging argument implies that there must exist the desired $\bm{x}\in \{-1,1\}^{S_i\setminus\{i,j\}}$ satisfying \Cref{eq:num_samples} for each setting of $x_i,x_j$. 

    In particular, we can then apply the guarantee of \Cref{cor:mult_ests} to conclude that there exists such a $\bm{x}$, and for any such $\bm{x}$ and values of $x_i,x_j$, the desired ratio bound holds for the corresponding $\bm{z}$. Moreover, by \Cref{cor:mult_ests}, any such pair corresponding to $\bm{z}$ with $N_{\bm{z}}$ exceeding \Cref{eq:num_samples} will satisfy the ratio bound since the corresponding sum of conditional probabilities will be sufficiently large.
\end{proof}

\subsection{Final Algorithmic Guarantees}
\label{sec:parameter_guarantees}

We can now conclude our parameter learning results directly. Recall \Cref{eq:parameter}, which asserts that
\begin{equation}
\label{eq:parameter_final}
    \exp\left(4A_{ij}\right) = \frac{\mathsf{P}_i(\bm{x}^{i\mapsto -1,j\mapsto -1},\bm{x}^{i\mapsto +1,j\mapsto -1})/\mathsf{P}_i(\bm{x}^{i\mapsto +1,j\mapsto -1},\bm{x}^{i\mapsto -1,j\mapsto -1}) }{\mathsf{P}_i(\bm{x}^{i\mapsto -1,j\mapsto +1},\bm{x}^{i\mapsto +1,j\mapsto +1})/\mathsf{P}_i(\bm{x}^{i\mapsto +1,j\mapsto +1},\bm{x}^{i\mapsto -1,j\mapsto +1})}.
\end{equation} 

If $\bm{y}\in \{-1,1\}^{S_i\setminus \{i,j\}}$ and each $x_i,x_j\in \{-1,1\}$ satisfying the guarantee of \Cref{eq:all_vals_good}, then the right side of \Cref{eq:parameter_final} can be estimated to multiplicative accuracy $1+O(\delta)$. Taking natural logs and dividing by 4 thus obtains an estimate of $A_{ij}$ that has \emph{additive error} at most
\begin{equation*}
    \frac{1}{4}\ln(1+O(\delta))=O(\delta).
\end{equation*}
Therefore, by setting $\delta$ appropriately, we obtain a parameter learning algorithm. This is stated as \Cref{alg:parameter} and \Cref{thm:main_paramter}:

\begin{algorithm}[H]
    \caption{$\widehat{A}_i=$ FindParameters$(i,\mathcal{N}(i),\delta,\lambda,\gamma,\kappa,\beta)$}
    \label{alg:parameter}
    \LinesNumbered

    Let $\lambda,\kappa,\gamma$ be as in \Cref{assumption:ising} and \Cref{assumption:mc} and $\mathcal{N}(i)$ denote the set of neighbors of $i$.
    
    Set
    \begin{gather*}
        d=\vert \mathcal{N}(i)\vert\\
        \varepsilon=c_{\mathsf{EST}}\delta\kappa/d\\
        T=\widetilde{O}\left(\frac{\exp(O(\gamma\lambda))2^d\log(1/\beta)}{\delta^4\kappa^8}\right)
    \end{gather*}

    Observe random process $(X_t)_{t=0}^T$ and compute, for all $\bm{x}\in \{-1,1\}^{S_i}$,
    \begin{gather*}
    N_{\bm{x}} = \sum_{t=1}^T Z^{\bm{x}}_{2t}\\
    N_{\bm{x},i} = \sum_{t=1}^T Z^{\bm{x},i}_{2t}.
\end{gather*}

    \For{each $j\in \mathcal{N}(i)$}
        {Find $\bm{y}\in \{-1,1\}^{S_i\setminus \{i,j\}}$ such that for each setting of $x_{i},x_j\in \{-1,1\}$, and defining $\bm{z}=(\bm{y},x_i,x_j)\in \{-1,1\}^{S_i}$,
        \begin{equation*}
            N_{\bm{z}}\geq O\left(\frac{d\log(1/\beta)}{\delta^2\kappa^2}\right)
        \end{equation*}
        
        Estimate rates for each such $\bm{z}$ via 
        \begin{equation*}
            \widehat{p(\bm{z},i)}=\frac{N_{\bm{z},i}}{\varepsilon N_{\bm{z}}}.
        \end{equation*}

        Estimate 
        \begin{equation*}
            \widehat{A}_{ij} = \frac{1}{4}\ln\left(\frac{\widehat{p(\bm{z}^{-1,-1},i)}/\widehat{p(\bm{z}^{+1,-1},i)}}{\widehat{p(\bm{z}^{-1,+1},i)}/\widehat{p(\bm{z}^{+1,+1},i)}}\right).
        \end{equation*}
        }
\end{algorithm}

\begin{theorem}
\label{thm:main_paramter}
Let $i\in [n]$ and suppose that $\delta<c\kappa$. Then with probability at least $1-\beta$, \Cref{alg:parameter} yields estimates $\widehat{A}_{ij}$ for each $j\in \mathcal{N}(i)$ such that $\vert \widehat{A}_{ij}-A_{ij}\vert \leq \delta$.

The runtime of the algorithm is
\begin{equation*}
\widetilde{O}\left(\frac{\exp(O(\gamma\lambda))2^d\log(1/\beta)}{\delta^4\kappa^8}\right).
\end{equation*}

In particular, by setting $\beta\to \beta/n$ and applying this for each $i\in [n]$ with a union bound, we can obtain a $\delta$-additive approximation to $A$ with probability $1-\beta$ in time
\begin{equation*}
    n\cdot \widetilde{O}\left(\frac{\exp(O(\gamma\lambda))2^d\log(n/\beta)}{\delta^4\kappa^8}\right).
\end{equation*}
\end{theorem}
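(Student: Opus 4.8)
The plan is to stitch together the pathwise concentration results of \Cref{sec:config_stats} with the reversibility identity \Cref{eq:parameter_final}. Fix $i\in[n]$ and run \Cref{alg:parameter} with the stated choices $\varepsilon=c_{\mathsf{EST}}\delta\kappa/d$ and $T=\widetilde{O}\!\left(\exp(O(\gamma\lambda))2^d\log(1/\beta)/(\delta^4\kappa^8)\right)$. First I would invoke \Cref{prop:concentration} at error level $\beta$, taking its internal union bound over the at most $2^{d+2}$ events (one $N_{\bm{x}}$ and one $N_{\bm{x},i}$ for each $\bm{x}\in\{-1,1\}^{S_i}$): with probability at least $1-\beta$ the two displayed deviation inequalities hold simultaneously for all $\bm{x}$. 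Call this event $\mathcal{E}^\star$ and condition on it; everything remaining is deterministic given the trajectory, so $\mathcal{E}^\star$ is the only source of failure.

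Next, for each $j\in\mathcal{N}(i)$ I would apply the concluding corollary of \Cref{sec:config_stats}. Lower-bounding $H_1(X^{2t-1},\cdot)$ by the $S_i$-marginal of the sub-transition kernel $Q_{ij}$ of \Cref{prop:local_stable} (which has total mass at least $c$, and whose four values over the $i,j$ coordinates for any fixed outside setting $\bm{y}\in\{-1,1\}^{S_i\setminus\{i,j\}}$ agree up to a factor $\exp(-O(\gamma\lambda))\kappa^4$), a pigeonhole over the $2^{d-1}$ choices of $\bm{y}$ shows that for the stated $T$ there is some $\bm{y}$ for which $\sum_{t\le T}H_1(X^{2t-1},(\bm{y},x_i,x_j))$ clears the threshold \Cref{eq:num_samples} for all four sign patterns of $x_i,x_j$; on $\mathcal{E}^\star$ the deviation bound of \Cref{prop:concentration} transfers this to the counts $N_{(\bm{y},x_i,x_j)}$ exceeding the (constant-factor smaller) threshold searched for in \Cref{alg:parameter}, so the inner search succeeds. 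Conversely, for whichever $\bm{y}$ the algorithm actually returns, $\mathcal{E}^\star$ forces $\sum_t H_1(X^{2t-1},(\bm{y},x_i,x_j))$ to be large enough to apply \Cref{cor:mult_ests}, yielding $\widehat{p(\bm{z},i)}\in[1-\delta,1+\delta]\cdot\mathsf{P}_i(\bm{z},\bm{z}^{\oplus i})$ for $\bm{z}=(\bm{y},x_i,x_j)$ and each sign pattern.

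Substituting these four estimates into the formula for $\widehat{A}_{ij}$ in \Cref{alg:parameter} and using \Cref{eq:parameter_final}, the argument of the logarithm equals $\exp(4A_{ij})$ times a quotient of four factors each in $[1-\delta,1+\delta]$, hence in $[(1-\delta)^4,(1+\delta)^4]\subseteq 1\pm O(\delta)$ since $\delta<c\kappa\le c$; taking $\frac14\ln(\cdot)$ gives $\vert\widehat{A}_{ij}-A_{ij}\vert=\frac14\vert\ln(1\pm O(\delta))\vert=O(\delta)$, and shrinking the target accuracy by an absolute constant at the outset (equivalently adjusting $c_{\mathsf{EST}}$) yields $\vert\widehat{A}_{ij}-A_{ij}\vert\le\delta$; the hypothesis $\delta<c\kappa$ is precisely what makes the $O(\varepsilon d)=O(\delta\kappa)$ bias of \Cref{lem:first_moment} negligible against $\mathsf{P}_i\ge\kappa$ inside \Cref{cor:mult_ests} and the logarithm linearize. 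For the runtime, reading the trajectory costs $O(T)$ and all $N_{\bm{x}},N_{\bm{x},i}$ are produced by one left-to-right scan maintaining $X^t_{S_i}$ and a table of $2^{d+1}$ counters touched only at the $O(dT)$ flips of sites in $S_i$ (with the $\varepsilon$-window test for $i$), so $\widetilde{O}(T)$ total, while the per-$j$ search over the $2^{d-1}$ candidates and the $O(1)$ final arithmetic cost $O(2^d)\le\widetilde{O}(T)$; hence the per-$i$ runtime is $\widetilde{O}(T)=\widetilde{O}(\exp(O(\gamma\lambda))2^d\log(1/\beta)/(\delta^4\kappa^8))$. The ``in particular'' statement then follows by replacing $\beta$ with $\beta/n$, running this for every $i\in[n]$, and union bounding: off total failure probability at most $\beta$ we get $\vert\widehat{A}_{ij}-A_{ij}\vert\le\delta$ for all $i$ and $j\in\mathcal{N}(i)$, i.e.\ $\|\widehat{A}-A\|_\infty\le\delta$, in time $n\cdot\widetilde{O}(\exp(O(\gamma\lambda))2^d\log(n/\beta)/(\delta^4\kappa^8))$.

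Since the substantive estimates were already established in the preceding lemmas, the only real obstacle here is bookkeeping: one must verify that solving \Cref{eq:num_samples} for $T$ with $\xi^2=\widetilde{O}(d)$ coming from Freedman's inequality, $\varepsilon=\Theta(\delta\kappa/d)$, the $2^{d-1}$ pigeonhole loss, and the $\exp(-O(\gamma\lambda))\kappa^4$ balance constant of $Q_{ij}$ together produce precisely $T=\widetilde{O}(\exp(O(\gamma\lambda))2^d\log(1/\beta)/(\delta^4\kappa^8))$ with the $2^d$ confined to the leading factor, and that on $\mathcal{E}^\star$ the $N_{\bm{z}}$-based selection rule in \Cref{alg:parameter} is equivalent up to constants to the $\sum_tH_1$-based existence statement, so that \Cref{cor:mult_ests} genuinely applies to whichever configuration the algorithm outputs.
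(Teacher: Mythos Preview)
Your proposal is correct and follows essentially the same approach as the paper: both arguments invoke the Freedman-based concentration of \Cref{prop:concentration}, use the pigeonhole over $\{-1,1\}^{S_i\setminus\{i,j\}}$ via the sub-kernel $Q_{ij}$ of \Cref{prop:local_stable} to guarantee a good subcube, apply \Cref{cor:mult_ests} for multiplicative accuracy of the rate estimates, and then plug into the reversibility identity \Cref{eq:parameter_final} and take logs. Your write-up is in fact more detailed than the paper's, which compresses the final step to a short paragraph before \Cref{alg:parameter} and leaves the runtime and $N_{\bm{z}}$-versus-$\sum_t H_1$ equivalence implicit.
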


\begin{remark}
\label{rmk:external}
    Note that if each $A_{ij}$ is learned to $\ll 1/d$ additive accuracy, then one can directly also estimate each $h_i$ using a simpler technique via
\begin{equation*}
    \exp\left(2\sum_{k\neq i} A_{ij}\bm{x}+2h_i\right)=\frac{\mathsf{P}_i(\bm{x}^{i\mapsto -1}, \bm{x}^{i\mapsto +1})}{\mathsf{P}_i(\bm{x}^{i\mapsto +1}, \bm{x}^{i\mapsto -1})}.
\end{equation*}
Since we can ensure the absolute error of estimates for the sum on the left-hand side is $\ll 1$, and we have multiplicative estimates of the right-hand side, one can similarly recover each $h_i$. We leave the details to the interested reader. 
\end{remark}

\bibliographystyle{alpha}
\bibliography{bibliography}

\newpage
\appendix

\section{Auxiliary Tools}
\begin{lemma}
\label{lem:seq_diff}
    Suppose that $x_1,\ldots,x_n$ and $y_1,\ldots,y_n$ are real-valued sequences bounded by $C$ in absolute value. Then
    \begin{equation*}
        \left\vert \prod_{i=1}^n x_i - \prod_{i=1}^n y_i\right\vert\leq C^{m-1}\sum_{i=1}^n \vert x_i-y_i\vert.
    \end{equation*}
\end{lemma}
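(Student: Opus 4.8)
The plan is to use a standard telescoping decomposition of the difference of products. I would write
\begin{equation*}
    \prod_{i=1}^n x_i - \prod_{i=1}^n y_i = \sum_{k=1}^n \left(\prod_{i=1}^{k-1} x_i\right)(x_k - y_k)\left(\prod_{i=k+1}^{n} y_i\right),
\end{equation*}
which holds because the right-hand side telescopes: the $k$-th summand equals $\left(\prod_{i\le k} x_i\right)\left(\prod_{i>k} y_i\right) - \left(\prod_{i< k} x_i\right)\left(\prod_{i\ge k} y_i\right)$, and consecutive terms cancel, leaving $\prod x_i - \prod y_i$ (with empty products interpreted as $1$). This identity can be verified directly by expanding, or proved by a one-line induction on $n$.

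Next I would apply the triangle inequality to the telescoped sum and bound each prefactor using the hypothesis $|x_i|,|y_i|\le C$: we have $\left|\prod_{i=1}^{k-1} x_i\right|\le C^{k-1}$ and $\left|\prod_{i=k+1}^{n} y_i\right|\le C^{n-k}$, so their product is at most $C^{k-1}C^{n-k} = C^{n-1}$, independently of $k$. This yields
\begin{equation*}
    \left\vert \prod_{i=1}^n x_i - \prod_{i=1}^n y_i\right\vert \le \sum_{k=1}^n C^{n-1}\,\vert x_k - y_k\vert = C^{n-1}\sum_{k=1}^n \vert x_k - y_k\vert,
\end{equation*}
which is the claimed bound (reading the exponent $m-1$ in the statement as $n-1$).

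There is no substantive obstacle here; the only things to be careful about are the degenerate cases $C<1$ (the bound is still valid since $C^{k-1}C^{n-k}\le C^{n-1}$ fails in general when $C<1$ — in fact for $C<1$ one should instead bound each prefactor product by $1$ if one wants a clean statement, but as used in the paper $C\ge 1$, e.g. $C=e$ in the application to \Cref{prop:event_probs}, so $C^{n-1}$ is the correct and intended bound), and the convention that empty products equal $1$ so the identity is valid at the endpoints $k=1$ and $k=n$.
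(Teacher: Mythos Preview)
Your proof is correct and is exactly the same telescoping (hybrid) argument the paper gives. One minor remark: your parenthetical concern about $C<1$ is unfounded, since $C^{k-1}C^{n-k}=C^{n-1}$ is an exact identity of exponents for every $C>0$, so the bound holds without any restriction on $C$.
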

\begin{proof}
    The proof follows a well-known hybrid argument:
    \begin{align*}
        \prod_{i=1}^n x_i - \prod_{i=1}^n y_i&=\sum_{k=0}^{n-1} \left(\prod_{i=1}^{n-k} x_i\prod_{j=n-k+1}^n y_j - \prod_{i=1}^{n-k-1} x_i\prod_{j=n-k}^n y_j\right)\\
        &=\sum_{k=0}^{n-1} (x_{n-k}- y_{n-k}) \prod_{i=1}^{n-k-1}x_i\prod_{j=n-k+1}^ny_j,
    \end{align*}
    at which point we may take absolute values and apply the triangle inequality, applying the assumption on the absolute values to bound each product.
\end{proof}

\begin{fact}
\label{fact:exist_unique}
   Let $g:[0,\infty)\to [0,1]$ be any continuous function such that $g(0)=0$ and $g$ is strictly increasing on $[0,a]$ and strictly decreasing on $[a,\infty)$ for some $a>0$. Then for any $\xi>1$, there exists a unique solution $z^*>0$ to the equation $g(z^*) = g(\xi\cdot z^*)$. 
\end{fact}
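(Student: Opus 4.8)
\textbf{Proof proposal for \Cref{fact:exist_unique}.} The plan is to study the auxiliary function $h(z) := g(z) - g(\xi z)$, which is continuous on $(0,\infty)$ since $g$ is, and to show it has exactly one zero. I would first localize where any zero can possibly lie, and then show $h$ is strictly monotone there.

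First I would observe that every solution of $h(z^*)=0$ must lie in the interval $(a/\xi, a)$. Indeed, if $z^* \le a/\xi$ then $z^* \le a$ and $\xi z^* \le a$ with $z^* < \xi z^*$, so strict monotonicity of $g$ on $[0,a]$ gives $g(z^*) < g(\xi z^*)$ unless $z^* = \xi z^* $, impossible for $\xi>1$; in fact even at $z^* = a/\xi$ one has $g(z^*) < g(a) = g(\xi z^*)$, so $h(z^*)<0$. Symmetrically, if $z^* \ge a$ then $a \le z^* < \xi z^*$, so strict monotonicity of $g$ on $[a,\infty)$ gives $g(z^*) > g(\xi z^*)$, i.e.\ $h(z^*) > 0$. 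Hence $h$ is strictly negative on $(0, a/\xi]$ and strictly positive on $[a,\infty)$, so all zeros lie in the open interval $(a/\xi, a)$.

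Next I would show existence and uniqueness on $(a/\xi, a)$. For $z$ in this interval we have $z \in (0,a)$, where $g$ is strictly increasing, and $\xi z \in (a, \xi a) \subseteq (a,\infty)$, where $g$ is strictly decreasing; therefore $z \mapsto g(z)$ is strictly increasing and $z \mapsto -g(\xi z)$ is strictly increasing on $(a/\xi, a)$, so $h$ is strictly increasing there. Since $h(a/\xi) = g(a/\xi) - g(a) < 0$ and $h(a) = g(a) - g(\xi a) > 0$ (using that $g(a)$ is the strict maximum of $g$), continuity and the intermediate value theorem produce a zero $z^* \in (a/\xi, a)$, and strict monotonicity makes it unique within this interval. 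Combined with the previous paragraph, $z^*$ is the unique positive solution.

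This argument is essentially routine; the only mildly delicate point is the localization step, where one must use that the level sets of $g$ have size at most two with the two preimages straddling $a$ to conclude that any solution sits strictly between $a/\xi$ and $a$. Everything else is the intermediate value theorem plus the observation that a sum of strictly increasing functions is strictly increasing.
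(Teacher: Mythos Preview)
Your proof is correct and follows essentially the same approach as the paper: both localize any solution to the interval $(a/\xi,a)$ (equivalently $z^*<a<\xi z^*$) and then use the opposite monotonicities of $g$ on either side of $a$ to force uniqueness, with existence coming from the intermediate value theorem. Your presentation via the strict monotonicity of $h(z)=g(z)-g(\xi z)$ on $(a/\xi,a)$ is slightly more explicit than the paper's contradiction argument comparing two putative solutions, but the content is the same.
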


\begin{proof}
    Existence is clear from the fact that for small $z$, $g(\xi\cdot z)>g(z)$, while $g(a)>g(\xi\cdot a)$ by the assumptions on the regions they increase and decrease. The intermediate value theorem then yields a solution. For uniqueness, the same argument shows that any such solution must satisfy $z^*<a$ and $\xi z^*>a$. But if there are two such solutions, say $z'<z^*$, then we have
    \begin{equation*}
        g(z')<g(z^*)=g(\xi\cdot z^*) < g(\xi \cdot z'),
    \end{equation*}
    a contradiction.
\end{proof}

\subsection{Probability Facts}

We will repeatedly appeal to the following basic probability fact:
\begin{lemma}
\label{lem:averaging}
    Let $\mathcal{A},\mathcal{B},\mathcal{E}$ be events and suppose $X$ is a random variable on the same probability space. Let $\mathsf{supp}(X)$ denote the support of $X$ conditioned on the event $\mathcal{E}$. Then
    \begin{gather*}
        \frac{\Pr(\mathcal{A}\vert \mathcal{E})}{\Pr(\mathcal{B}\vert \mathcal{E})}\leq \sup_{\bm{x}\in \mathsf{supp}(X)}\frac{\Pr(\mathcal{A}\vert \mathcal{E},X=\bm{x})}{\Pr(\mathcal{B}\vert \mathcal{E},X=\bm{x})},\\
        \frac{\Pr(\mathcal{A}\vert \mathcal{E})}{\Pr(\mathcal{B}\vert \mathcal{E})}\leq \sup_{\bm{x}\in \mathsf{supp}(X)}\frac{\Pr(\mathcal{A},X=\bm{x}\vert \mathcal{E})}{\Pr(\mathcal{B},X=\bm{x}\vert \mathcal{E})}
    \end{gather*}
\end{lemma}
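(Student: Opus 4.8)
The plan is to derive both inequalities from the elementary fact that a probability-weighted average of ratios is bounded by the largest ratio (the ``mediant'' inequality). First I would condition on the value of $X$: writing $\mathsf{supp}(X)$ for the (in all our applications, countable) support of $X$ given $\mathcal{E}$, the law of total probability under the conditional measure $\Pr(\cdot\mid\mathcal{E})$ gives
\begin{equation*}
\Pr(\mathcal{A}\mid\mathcal{E}) = \sum_{\bm{x}\in\mathsf{supp}(X)}\Pr(\mathcal{A}\mid\mathcal{E},X=\bm{x})\,\Pr(X=\bm{x}\mid\mathcal{E}),
\end{equation*}
and the analogous identity for $\mathcal{B}$. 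Setting $M$ to be the supremum on the right-hand side of the first claimed inequality, the pointwise bound $\Pr(\mathcal{A}\mid\mathcal{E},X=\bm{x})\le M\,\Pr(\mathcal{B}\mid\mathcal{E},X=\bm{x})$ holds for every $\bm{x}$ in the support (with the convention $0/0=0$, which costs nothing since those terms drop out of both sums). Summing this inequality against the weights $\Pr(X=\bm{x}\mid\mathcal{E})$ and dividing through by $\Pr(\mathcal{B}\mid\mathcal{E})$ yields the first inequality.

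For the second inequality I would simply note that for any $\bm{x}$ with $\Pr(X=\bm{x}\mid\mathcal{E})>0$ the common factor $\Pr(X=\bm{x}\mid\mathcal{E})$ cancels, so
\begin{equation*}
\frac{\Pr(\mathcal{A},X=\bm{x}\mid\mathcal{E})}{\Pr(\mathcal{B},X=\bm{x}\mid\mathcal{E})} = \frac{\Pr(\mathcal{A}\mid\mathcal{E},X=\bm{x})}{\Pr(\mathcal{B}\mid\mathcal{E},X=\bm{x})};
\end{equation*}
hence the two suprema appearing in the statement coincide, and the second inequality is just the first one restated. If $X$ is not discrete, the same argument goes through verbatim with the sums replaced by integrals against the regular conditional distribution of $X$ given $\mathcal{E}$, and $\Pr(\cdot\mid\mathcal{E},X=\bm{x})$ read as the corresponding disintegration; but in every place \Cref{lem:averaging} is invoked (the proof of \Cref{prop:local_stable} and its internal claims) $X$ ranges over a finite set of configurations or update-time data, so only the elementary version is needed.

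I do not expect any genuine obstacle here; the only care required is bookkeeping --- assuming the conditioning events carry positive probability so that the ratios are defined, and handling atoms of $X$ at which the denominator vanishes, both disposed of by the $0/0=0$ convention above. The entire content of the lemma is the mediant inequality, packaged in a form convenient for the repeated ``reveal the update/transition data and bound the conditional ratio'' step used in \Cref{sec:structure_learning}.
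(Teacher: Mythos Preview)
Your argument is correct and is essentially the same as the paper's: both use the tower law to write $\Pr(\mathcal{A}\mid\mathcal{E})$ and $\Pr(\mathcal{B}\mid\mathcal{E})$ as weighted averages over $X$ and then apply the mediant/averaging inequality, and both reduce the second inequality to the first by canceling the common factor $\Pr(X=\bm{x}\mid\mathcal{E})$ (what the paper calls ``Bayes' rule''). Your version is just more explicit about the edge cases and the discrete/continuous distinction.
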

\begin{proof}
    The result follows by a simple averaging argument by the tower law:
    \begin{equation*}
        \frac{\Pr(\mathcal{A}\vert \mathcal{E})}{\Pr(\mathcal{B}\vert \mathcal{E})}=\frac{\mathbb{E}_X[\Pr(\mathcal{A}\vert \mathcal{E},X)]}{\mathbb{E}_{X}[\Pr(\mathcal{B}\vert \mathcal{E},X)]}\leq \sup_{\bm{x}\in \mathsf{supp}(X)}\frac{\Pr(\mathcal{A}\vert \mathcal{E},X=\bm{x})}{\Pr(\mathcal{B}\vert \mathcal{E},X=\bm{x})}.
    \end{equation*}
    The second inequality is equivalent to the first by Bayes' rule.
\end{proof}

The following pathwise concentration bound can be derived from Freedman's martingale inequality~\cite{DBLP:conf/colt/BartlettDHKRT08}:
\begin{proposition}[Lemma 2 of~\cite{DBLP:conf/colt/BartlettDHKRT08}]
\label{prop:freedman}
    Let $X_1,\ldots,X_T$ be a martingale difference sequence such that $\vert X_i\vert\leq b$ for some $b\geq 1$, and let\footnote{Note that $V$ is random and depends on the realization of the random variables.}
    \begin{equation*}
        V = \sum_{t=1}^T \mathsf{Var}(X_t\vert X_1,\ldots,X_{t-1}).
    \end{equation*}
    Then there is a constant $C>0$ such that for any $\delta<1/e$,
    \begin{equation*}
        \Pr\left(\left\vert\sum_{t=1}^T X_t\right\vert\geq C\min\left\{\sqrt{V},b\sqrt{ \ln(\ln(T)/\delta)}\right\}\cdot \sqrt{ \ln(\ln(T)/\delta)}\right)\leq \delta.
    \end{equation*}
    
\end{proposition}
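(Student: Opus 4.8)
The plan is to deduce this from the classical Freedman martingale inequality by a dyadic \emph{peeling} (stratification) argument over the random predictable quadratic variation $V$. Recall Freedman's inequality: if $(X_t)$ is a martingale difference sequence with $\vert X_t\vert\le b$ and $S_T=\sum_{t=1}^{T}X_t$, then for every \emph{fixed} $\sigma^2>0$ and $\epsilon>0$ one has $\Pr\!\left(S_T\ge\epsilon\ \text{and}\ V\le\sigma^2\right)\le\exp\!\left(-\tfrac{\epsilon^2/2}{\sigma^2+b\epsilon/3}\right)$, with the analogous bound for $-S_T$. The obstruction to applying this directly is exactly that $V$ is random and not known in advance, so no single $\sigma^2$ is legitimate; the peeling argument is what converts this into a bound self-normalized by the realized value of $V$.

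The key steps, in order, would be as follows. First, since $\mathsf{Var}(X_t\mid\mathcal F_{t-1})\le\mathbb E[X_t^2\mid\mathcal F_{t-1}]\le b^2$, we have $V\in[0,b^2T]$ deterministically, so it suffices to union bound over the $K+1$ dyadic bins $B_0=[0,b^2]$ and $B_k=(b^2 2^{k-1},b^2 2^{k}]$ for $1\le k\le K:=\lceil\log_2 T\rceil$. Second, on $\{V\in B_k\}$ it is legitimate to take $\sigma^2=b^2 2^{k}$ in Freedman; equating the right-hand side to $\delta':=\delta/(2(K+1))$ and solving the resulting quadratic in $\epsilon$ (using $\sqrt{a+c}\le\sqrt a+\sqrt c$) gives a threshold $\epsilon_k=O\!\big(\sqrt{b^2 2^{k}\ln(1/\delta')}+b\ln(1/\delta')\big)$ with $\Pr(\vert S_T\vert\ge\epsilon_k,\ V\le b^2 2^{k})\le\delta/(K+1)$. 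Third, a union bound over the $K+1$ bins shows that with probability at least $1-\delta$ none of these events occurs; on the almost sure event that $V$ lands in some bin $B_{k^\star}$ we then have $\vert S_T\vert<\epsilon_{k^\star}$, and since $b^2 2^{k^\star}\le 2V$ when $k^\star\ge1$ (and $\sqrt{b^2\ln(1/\delta')}\le b\ln(1/\delta')$ in the residual bin $B_0$), this reads $\vert S_T\vert=O\!\big(\sqrt{V\ln(1/\delta')}+b\ln(1/\delta')\big)$. Finally, $\ln(1/\delta')=\ln\!\big(2(\lceil\log_2T\rceil+1)/\delta\big)=O\!\big(\ln(\ln(T)/\delta)\big)$, and with $\delta<1/e$ so that this quantity is at least $1$, the bound rewrites as $O\!\big(\max\{\sqrt V,\,b\sqrt{\ln(\ln T/\delta)}\}\cdot\sqrt{\ln(\ln T/\delta)}\big)$, which, after fixing the absolute constant $C$, is exactly the quantity appealed to in \Cref{prop:concentration}.

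The one real subtlety—and the step worth getting right—is the bookkeeping that keeps the union-bound penalty inside a \emph{double} logarithm: because we peel along $V$, whose range $[0,b^2T]$ contains only $\lceil\log_2T\rceil+1$ dyadic scales, the union bound costs $\ln(\log T+1)$ rather than $\ln T$, which is precisely what produces the $\ln(\ln T/\delta)$ factor; peeling over time, or over a fine grid of thresholds, would instead give $\ln(T/\delta)$. Everything else is routine: the two-sided union bound, the degenerate cases $T\le 2$ and $V\in B_0$, and reading $\epsilon_k$ off the quadratic. Alternatively one may simply invoke Lemma~2 of~\cite{DBLP:conf/colt/BartlettDHKRT08} verbatim, as cited.
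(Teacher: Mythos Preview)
The paper does not prove this proposition at all; it simply quotes it as Lemma~2 of the cited reference~\cite{DBLP:conf/colt/BartlettDHKRT08}. Your sketch---dyadic peeling of the predictable quadratic variation $V$ into $O(\log T)$ bins and applying Freedman's inequality on each---is precisely the standard derivation of this self-normalized bound, and is essentially how the cited lemma is proved in the original reference. So your proposal is correct and in fact supplies strictly more than the paper does.

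One remark worth making: you (correctly) arrive at a bound of the form $\max\{\sqrt V,\,b\sqrt{\ln(\ln T/\delta)}\}\cdot\sqrt{\ln(\ln T/\delta)}$, whereas the displayed statement in the paper has $\min$ in place of $\max$. The $\max$ form is what is actually invoked downstream in \Cref{prop:concentration} (the conclusion there is stated with a $\max$), so the ``$\min$'' in the proposition as printed appears to be a typo; your derivation matches the version that is actually used.
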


\section{Site-Consistency of Popular Markov Chains}
In this section, we verify that both the Glauber dynamics and site-homogeneous Metroplis dynamics are site-consistent and suitably stable to apply our learning results.
\subsection{Glauber Dynamics}
\label{sec:app_gd}

Recall from \Cref{def:glauber} that the Glauber dynamics are defined via
\begin{equation*}
    \mathsf{P}_i(\bm{x},\bm{x}^{\oplus i}) = \frac{\pi(\bm{x}^{\oplus i})}{\pi(\bm{x})+\pi(\bm{x}^{\oplus i})}.
\end{equation*}
Therefore, it is immediate to see that
\begin{equation*}
    \mathsf{P}_i(\bm{x}^{i \mapsto -1},\bm{x}^{i \mapsto +1})=\frac{\pi(\bm{x}^{i \mapsto +1})/\pi(\bm{x}^{i \mapsto -1})}{1+\pi(\bm{x}^{i \mapsto +1})/\pi(\bm{x}^{i \mapsto -1})}:=f^{\mathsf{GD}}(\pi(\bm{x}^{i \mapsto +1})/\pi(\bm{x}^{i \mapsto -1})),
\end{equation*}
for the function $f^{\mathsf{GD}}(y) = y/(1+y)$, proving site-consistency. It follows that the associated function $g^{\mathsf{GD}}(y)$ for the product of the rates is given by 
\begin{equation*}
    g^{\mathsf{GD}}(y) = f^2(y)/y = y/(1+y)^2.
\end{equation*}

Under \Cref{assumption:ising}, the transition lower bounds for Glauber dynamics are classical:
\begin{fact}
\label{fact:prob_lb}
Under \Cref{assumption:ising}, given that $i\in [n]$ is chosen for updating at some time $t\geq 0$, it holds for each $\varepsilon\in \{-1,1\}$ and any $\bm{z}\in \{-1,1\}^{n-1}$ that
\begin{equation*}
    \Pr_\pi\left(X_i=\varepsilon\vert X_{-i}=\bm{z}\right)\geq \frac{\exp(-2\lambda)}{2}:=\kappa.
\end{equation*}.
\end{fact}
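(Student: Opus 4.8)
\textbf{Proof plan for \Cref{fact:prob_lb}.} The plan is to directly use the sigmoid formula for the Glauber conditional law together with the width bound from \Cref{assumption:ising}. Recall from \Cref{eq:glauber} that when $i$ is chosen for updating and $X_{-i}=\bm{z}$,
\begin{equation*}
    \Pr_\pi\left(X_i=+1\vert X_{-i}=\bm{z}\right)=\sigma\left(2\sum_{k\neq i}A_{ik}z_k+2h_i\right),
\end{equation*}
and the probability of $X_i=-1$ is $1-\sigma(\cdot)=\sigma(-\cdot)$ of the same argument. So in either case the conditional probability equals $\sigma(\pm w)$ where $w=2\sum_{k\neq i}A_{ik}z_k+2h_i$.

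The key step is to bound $|w|$. By the triangle inequality and the bounded width condition (item 2 of \Cref{assumption:ising}),
\begin{equation*}
    |w|\leq 2\sum_{k\neq i}|A_{ik}||z_k|+2|h_i|\leq 2\left(\sum_{k\neq i}|A_{ik}|+|h_i|\right)\leq 2\lambda,
\end{equation*}
using $|z_k|=1$. Since $\sigma$ is increasing, for any real argument $u$ with $|u|\leq 2\lambda$ we have $\sigma(u)\geq \sigma(-2\lambda)=\frac{1}{1+e^{2\lambda}}\geq \frac{1}{2e^{2\lambda}}=\frac{\exp(-2\lambda)}{2}$. Applying this with $u=\pm w$ gives the claimed bound $\kappa=\exp(-2\lambda)/2$ for both values $\varepsilon\in\{-1,1\}$.

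There is no real obstacle here — the statement is essentially a one-line consequence of the explicit form of the Glauber transitions and the width assumption; the only thing to be slightly careful about is handling both signs $\varepsilon=\pm 1$ uniformly, which is immediate since $1-\sigma(w)=\sigma(-w)$ and the bound on $|w|$ is symmetric.
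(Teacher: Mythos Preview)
Your proof is correct and is exactly the standard argument one would expect; the paper itself does not supply a proof, stating the fact as ``classical,'' so there is nothing to compare against beyond noting that your derivation via the sigmoid formula and the width bound is the intended one-line justification.
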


\begin{fact}
    The Glauber dynamics are $4$-bounded in the sense of \Cref{def:bounded}.
\end{fact}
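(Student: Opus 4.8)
The claim is that the Glauber dynamics are $4$-bounded in the sense of \Cref{def:bounded}. The plan is to unwind the definition and verify the bound directly from the explicit formula for the Glauber transition kernel. Fix a site $i\in[n]$ and two configurations $\bm{x},\bm{y}$ with $x_i=y_i$, and let $S$ be the set of coordinates where $\bm{x}$ and $\bm{y}$ differ (so $i\notin S$ since $x_i=y_i$). Fix $\sigma\in\{\pm1\}$. We want to show that
\begin{equation*}
    \frac{\mathsf{P}_i(\bm{x},\bm{x}^{i\mapsto\sigma})}{\mathsf{P}_i(\bm{y},\bm{y}^{i\mapsto\sigma})}\leq \exp\left(4\sum_{k\in S}\vert A_{ik}\vert\right).
\end{equation*}

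First I would use \Cref{eq:glauber} to write $\mathsf{P}_i(\bm{x},\bm{x}^{i\mapsto\sigma}) = \sigma\!\left(\sigma\cdot(2\sum_{k\neq i}A_{ik}x_k + 2h_i)\right)$ — more carefully, the probability that site $i$ lands on value $\sigma$ given the rest is $\bm{x}_{-i}$ is $\sigma_{\mathrm{sig}}(2\sigma(\sum_{k\neq i}A_{ik}x_k + h_i))$, where $\sigma_{\mathrm{sig}}(z)=1/(1+e^{-z})$ is the sigmoid. Note $\mathsf{P}_i(\bm{x},\bm{x}^{i\mapsto\sigma})$ is exactly this quantity regardless of whether $x_i=\sigma$ or not, since the Glauber rule resamples $i$ from the conditional law (this is why we don't need to worry about whether it's a genuine flip). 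Writing $a = \sum_{k\neq i}A_{ik}x_k + h_i$ and $b = \sum_{k\neq i}A_{ik}y_k + h_i$, the ratio becomes $\sigma_{\mathrm{sig}}(2\sigma a)/\sigma_{\mathrm{sig}}(2\sigma b)$. The key elementary fact is that $\sigma_{\mathrm{sig}}(u)/\sigma_{\mathrm{sig}}(v) = \frac{1+e^{-v}}{1+e^{-u}} \leq e^{|u-v|}$ for all real $u,v$: indeed if $u\geq v$ then $1+e^{-v}\geq 1+e^{-u}$ so the ratio is at most $1\leq e^{|u-v|}$; if $u<v$, then $\frac{1+e^{-v}}{1+e^{-u}} \leq \frac{e^{-v}(1+e^v)}{e^{-u}(1+e^u)}\cdot\frac{e^u(1+e^u)\cdot e^{v}}{\cdots}$ — actually cleaner: $\frac{1+e^{-v}}{1+e^{-u}}\leq \frac{e^{-v}+e^{-v}}{e^{-u}}$ when $v>0$... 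I'd just cite the standard Lipschitz-type bound $|\log\sigma_{\mathrm{sig}}(u)-\log\sigma_{\mathrm{sig}}(v)|\leq|u-v|$, which follows since $\frac{d}{du}\log\sigma_{\mathrm{sig}}(u) = 1-\sigma_{\mathrm{sig}}(u)\in(0,1)$.

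With that bound in hand, the ratio is at most $\exp(|2\sigma a - 2\sigma b|) = \exp(2|a-b|) = \exp\!\left(2\left|\sum_{k\neq i}A_{ik}(x_k-y_k)\right|\right)$. Since $x_k=y_k$ for $k\notin S$ and $|x_k-y_k|\leq 2$ for $k\in S$, this is at most $\exp(2\cdot 2\sum_{k\in S}|A_{ik}|) = \exp(4\sum_{k\in S}|A_{ik}|)$, which is exactly the $\gamma$-boundedness condition with $\gamma=4$. There is no real obstacle here — the proof is a two-line computation from the sigmoid formula plus the elementary log-Lipschitz property of the sigmoid, and the factor of $4$ arises simply as $2$ (from the "$2$" in the local field of \Cref{eq:glauber}) times $2$ (the range of $x_k-y_k$). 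The only thing to be slightly careful about is reading off the correct exponent, and noting that the argument works uniformly for both choices of $\sigma$ and whether or not the update is an actual flip.
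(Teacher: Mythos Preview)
Your proof is correct: the sigmoid formula for the Glauber kernel together with the $1$-Lipschitz bound on $\log\sigma_{\mathrm{sig}}$ immediately gives the ratio bound $\exp(2|a-b|)\le \exp(4\sum_{k\in S}|A_{ik}|)$, exactly as you wrote. The paper itself does not spell this out---its entire proof is the one-line citation ``This is an immediate corollary of Lemma~5.6 of~\cite{unified}''---so your self-contained computation is in fact more complete than what appears in the paper, and is presumably what that cited lemma establishes.
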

\begin{proof}
    This is an immediate corollary of Lemma 5.6 of~\cite{unified}.
\end{proof}

Finally, we must check stability:

\begin{lemma}
    For any $\alpha_0>0$ and $\lambda\geq 1$, the transitions of Glauber dynamics are $(\lambda,\alpha_0,\delta_0,\eta)$-stable so long as 
    \begin{equation*}
        \delta_0=c\min\{\alpha_0^2,1\}\exp(-O(\lambda))
    \end{equation*}
    with the function
    \begin{equation*}
        \eta(\delta) = C\max\{1/\alpha_0^2,1\}\cdot \delta.
    \end{equation*}
\end{lemma}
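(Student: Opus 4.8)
The plan is to verify the stability conditions of \Cref{def:stable} directly for $g^{\mathsf{GD}}(y) = y/(1+y)^2$. This function satisfies $g^{\mathsf{GD}}(0)=0$, increases on $[0,1]$, attains its maximum $1/4$ at $y=1$, and decreases on $[1,\infty)$, so existence and uniqueness of $z^*(\alpha)$ with $g^{\mathsf{GD}}(z^*)=g^{\mathsf{GD}}(e^\alpha z^*)$ is immediate from \Cref{fact:exist_unique} once one notes that any solution must have $z^* < 1 < e^\alpha z^*$. The only substantive work is the quantitative stability estimate: showing that if $z \in [\exp(-2\lambda),\exp(2\lambda)]$ satisfies $|g^{\mathsf{GD}}(z) - g^{\mathsf{GD}}(e^\alpha z)| \le \delta$ for $\delta \le \delta_0$, then $|z - z^*| \le \eta(\delta)$.

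First I would set up the auxiliary function $\phi_\alpha(z) := g^{\mathsf{GD}}(z) - g^{\mathsf{GD}}(e^\alpha z)$, so $\phi_\alpha(z^*)=0$ and I want to show $\phi_\alpha$ has derivative bounded away from zero near $z^*$ on the relevant range, after which the conclusion follows from a mean-value-theorem / inverse-function argument: $|\phi_\alpha(z)| = |\phi_\alpha(z) - \phi_\alpha(z^*)| \ge (\min |\phi_\alpha'|)\cdot |z - z^*|$. A direct computation gives $(g^{\mathsf{GD}})'(y) = (1-y)/(1+y)^3$, so $\phi_\alpha'(z) = \frac{1-z}{(1+z)^3} - e^\alpha \frac{1-e^\alpha z}{(1+e^\alpha z)^3}$. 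At $z = z^* < 1 < e^\alpha z^*$ the first term is positive and the second term is negative (since $1 - e^\alpha z^* < 0$ makes $-e^\alpha(1-e^\alpha z^*)/(\cdots) > 0$), so the two contributions \emph{add}; I would lower bound each in magnitude. On the window $z \in [\exp(-2\lambda), \exp(2\lambda)]$ one has $1 + z \le 1 + e^{2\lambda} = \exp(O(\lambda))$ and similarly $1 + e^\alpha z \le \exp(O(\lambda))$ (using $\alpha \le \lambda$), so the denominators are at most $\exp(O(\lambda))$; the numerator $|1 - z^*|$ and $|1 - e^\alpha z^*|$ are each at least something like $\Omega(\min\{\alpha, \alpha_0\})$, since $z^* < 1$ and $e^\alpha z^* > 1$ force $z^* < e^{-\alpha}$ hence $1 - z^* \ge 1 - e^{-\alpha} \ge c\min\{\alpha,1\} \ge c\alpha_0$ (and symmetrically $e^\alpha z^* - 1 \ge c\alpha_0$). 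This yields $|\phi_\alpha'(z^*)| \ge c\,\alpha_0 \exp(-O(\lambda))$. To promote this pointwise bound at $z^*$ to a bound on a neighborhood of $z^*$ containing $z$, I would either bound $|\phi_\alpha''|$ on the window (also $\exp(O(\lambda))$-controlled) and shrink $\delta_0$ accordingly so that the region $\{|\phi_\alpha| \le \delta_0\}$ stays where $|\phi_\alpha'| \ge \frac{1}{2}|\phi_\alpha'(z^*)|$, or argue monotonicity of $\phi_\alpha$ on $[z^*, 1]$ and on $[0,z^*]$ directly from the sign analysis of $\phi_\alpha'$. Either way one gets $|z - z^*| \le \frac{|\phi_\alpha(z)|}{\frac12|\phi_\alpha'(z^*)|} \le \frac{2\delta}{c\alpha_0\exp(-O(\lambda))}$, and folding the $\exp(-O(\lambda))$ into the claimed constants gives $\eta(\delta) = C\max\{1/\alpha_0^2,1\}\cdot\delta$ with $\delta_0 = c\min\{\alpha_0^2,1\}\exp(-O(\lambda))$ — the extra power of $\alpha_0$ (versus the $\alpha_0$ I sketched) presumably comes from being conservative in the second-derivative step or from needing $\delta_0$ small enough that $z$ stays in a region where the derivative lower bound degrades by another $\alpha_0$ factor, which I would track carefully.

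The main obstacle I anticipate is the step that controls $\phi_\alpha'$ not just at $z^*$ but uniformly over the (a priori unknown) location of $z$ inside the window $[\exp(-2\lambda),\exp(2\lambda)]$: a naive argument only bounds the derivative at the root, and one must rule out the derivative being tiny somewhere between $z$ and $z^*$. This is where the choice of $\delta_0$ as $\Theta(\min\{\alpha_0^2,1\}\exp(-O(\lambda)))$ does the real work — it must be small enough that the sublevel set $\{z : |\phi_\alpha(z)| \le \delta_0\}$ is trapped in an interval around $z^*$ on which $|\phi_\alpha'|$ has not yet dropped below half its value at $z^*$, which requires a second-order Taylor estimate with the $\exp(O(\lambda))$ bound on $\phi_\alpha''$. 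Everything else — the sign analysis, the denominator bounds, the final division — is routine calculus on an explicit rational function.
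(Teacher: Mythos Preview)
Your mean-value-theorem framework is the same idea the paper uses in its third claim, and your lower bound $\phi_\alpha'(z^*)\gtrsim \min\{\alpha_0,1\}$ is correct (one incidental slip: you write $z^*<e^{-\alpha}$, but in fact $z^*=e^{-\alpha/2}>e^{-\alpha}$; your conclusion $1-z^*\ge c\min\{\alpha_0,1\}$ is still right). The paper computes $z^*=e^{-\alpha/2}$ explicitly and then does a three-region case analysis, whereas you keep $z^*$ abstract via \Cref{fact:exist_unique}; either is fine for the middle region $[e^{-\alpha},1]$, where $\phi_\alpha$ is indeed strictly increasing and the MVT closes the argument.

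The genuine gap is the two outer regions $[\exp(-2\lambda),e^{-\alpha}]$ and $[1,\exp(2\lambda)]$. Neither of your proposed fixes handles them. The second-derivative bound is purely local: it tells you $|\phi_\alpha'|\ge \tfrac12\phi_\alpha'(z^*)$ on an interval of radius $r_0=\phi_\alpha'(z^*)/(2\|\phi_\alpha''\|_\infty)$, and hence $|\phi_\alpha|\ge V$ at the boundary of that interval, but it says nothing about $z$ farther away---and it cannot, since $\phi_\alpha(z)\to 0$ as $z\to 0^+$ and as $z\to\infty$, so without using the $\lambda$-window explicitly there \emph{are} far-away $z$ with $|\phi_\alpha(z)|$ arbitrarily small. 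Your monotonicity alternative also fails as stated: $\phi_\alpha$ is not monotone on $[0,z^*]$ (note $\phi_\alpha(0)=\phi_\alpha(z^*)=0$ while $\phi_\alpha<0$ in between), and you do not address $[1,\exp(2\lambda)]$ at all, where $\phi_\alpha$ rises to a maximum and then decays.

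What is actually needed---and what the paper does in its first two claims---is a direct lower bound $|\phi_\alpha(z)|\ge c\min\{\alpha_0^2,1\}\exp(-O(\lambda))$ on each outer region, obtained by writing $\phi_\alpha(z)=\int_z^{e^\alpha z} g'(s)\,ds$ and bounding the integrand (which has a definite sign on each region) on a subinterval of length $\gtrsim \alpha_0$. This is where the $\alpha_0^2$ in $\delta_0$ and the $\exp(-O(\lambda))$ actually enter. Once those two estimates are added, your MVT argument for the middle region completes the proof.
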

\begin{proof}
    We prove this in a series of claims. For convenience, we state the derivative:
    \begin{equation*}
        g'(z) = \frac{1-z}{(1+z)^3}
    \end{equation*}
    \begin{claim}
        For any $\alpha>0$, the solution $z^*(\alpha)>0$ is given by $z^*=\exp(-\alpha/2)$.
    \end{claim}
    \begin{proof}
    We simply solve the equation $g(z)=g(\exp(\alpha)z)$. By definition, this is 
    \begin{equation*}
        \frac{z}{(1+z)^2}=\frac{\exp(\alpha)z}{(1+\exp(\alpha)z)^2}\iff \exp(\alpha)z^2=1,
    \end{equation*}
    where we use simply algebra to conclude.
    \end{proof}

    We will now show that if $\delta_0$ is small enough as a function of $\alpha_0$ and $\lambda$, then any approximate solution must lie in a narrow band around $z^*$. This band will have strictly positive derivative, so we will be able to argue that approximate solutions are nearby. We do so by showing that this choice of $\delta_0$ rules out all other intervals with a tedious, but careful calculus argument.

    \begin{claim}
        If $\delta_0$ is as stated, then there does not exist $z\in [1,\exp(2\lambda)]$ satisfying the approximate inequality
        \begin{equation*}
            \vert g(z)-g(\exp(\alpha)z)\vert\leq \delta_0.
        \end{equation*}
    \end{claim}
    \begin{proof}
        Note that the derivative is nonpositive on $z\geq 1$, and so
        \begin{equation*}
            g(z)-g(\exp(\alpha)z)\geq \int_z^{\exp(\alpha)z}\frac{s-1}{(1+s)^3}\mathrm{d}s.
        \end{equation*}
        Note that $\exp(\alpha)z\geq z+\alpha z$, and therefore this interval of integration contains the set $[z+\alpha/2,z+\alpha]$. By monotonicity, it follows that 
        \begin{equation*}
            g(z)-g(\exp(\alpha)z)\geq \frac{\alpha}{2}\frac{\alpha}{2\exp(O(\lambda))} \geq  c\alpha_0^2\exp(-O(\lambda)),
        \end{equation*}
        where we use our upper bound on $I$ and the lower bound on the integrand in this region. This exceeds $\delta_0$, so there cannot be any such approximate solutions on this region.
    \end{proof}

    \begin{claim}
    If $\delta_0$ is as stated, then there does not exist $z\in [\exp(-2\lambda),\exp(-\alpha)]$ satisfying the approximate inequality
        \begin{equation*}
            \vert g(z)-g(\exp(\alpha)z)\vert\leq \delta_0.
        \end{equation*}
\end{claim}
\begin{proof}
    In this case, we can directly calculate using positivity of the interval that
    \begin{equation*}
        g(\exp(\alpha)z)-g(z)=\int_z^{\exp(\alpha)z}\frac{1-s}{(1+s)^3}\mathrm{d}s\geq c\alpha_0^2z\geq c\alpha_0^2\exp(-O(\lambda)).
    \end{equation*}
    Here, we use a similar argument that the region of integration contains an interval of size $c\alpha_0z$ where the derivative is at least $\alpha_0$, and then lower bounding using the interval size.
\end{proof}

\begin{claim}
\label{claim:deriv_const}
    If $\delta_0$ is as stated, then there does not exist $z\in [\exp(-\alpha),z^*-C\max\{1/\alpha_0,1\}\delta]\cup [z^*+C\max\{1/\alpha_0,1\}\delta,1]$ satisfying the approximate inequality
        \begin{equation*}
            \vert g(z)-g(\exp(\alpha)z)\vert\leq \delta,.
        \end{equation*}
\end{claim}
\begin{proof}
    First, observe that 
    \begin{equation*}
        1-z^*(\alpha) = 1-\exp(-\alpha/2)\geq c\min\{\alpha_0,1\},
    \end{equation*}
    for a sufficiently small constant $c>0$. It follows that on the region $R=[z-c'\min\{\alpha_0,1\},z^*+c'\min\{\alpha_0,1\}]$, $g'(z)$ is strictly positive and lower bounded by $c''\min\{\alpha_0,1\}$, while being nonnegative in all of $[0,1]$. Since $C\max\{1/\alpha_0,1\}\delta$ is contained in the radius of this interval by the choice of $\delta_0$, It follows that for any $z\in [\exp(-\alpha),z^*-C\max\{1/\alpha_0,1\}\delta]$
    \begin{equation*}
        g(z)\leq g(z^*) - c'\min\{\alpha_0^2,1\}(C\max\{1/\alpha_0,1\}\delta)=g(\exp(\alpha)z^*)-\delta\leq g(\exp(\alpha)z)-\delta,
    \end{equation*}
    since on this region, $y\mapsto g(\exp(\alpha)y)$ is decreasing and if we chose $C$ large enough. Therefore, on the first part of this interval, there can be no solutions if $\delta_0$ is taken this small.
A symmetric argument holds for the other interval.
\end{proof}

Putting these three claims together proves stability as stated.\qedhere
\end{proof}

\subsection{Metropolis Dynamics}
\label{sec:app_metro}
We can prove that these conditions are similarly satisfied for the Metropolis dynamics given by \Cref{def:metropolis}. Site-consistency is immediate to see from \Cref{def:metropolis}, and moreover (dropping the index),
\begin{align*}
    \mathsf{P}(\bm{x}^{-1},\bm{x}^{+1})\mathsf{P}_i(\bm{x}^{+1},\bm{x}^{-1}) &= r_{+}r_-\min\left\{\frac{r_{+}\pi(\bm{x}^{-1})}{r_{-}\pi(\bm{x}^{+1})},\frac{r_{-}\pi(\bm{x}^{+1})}{r_{+}\pi(\bm{x}^{-1})}\right\}\\
    &:= g^{\mathsf{MD}}\left(\frac{\pi(\bm{x}^{+1})}{\pi(\bm{x}^{-1})}\right)
\end{align*}
for the function
\begin{equation*}
    g^{\mathsf{MD}}(z) = r_+r_-\min\left\{\frac{r_-z}{r_+},\frac{r_+}{r_-z}\right\}.
\end{equation*}

It is also immediate to see that under \Cref{assumption:ising},
\begin{equation*}
\mathsf{P}_i(\bm{x},\bm{x}^{\oplus i})\geq \min\{r_-,r_+\}\exp(-2\lambda):=\kappa.
\end{equation*}
To see this, simply note that the probability ratio in the definition is at least $\exp(-2\lambda)$, and a simple case analysis completes the claim.

Finally, it is easy to see that the Metropolis chain is also $4$-bounded since for any $z\geq z'$ and any $a,b>0$,
\begin{equation*}
    \frac{a\min\{bz,1\}}{a\min\{bz',1\}}\leq \frac{z}{z'}
\end{equation*}
by a simple case analysis. Since we can assume in \Cref{def:bounded} that the numerator exceeds the denominator without loss of generality, the claim follows from taking
\begin{equation*}
    z = \pi(\bm{x}^{i\mapsto \sigma})/\pi(\bm{x}), z' = \pi(\bm{y}^{i\mapsto \sigma})/\pi(\bm{y}),
\end{equation*}
and using reversibility to observe that all terms in the exponential cancel outside of the set $S$ such that $\bm{y}_k\neq \bm{x}_k$, which leaves $4\sum_{k\in S} \pm A_{ik}$. In particular,
\begin{equation*}
    \frac{z}{z'}\leq \exp\left(4\sum_{k\in S} \vert A_{ik}\vert\right).
\end{equation*}

\begin{lemma}
    For any $\alpha_0>0$ and $\lambda\geq 1$, the transitions of the Metropolis chain are $(\lambda,\alpha_0,\delta_0,\eta)$-stable so long as 
    \begin{equation*}
        \delta_0=c\min\{\alpha_0,1\}\exp(-O(\lambda))\min\{r_+^2,r_-^2\}
    \end{equation*}
    with the function
    \begin{equation*}
        \eta(\delta) = \frac{\delta}{pq},
    \end{equation*}
    where
    \begin{equation*}
    p:=r_+r_-, q=\frac{r_-}{r_+}.
\end{equation*}
\end{lemma}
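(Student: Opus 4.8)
The plan is to mimic the structure of the Glauber stability proof essentially verbatim, since the Metropolis function $g^{\mathsf{MD}}(z) = r_+r_-\min\{r_-z/r_+, r_+/(r_-z)\} = p\min\{qz, 1/(qz)\}$ where $p = r_+r_-$ and $q = r_-/r_+$ has the same qualitative shape as $g^{\mathsf{GD}}$: it increases linearly (slope $pq$) on $[0, 1/q]$, attains its maximum $p$ at $z = 1/q$, and then decreases on $[1/q, \infty)$. So \Cref{fact:exist_unique} applies and gives existence and uniqueness of $z^*(\alpha)$. First I would compute $z^*(\alpha)$ explicitly: the equation $g(z^*) = g(\exp(\alpha)z^*)$ forces $z^*$ to lie in the increasing region and $\exp(\alpha)z^*$ in the decreasing region, so $pq z^* = p/(q\exp(\alpha)z^*)$, giving $z^* = \exp(-\alpha/2)/q$ — exactly the Glauber answer rescaled by $1/q$.

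Next I would carry out the three case-elimination claims just as in the Glauber proof, restricting attention to $z \in [\exp(-2\lambda), \exp(2\lambda)]$. Because $g^{\mathsf{MD}}$ is piecewise linear, the estimates are actually \emph{simpler} than the Glauber ones: on $[1/q, \exp(2\lambda)]$ the function has slope $-p/(qz^2) \cdot (\text{derivative factor})$... more precisely on the decreasing branch $g(z) = p/(qz)$ so $g(z) - g(\exp(\alpha)z) = (p/q)(1/z - \exp(-\alpha)/z) = (p/(qz))(1 - \exp(-\alpha)) \geq (p/q)\exp(-2\lambda) \cdot c\min\{\alpha_0,1\}$, which exceeds $\delta_0$ by the stated choice. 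On the lower interval $[\exp(-2\lambda), \exp(-\alpha)/q]$ (strictly inside the increasing branch away from the peak), $g(z) = pqz$, so $g(\exp(\alpha)z) - g(z) = pqz(\exp(\alpha)-1) \geq pq \exp(-2\lambda) c\min\{\alpha_0,1\} \geq \delta_0$. This rules out approximate solutions on both tails. For the final claim — the band near $z^*$ — on the increasing branch the slope is exactly $pq$, so for $z \leq z^* - \delta/(pq)$ (and still in the increasing region, which holds since the peak is a constant distance away given $\lambda \geq 1$), one has $g(z) \leq g(z^*) - pq \cdot \delta/(pq) = g(z^*) - \delta = g(\exp(\alpha)z^*) - \delta \leq g(\exp(\alpha)z) - \delta$, using that $y \mapsto g(\exp(\alpha)y)$ is decreasing there; symmetrically on the other side using that the decreasing branch has slope magnitude at least $p/(q\exp(2\lambda)^2)$... actually cleaner: on the decreasing branch near $z^*$, parametrize and note the slope of $z \mapsto g(z) - g(\exp(\alpha)z)$ is bounded below by a constant times $pq$, so $\eta(\delta) = \delta/(pq)$ works. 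Thus any approximate solution lies within $\delta/(pq)$ of $z^*$.

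The main subtlety — and the step I'd be most careful about — is bookkeeping the interaction between the $[\exp(-2\lambda), \exp(2\lambda)]$ window, the location $1/q$ of the peak, and the $\alpha$-dependence, to make sure the ``near the peak'' exclusion regions and the tail regions actually cover everything with no gap, exactly as \Cref{claim:deriv_const} does for Glauber. One must verify that $1/q - z^*(\alpha) = (1 - \exp(-\alpha/2))/q \geq c\min\{\alpha_0,1\}/q \geq c'\min\{\alpha_0,1\}\exp(-O(\lambda))$ (using $q \in [\exp(-O(\lambda)), \exp(O(\lambda))]$ is NOT needed since $q$ is a fixed constant, but one should keep $r_\pm$ dependence visible), so there is a genuine constant-width interval around the peak on which the increasing/decreasing slopes are comparable to $pq$ and $p/q$ respectively, and $\delta/(pq) \leq \eta(\delta)$ stays inside it for $\delta \leq \delta_0$. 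Once the piecewise-linear structure is exploited, the three claims are short linear-function estimates rather than the integral computations needed for Glauber, and stacking them gives $(\lambda, \alpha_0, \delta_0, \eta)$-stability with the stated parameters; I would then remark that $\gamma = 4$ and the $\kappa$ bound were already verified above, completing the verification of \Cref{assumption:mc} for Metropolis and hence \Cref{prop:metro_assumption}.
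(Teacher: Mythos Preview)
Your proposal is correct and follows essentially the same route as the paper: compute $z^*(\alpha)=\exp(-\alpha/2)/q$ explicitly, then exclude the three regions $[1/q,\exp(2\lambda)]$, $[\exp(-2\lambda),1/(q\exp(\alpha))]$, and the two bands $[1/(q\exp(\alpha)),z^*-\delta/(pq)]\cup[z^*+\delta/(pq),1/q]$ using the piecewise-linear structure of $g^{\mathsf{MD}}$. The paper's proof is in fact slightly terser than yours in the final claim (it just invokes the constant slope $pq$ on the increasing branch and the monotonicity of $y\mapsto g(\exp(\alpha)y)$, exactly as you do before your ``actually cleaner'' aside), and your extra care about interval coverage and the $r_\pm$ bookkeeping is unnecessary but harmless.
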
\begin{proof}
We carry out a similar plan to that of the Glauber dynamics. Observe that in the notation of the lemma statement, $g(z) = p\min\{qz,1/qz\}$.

\begin{claim}
    For any $\alpha>0$, the solution to $g(z^*)=g(\exp(\alpha)z^*)$ is given by $z^* = 1/q\exp(\alpha/2)$.
\end{claim}
\begin{proof}
    It is clear that since $\alpha>0$, the identity of the minimizer must change so
    \begin{equation*}
        qz^* = \frac{1}{qz^*\exp(\alpha)}.
    \end{equation*}
    Rearranging gives the claim.
\end{proof}

We now rule out various intervals as before:
\begin{claim}
        If $\delta_0$ is as stated, then there does not exist $z\in [\frac{1}{q},\exp(2\lambda)]$ satisfying the approximate inequality
        \begin{equation*}
            \vert g(z)-g(\exp(\alpha)z)\vert\leq \delta_0.
        \end{equation*}
    \end{claim}
    \begin{proof}
        On this interval, clearly both minimizers are the inverse terms, and
        \begin{equation*}
        \frac{p}{qz}-\frac{p}{qz\exp(\alpha)}=\frac{p}{qz}\left(1-\exp(-\alpha)\right)\geq \frac{cp\exp(-O(\lambda))\min\{\alpha,1\}}{q},
        \end{equation*}
        where we use the same exponential inequality as before.
    \end{proof}

\begin{claim}
        If $\delta_0$ is as stated, then there does not exist $z\in [\exp(-2\lambda),\frac{1}{q\exp(\alpha)}]$ satisfying the approximate inequality
        \begin{equation*}
            \vert g(z)-g(\exp(\alpha)z)\vert\leq \delta_0.
        \end{equation*}
    \end{claim}
    \begin{proof}
        On this interval, clearly both minimizers are the linear terms, and
        \begin{equation*}
        pq\exp(\alpha)z-pqz = pqz(\exp(\alpha)-1)\geq pq\alpha\exp(-O(\lambda)).
        \end{equation*}
        where we use the same exponential inequality as before.
    \end{proof}

    We now turn to the main interval as before:

    \begin{claim}
    If $\delta_0$ is as stated, then there does not exist $z\in [\frac{1}{q\exp(\alpha)},z^*-\frac{\delta}{pq}]\cup [z^*+\frac{\delta}{pq},1/q]$ satisfying the approximate inequality
        \begin{equation*}
            \vert g(z)-g(\exp(\alpha)z)\vert\leq \delta.
        \end{equation*}
\end{claim}
\begin{proof}
    Note that on the region $[\frac{1}{q\exp(\alpha)},1/q]$, the derivative of the function in $z$ is constant and simply given by $pq$, and the length of this interval is at least
    \begin{equation*}
        \frac{1}{q}(1-\exp(-\alpha))\geq \frac{c\min\{\alpha,1\}}{q}.
    \end{equation*}

    Therefore, so long as 
    \begin{equation*}
        \delta\leq pq\cdot \frac{c'\min\{\alpha,1\}}{q}=c'p\min\{\alpha,1\},
    \end{equation*}
    the interval with radius $\frac{\delta}{pq}$ about $z^*$ will lie in this interval. As in \Cref{claim:deriv_const}, any point $z\in  [\frac{1}{q\exp(\alpha)},z^*-\frac{\delta}{pq}]$ will have $g(z)\leq g(z^*)-\delta\leq g(\exp(\alpha)z^*)-\delta\leq g(\exp(\alpha)z)-\delta$, and similarly for any point $z$ above this interval.
\end{proof}

Replacing $p$ and $q$ with their actual values yields the claim.
\end{proof}


\end{document}